\newif\ifhyper\IfFileExists{hyperref.sty}{\hypertrue}{\hyperfalse}
\ifhyper\usepackage{hyperref}\fi
\crefname{ineq}{Inequality}{Inequality}
\crefname{sub}{Subsection}{Subsection}
\crefname{sdp}{SDP}{SDP}
\crefname{lp}{LP}{LP}
\def\colorful{1}
\def\nnewcolor{1}
\newcommand{\tr}{\mathrm{tr}}
\newcommand{\spaning}{\mathrm{span}}
\newtheorem{theorem}{Theorem}[section]
\newtheorem{lemma}[theorem]{Lemma}
\newtheorem{informal theorem}[theorem]{Theorem (informal statement)}
\newtheorem{condition}[theorem]{Condition}
\newtheorem{proposition}[theorem]{Proposition}
\newtheorem{corollary}[theorem]{Corollary}
\newtheorem{claim}[theorem]{Claim}
\newtheorem{fact}[theorem]{Fact}
\newtheorem{remark}[theorem]{Remark}
\theoremstyle{definition}
\newtheorem{definition}[theorem]{Definition}
\newcommand{\eqdef}{\stackrel{{\mathrm { def}}}{=}}
\newcommand{\bx}{\mathbf{x}}
\newcommand{\x}{\mathbf{x}}
\newcommand{\y}{\mathbf{y}}
\newcommand{\z}{\mathbf{z}}
\newcommand{\w}{\mathbf{w}}
\newcommand{\by}{\mathbf{y}}
\newcommand{\bz}{\mathbf{z}}
\newcommand{\bv}{\mathbf{v}}
\newcommand{\bu}{\mathbf{u}}
\newcommand{\bw}{\mathbf{w}}
\newcommand{\e}{\mathbf{e}}
\newcommand{\err}{\mathrm{err}}
\newcommand{\bI}{\mathbf{I}}
\newcommand{\bM}{\mathbf{M}}
\newcommand{\Var}{\mathbf{Var}}
\newcommand{\bT}{\mathbf{T}}
\newcommand{\bU}{\mathbf{U}}
\newcommand{\bV}{\mathbf{V}}
\newcommand{\cI}{\mathcal{I}}
\newcommand{\cS}{\mathcal{S}}
\newcommand{\cN}{\mathcal{N}}
\newcommand{\p}{\mathbf{P}}
\newcommand{\R}{\mathbb{R}}
\newcommand{\Z}{\mathbb{Z}}
\newcommand{\N}{\mathbb{N}}
\newcommand{\E}{\mathbf{E}}
\newcommand{\F}{\mathcal{F}}
\newcommand{\eps}{\epsilon}
\newcommand{\dtv}{d_{\mathrm{TV}}}
\newcommand{\pr}{\mathbf{Pr}}
\renewcommand{\Pr}{\mathbf{Pr}}
\newcommand{\poly}{\mathrm{poly}}
\renewcommand\vec[1]{\mathbf{#1}}
\newcommand{\sgn}{\mathrm{sign}}
\newcommand{\sign}{\mathrm{sign}}
\newcommand{\opt}{\mathrm{OPT}}
\newcommand{\Ind}{\mathds{1}}
\newcommand{\littlesum}{\mathop{\textstyle \sum}}
\newcommand{\be}{\mathbf{e}}
\newcommand{\bj}{\mathbf{j}}
\newcommand{\wt}{\widetilde}
\newcommand{\bA}{\mathbf{A}}
\newcommand{\bB}{\mathbf{B}}
\newcommand{\bH}{\mathbf{H}}
\newcommand{\orthor}{\mathbf{O}}
\newcommand{\Y}{\mathcal{Y}}
\newcommand{\A}{\mathcal{A}}
\newcommand{\gaus}{\mathcal{N}}
\newcommand{\C}{\mathcal{C}}
\newcommand{\abs}[1]{\lvert#1\rvert}
\newcommand\norm[1]{\left\| #1 \right\|}
\newcommand{\CS}{Cauchy-Schwarz\xspace}
\title{Algorithms and SQ Lower Bounds for Robustly Learning Real-valued Multi-index Models}
\author{
Ilias Diakonikolas\thanks{Supported by  NSF Medium Award CCF-2107079, 
ONR award number N00014-25-1-2268, 
and an H.I. Romnes Faculty Fellowship.}\\
UW Madison\\
{\tt ilias@cs.wisc.edu}\\
\and
Giannis Iakovidis\thanks{Supported in part by  ONR award number N00014-25-1-2268 and NSF Award DMS-2023239 (TRIPODS).}\\
UW Madison\\
{\tt iakovidis@wisc.edu}\\
\and
Daniel M. Kane\thanks{Supported by NSF Award CCF-1553288 (CAREER) and a Sloan
  Research Fellowship.}\\
UC San Diego\\
{\tt dakane@ucsd.edu }
 \and
Lisheng Ren \thanks{Supported in part by NSF Medium Award CCF-2107079.}\\
UW Madison\\
{\tt lren29@wisc.edu}\\
}
\begin{document}

\maketitle

\begin{abstract}
We study the complexity of learning real-valued Multi-Index Models (MIMs) 
under the Gaussian distribution.
A $K$-MIM is a function $f:\R^d\to \R$ that depends only on the 
projection of its input onto a $K$-dimensional subspace. 
We give a general algorithm for PAC learning a broad class 
of MIMs with respect 
to the square loss, even in the presence of adversarial label noise. Moreover, 
we establish a nearly matching Statistical Query (SQ) lower 
bound, providing evidence that the complexity of our 
algorithm is qualitatively optimal as a function of the 
dimension. Specifically, we consider the class of bounded variation MIMs 
with the property that degree at most $m$ distinguishing 
moments exist with respect to projections onto any subspace. 
In the presence of adversarial label noise, the complexity of 
our learning algorithm is $d^{O(m)}2^{\poly(K/\eps)}$.   
For the realizable and independent noise settings, 
our algorithm incurs complexity $d^{O(m)}2^{\poly(K)}(1/\eps)^{O(K)}$.
To complement our upper bound, we show that if for some subspace degree-$m$ distinguishing moments do not exist, then any
SQ learner for the corresponding class of MIMs 
requires complexity $d^{\Omega(m)}$.
As an application, we give 
the first efficient learner for the class of positive-homogeneous 
$L$-Lipschitz $K$-MIMs. The resulting algorithm has complexity 
$\poly(d) 2^{\poly(KL/\eps)}$. This gives a new PAC learning algorithm 
for Lipschitz homogeneous ReLU networks with complexity  
independent of the network size, removing the exponential dependence 
incurred in prior work. 
\end{abstract}

\newpage

\section{Introduction} \label{sec:intro}

\vspace{-0.2cm} 

A common assumption in supervised learning is that real-world data 
possess hidden low-dimensional structure, in the sense that
the relationship between the features is of a lower-dimensional nature. 
A natural formalization of this principle leads to the notion of 
a Multi-index model~\cite{Friedman:1980tu, Huber85-pp, Li91, HL93, 
xia2002adaptive, Xia08}, defined below.

\begin{definition} [Multi-Index Model (MIM)]
A function $f:\R^d\to \R$ is a $K$-MIM if there exists a 
$K$-dimensional subspace $W \subseteq \R^d$ such 
that $f(\bx)=f(\bx_{W})$ for all $\bx\in \R^d$, 
where $\bx_{W}$ is the projection of $\bx$ onto $W$. 
The special case of $K=1$ corresponds to 
Single-Index Models (SIMs).
\end{definition}

A few comments are in order. First, 
the dimension $K$ of the hidden subspace is 
typically assumed to be significantly smaller 
than the ambient dimension  $d$.
Second, certain regularity assumptions on the target
class are required for learning to be (even information-theoretically) 
possible. MIMs can be viewed as a lens 
for studying neural networks and other natural function classes. 
In recent years, we have witnessed a resurgence of research 
interest on learning SIMs and special cases of MIMs; 
see,e.g.,~\cite{DH18, DiakonikolasKKZ20, DK20-ag, arous2021online, DKT22, 
chen2022FPT, wang2023robustly, GGK23, DPLB24, ZWDD24, wang2024sample, 
diakonikolas2025robustlearningmultiindexmodels, ZWD2025} and references therein. 
See \Cref{ssec:related-body} for a summary of related work. 
Yet, our understanding of the computational complexity 
of learning MIMs remains limited, especially in the presence of noisy data.

The main result of this paper is an efficient robust regression 
algorithm, with respect to the square loss, 
for a broad class of MIMs. We complement our upper bound  
with a nearly matching Statistical Query lower 
bound, providing evidence that the sample complexity of our 
algorithm is qualitatively optimal---as a function of the 
dimension---for computationally efficient algorithms. 

We start with the definition of learning in our context.

\begin{definition} [Agnostic PAC Learning under Gaussian Distribution] \label{def:models}
Let $\C$ be a class of functions $f:\R^d \to \R$ and $D$ be a 
distribution of $(\bx,y)$ over $\R^d\times \R$ with $D_{\bx}$ 
equal to the standard Gaussian.
Given i.i.d.\ samples from $D$, 
the goal is to output a hypothesis $h:\R^d\to \R$ such that 
with high probability the error 
$\err_D(h)\eqdef \E_{(\x,y)\sim D}[(y-h(\bx))^2]$ is small, compared to $\opt\eqdef \inf_{f \in \C} \err_D(c)$.
\end{definition}

\Cref{def:models} corresponds to the agnostic model~\cite{Haussler:92, KSS:94} 
that does not make any assumptions on the 
labels. The special case corresponding to $\opt=0$ (when 
each label is consistent with a function in the class) is 
known as realizable PAC learning~\cite{Valiant:84}. 
Moreover, the goal is to find a hypothesis 
with small loss---as opposed to identifying
the parameters of the target function.

\subsection{Our Results} \label{ssec:results}

\vspace{-0.2cm}

We give a new algorithm for learning MIMs 
under fairly general assumptions. Essentially, 
our algorithm is an iterative subspace finding  
method that learns better and better approximations $V$ 
to the hidden subspace $W$. 
Our method succeeds
under suitable conditions on the target MIM class. 
Roughly, we need to know that, 
for any subspace $V$, either $V$ is a good enough approximation to $W$ (i.e., we can use it to learn $f$);  
or that by computing moments of $\x$ 
conditioned on the value of $f$ 
and the projection onto $V$, 
we can learn some previously undiscovered direction in $W$.
We additionally require the technical conditions 
that the target function has bounded norm and bounded 
variation, as the sample complexity of learning inherently 
scales with these bounds.

For two subspaces $W,V$ of $\R^d$, denote by 
$W_{V}\eqdef\{\w_V:\w\in W\}$ which is itself a subspace.

The necessary condition for our function class is given in the following definition. 

\begin{definition}[Well-Behaved MIMs] 
\label{def:agnosticMIMs-main-body}
Let $d,K,m\in \Z_+$ and $\zeta,\tau,\sigma>0$.
We define the class $\mathcal{F}(K, m,\zeta,\tau,\sigma)$ 
as the set of all continuous and 
continuously differentiable almost everywhere $K$-MIM functions $f:\R^d \to \R$ which 
have the following properties:
\begin{enumerate}[leftmargin=*, nosep]
    \item $\E_{\bx \sim \cN_d}[\norm{\nabla f(\bx)}^2], \E_{\bx \sim \cN_d}[f^2(\bx)]$ are finite and $f$ is close to a bounded function in $L_2$-norm\footnote{This is a mild assumption which holds, e.g., when the function has bounded $2.1$-degree moment.}. 

     \item For any subspace $V \subseteq \R^d$ and any 
      distribution $D$ on $\R^d\times \R$ with $D_{\x}=\cN_d$ 
      such that $\err_D(f) \leq \zeta$ either 
      (a) there exists $g:V\to \R$ such that 
      $\E_{\bx \sim \cN_d}[(f(\bx)-g(\bx_V))^2]\leq \tau$,  
      or (b) with non-trivial probability over $\z\sim \cN_d$ 
      independent of $\bx$ there exists a  degree at most $m$, 
      zero-mean, unit variance polynomial $p:U\to \R$, 
      where $U=W_{V^{\perp }}$ 
      and $W$ is the hidden $K$-dimensional subspace 
      corresponding to $f$,
      such that 
         $\E_{y_0 \sim (D_{y} \mid \bx_V = \z_V)}\left[\E_{\bx\sim \cN_d}[p(\bx_U ){\mid} \bx_V=\z_V{,} y=y_0]^2 \right]\geq \sigma$.
 \end{enumerate}
    \end{definition}

Our main algorithmic result is the following:

\begin{theorem}[Robust Regression for Well-behaved MIMs]\label{thm:MetaTheorem-Agnostic-main-body} 
Let $D$ be a distribution on $\R^d\times \R$ with $D_{\x}=\cN_d$.
There exists an agnostic PAC learner for 
$\mathcal{F}(K, m, \zeta,\tau,\sigma)$, 
where $\zeta \geq \opt +\eps$, 
that draws  $N =  {d}^{O(m)}2^{\poly_m(K/(\eps\sigma))}$ 
i.i.d.\ samples, runs in $\poly(N)$ time, and computes
 a hypothesis $h$ such that with high probability 
 $\err_{D}(h)\leq \tau +\opt +\eps$.
\end{theorem}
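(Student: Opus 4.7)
The plan is to implement an iterative subspace-expansion algorithm that maintains an approximation $V\subseteq W$ of the hidden subspace, initialized at $\{0\}$, and grows $V$ by one new direction of $W$ at each iteration. Since $\dim V \leq K$ throughout, the procedure terminates in at most $K$ rounds. At every iteration we invoke the dichotomy in condition~(2) of \Cref{def:agnosticMIMs-main-body}: case~(a) produces a good final hypothesis, while case~(b) yields a new direction to append to $V$.

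First, at each round I would run a \emph{regression step} on the current $V$: fit $y$ to the degree-$D$ Hermite polynomial features in $\bx_V$, with $D = \poly(K/\eps)$. The bounded-variation and bounded-$L_2$ assumptions in condition~(1) let standard Hermite approximation arguments show that if case~(a) holds with parameter $\tau$, the best such polynomial achieves squared loss at most $\opt + \tau + \eps$ (the cross-term $\E[(y-f)(f-g)]$ being absorbed by weighted AM--GM into $\eps$). The number of features is $\binom{K+D}{D} = 2^{\poly(K/\eps)}$, which gives the corresponding factor in samples and runtime. We certify success via a held-out sample and output the fit if it passes.

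If the regression step fails, we enter a \emph{direction-discovery step}. Condition~(2b) guarantees the existence of a degree-$\leq m$, unit-variance polynomial $p$ depending only on $U = W_{V^\perp}$ whose conditional $L^2$-mass $\E_{(\z,y_0)}[\E[p(\bx_U)\mid \bx_V=\z_V,y=y_0]^2]$ is at least $\Omega(\sigma)$. Not knowing $U$, we search over \emph{all} degree-$\leq m$ polynomials on $V^\perp$ via a generalized eigenvalue problem on the Hermite basis of size $d^{O(m)}$, which is the source of the $d^{O(m)}$ factor. The optimal polynomial $\hat q$ has conditional $L^2$-mass at least that of $p$. To convert $\hat q$ into a direction of $W$, I form the PSD matrix $\bM \eqdef \E_{\bx\sim\cN_d}[\nabla \hat q(\bx_{V^\perp}) \nabla \hat q(\bx_{V^\perp})^\top]$; because $\hat q$ is (approximately) a function of $\bx_U$, its gradient lies (approximately) in $U$, and the top eigenvector of $\bM$ is a fresh direction of $W$ orthogonal to $V$. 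Conditional expectations are estimated by discretizing $(\bx_V, y)$ into buckets whose granularity is $\poly_m(K/(\eps\sigma))$, contributing the $2^{\poly_m(K/(\eps\sigma))}$ factor.

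The central obstacle is making the direction-extraction robust: the empirical version of $\bM$ and the bucket discretization perturb $\hat q$ away from being an exact function of $\bx_U$, so one must bound the angle between the empirical top eigenvector and $U$ via a Davis--Kahan/Wedin perturbation inequality, using the $\sigma$-margin of condition~(2b) as the spectral gap. A union bound over the $d^{O(m)}$-dimensional Hermite basis governs the concentration of the estimated moments, and the invariant $V\subseteq W$ together with the strict growth of $\dim V$ at each direction-discovery step ensures termination within $K$ rounds, yielding the claimed total complexity $d^{O(m)} 2^{\poly_m(K/(\eps\sigma))}$.
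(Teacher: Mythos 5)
Your proposal contains a fundamental conceptual error that makes the iteration analysis fail in the agnostic setting, which is the whole point of this theorem.

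You write that the algorithm ``maintains an approximation $V\subseteq W$ of the hidden subspace\ldots and grows $V$ by one new direction of $W$ at each iteration. Since $\dim V \leq K$ throughout, the procedure terminates in at most $K$ rounds.'' This invariant does \emph{not} hold under adversarial label noise. In the agnostic model, the conditional distribution of $y$ given $\bx$ can depend on $\bx_{W^\perp}$, so the polynomial $\hat q$ returned by the regression step need not be (even approximately) a function of $\bx_U$. Consequently the extracted eigenvector generically has only a \emph{small} projection onto $W$, and adding it keeps $\dim V$ growing without $V$ staying inside $W$. This is exactly why the paper's guarantee for the agnostic case is $d^{O(m)}2^{\poly_m(K/(\eps\sigma))}$, while only the realizable/independent-noise case in \Cref{sec:low-dim-y} achieves the $K$-iteration count with the cleaner $2^{\poly(K)}(1/\eps)^{O(K)}$ bound; the paper explicitly flags that in the realizable setting ``every direction extracted lies (up to estimation error) within $W$,'' which is false agnostically. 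To make progress the paper instead tracks a potential $\Phi_t = \sum_{i=1}^K \|\bw^{*(i)}_{V_t^\perp}\|^2$ and shows each round drops $\Phi_t$ by $\beta^2$ for $\beta = \poly(\eps\sigma\alpha/(BMK2^m))$; the iteration count is therefore $T = O(K/\beta^2)$, a polynomial in \emph{all} the parameters, not merely $K$. Your Davis--Kahan/Wedin argument, which would pin the top eigenvector near $U$, requires precisely the strong independence structure that is unavailable here; \Cref{cl:coorelationofeigenvalues} in the paper gives only a $(\alpha/\|\bM\|_F)^{3/2}$-correlation lower bound, not near-alignment.

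Two secondary issues. First, your regression step fits a degree-$D$ polynomial in $\bx_V$ and invokes a ``Hermite approximation argument.'' Condition~(2a) of \Cref{def:agnosticMIMs-main-body} only guarantees that \emph{some} function $g:V\to\R$ achieves error $\tau$; it does not say $g$ is well-approximated by a low-degree polynomial, so your claimed bound $\opt+\tau+\eps$ does not follow from the stated hypotheses. The paper avoids this by outputting a piecewise-constant function on a fine cube partition of $V$ and proving (using the bounded-variation hypothesis $\E\|\nabla f\|^2 \leq L$ from condition~(1)) that this achieves $\tau+\eps$. Second, your ``generalized eigenvalue problem'' over all degree-$\leq m$ polynomials finds a single $\hat q$, but the distinguishing-moment condition only holds for a $\poly$-fraction of cubes $(S,I)$; you need to aggregate the influence matrices $\bM_{S,I}$ across buckets (as in Line~\ref{line:matrixMeta-main-body} of \textbf{FindDirection}, weighted by $\Pr[S]$) so that the signal survives averaging while the degenerate buckets contribute only bounded Frobenius mass. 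A single $\hat q$ from one arbitrary bucket may carry no signal at all.
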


We establish a similar algorithmic result 
for the realizable and independent label noise settings. 
In these (easier) settings, 
the complexity of our algorithm becomes 
$d^{O(m)}2^{\poly(K)}(1/\eps)^{O(K)} \poly(1/\sigma)$, 
i.e., we incur exponential dependence only on $K$.
This is because in these settings the label is independent 
of the irrelevant subspace $W^{\perp}$, 
ensuring that every direction extracted by our algorithm
lies (up to estimation error) within $W$.
For the details, we refer the reader to \Cref{sec:low-dim-y}.

As we establish in \Cref{thm:SQ-agnostic-body}, 
the $d^m$ complexity dependence is qualitatively  
optimal in the Statistical Query model, 
even in the realizable (clean label) setting.

As a concrete application of our general algorithmic technique, 
we obtain the first learner for 
positive-homogeneous Lipschitz MIMs whose complexity is 
a fixed-degree polynomial in the dimension.

\begin{definition}[Positive-Homogeneous Lipschitz MIMs]\label{def:hom-main-body}
For $K \in \Z_+$ and $L>0$, 
we define $\mathcal{H}_{K,L}$ to be the class of all 
$L$-Lipschitz and unit $2$-norm $K$-MIMs 
$f:\R^d\to \R$ such that $f$ is positive-homogeneous, 
i.e., \(f(t\x)=t\,f(\x)\) for all $t>0,\x\in \R^d$.   
\end{definition}

We note that $\mathcal{H}_{K,L}$ is a broad 
nonparametric class containing various MIMs of interest. 
For example, it contains the class of Lipschitz and 
homogeneous ReLU networks (since the ReLU activation is itself positive-homogeneous).
As an application of our general algorithm, we show:

\begin{theorem}[PAC Learning $\mathcal{H}_{K,L}$]
\label{thm:learninghom-main-body}
Let $D$ be the distribution of $(\x,f(\x))$, 
where $\x\sim \cN_d$ and $f\in \mathcal{H}_{K,L}$.
There exists an algorithm that draws
$N =d^2 \, 2^{O(K^3L^2/\eps^2)}$ i.i.d.\ samples from $D$, 
runs in time $\poly(N)$, and returns a  hypothesis $h$ 
such that with high probability $\err_{D}(h)\leq \eps$.
\end{theorem}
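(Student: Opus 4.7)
The plan is to derive \Cref{thm:learninghom-main-body} by invoking the realizable specialization of \Cref{thm:MetaTheorem-Agnostic-main-body} (stated in \Cref{sec:low-dim-y}) applied to the well-behaved class $\mathcal{F}(K, 2, \zeta, \tau, \sigma)$ of \Cref{def:agnosticMIMs-main-body}. Concretely, I would show that $\mathcal{H}_{K, L} \subseteq \mathcal{F}(K, 2, \Theta(\eps), \Theta(\eps), \sigma)$ with $\sigma = \poly(\eps/(K L))$; after substitution into the realizable sample complexity $d^{O(m)} 2^{\poly(K)} (1/\eps)^{O(K)} \poly(1/\sigma)$ and consolidation of the exponential factors, the $m = 2$ choice yields the claimed $d^2 \cdot 2^{O(K^3 L^2/\eps^2)}$.

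The regularity conditions in part~1 of \Cref{def:agnosticMIMs-main-body} are immediate from the hypotheses: $L$-Lipschitzness gives $\E[\|\nabla f\|^2] \leq L^2$; the unit-norm assumption gives $\E[f^2] = 1$; and positive-homogeneity with continuity forces $f(0) = 0$, hence $|f(\bx)| \leq L \|\bx\|$, which is $L^2$-close to its truncation to a ball of radius $\poly(L/\eps)$.

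The main step is verifying part~2 with $m = 2$. Fix $V$ and suppose alternative~(a) fails, so $\E[\var(f(\bx) \mid \bx_V)] > \tau$. I would first exploit a structural simplification available in the realizable setting: since $V^\perp \cap U^\perp = V^\perp \cap W^\perp$, conditioning on $\bx_V = \z_V$ yields $\bx_W = P_W \z_V + M \bx_U$ for a (generically invertible) linear map $M = P_W|_U : U \to W$. Thus $y = h_{\z}(\bx_U) := f(P_W \z_V + M \bx_U)$ is a \emph{deterministic} function of $\bx_U \sim \cN_U$, and the task reduces to showing that the degree-$\leq 2$ Hermite projection of $h_{\z}$ captures a $\poly(1/(K L))$-fraction of $\var(h_{\z})$, averaged over $\z$. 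The candidate polynomial I would use is $p_{\z}(\bx_U) \propto \bx_U^T Q \bx_U - \tr Q$ (normalized to unit variance), where $Q = M^T M$; this equals $\|\bx_W - P_W \z_V\|^2$ up to a constant shift. The link to $y$ comes through the positive-homogeneity identity $y^2 = \|\bx_W\|^2 \tilde{f}^2(\bx_W/\|\bx_W\|)$ together with $\tilde{f}^2 \leq L^2$, which forces $\E[\|\bx_W\|^2 \mid \bx_V, y]$ to vary non-trivially with $y$ whenever $\var(h_{\z})$ is large. In parallel, Euler's identity $\bx_W \cdot \nabla f = f$ combined with $\E[\|\nabla f\|^2] \leq L^2$ and a Gaussian Poincar\'e-type argument caps the high-degree Hermite tail of $h_{\z}$, ensuring that the low-degree mass dominates.

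The hard part will be this last quantitative Hermite bound: $L$-Lipschitz alone is insufficient (e.g., $\sin(\Theta(1) \bx_U)$ is Lipschitz but has vanishing degree-$\leq 2$ content), so one must make essential use of the angular rigidity imposed by positive-homogeneity to obtain the polynomial-in-$(K, L)$ dependence for $\sigma$. The degenerate case $W \cap V \neq \{0\}$ is handled by reducing the effective MIM dimension to $K - \dim(W \cap V)$, leaving the remainder of the argument essentially unchanged.
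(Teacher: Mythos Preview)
Your plan diverges from the paper at the central structural step, and the divergence is exactly at the point you flag as ``the hard part''.

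The paper does \emph{not} establish $\sigma=\poly(\eps/(KL))$. Instead it works with the label transformation $T(\x_V,y)=\Ind(|y-f(\x_V)|>\tau)$ for $\tau\approx 2\sqrt{K}L$, and observes that by Lipschitzness this event forces $\|\x_U\|^2>\tau^2/L^2\approx 4K$, so the second moment $\E[T\cdot(\|\x_U\|^2-\dim U)]$ is at least $\dim U\cdot\Pr[T]$. The size of $\Pr[T]$ is then controlled by an anticoncentration lemma for positive-homogeneous Lipschitz functions: if $\E[G^2]\ge\sigma^2$ then $\Pr[|G|>s]\gtrsim e^{-3Ks^2/\sigma^2}\cdot s\sigma/(\sqrt{K}L^2)$. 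This yields $\sigma$ (in the sense of \Cref{def:agnosticMIMs-main-body}) of order $e^{-\Theta(K^2L^2/\eps^2)}$, \emph{not} polynomial, and after passing through the cube--interval approximation and the realizable meta-theorem, this exponential factor is precisely the source of the $2^{O(K^3L^2/\eps^2)}$ in the final bound.

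Your claimed $\sigma=\poly(\eps/(KL))$, if true, would give $d^2\cdot 2^{\poly(K)}(L/\eps)^{O(K)}$, which is strictly stronger than the theorem you are trying to prove; so the ``consolidation of exponential factors'' step in your outline is already inconsistent with your own parameter choice. More importantly, the argument you sketch does not establish it: Euler's identity and Gaussian Poincar\'e together bound the \emph{total} variance of $h_{\z}$ by $L^2\cdot\E[\|\x_U\|_Q^2]$, but say nothing about the distribution of that variance across Hermite degrees. Your own $\sin$ caveat is apt---positive-homogeneity restricts $f$ radially, but after conditioning on $\x_V=\z_V$ the induced one-dimensional link $h_\z$ need not be homogeneous, and can still place most of its ``generative'' mass above degree $2$ (think $f(r,\theta)=r\cos(n\theta)$ with $L\approx n$: after conditioning on one coordinate the residual link oscillates $n$ times). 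The paper sidesteps this entirely by never asking for polynomial Hermite mass of $h_\z$; it asks only for correlation of a quadratic with a \emph{carefully chosen indicator} of $y$, and pays the exponential price in $\sigma$ that this indicator's small mass entails.
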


As an immediate corollary of \Cref{thm:learninghom-main-body}, 
we obtain 
a new algorithm---with qualitatively better complexity---for 
homogeneous Lipschitz ReLU networks.
Let $\mathcal{F}_{S,K, L}$ be the class of $L$-Lipschitz 
functions of the form 
$f(\x) = \vec W_D \phi(\vec W_{D-1}(\cdots \phi(\vec W_1\x)\cdots)),$ 
where $\phi(z) = \max\{z, 0\}$ is the ReLU activation, 
$\vec W_i \in \mathbb{R}^{k_{i+1} \times k_{i}}$, $i\in [D-1]$, 
with $k_1=d$ and $k_D=1$, $\mathrm{rank}(\vec W_1) \leq K$, 
and $S = \sum_{i=2}^D k_i$. 
Since $\mathcal{F}_{S,K, L} \subset \mathcal{H}_{K,L}$, 
we obtain the following.

\begin{corollary}[Learning ReLU Networks]\label{thm:learningRelus-main-body}
Let $D$ be the distribution of $(\x,f(\x))$, 
with $\x\sim \cN_d$ and 
$f \in \mathcal{F}_{S,K, L}$.
There is an algorithm that draws $N =d^2 \, 2^{O(K^3L^2/\eps^2)}$ samples from $D$, runs in $\poly(N)$ 
time , and returns a hypothesis $h$ such 
that with high probability $\err_{D}(h)\leq \eps$.
\end{corollary}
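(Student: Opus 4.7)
The plan is to verify the inclusion $\mathcal{F}_{S,K,L}\subseteq\mathcal{H}_{K,L}$ (up to a cosmetic normalization) and then invoke Theorem~\ref{thm:learninghom-main-body} directly. Three structural properties of every $f\in\mathcal{F}_{S,K,L}$ need to be checked: the $K$-MIM property, positive-homogeneity, and the Lipschitz/norm requirements of $\mathcal{H}_{K,L}$.

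The $K$-MIM property follows from the rank assumption $\mathrm{rank}(\vec W_1)\le K$: setting $W=\mathrm{row}(\vec W_1)\subseteq\R^d$, we have $\dim W\le K$ and $\vec W_1\x=\vec W_1\x_W$ for every $\x\in\R^d$, so every subsequent layer receives only $\x_W$ and $f(\x)=f(\x_W)$. Positive-homogeneity follows from the identity $\phi(tz)=t\phi(z)$ for $t\ge 0$ together with the linearity of each $\vec W_i$: an induction on the network depth gives $f(t\x)=tf(\x)$ for every $t\ge 0$ and $\x\in\R^d$. The Lipschitz bound $L$ is assumed. Positive-homogeneity then forces $f(\bzero)=0$, hence $|f(\x)|\le L\|\x\|$ pointwise, and combined with the $K$-MIM property this yields $\|f\|_{L_2(\cN_d)}\le L\sqrt{K}$.

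The only mildly subtle point is the unit-norm requirement in the definition of $\mathcal{H}_{K,L}$. The algorithm of Theorem~\ref{thm:learninghom-main-body} relies on the Lipschitz constant and positive-homogeneity but not on the exact value of $\|f\|_2$; equivalently, one may estimate $c=\|f\|_{L_2(\cN_d)}$ from $\poly(d,1/\eps)$ additional samples, apply the theorem to the unit-norm function $\wt f=f/c$ with appropriately scaled target error, and rescale the returned hypothesis by $c$. Either way, every $f\in\mathcal{F}_{S,K,L}$ falls within the scope of Theorem~\ref{thm:learninghom-main-body}, which immediately yields the claimed $d^2\,2^{O(K^3L^2/\eps^2)}$ sample/time complexity. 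The main (and essentially only) obstacle is this normalization step; the remainder of the argument is direct structural verification, and no new algorithmic content beyond Theorem~\ref{thm:learninghom-main-body} is required.
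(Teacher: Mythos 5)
Your approach is the same one the paper takes: verify $\mathcal{F}_{S,K,L}\subseteq\mathcal{H}_{K,L}$ and then invoke Theorem~\ref{thm:learninghom-main-body}. The structural checks you carry out (the $K$-MIM property via $W=\mathrm{row}(\vec W_1)$, positive-homogeneity by induction on depth using $\phi(tz)=t\phi(z)$ for $t\ge 0$, the pointwise bound $|f(\x)|\le L\|\x_W\|$ giving $\|f\|_2\le L\sqrt K$) are all correct.

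On the normalization point, there is a small wrinkle you should be aware of. The paper sidesteps the issue entirely: the formal definition of $\mathcal{F}_{S,K,L}$ used in the appendix (Definition~\ref{def:relus}) explicitly includes $\E_{\x\sim\cN_d}[f^2(\x)]=1$ as a hypothesis, so the inclusion $\mathcal{F}_{S,K,L}\subseteq\mathcal{H}_{K,L}$ is literal and no rescaling is needed. (The main-body sentence introducing $\mathcal{F}_{S,K,L}$ omits the unit-norm clause, which is a minor inconsistency in the paper; your instinct to worry about it is reasonable.) Your proposed workaround of estimating $c=\|f\|_2$ and applying the theorem to $\tilde f=f/c$ does not quite give the stated bound as written: $\tilde f$ is $(L/c)$-Lipschitz, and to recover error $\eps$ for $f$ you must ask for error $\eps/c^2$ for $\tilde f$, so the exponent becomes $O\bigl(K^3(L/c)^2/(\eps/c^2)^2\bigr)=O(K^3L^2c^2/\eps^2)$. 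Since $c$ can be as large as $L\sqrt K$, this inflates the exponent to $O(K^4L^4/\eps^2)$ rather than the claimed $O(K^3L^2/\eps^2)$. The clean fix is just to observe that the class is defined with $c=1$, exactly as the paper does.
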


\Cref{thm:learningRelus-main-body} improves on the prior work 
of~\cite{chen2022FPT} 
by eliminating the complexity dependence on the network size \( S \) 
(on which the prior algorithm of \cite{chen2022FPT} had an exponential dependence).

\medskip

We now proceed to describe our Statistical Query lower bounds. 
We start with the model definition.

\begin{definition}[Statistical Query Model] \label{def:sq}
Let $D$ be a distribution on $\R^d$. 
A \emph{statistical query} is a bounded function $q:\R^d\rightarrow[0,1]$. 
We define $\mathrm{STAT}(\tau)$ to be the oracle that given any such query $q$, 
outputs a value $v$ such that $|v-\E_{\bx\sim D}[q(\bx)]|\leq\tau$, where $\tau>0$ is the \emph{tolerance} of the query.
A \emph{Statistical Query (SQ) algorithm} is an algorithm 
whose objective is to learn some information about an unknown 
distribution $D$ by making adaptive calls to the corresponding $\mathrm{STAT}(\tau)$ oracle.
\end{definition}

Our SQ lower bound relies on the existence of 
a distribution on labeled examples that has similar low-degree moments 
as the standard Gaussian projected onto some subspace. 
Namely, for a distribution $(\bx, y)$ 
supported on $\R^{d+1}$ 
and an appropriate subspace $V \subseteq \R^{d}$, 
the distribution of $\bx_{V^\perp}$ conditioned 
on any fixed value of $\bx_V$  
and $y$ 
matches its first $m$ moments with $\gaus(0,\Pi_{V^\perp})$ 
(the standard Gaussian projected onto 
the subspace $V^\perp$), where $\Pi_{V^\perp}$ denotes the 
projection matrix of the subspace $V^\perp$.
\begin{definition} [Relative Matching of Degree-$m$ Moments] \label{def:exact-matching-moment}
Let $m\in \Z_+$, $A$ be a distribution of $\bv$ supported on $\R^n$ and $U\subseteq\R^n$ be a subspace.
We say that $A$ matches degree-$m$ moments relative to the subspace $U$ (with the standard Gaussian projected onto $U^\perp$) if 
for almost all $\hat \bv\in U$, 
under the distribution of $\hat \bv=\bv_U$, 
for all $m'\leq m$ 
it holds 
$\E_{\bv\sim A\mid \bv_U=\hat \bv}[(\bv_{U^\perp})^{\otimes m'}] 
=\E_{\bv\sim \gaus(0, \Pi_{U^\perp})}[\bv^{\otimes m'}]$, 
where we denote by $\bv^{\otimes m'}$ the 
$m'$-fold tensor product.
\end{definition}

We are now ready to state our SQ lower bound for agnostic 
PAC learning of $K$-MIMs under the Gaussian distribution.

\begin{theorem} [SQ Lower Bound for Learning $K$-MIMs] 
\label{thm:SQ-agnostic-body}
Let $\C$ be a class of rotationally invariant $K$-MIMs on $\R^d$.
Suppose there exist $m \in \Z_+$, $\tau>0$, and 
a joint distribution $D$ of $(\bx,y)$ supported on 
$\R^d\times \R$ with $D_\x$ equal to  $\gaus_d$
such that 
for some subspace $V\subseteq \R^d$, we have:
\begin{enumerate}[leftmargin=*, nosep]
\item The distribution $D$ 
matches degree-$m$ moments relative to the subspace $V\times \R$,
where the extra $\R$ contains the label; and \label{cond:realizeable-matching-moment}
\item Any function $h:\R^d\to \R$ has $\E_{(\bx,y)\sim D}[(h(\bx_{V})-y)^2]\geq \tau $. 
\end{enumerate}
Then, under the mild assumption that the extreme values of $y$ 
have small contribution to the variance, the following holds:
for $d$ sufficiently large compared to $K$ and $m$, 
and $c \in (0, 1)$, 
any SQ algorithm that learns  $\C$  
within error substantially better than $\tau$
given $\opt\leq \inf_{c\in \C}\err_D(c)$
requires 
either a query to 
$\mathrm{STAT}\left (d^{-(1-c)m/4}\right )$ 
or $2^{d^{\Omega(c)}}$ many queries.
\end{theorem}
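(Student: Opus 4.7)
The plan is to apply the \emph{Non-Gaussian Component Analysis} (NGCA) template for Gaussian SQ lower bounds, specialized here to the label-aware setting. Using the rotational invariance of $\C$, for each orthogonal transformation $\bO\in\orthor(d)$ define $D_\bO$ as the pushforward of $D$ under $(\bx,y)\mapsto(\bO\bx,y)$. Each $D_\bO$ is a valid instance of the same learning problem, with the ``hidden subspace'' relevant to near-optimal prediction now rotated to $\bO V$. Condition~2 transfers to yield $\E_{(\bx,y)\sim D_\bO}[(h(\bx_{\bO V})-y)^2]\geq \tau$ for every $h$, so any learner that outputs a hypothesis of error substantially below $\tau$ must implicitly identify $\bO V$.

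The technical heart is an NGCA-style chi-squared correlation bound between any two members of the family, computed against a rotationally invariant null reference $\mathcal N$ (for example, $\x\sim\gaus_d$ tensored independently with the $y$-marginal of $D$). Expanding the Radon-Nikodym derivative $dD_\bO/d\mathcal N$ in a Hermite basis indexed by multi-degrees along the $\bO V^\perp$ coordinates, Condition~1 (the relative matching of degree-$m$ moments) annihilates all coefficients of total degree at most $m$, yielding
\begin{equation*}
\chi_{\mathcal N}(D_\bO,D_{\bO'}) \;\lesssim\; \|\Pi_{\bO V}\,\Pi_{\bO' V}\|_{\mathrm{op}}^{\,m+1}\cdot C(D),
\end{equation*}
where $C(D)$ is a finite constant depending on higher moments of $D$ (the mild tail hypothesis on $y$ is invoked here to guarantee $C(D)$ remains well behaved under a $\poly(d)$-level truncation of $y$). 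A standard Haar-measure packing argument then produces $N=2^{d^{\Omega(c)}}$ rotations with pairwise projections $\|\Pi_{\bO_iV}\Pi_{\bO_jV}\|_{\mathrm{op}}\leq d^{-(1-c)/2}$, giving pairwise correlation at most $d^{-(1-c)(m+1)/2}$.

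Feeding this bound into the generic SQ dimension lemma of Feldman--Grigorescu--Reyzin--Vempala, together with the separation guarantee from Condition~2 (a single hypothesis cannot achieve error substantially below $\tau$ on two distinct $D_{\bO_i},D_{\bO_j}$ whose subspaces are nearly orthogonal), delivers the stated dichotomy: either a query of tolerance below $\sqrt{d^{-(1-c)(m+1)/2}}\leq d^{-(1-c)m/4}$ is required, or at least $2^{d^{\Omega(c)}}$ queries are needed. The main obstacle I anticipate is this separation step: translating the ``any function of $\bx_V$ has error $\geq \tau$'' hypothesis into a clean bilateral statement that no single $h$ can simultaneously do substantially better than $\tau$ on both $D_{\bO_i}$ and $D_{\bO_j}$ demands a careful perturbation argument exploiting the operator-norm bound $\|\Pi_{\bO_iV}\Pi_{\bO_jV}\|_{\mathrm{op}}\leq d^{-(1-c)/2}$, coupled with the $y$-truncation enabled by the mild tail assumption in the statement.
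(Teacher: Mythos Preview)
Your proposal has a genuine gap at its technical heart: the chi-squared correlation bound you rely on need not exist. You write the Radon--Nikodym derivative $dD_\bO/d\mathcal N$ and appeal to a finite constant $C(D)$ controlling the higher-order Hermite mass, but in the setting at hand---notably the realizable case $y=f(\bx)$---the conditional distribution of $\bx_{V^\perp}$ given $(\bx_V,y)$ is typically supported on a lower-dimensional level set and is therefore \emph{singular} with respect to the Gaussian. The chi-squared divergence $\chi^2(D_\bO,\mathcal N)$ is then infinite, and no truncation of $y$ fixes this: the singularity lives in the $\bx$ variable, not in the tails of $y$. The paper explicitly flags this obstacle (see the technical overview), noting that it ``rules out the traditional SQ dimension-based arguments'' of the Feldman--Grigorescu--Reyzin--Vempala type that you invoke.

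The paper's route circumvents this by abandoning pairwise correlation/SQ-dimension entirely. Instead it (i) recasts the learning task as a \emph{relativized} NGCA testing problem---distinguishing a null $\gaus_d\otimes A_{(\bx_V,y)}$ from an alternative where the $V^\perp$-part of $\bx$ carries a hidden rotated copy of $A$---and (ii) proves a direct tail bound (\Cref{app:prp:main-tail-bound}) showing that for any bounded query $f$, the value $\E_{\p_\bU^A}[f]$ concentrates around $\E_{\gaus_d\otimes A_y}[f]$ over a uniformly random $\bU$. The concentration proof works via a finite-degree Hermite decomposition plus a truncation-and-reweighting of $A$ inside a ball (\Cref{app:def:truncated_reweighted}, \Cref{app:lem:truncation}), which bounds the growth of the Hermite-moment tensors $\|\E_{A'_{\bx|y}}[\bH_i]\|_{A'_y}$ without ever requiring $\chi^2(A,\gaus_k\otimes A_y)<\infty$. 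The reduction from testing to learning then only needs Condition~2 to say that under the null no hypothesis beats $\tau$, which is much cleaner than the bilateral separation step you identify as an obstacle. Your mild-tail assumption on $y$ enters only at the very end, to allow the learner's squared loss to be evaluated by a bounded SQ query (\Cref{app:fct:bound-variance-non-extreme}).
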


Since a query to $\mathrm{STAT}(\tau)$ requires 
$\Omega(1/\tau^2)$ samples to simulate in general, the 
intuitive interpretation of our SQ lower bound is the following: 
any simulation of an SQ algorithm for our learning task 
using samples, either requires 
$d^{(1-c)/m/2}$ samples or exponential in $d^{c}$ time.

Note that \Cref{thm:SQ-agnostic-body} is essentially 
(up to some technical conditions on each side) a converse to 
Theorem \ref{thm:MetaTheorem-Agnostic-main-body}. In 
particular, Theorem \ref{thm:SQ-agnostic-body} says that if 
there is a subspace $V$ so that it is neither the case that 
$y$ is $\tau$-close to a function of $\x_V$ nor is there a 
non-trivial moment conditioned on $\x_V$ and $y$ of degree at 
most $m$, then it is SQ-hard to learn 
(with queries of $d^{-O(m)}$ accuracy) 
to error much better than $\tau$. 
On the other hand, 
Theorem \ref{thm:MetaTheorem-Agnostic-main-body} 
says that if for every subspace $V$ 
we either are approximated by a function of $\x_V$ 
or have a non-trivial conditional moment, 
then we can learn to error roughly $\tau$ 
in time $d^{O(m)}$ times 
some function of the other parameters.

It is worth pointing out that an SQ lower bound for realizable 
learning of $K$-MIMs can be obtained here as a corollary of 
\Cref{thm:SQ-agnostic-body} by additionally having that $\opt = 
\inf_{c\in \C}\err_D(c)=0$.

Both \Cref{def:exact-matching-moment} 
and the corresponding SQ lower bounds for learning MIMs 
can be generalized for {\em approximate} moment-matching 
and for more general label spaces; see \Cref{app:sec:lb}.

\subsection{Technical Overview}\label{sec:tech-overview}

\paragraph{General Algorithm.}  
Intuitively, our plan is to first estimate the hidden subspace, $W$, and then to use a brute-force technique 
to learn a distribution that depends on $K$ dimensions.
A straightforward approach to implement this plan is to use 
the method of moments. 
Since (in the noiseless case) $ y $ depends only on the components of $\x$  within $W$, any non-vanishing moments must lie entirely within $W$ .
Unfortunately, this approach can perform poorly---even 
for simple function classes, such as linear combinations of ReLUs.
Specifically, \cite{DiakonikolasKKZ20} showed that there exist 
linear combinations of  $k$  ReLUs whose first $k$ moments vanish. 
This implies that any purely ``moment-based'' strategy 
would require at least $d^{\Omega(k)}$ sample and time complexity.
The work \cite{chen2022FPT} improved on this 
(for Lipschitz and homogenerous ReLU networks) 
by considering a more powerful test: 
examining moments of  $\x$ conditioned on  $y$ 
falling within a specified range 
(or, equivalently, analyzing moments of indicator functions applied to $y$).
While this broadens the power of the algorithm, 
simply computing moments in one shot 
may still be insufficient to obtain near-optimal algorithms.
In particular, \cite{diakonikolas2025robustlearningmultiindexmodels} 
presents a class of Boolean functions for which 
no constant number of moments suffices to learn the hidden subspace.
However, a two-stage procedure---first using moments to identify 
a lower-dimensional subspace \( V \), and then leveraging additional 
moments conditioned on the projection onto \( V \)---can 
successfully learn the full subspace.

This approach underlies our algorithm (see \textbf{LearnMIMs}).
We employ an iterative approach that constructs progressively 
larger subspaces \( V \).
At each stage, we analyze the moments of \(\x\) 
conditioned on \( y \) lying within a small range 
and the projection of \( \x\) onto \( V \) 
falling within another localized region.
If any of these conditional moments exhibits significant 
correlation with a particular direction 
(which we can detect using spectral methods), 
we augment \( V \) by adding that direction.
We repeat this process for several iterations, 
and then learn a function of the projection 
onto \( V \) via brute-force search.

This method does not work for all functions, 
but is successful for functions that are suitably well-behaved. 
In particular, we require that at each stage, 
either at least one of the discovered directions 
correlates non-trivially with the hidden subspace \( W \) 
(indicating progress), or that the current subspace \( V \) 
already contains sufficient information to learn the target function
to suitable error.
In particular, we aim to ensure that 
for every function \( f \) 
in our class (possibly with added noise) and 
every subspace \( V \), either \( f \) is well-approximated 
by some function of the projection onto \( V \) 
(within the allowable error tolerance of our learner), 
or there exists a neighborhood \( N \subseteq V \) 
and an interval \( I \subseteq \mathbb{R} \) such that, 
conditioned on \( \x_V \in N \) and \( y \in I \), 
the distribution of \( \x \) exhibits a non-trivial moment 
in some direction in \( W_ {V^{\perp} }\).
To achieve this, we prove that a weaker condition actually suffices.
This condition essentially states that either the function is close to
a function of the projection onto $V$, or that every noisy version of the
function—with a small amount of additional additive noise—exhibits
distinguishing moments (see \Cref{prop:update}).
To make the algorithm work, we also need a few other minor 
technical assumption to ensure that it is sufficient 
to condition on small neighborhoods. 
For the full condition, see~\Cref{def:agnosticMIMs-main-body}.

\vspace{-0.2cm}

\paragraph{SQ Lower Bound.} 
While the aforementioned condition might not appear 
especially natural, we show that it is 
essentially {\em necessary}---in the sense that 
we establish a nearly-matching lower bound 
in the Statistical Query (SQ) model. 
In particular, if we have a rotationally-invariant function 
class containing some function $f$ that does not satisfy 
this condition---namely, for some subspace $V$, 
$f$ is neither close to a function of $\bx_V$ 
nor is there some conditioning on $y$ and $\bx_V$ 
that leads to non-trivial low degree moments---then we 
prove a lower bound for learning this function class 
to suitably small error in the SQ model.
In particular, if we rotate this function $f$ 
and the joint distribution of $(\bx,y)$ about $V$, 
we have a distribution that---once we condition on the value of $y$ and $\bx_V$---we end-up with a random rotation 
of the distribution $A_{y,\bx_V}$,
where  $A_{y,\bx_V}$ is the distribution of $\bx_{V^\perp}$ 
conditioned on $y$ and $\bx_V$.
Furthermore, $A_{y,\bx_V}$ matches its first $m$ moments 
with the standard Gaussian projected onto the subspace $V^\perp$. 
This is an example of a {\em Relativized} Non-Gaussian Component Analysis (RNGCA) problem. Given the moment-matching property, 
one would expect the following: the SQ-complexity of 
distinguishing between this distribution and the one where $\bx_{V^{\perp}}$ is independent of $\bx_V$ and the $y$
is $d^{\Omega(m)}$. Since the latter distribution cannot be learned within any error better than the error of learning $y$ 
as a function of $\bx_V$ (which by our assumption is large), 
this provides our learning SQ lower bound. 

Unfortunately, while this kind of SQ lower bounds 
for Non-Gaussian Component Analysis (NGCA) 
are well-established~\cite{DKS17-sq}, 
the distributions $A_{y,\bx_V}$ will likely not be continuous 
with respect to the standard Gaussian. In particular, they 
will  not have finite chi-squared norm with respect 
to the standard Gaussian. 
This rules out the traditional SQ dimension-based arguments 
for proving the desired lower bounds. 
Recent work~\cite{DKRS23} showed that these kinds 
of SQ lower bounds can be proven with just moment-matching 
and no assumption on the Chi-squared norm. However, 
that work did not prove these bounds for RNGCA, i.e., 
could prove lower bounds for learning a single $A_{y,\bx_V}$, 
but not the mixture over many of them (as we vary $y$ and $\bx_V$). 
Fortunately, this can be fixed by generalizing 
the techniques of \cite{DKRS23} to our more challenging context.
Specificalaly, we show that an arbitrary bounded 
SQ query function $q$ is overwhelmingly likely 
to have expectation over the joint distribution of $(\bx,y)$ 
very close to the averaged expectation over random rotations 
of this distribution described above.
By mirroring the analysis of~\cite{DKRS23}, we prove this 
by using Fourier analysis. We note that
the low-degree Fourier coefficients 
of $A_{y,\bx_V}$ vanish (or nearly vanish) 
and so contribute little to the expectation of $q$;  
and that the higher-degree Fourier coefficients are unlikely to correlate well with $q$ after the random rotation is applied.

\vspace{-0.2cm}

\paragraph{Concrete Applications.} 
Given our general algorithm, our applications hinge on establishing structural results for the relevant function classes.
In particular, in order to obtain an algorithm for a function class $\F$, we need to show that it satisfies  
\Cref{def:agnosticMIMs-main-body} with suitably favorable parameters. 
Specifically, we need to establish that, unless a function in \( \F \) 
is already close to depending only on the projection onto \( V \), it 
exhibits non-trivial conditional low degree moments.

Our main application is to the class of  
positive-homogeneous Lipschitz functions--- a broad, 
nonparametric generalization of the ReLU networks 
studied in~\cite{chen2022FPT}.
Here we show that second moments are sufficient.
The basic idea is that if \( f \) is not close to zero, 
then there exists some \( \x \) for which \( \abs{f(\x)} \) 
is reasonably large.
This implies that $\abs{f(\lambda \x)}$ will be quite large 
for suitably large $\lambda$. 
On the other hand, by the Lipschitz property, 
$|f(\x)|$ can only be large if $\|\x_W\|$ is large. 
Therefore, the set 
\( S_t = \{ \mathbf{x} : |f(\mathbf{x})| > \tau \} \) 
will exhibit a non-trivial second moment 
along \( W \) for sufficiently large \( \tau \).
This argument yields at least one relevant direction. 
Moreover, given a subspace \( V \), we can apply 
the same reasoning to the residual function 
\( f(\x) - f(\x_V) \).
This shows that either \( f(\x) \) is close 
to \( f(\x_V) \) (i.e., a function of the projection onto \( V \)), 
or \( f \) exhibits a non-vanishing conditional moment.
Consequently, by approximating the Boolean function 
\( \Ind(\abs{f(\mathbf{x}) - f(\mathbf{x}_V)} \geq \tau) \) 
by a piecewise constant function over a partition 
consisting of cubes in \( \mathbf{x}_V \) 
and intervals in \( y \), we 
show that there exists a partition element 
for which the conditional distribution exhibits a non-trivial moment.
This, in turn, implies that the function class 
is well-behaved, so our algorithm applies.

An additional application is for 
the class of polynomials that depend only 
on projections onto a low-dimensional subspace, recovering
the upper bounds of~\cite{CM20}. See \Cref{sec:applications}.

\subsection{Related Work} \label{ssec:related-body}

Due to space limitations, here we record the most directly relevant works.
For a detailed overview, see~\Cref{sec:related-app}. 
Roughly speaking, our algorithmic understanding of learning SIMs is currently
fairly complete, both for parameter recovery~\cite{DH18, arous2021online,DPLB24} 
and agnostic PAC learning~\cite{DKT22, wang2024sample,ZWDD24,ZWD2025}. On the 
other hand, our understanding of the efficient learnability 
of MIMs is somewhat more limited. 
A number of papers have developed efficient learners 
for interesting special cases,
including low-dimensional polynomials~\cite{CM20} and homogeneous ReLU 
networks~\cite{chen2022FPT}. \cite{abbe2021staircase,abbe2022merged,abbe2023sgd} introduced a complexity notion (leap complexity) for learning structured MIMs, 
which turns out to essentially characterize the Correlational 
SQ (CSQ) complexity of learning under certain assumptions.
More recently, \cite{JMS24} adapted 
the notion of leap complexity to characterize the SQ hardness
of hidden-junta functions (a natural special case of MIMs). The reader is referred to~\cite{bruna2025survey} for a very recent survey on the topic.

\cite{DPLB24} 
defined the notion of the generative exponent, which plays 
the role of our parameter $m$ in characterizing 
the complexity of parameter recovery 
for SIMs. As explained in 
Appendix~\ref{app:sec:comparison-lb-app}, 
our Definition~\ref{def:agnosticMIMs-main-body} reduces to a modification of the 
generative exponent when $K=1$.  
Such a modification is necessary, 
to account for the fact that we characterize the 
complexity of PAC learning, rather than 
parameter estimation, even in the presence 
of adversarial label noise. 
Thus, our techniques can be viewed as a 
generalization of~\cite{DPLB24} to multi-index models.

\paragraph{Comparison with \cite{diakonikolas2025robustlearningmultiindexmodels}} 
At the technical level, the most closely related work to ours 
is \cite{diakonikolas2025robustlearningmultiindexmodels}, 
that established a discrete-analogue of 
our results in the context of classification 
for MIMs with finite output space. 
While our work broadly follows the approach of \cite{diakonikolas2025robustlearningmultiindexmodels}, 
the transition from discrete-valued MIMs to 
those with infinitely many outputs, 
as well as the shift from $L_0$-loss to $L_2$-loss, 
requires significant changes in the mechanics 
of our results and the analysis.

In terms of our algorithm, perhaps the most significant change 
is that we can no longer condition on specific values of $y$---since 
we do not expect to observe repeated $y$ values. Instead, we 
need to condition on $y$ falling within a small interval.
Additionally, since $y$ is now unbounded 
and we are working with the $L_2$ loss,
establishing convergence results for our piecewise constant approximations becomes more challenging.
Finally, \cite{diakonikolas2025robustlearningmultiindexmodels} 
used a technical condition on the Gaussian surface area 
of the level-sets to allow conditioning on small rectangles, 
and to guarantee that the learned directions 
are sufficiently distinct from those already identified.
Here we need to design new conditions to deal with these issues. 
Regarding our SQ lower‐bound analysis, 
conditioning on a given value of $y$
in this setting would likely yield a singular distribution.
So establishing the desired bounds requires us to develop
new machinery for proving lower bounds for relativized NGCA
without having bounds on the chi-squared divergence.
Another technical complication arises in our reduction 
from testing lower bounds to learning.
In particular, we need to be able to approximate the $L_2$ loss 
within the SQ framework. While this is essentially trivial for the $L_0$ loss, here we need to add some technical conditions
to make it feasible, as $y$ might be unbounded.

\section{General MIM Algorithm} \label{sec:alg-body}

As mentioned in \Cref{sec:tech-overview}, to apply the moment method effectively
 to such a general class of functions, we need to condition on $\x$ and $y$ falling
 within certain ranges.
To achieve this, we partition the space of $\x$ and $y$ into sufficiently
 small regions—specifically, regular cubic regions for $\x$ and intervals for $y$.
We prove that, as long as these partitions are fine enough, 
they can detect distinguishing moments. Formally:

\begin{definition}[$\eps$-Approximating Discretization]
Let $V$ be a subspace of $\R^d$.
We define an $\eps$-approximating discretization of $V \times \R$ as a pair
$(\mathcal{S}, \mathcal{I})$ satisfying the following.  
The set $\mathcal{S}$ partitions the subset of $V$, 
consisting of all vectors whose coordinates in a fixed orthonormal basis of $V$
are less than $\sqrt{\log(1/\eps)}$ in absolute value, into cubes of side length $\eps$
(with respect to the same orthonormal basis). 
The set $\mathcal{I}$ partitions the interval $[-1/\eps, 1/\eps]$ into intervals of length $\eps$.
\end{definition}

Moreover, for a partition $\mathcal{S}$, we denote by 
$h_{\mathcal{S}}$ the piecewise constant function that for 
every $S\in \mathcal{S}$ 
outputs $h_{\mathcal{S}}(\x)=\E[y\mid \x\in S]$ 
for all $\x\in S$.

As mentioned in \Cref{sec:tech-overview}, our algorithm, \textbf{LearnMIMs}, performs iterative subspace 
approximation.  
At each step \( t \), it updates a list of vectors \( L_t \) 
(Line~\ref{line:updateMeta-main-body} of \textbf{LearnMIMs}) so 
that the span \( V_t = \mathrm{span}(L_t) \) becomes a 
better approximation of the hidden subspace \( W \).
Specifically, at each iteration, the algorithm computes a 
sufficiently fine discretization \( (\cS, \cI) \) 
of the space \( V_t \times \mathbb{R} \) (Line~\ref{line:discretization-main-body} of \textbf{FindDirection}).  
Using the assumption that the distribution is well-behaved (\Cref{def:agnosticMIMs-main-body}),  
we can show that a non-negligible fraction of the discretization cells exhibit distinguishing moments.

As a result, we extract relevant directions by computing the 
top eigenvectors of the influence matrix corresponding to a 
regression polynomial fitted within each cell (Line~\ref{line:regression-main-body} of \textbf{FindDirection}).  
However, since the number of discretization cells depends 
exponentially on \( \dim(V_t) \), we must apply a filtering step to avoid adding too many vectors.
To this end, we construct a matrix $\vec{U}$, which is the weighted sum of influence matrices across all discretization cells,  
with weights given by the probability mass of each cell (Line~\ref{line:matrixMeta-main-body} of \textbf{FindDirection}).  
It is not difficult to show that, since a constant fraction of the cells exhibit distinguishing moments,  
there exists an eigenvector of $\vec{U}$ with a sufficiently large eigenvalue that correlates with a distinguishing moment, thereby revealing a relevant direction.
Once no further distinguishing moments can be found, 
since the target function satisfies
\Cref{def:agnosticMIMs-main-body}, the current subspace $V_t$ forms a good enough approximation of $W$.  
Finally, the algorithm returns a piecewise constant function \( h_{\mathcal{S}} \), defined over a sufficiently fine partition \( \mathcal{S} \) of \( V_t \).

\begin{algorithm}[h] 

    \centering
    \fbox{\parbox{5.45in}{
            { \textbf{LearnMIMs}}: Robust Regression for Well-Behaved MIMs

            \smallskip
            
            {\bf Input:}   Accuracy $\eps >0$, sample access to a distribution $D$ over $\mathbb{R}^d\times \R$
    for which there exists a $K$-MIM function $f\in \mathcal{F}(K,m,\opt+\eps,\tau,\sigma)$, parameters $m,\sigma,K$.\\
            {\bf Output:} A hypothesis $h$ such that 
            with high probability $\err_D(h) \leq \tau+\opt+\eps$.

            \smallskip

            \begin{enumerate}[leftmargin=*]
            
            \item  Let $T$ be a  sufficiently large constant-degree polynomial in $m, K, 1/\sigma, 1/\eps$.\label{line:initMeta-main-body}  
             \item  Initialize $L_1\gets \emptyset$,   $N\gets {d}^{O(m)}2^{T}\log(1/\delta)$.\label{line:initMetaParams-main-body}
                \item For $t=1,\dots,T$
            \label{line:loopMeta-main-body}
             \begin{enumerate}
             \item Draw a set $S_t$ of $N$ i.i.d.\ samples from $D$.
             \item $\mathcal{E}_t\gets$ \textbf{FindDirection}($\spaning(L_t), S_t, \eps,\sigma,m,K$).
                \item $L_{t+1}\gets L_{t}\cup \mathcal{E}_t $.\label{line:updateMeta-main-body}
            \end{enumerate}
                \item Construct an $\eps$-approximating discretization 
             $(\mathcal{S}, \mathcal{I})$ of $\spaning(L_t)\times \R$.
             \item Draw $N$ i.i.d.\ samples from $D$ and empirically approximate 
             the piecewise constant function 
             $h_{\mathcal{S}}$.
   \item  Return $h_{\mathcal{S}}$. 
            \end{enumerate}

    }}

\vspace{0.2cm}
\caption{Learning Well-Behaved MIMs}
 \label{alg:MetaAlg1-main-body}
\end{algorithm}

\medskip

\begin{algorithm}[h]    
    \centering
    \fbox{\parbox{5.45in}{
            { \textbf{FindDirection}}: Estimating a relevant direction

            \smallskip
            
            {\bf Input:}  A subspace $V $ of $ \R^d$, and a set of $N$ samples from a distribution $D$ over $\mathbb{R}^d\times \R$
    for which there exists a $K$-MIM function $f\in \mathcal{F}(K,m,\opt+\eps,\tau,\sigma)$, parameters $\eps,\sigma, m,K$.  \\
            {\bf Output:}A set of unit vectors $\mathcal E$.
            \smallskip
            \begin{enumerate}[leftmargin=*]
            \item Let $\lambda$ be a sufficiently small polynomial in $\sigma, \eps, 1/K$.
             \item  Construct an $\eps$-approximating discretization 
             $(\mathcal{S}, \mathcal{I})$ of $V\times \R$.  \label{line:discretization-main-body}
                \item \label{line:regression-main-body}
        For each \( S \in \mathcal{S} \) and  
        \( I \in \mathcal{I} \),  perform degree-$m$ 
        polynomial regression on \( \Ind(y \in I) \) over the samples, resulting in a polynomial \( p_{S,I}(\bx_{V^{\perp}}) \).
    
            \item Let $\vec U= \sum_{S\in \mathcal{S},I\in \mathcal{I}} \E_{\bx\sim D_{\bx}}[\nabla p_{S,I}(\bx_{V^{\perp}}) \nabla p_{S,I}(\bx_{V^{\perp}})^\top\mid \bx \in S]\pr_{(\bx,y)\sim {D}}[S]$.  \label{line:matrixMeta-main-body}
             \item  Return the set $\mathcal{E}$ of unit eigenvectors of $\vec U$  with  corresponding eigenvalues at least $\lambda$.   
            \end{enumerate}

    }}

    \vspace{0.2cm}

\caption{Estimating a relevant direction}
\label{alg:MetaAlg2-main-body}

\end{algorithm}

The main part of our analysis is to show that, at each iteration,
 as long as $V_t$ is not sufficient to compute a hypothesis with
 small error, the algorithm will add a 
 direction that correlates with $W$.
By applying this argument iteratively, we 
can show that improvement will 
eventually stop and 
we will have a good  predictor.

\begin{proposition}[Estimating a Relevant Direction]\label{prop:update}
Let $D$ be distribution supported on 
$\mathbb{R}^d \times \R$ 
whose $\bx$-marginal is $\mathcal{N}_d$.
Let $f:\R^d\to \R$ be such that 
$f\in \mathcal{F}(K,m,\opt+\eps,\tau,\sigma)$, 
and denote by $W$ a $K$-dimensional subspace defining $f$.
Let $V$ be a $k$-dimensional subspace of $\R^d$ and let $\cS$ be a partition of $V$ into cubes of width $(\eps/k)^{O(1)}$. If $\E_{(\bx,y)\sim D}[(h_{\mathcal{S}}(\bx)- y)^2]>\tau+\opt +\eps$,
then \textbf{FindDirection}, when given $N=d^{O(m)}(k/\eps)^{O(k)}/\sigma^{O(1)}$ samples, runs in time
$\poly(N)$, and with high probability  
returns a list of unit vectors $\mathcal E$ of size 
$|\mathcal E|=(mK/(\eps\sigma))^{O(1)}$, such that 
for some $\vec v\in \mathcal E$, $\|\vec v_W\|= (\eps\sigma/(mK))^{O(1)}$.
\end{proposition}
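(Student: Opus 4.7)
The plan is to show that when $h_\mathcal{S}$ fails to achieve small error, the well-behavedness condition forces case (b) of \Cref{def:agnosticMIMs-main-body} to hold, and then to extract a direction correlated with $W$ from the resulting distinguishing moment via the polynomial-regression step.

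First I would rule out case (a). By construction $h_\mathcal{S}$ is the best $L^2$-approximation of $y$ by a piecewise-constant function on the cubes of $\mathcal{S}$. Since the cube width is $(\eps/k)^{O(1)}$, standard Gaussian regularity together with the assumption that $f$ is close to a bounded function implies $h_\mathcal{S}$ approximates the conditional expectation $\E[y\mid \bx_V]$ to $L^2$-error $\eps/C$ on the truncation region, while the tails contribute $o(\eps)$. Hence the hypothesis $\E[(h_\mathcal{S}(\bx)-y)^2] > \tau+\opt+\eps$ rules out any $g:V\to\R$ with $\E[(f(\bx)-g(\bx_V))^2]\leq \tau$, and case (b) of \Cref{def:agnosticMIMs-main-body} must hold: with probability $\eta = \Omega(1)$ over $\z$, there is a zero-mean unit-variance degree-$\leq m$ polynomial $p_\z:U\to\R$, $U=W_{V^\perp}$, with $\E_{y_0}[\E[p_\z(\bx_U)\mid \bx_V=\z_V,y=y_0]^2]\geq \sigma$.

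Second I would transfer this pointwise moment condition to the cells of $(\mathcal{S},\mathcal{I})$. Using anti-concentration of the conditional densities and the polynomially small cube/interval widths, a constant fraction of cells $(S,I)$ satisfy $\E[p_{\z_S}(\bx_U)\mid \bx_V\in S,y\in I]^2 \geq \sigma/2$, weighted by the cell probabilities, with the total mass $\Omega(\eta\sigma)$. In each such cell the regression polynomial $p_{S,I}$ is the $L^2$-projection of $\Ind(y\in I)$ onto degree-$\leq m$ polynomials of $\bx_{V^\perp}$ under the measure conditioned on $\bx\in S$, so the projection identity and Cauchy--Schwarz with the test polynomial $p_{\z_S}$ yield $\|p_{S,I}\|_2 \cdot \|p_{\z_S}\|_2 \geq |\E[\Ind(y\in I)\,p_{\z_S}(\bx_U)\mid S]|$, which is non-trivial by the previous step.

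The main obstacle is to convert these scalar correlations into a spectral lower bound on $\vec U$ restricted to $W$. The Hermite identity on Gaussian space gives $\E[\langle \nabla q_1,\nabla q_2\rangle]\geq \E[q_1 q_2]-\E[q_1]\E[q_2]$ for mean-zero degree-$\leq m$ polynomials, passing from $L^2$ correlation to gradient correlation; summed against the witness $p_{\z_S}$, each $\nabla p_{S,I}$ inherits a non-trivial component along $\nabla p_{\z_S}\in U$. The delicate point is that a unit vector in $U$ need not project substantially onto $W$; the singular values of $P_W|_U$ equal $\sin\theta_i$ for the principal angles $\theta_i$ between $W$ and $V$. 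However, small $\sin\theta_i$ (i.e.\ $W$ nearly contained in $V$) contradicts the first step, since it would imply that some function of $\bx_V$ already approximates $f$ within $\tau$. Quantifying this tradeoff yields a $(\eps\sigma/(mK))^{O(1)}$ lower bound on $\sin\theta_{\max}$ and in turn on $\|P_W \nabla p_{S,I}\|$ for cells in the good fraction, so summing over cells gives $\tr(P_W\vec U P_W)\geq (\eps\sigma/(mK))^{O(1)}$.

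Finally, since $P_W\vec U P_W$ has rank $\leq K$, its largest eigenvalue on $W$ is at least $\tr/K$, producing a unit $\vec w\in W$ with $\vec w^\top\vec U\vec w=\Omega((\eps\sigma/(mK))^{O(1)})$. Expanding in the eigenbasis of $\vec U$, eigenvectors with eigenvalue $<\lambda$ contribute at most $\lambda$; choosing $\lambda$ as a sufficiently small polynomial and using the upper bound $\tr(\vec U)\leq \poly(m,K,1/\eps,1/\sigma)$ forces some retained $\vec v\in\mathcal{E}$ to satisfy $\langle \vec w,\vec v\rangle^2=\Omega((\eps\sigma/(mK))^{O(1)})$, whence $\|\vec v_W\|\geq |\langle \vec w,\vec v\rangle|=(\eps\sigma/(mK))^{O(1)}$. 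The cardinality bound $|\mathcal{E}|\leq \tr(\vec U)/\lambda=(mK/(\eps\sigma))^{O(1)}$ is immediate. Matrix concentration on $\vec U$ with $N=d^{O(m)}(k/\eps)^{O(k)}/\sigma^{O(1)}$ samples, exploiting that each $p_{S,I}$ is a degree-$m$ polynomial whose coefficients concentrate at rate $d^{-O(m)}$, transfers the population guarantee to the empirical version.
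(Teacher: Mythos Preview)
Your three-step outline matches the paper's, but the mechanism you propose in step (iii) does not work, and step (ii) is handled differently. For step (ii), the paper exploits that condition (2b) of \Cref{def:agnosticMIMs-main-body} holds for \emph{every} label distribution $y'$ close to $f$: it constructs such a $y'$ by rounding $y$ to a grid and then resampling $\bx_V$ within its cube (bounded variation of $f$ keeps $y'$ close to $f$), so that $y'$ is conditionally independent of the exact $\bx_V$ given the cube $S_\bx$. Applying (2b) to $(\bx,y')$ at a point $\z_V$ then \emph{is} the cube-conditioned moment for $S_{\z}$, with no regularity-in-$\z$ argument required. Your anti-concentration sketch would instead need the distinguishing polynomial $p_\z$ and the conditional moments to vary nicely with $\z$, which is not assumed.

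The substantive gap is in step (iii). Your contrapositive only shows that if \emph{all} $\sin\theta_i$ were small then $W$ would lie nearly in $V$ and case (a) would hold; this lower-bounds $\sin\theta_{\max}$ but not $\sin\theta_{\min}$. Nothing prevents the distinguishing direction in $U$ from lying in the small-angle eigenspace of $\Pi_{V^\perp}\Pi_W\Pi_{V^\perp}$, in which case $\|\Pi_W\nabla p_{S,I}\|$ is tiny despite a large $U$-component, and your lower bound on $\tr(\Pi_W\vec U\,\Pi_W)$ fails. The paper does not use angle geometry here. Instead it lets $E$ be the span of the \emph{bad} output vectors already in $\mathcal E$ (those with small $\|\bv_W\|$), observes that every unit $\bv\in E$ then has small $\|\bv_W\|$, and reapplies the auxiliary-label trick---now resampling $\bx$ along $E$---to obtain a distinguishing polynomial with $\nabla p\perp E$ (this is where the for-all-$D$ quantifier in (2b) is essential). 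The resulting large-eigenvalue eigenvector $\bv\in\mathcal E$ must correlate with some $u\in E^\perp$, hence $\bv\notin E$, hence $\bv$ is not among the bad vectors and $\|\bv_W\|$ is large. This $E$-trick is the missing idea.
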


We sketch the analysis of our driving 
proposition bellow. Full details of the 
proof are provided in \Cref{sec:upper-app}.

\begin{proof}[Proof Sketch of \Cref{prop:update}]
Let $\vec w^{(1)},\dots,\vec w^{(K)}$ be an orthonormal basis of $W$ and denote by $\vec U$ the matrix computed at Line \ref{line:matrixMeta-main-body}.
Let $\mathcal{I}$ a partition of $[-1/\eps^{O(1)},1/\eps^{O(1)}]$ to intervals of width $\eps^{O(1)}$.

Our strategy for proving the proposition essentially involves three steps:
(i) show that Condition (2b) of \Cref{def:agnosticMIMs-main-body} is satisfied;
(ii) prove that the discretization of $V \times \R$ into cube-interval pairs is sufficient to detect moments; and
(iii) argue that, given the observed moments, there exists an eigenvector of $\vec{U}$ corresponding to a large eigenvalue that has a non-trivial projection onto $\vec{W}$.
We briefly discuss the proof of each of these steps.

Notice that, to establish the first step, it suffices to show that if $\E[(f(\x) - h_S(\x))^2] \geq \tau + \eps$, then $\E[(f(\x) - g(\x^V))^2] > \tau$ for all $g : V \to \R$.
This follows from the assumption that $f$ is a function of bounded variation, i.e., $\E_{\x \sim \cN_d}[\|\nabla f(\x)\|^2]$ is bounded,
and that $f$ is approximately bounded: 
any such function can be approximated arbitrarily well by piecewise-constant functions over a sufficiently
fine partition of cubes covering all of $\R^d$ except for 
a set of small mass under $\cN_d$.
Hence, Condition~(2a) is not satisfied, therefore Condition (2b) is (see \Cref{def:agnosticMIMs-main-body}).
 
Step (ii) holds essentially because, by assumption, the distinguishing moment condition
applies to all label random variables $y'$ that are $(\opt + \eps)$-close to $f$ in $L_2$.
Specifically, we construct a label $y'$ that remains close to $f$ in two steps:
first discretizing and then averaging $y$ over boxes.
We discretize $y$ by rounding it to the nearest multiple of $\eps$, thereby partitioning
 the label distribution into intervals of width $\eps$. 
Then, since $f$ has bounded variation, for each cube $S\in \mathcal{S}$ 
the value $f(\x)$ is close to the average label over that cube,
so we can do the same for the label.
By combining these two steps, we obtain a label random variable $y'$ that is discretized over small intervals, is
conditionally independent of a specific point $\x$ given a cube $S\in \mathcal{S}$, and remains close to $f$.
Using this independence yields the distinguishing‐moment condition
 on the joint discretization of $\x_V$ and $y$.
Moreover, since Condition (2b) ensures that distinguishing moments hold for a non-trivial fraction of $\x_V$, it follows
that we observe these moments conditioned on a cube $S$ with probability at least $\alpha$ over $S$, for some $\alpha > 0$.

Step (iii) follows because the regression polynomial $p_{S,I}$
 must match low‐degree Hermite coefficients with the function
 $g(\x)\eqdef\E_{(\x,y)\sim D}[\Ind(y\in I)\mid \x\in S]$, and hence
 must exhibit sufficient variation along directions where
 $g$ has a nontrivial low‐degree moment, which in turn implies a nonzero
 directional derivative in these directions.

Recall that with probability $\alpha>0$ over $S\in \mathcal{S}$ there exists
 some $i\in[K]$ such that ${\E[(\w^{(i)}\cdot\nabla p_{S,I}(\x_{V^\perp}))^2]=\Omega(\sigma/K)}$. 
This implies that, for some $i\in [K]$, with probability  $\alpha/K$
over $S$ it holds that  $\E[(\w^{(i)}\cdot\nabla p_{S,I}(\x_{V^\perp}))^2]=\Omega(\sigma/K)$. 
Therefore, the quadratic form of $\vec U$ for the corresponding 
$\vec w^{(i)}$ is large, i.e., $(\w^{(i)})^{\top}\vec U\w^{(i)}=\Omega(\sigma/K^2)$.
Moreover, from well-known facts about polynomials over the standard Gaussian, we obtain that   $\|\vec U\|_F\le m/\poly(\eps)$.

Finally, by a standard linear algebraic fact, if we consider the unit eigenvectors of $\vec{U}$ corresponding to eigenvalues greater
than $O(\sigma / K^2)$, we obtain at most $(|\vec{U}|_F K / \sigma)^{O(1)}$ such vectors. Among them, at least one achieves correlation
at least $(\sigma / (|\vec{U}|_F K))^{O(1)}$ with the aforementioned $\vec{w}^{(i)}$.

Furthermore, we note that the number of samples specified in the statement is precisely the number
required to perform
polynomial regression with enough accuracy to observe these
low-degree moments with high probability.
This completes the proof sketch of \Cref{prop:update}.
\end{proof}

\section{SQ Lower Bounds for MIMs} \label{sec:lb}

In order to prove our SQ lower bound for learning MIMs,  
we develop the framework of 
Relativized Non-Gaussian Component Analysis (RNGCA),  
a generalization of the previously developed 
Non-Gaussian Component Analysis (NGCA) 
framework~\cite{DKS17-sq, DKRS23}---where we allow the hidden distribution 
to be a labeled distribution so that we can 
tackle the supervised MIM setting. 
The main technical contribution of this section (\Cref{thm:main-lb}) is 
an SQ lower bound for RNGCA. 
Our SQ lower bounds for learning MIMs 
follow as an application of this general result. 
We believe that our generic SQ lower bound for RNGCA 
will be of broader applicability. 

We start by defining the family of relativized hidden-subspace 
distributions, which is a  
core ingredient of the RNGCA framework.

We require some additional notation.
We use $\orthor_{d,k}\subseteq \R^{d\times k}$ with $k\leq d$ to denote the set of all $d\times k$ orthogonal matrices, i.e., the set of all matrices $\bV$ such that $\bV^\top\bV=\bI_k$. 
For two distributions $D_1, D_2$ over $X_1, X_2$, we use $D_1\otimes D_2$ to denote the product distribution of $D_1$ and $D_2$ over $X_1\times X_2$.

\begin{definition} [Relativized Hidden-Subspace Distribution] \label{def:rngca-distr}  
For a joint distribution $A$ of $(\bz,\by)$ supported on $\R^k\times \R^n$ 
and a matrix $\bU\in\orthor_{d,k}$,
we define the distribution $\p_\bU^{A}$ as the joint distribution of 
$(\bz',\by')$ supported on $\R^d\times \R^n$ such that 
\begin{enumerate}[leftmargin=*, nosep]
    \item the joint distribution of $(\bU^\top \bz',\by')$ is $A$; and 
    \item $\bz'_{U^\perp}$ is distributed according to  $\gaus(\vec{0},\Pi_{U^\perp})$ independent of the value of $(\bU^\top \bz',\by')$,
    where $U$ is the column space of $\bU$ and $\gaus(\vec{0},\Pi_{U^\perp})$ is the standard Gaussian projected onto $U^\perp$.
\end{enumerate}
\end{definition}
That is, up to a rotation on $\R^d$, $\p^{A}_{\bU}$ is the distribution 
on $\R^{d-k}\times (\R^k\times \R^n)$ 
given by $\gaus_{d-k}\otimes A$.

We now define the natural hypothesis testing version of the 
RNGCA problem.
This suffices for the purpose of proving hardness, as the 
learning version typically reduces to the testing problem.
Intuitively, the task here is to test whether there is a subspace such that the marginal distribution on the subspace is not a standard Gaussian.

\begin{definition}[Hypothesis Testing Version of Relativized Non-Gaussian Component Analysis]\label{def:hyp-test-NGCA-high}
Let $d>k\ge 1$ be integers. For a joint distribution $A$ of $(\bx,\by)$ supported on $\R^k\times \R^n$,
one is given access to a distribution $D$ such that either:
$H_0$: $D=\gaus_d\otimes A_{\by}$, or $H_1$: $D$ is given by $\p_\bU^A$,
where $\bU\sim U(\orthor_{d,k})$.
The goal is to distinguish between these two hypotheses $H_0$ and $H_1$.
\end{definition}

We are now ready to give our main SQ lower bound result for this problem.
Intuitively, our lower bound states that if the distribution $A$ of $(\bz,\by)$ matches degree-$m$ moments relative to the subspace with the standard Gaussian, 
then any SQ algorithm solving the RNGCA testing problem 
requires complexity $d^{\Omega(m)}$.
The reader is referred to \Cref{app:sec:lb} 
for the generalization of \Cref{thm:main-lb} 
with approximate moment matching and generalized label spaces.

\begin{theorem}[SQ Lower Bound for RNGCA] \label{thm:main-lb}
Let $\lambda\in (0,1)$ and $d,k,m\in \mathbb{N}$ with $m$ even and $k,m\leq d^{\lambda}/\log d$.
Let $A$ be a distribution over $\R^k\times \R^n$  
that matches degree-$m$ moments relative to the subspace $\R^n$ (with the standard Gaussian on $\R^k$).
Let $0<c<(1-\lambda)/4$ and $d$ be sufficiently large. Then 
any SQ algorithm solving the $d$-dimensional RNGCA problem with hidden distribution $A$
(as defined in \Cref{def:hyp-test-NGCA-high})
with $2/3$ success probability requires either 
a query to 
$\mathrm{STAT}\left (O_{k,m}\left (d^{-((1-\lambda)/4-c) m}\right )\right )$ 
or $2^{d^{\Omega(c)}}$ many queries. 
\end{theorem}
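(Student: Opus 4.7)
The plan is to extend the chi-squared-free SQ framework of~\cite{DKRS23} from the unsupervised NGCA setting to the relativized (supervised) one. The core technical statement, which implies the stated lower bound via a standard adaptive-query potential argument originally developed in~\cite{DKS17-sq}, is the variance bound
\[
\E_{\bU \sim U(\orthor_{d,k})}\left[\left(\E_{\p_\bU^A}[q] - \E_{\gaus_d \otimes A_{\by}}[q]\right)^2\right] \leq O_{k,m}\left(d^{-(1-\lambda)m/2}\right)
\]
for every bounded measurable query $q : \R^d \times \R^n \to [0,1]$. Given such a bound, Markov's inequality shows that the fraction of $\bU$'s for which $|\E_{\p_\bU^A}[q] - \E_{\gaus_d \otimes A_{\by}}[q]| > \tau$ is at most $O_{k,m}(d^{-(1-\lambda)m/2})/\tau^2$. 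Combining this with a standard packing argument producing $2^{d^{\Omega(c)}}$ nearly-separated elements of $\orthor_{d,k}$, together with the usual adaptive-query accounting, yields the claimed $2^{d^{\Omega(c)}}$ query lower bound at tolerance $\tau = d^{-((1-\lambda)/4 - c)m}$.

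To establish the variance bound, I would expand $q$ in the Hermite basis with respect to the $\bz$-variable, with $\by$-dependent coefficients, as $q(\bz,\by) = \sum_{\alpha} \hat q_\alpha(\by)\, H_\alpha(\bz)$, so that Parseval's identity gives $\sum_\alpha \E_{\by \sim A_{\by}}[\hat q_\alpha(\by)^2] \leq 1$. Writing a sample from $\p_\bU^A$ as $\bz' = \bU \bz + \Pi_{U^\perp} \mathbf{g}$ with $\mathbf{g} \sim \gaus_d$ independent of $(\bz, \by) \sim A$, and defining the projected Hermite polynomial $P_\alpha^{\bU}(\bz) \eqdef \E_{\mathbf{g}}[H_\alpha(\bU \bz + \Pi_{U^\perp} \mathbf{g})]$, the desired difference decomposes cleanly as $\sum_{|\alpha| \geq 1} \E_A[\hat q_\alpha(\by)\, P_\alpha^{\bU}(\bz)]$. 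The relative moment-matching hypothesis is precisely the statement that the conditional distribution of $\bz$ given $\by$ matches $\gaus_k$ up to degree $m$; since $P_\alpha^{\bU}$ is a polynomial of degree $|\alpha|$ in $\bU \bz$ with vanishing Gaussian mean for $|\alpha|\geq 1$, every term with $|\alpha| \leq m$ vanishes identically in $\bU$, not merely on average.

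The main technical step is controlling the high-degree tail $|\alpha| > m$. I would expand the squared difference over $\bU$ term by term, reducing to expectations of the form $\E_{\bU}[P_\alpha^{\bU}(\bz) \, P_\beta^{\bU}(\bz')]$ for two independent samples from $A$. Standard representation-theoretic identities for the action of the orthogonal group on degree-$j$ symmetric tensors (Schur orthogonality together with a Funk--Hecke-type formula on the Stiefel manifold $\orthor_{d,k}$) give decay of order $(k/d)^{\lceil j/2 \rceil}$ per degree, with cross-degree terms vanishing. Combining this with Parseval yields the required variance bound, with the dominant contribution coming from $j$ just above $m$ and the exponent $(1-\lambda)m/2$ arising from the hypothesis $k \leq d^\lambda / \log d$.

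The principal obstacle is the absence of any chi-squared bound on $A$: the hidden distribution may be singular with respect to $\gaus_k \otimes A_{\by}$, so classical SQ-dimension arguments do not apply. This is precisely why one must work in $L^\infty$ via boundedness of $q$ and control the variance over $\bU$ directly, rather than computing a pairwise chi-squared divergence. A secondary subtlety, absent from~\cite{DKRS23}, is that the label $\by$ is not rotated; this requires treating the Hermite coefficients $\hat q_\alpha(\by)$ as genuine $L^2(A_{\by})$-valued random variables and conditioning on $\by$ at the moment-matching step, rather than reducing to a univariate problem. Finally, extracting the precise tolerance-versus-query trade-off in the statement requires carefully chaining the variance bound with the packing in $\orthor_{d,k}$, which is responsible for splitting the exponent into the $(1-\lambda)/4$ factor (from square-rooting the variance via Markov) and the additional $c$ shift (from the exponential packing estimate).
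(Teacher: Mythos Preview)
Your Hermite decomposition and the observation that degree-$\leq m$ terms vanish identically by relative moment matching are correct and match the paper's setup. However, there are two genuine gaps in the high-degree analysis and in the concentration step.

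First, the high-degree tail cannot be controlled as you describe. While Parseval bounds $\sum_\alpha \|\hat q_\alpha\|_{A_{\by}}^2 \leq 1$ on the query side, the Hermite moments $\|\E_{A_{\bz\mid \by}}[\bH_i(\bz)]\|_{A_{\by}}$ on the $A$-side can grow arbitrarily fast in $i$ when $\chi^2(A,\gaus_k\otimes A_{\by})=\infty$; consequently the Fourier sum you write down need not converge, and passing to second moments over independent copies $(\bz,\bz')$ does not help since you still must integrate a degree-$j$ polynomial against $A\otimes A$. The paper's remedy---entirely absent from your outline---is to replace $A$ by a distribution $A'$ obtained by truncating $\bz$ to a ball $\mathbb{B}^k(B)$ \emph{and reweighting the $\by$-marginal}, chosen so that $d_{\mathrm{TV}}(A,A')$ is negligible while the Hermite moments of $A'$ grow only like $2^{O(i)}B^{i-m}$. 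The reweighting step is the new ingredient relative to the unsupervised case of \cite{DKRS23}: naive truncation perturbs $A_{\by}$ and hence the very $L^2(A_{\by})$ norm your expansion is written in.

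Second, a variance bound plus Markov yields only a polynomially small bad fraction, $\Pr_{\bU}[\text{bad}]\leq d^{-O(cm)}$, which translates into a query lower bound of $d^{O(cm)}$, not $2^{d^{\Omega(c)}}$. A packing of $\orthor_{d,k}$ does not rescue this: the query lower bound is governed by the fraction of alternatives a single query can eliminate, not by the size of the family. The paper instead bounds the $a$-th moment of $\|(\bU^\top)^{\otimes i}\bT_i\|_{A'_{\by}}$ for $a$ taken as large as $\Theta(d^{\lambda}/i)$, obtaining directly the exponentially small tail $\Pr_{\bU}[\text{bad}]\leq 2^{-d^{\Omega(c)}}$; a union bound over $2^{d^{\Omega(c)}}$ adaptive queries then lets the oracle answer every query with the null-hypothesis value, with no packing or SQ-dimension argument required.
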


It is worth noting that the non-relativized special case of \Cref{thm:main-lb} 
(i.e., when $n=0$) was already proven in prior work~\cite{DKRS23}.
It is important to note that 
\Cref{thm:main-lb} cannot be derived using \cite{DKRS23} as a black-box.
While a weaker version of \Cref{thm:main-lb} could be potentially 
obtained using techniques in previous works (see, e.g.,~\cite{DKS17-sq, DKS19}),
this would necessarily require the additional 
assumption that $\chi^2(A,\gaus_k\otimes A_\by)$ is finite.
As a result, one would not be able to apply it to even 
the simplest settings like realizable MIMs---as having noiseless labels 
would induce infinite $\chi^2(A,\gaus_k\otimes A_{\by})$.
Our proof here builds on the earlier proof in \cite{DKRS23}. 
Namely, we apply a similar technique of truncating the $\bx$ part 
of the distribution inside a ball, and then use 
Fourier analysis on the truncated $A$. 
However, doing so for the labeled distribution $A$ here 
would also mess up the marginal distribution $A_\by$ 
and change the notion of the norm in Fourier analysis.
To deal with this problem, our analysis employs 
a new reweighting technique to ensure 
the equivalence of norm before and after the truncation. 
The detailed proof is given in \Cref{app:sec:rngca-app}.

Given \Cref{thm:main-lb}, we are now ready to prove \Cref{thm:SQ-agnostic-body}. We provide a sketch below with the full proof in \Cref{app:sec:mim-lb-app}.

\begin{proof} [Proof sketch of \Cref{thm:SQ-agnostic-body}]
    The proof follows directly by embedding an 
    RNGCA problem to agnostic PAC learning of the class $\C$.
    Let $A'$ be the distribution $D$ in \Cref{thm:SQ-agnostic-body} 
    and $W$ be the $K$-dimensional relevant subspace of the $K$-MIM $c$  that minimizes the error $\err_{A'}(c)$. 
    Let $V$ be the subspace satisfying the conditions in \Cref{thm:SQ-agnostic-body} and $U=W_{V^{\perp}}$, where $W_{V^{\perp}}\eqdef\{\bw_{V^{\perp}}:\bw\in W\}$.
    Without loss of generality, we assume that $V$ is the subspace spanned by the last $\dim(V)$ coordinates, and $U$ is the subspace spanned by the $\dim(U)$ coordinates immediately preceding those of $V$, which can be arranged by an appropriate rotation.

    Let $(\bx,y)\sim A'$.
    We define the distribution $A$ for the RNGCA (\Cref{def:hyp-test-NGCA-high}) as the joint distribution of $(\bx',(\bx'',y))$ over $\R^{\dim (U)}\times \R^{\dim (V)+1}$, where
    $\bx'$ and $\bx''$ each contains the coordinates of $\bx$ corresponding to $U$ and $V$, i.e.,
    $\bx'$ contains the part of the relevant subspace (of the optimal hypothesis) outside $V$ and $(\bx'',y)$ contains $\bV$ and the label $y$. 
Let $D$ be the input distribution of this RNGCA problem. Notice that $D$ can be equivalently thought of as a labeled distribution supported on $\R^d\times \R$, where we treat the coordinate corresponding to the $y$ part as the label. 
    If $D$ is the null hypothesis distribution, we would simply observe the production distribution of $\gaus(\vec{0},\bI_{d-\dim(V)})\otimes A_\by$, where $A_\by$ is the marginal distribution of  $(\bx'',y)$. 
    If we treat $D$ as a labeled distribution, then any hypothesis can only predict the label by the value of $\bx_V$, therefore,    
    no hypothesis $h:\R^d\to \R$ can have error $\err_D(h)<\tau$ from the assumption.
    However, if $D$ is the alternative distribution, the distribution we observe is the product distribution of $\gaus(\vec{0},\bI_{d-\dim(U)-\dim(V)})\otimes A$ (up to applying a rotation). If we treat $D$ as a labeled distribution, since $A$ contains the coordinates of $A'$ that span the relevant subspace $W$ of the optimal hypothesis, 
    when given to the MIM algorithm, it is obliged to return a hypothesis with squared error substantially better than $\tau$.

    Given the above discussion, we can simply give the distribution $D$ to the MIM algorithm as a labeled distribution over $\R^d\times \R$ and check the error of the output hypothesis. If the error is better than $\tau$, $D$ must be the alternative hypothesis distribution. Otherwise, $D$ is the null hypothesis distribution. This completes the proof sketch of \Cref{thm:SQ-agnostic-body}.
\end{proof}

\bibliographystyle{alpha}
\bibliography{allrefs}

\newpage

\appendix 

\section*{Appendix} 

\paragraph{Organization} 
The appendix is structured as follows:
In \Cref{sec:related-app}, we discuss additional related work. 
In \Cref{sec:Addprelims}, we record the notation and 
mathematical background required in our technical sections.
The technical content of the appendix 
consists of two sections: \Cref{app:sec:lb} presents 
our SQ lower bounds and \Cref{sec:upper-app} presents 
our algorithmic results.

\section{Related Work} \label{sec:related-app}

The most closely related works to ours is \cite{diakonikolas2025robustlearningmultiindexmodels} which studies the problem of learning discrete-valued MIMs.
 In  \Cref{sec:tech-overview}, we highlight the technical and conceptual distinctions between our approach and that of \cite{diakonikolas2025robustlearningmultiindexmodels}.
However, for learning real-valued MIMs, there has been no prior work establishing a characterization of the SQ complexity of the problem.

For the special case of SIMs, the problem is much better understood. Specifically, recent work \cite{DPLB24} examined the complexity of parameter estimation for SIMs and identified a complexity measure that, under certain assumptions, characterizes the SQ sample complexity.
As we demonstrate in \Cref{app:sec:comparison-lb-app}, our SQ lower bound strictly generalizes theirs, applying both to 
learning to small-$L_2$ error learning and parameter estimation whenever the MIM matches moments.
Moreover, there has been a lot of algorithmic works 
  for general classes of SIMs/GLMs from
classical works like \cite{KKKS11} to more recent works
obtaining near optimal
complexity and error guarantees
\cite{DGKK20,wang2024sample,ZWDD24,ZWD2025,wang2023robustly}.

Several works introduce CSQ complexity measures and algortihms 
for learning MIMs and SIMs—e.g. the information exponent for 
SIM link functions \cite{arous2021online,DH18}, 
the leap complexity \cite{abbe2021staircase,abbe2022merged,abbe2023sgd} for MIMs.
However, all of these measures yield only CSQ guarantees, since 
they neither condition on the label $y$.
Notably, \cite{JMS24} further generalized the notion of leap 
complexity to characterize the SQ hardness
of hidden-junta functions (which is a special case of MIMs).

Moreover, recently there is a significant interest in learning several structured subclasses of MIMs. 
Specifically \cite{OSSW24} studied the problem of  learning sums of SIMs under
a near-orthonormality and \cite{rl24} under a strict orthonormality
 assumption, providing both algorithms and lower bounds.
Iterative dimensionality reduction techniques have been used
in the past for learning certain
functions families such as 
homogeneous ReLU networks \cite{chen2022FPT} and polynomials in a few relevant directions \cite{CM20}.
There has also been a lot of work \cite{DDM25,TDD24,kzm25} on the problem of weak subspace recovery for MIMs using a linear number of samples within the approximate message passing framework.
 
 Other works offer alternative guarantees, complexity under random bias \cite{CMM25}, gradient-flow convergence
 and time bounds \cite{SBH24}, mean-field Langevin dynamics yielding global convergence
 in infinite-width nets \cite{MWE24} and agnostic subspace-recovery learning with an oracle \cite{MJE24}.

\section{Preliminaries}
\label{sec:Addprelims}

\paragraph{Basic Notation}
For $n\in \Z_{+}$, let $[n]\eqdef \{1,\dots,n\}$.
We will use lowercase boldface letters for vectors and capitalized boldface letters for matrices and tensors.
For $\bx \in \R^d$ and $i \in [d]$, $\bx_i$ denotes the
$i$-th coordinate of $\bx$, and $\|\bx\|_k := (\littlesum_{i=1}^d |\bx_i|^k)^{1/k}$ denotes the
$\ell_k$-norm of $\bx$. 
{Throughout this text, we will often omit the subscript and simply write $\|\x\|$ for the $\ell_2$-norm of $\bx$.}
For a matrix $\bV\in\R^{n\times m}$, we denote by $\|\bV\|_2,\|\bV\|_F$ to be the operator norm and Frobenius norm respectively.
We will use $\bx \cdot \by $ for the inner product of $\bx, \by \in \R^d$.

For a subspace $V$ of $\R^d$, we denote by $V^{\perp}$ its orthogonal complement and by $\Pi_{V}$ its projection matrix.
{For vectors $\x, \vec v\in \R^d$ and a subspace $V\subseteq \R^d$ denote by $\x_{V}$ the projection of $\x$ onto $V$ and by $\x_{\vec v}$ the projection of $\x$ onto the line spanned by $\vec v$.
For two subspaces $V,W\subseteq \R^d$, we denote by $W_{V}=\{\w_{V}:\w\in W\}$ and by $V+W=\{\w+\vec v: \w\in W, \vec v\in V\}$, note that   $W_{V}$ and $V+W$ are both subspaces.
Furthermore, for a set of vectors  $L\subseteq \R^d$, we denote by $\spaning(L)$ the subspace of $\R^d$ defined by their span.}
We slightly abuse notation and denote {by}
$\vec e_i$ the $i$-th standard basis vector in $\R^d$.  
We use $\mathbb{S}^{n-1}=\{\bx\in\R^n:\|\bx\|_2=1\}$ to denote the $n$-dimensional unit sphere.

 We use the standard asymptotic notation, where 
$\wt{O}(\cdot)$ is used to omit polylogarithmic factors.
{Furthermore, we use $a\lesssim b$ to denote that there exists an absolute universal constant $C>0$ (independent of the variables or parameters on which $a$ and $b$ depend) such that $a\le Cb$, $\gtrsim$ is defined similarly.
We use the notation $g(t)\le \poly(t)$ for a quantity $t\ge1$ to indicate   that there exists constants $c,C>0$ such that $g(t)\le Ct^c$. Similarly we use $g(t)\ge \poly(t)$ for a quantity $t<1$ to denote  that there exists constants $c,C>0$ such that $g(t)\ge Ct^c$.

\noindent {\bf Tensor Notation} For tensors, we will consider a $k$-tensor to be an element in $(\mathbb{R}^d)^{\otimes k}\cong\mathbb{R}^{d^k}$.
This can be thought of as a vector with $d^k$ coordinates.
We will use $\bA_{i_1,\ldots,i_k}$ to denote the coordinate of a $k$-tensor $\bA$ indexed by the $k$-tuple $(i_1,\ldots,i_k)$. 
By abuse of notation, we will sometimes also use this to denote the entire tensor.
The inner product and $\ell^k$-norm of a $k$-tensor are defined by viewing the tensor as a vector with $d^k$ coordinates and then applying the standard definitions of the inner product and $\ell^k$-norm for vectors. 
The inner product of two tensors will be denoted by $\langle \cdot, \cdot \rangle$.
For a vector $\bv\in\R^d$, we denote by $\bv^{\otimes k}$ to be a vector (linear object) in $\R^{d^k}$.
In addition, for a matrix $\bV\in \R^{d\times m}$, we denote by $\bV^{\otimes k}$ to be a matrix (linear operator) mapping $\R^{m^k}$ to $\R^{d^k}$.
Also, we define the set of orthogonal $d\times m$ matrices by $\orthor_{d,m} = \left\{ \bV \in \mathbb{R}^{d \times m} \;\middle|\; \bV^\top \bV = \bI_m \right\}$.

\noindent {\bf Probability Notation} We use $\E_{x\sim D}[x]$ for the expectation of the random variable $x$ according to the
distribution $D$ and $\pr[\mathcal{E}]$ for the probability of event $\mathcal{E}$.
For simplicity of notation, we may omit the distribution when it is clear from the context.
For a continuous distribution $D$ over $\R^d$, we sometimes use $D$ for both the distribution itself and its probability density function.
For two distributions $D_1,D_2$ over a probability space $\Omega$,
let $\dtv(D_1,D_2)\eqdef\sup_{S\subseteq\Omega}|\pr_{D_1}(S)-\pr_{D_2}(S)|$
denote the total variation distance between $D_1$ and $D_2$. 
For two continuous distributions $D_1, D_2$ both over $\R^d$, 
we use $\chi^2(D_1,D_2)=\int_{\R^d}D_1(\bx)^2/D_2(\bx)d\bx-1$
to denote the chi-square norm of $D_1$ w.r.t.\;$D_2$.

For a distribution $D$ on a space $X$ and two measurable functions $f_1, f_2 : X \to \mathbb{R}^d$, we define their inner product w.r.t.\ $D$ as $\langle f_1, f_2 \rangle_D \eqdef \mathbb{E}_{\bx \sim D}[\langle f_1(\bx),f_2(\bx)\rangle]$, and define the $L^2$ norm of a function $f$ w.r.t.\ $D$ as $\|f\|_D \eqdef \langle f,f\rangle_D^{1/2}$.
For two distributions $D_1, D_2$ over $X_1, X_2$, we use $D_1\otimes D_2$ to denote the product distribution over $X_1\times X_2$.

For a subset $S\subseteq \R^d$ with finite measure or finite surface measure, 
we use $U(S)$ to denote the uniform distribution over $S$ (w.r.t. Lebesgue measure for the volume/surface area of $S$).

We use $\mathds{1}$ to denote the indicator function of a set, 
specifically $\mathds{1}(t\in S)=1$ if $t\in S$ and $0$ otherwise. 
For a joint distribution $D$ of $(\bx,y)$ over ${\cal X}\times {\cal Y}$, we use $D_{\bx}$ and $D_{y}$ to denote the marginal distribution of $\bx$ and $y$ and use $D_{\bx\mid y=y'}$ to denote the conditional distribution of $\bx$ given $y=y'$ (we will use the notation $D_{\bx\mid y}$ as a shorthand when the variable $y$ is used in the context).
Let $\cN(\boldsymbol\mu, \vec \Sigma)$ denote the $d$-dimensional Gaussian distribution with mean $\boldsymbol\mu\in  \R^d$ and covariance $\vec \Sigma\in \R^{d\times d}$. 
For simplicity of notation, we use $\cN_d$ for the $d$-dimensional standard normal $\cN(\vec 0,\vec I)$.

\paragraph{Basics of Hermite Polynomials}
We require the following definitions. 
\begin{definition}[Normalized Hermite Polynomial]\label{def:Hermite-poly}
For $k\in\N$,
we define the $k$-th \emph{probabilist's} Hermite polynomials
$\mathrm{\textit{He}}_k:\R\to \R$
as
$\mathrm{\textit{He}}_k(t)=(-1)^k e^{t^2/2}\cdot\frac{d^k}{dt^k}e^{-t^2/2}$.
We define the $k$-th \emph{normalized} Hermite polynomial 
$h_k:\R\to \R$
as
$h_k(t)=\mathrm{\textit{He}}_k(t)/\sqrt{k!}$.
\end{definition}
Furthermore, we will use multivariate Hermite polynomials in the form of
Hermite tensors 
(as the entries in the Hermite tensors are 
rescaled multivariate Hermite polynomials).
We define the \emph{Hermite tensor} as follows.
\begin{definition}[Hermite Tensor]\label{def:Hermite-tensor}
For $k\in \N$ and $\bx\in\R^d$, we define the $k$-th Hermite tensor as
\[
(\bH_k(\bx))_{i_1,i_2,\ldots,i_k}=\frac{1}{\sqrt{k!}}\sum_{\substack{\text{Partitions $P$ of $[k]$}\\ \text{into sets of size 1 and 2}}}\bigotimes_{\{a,b\}\in P}(-\bI_{i_a,i_b})\bigotimes_{\{c\}\in P}\bx_{i_c}\; .
\]
\end{definition}

For a function $f:\R^d \to \R$ and $\ell\in \N$, we use $f^{\leq\ell}$ to denote 
$f^{\leq \ell}(\bx)=\sum_{k=0}^\ell \langle \bA_k, \bH_k(\bx)\rangle$,
where $\bA_k=\E_{\bx\sim \gaus_d}[f(\bx)\bH_k(\bx)]$,
which is the degree-$\ell$ approximation of $f$.
We use $f^{>\ell}=f-f^{\leq \ell}$ to denote its residue.
We also remark that 
both our definition of Hermite polynomial and Hermite tensor are
``normalized'' in the following sense:
For Hermite polynomials, it holds $\|h_k\|_2=1$.
For Hermite tensors, given any symmetric tensor $A$,
we have $\|\langle\bA,\bH_k(\bx)\rangle\|_2^2=\langle\bA,\bA\rangle$.

\section{Statistical Query Lower Bounds} \label{app:sec:lb}

In this section, we establish our SQ lower bounds for learning Multi-Index models, thereby proving \Cref{thm:SQ-agnostic-body}.

\paragraph{Organization.} 
The structure of this section is as follows: 
In \Cref{app:sec:rngca-app}, we define a relativized version of Non-Gaussian 
Component Analysis that is appropriate for 
supervised learning tasks and establish an optimal 
SQ lower bound for it under appropriate conditions. 
In \Cref{app:sec:mim-lb-app}, 
we leverage this general result to show 
our SQ lower bounds for learning MIMs, 
for both the realizable and the agnostic settings. 
Finally, in \Cref{app:sec:comparison-lb-app}, 
we relate the conditions of our SQ lower bounds for 
learning MIMs with prior complexity measures in the literature.

\subsection{Statistical Query Lower Bounds for Relativized NGCA} \label{app:sec:rngca-app}

In this section, we prove an SQ lower bound for 
Relativized Non-Gaussian Component Analysis (RNGCA). 
The main result of this section is a generalization of 
\Cref{thm:main-lb}, handling more general label spaces 
and approximate moment matching. 
We leverage this technical result in the following subsection 
to prove our main SQ lower bounds for Multi-Index Models.

To be compatible with more general label spaces, 
we start with the following definitions generalizing 
the relativized hidden-subspace distribution of \Cref{def:rngca-distr},   
and the hypothesis testing version of RNGCA of 
\Cref{def:hyp-test-NGCA-high}. 
The main difference here is that 
we replace the space $\R^n$, appearing in \Cref{def:rngca-distr} and 
\Cref{def:hyp-test-NGCA-high}, with a general space $\Y$.

\begin{definition} [Relativized Hidden-Subspace Distribution; Generalization of \Cref{def:rngca-distr}] \label{app:def:rngca-distr}  
For a joint distribution $A$ of $(\bz,y)$ supported on $\R^k\times \Y$ 
and a matrix $\bU\in\orthor_{d,k}$,
we define the distribution $\p_\bU^{A}$ as the joint distribution of 
$(\bz',y')$ supported on $\R^d\times \Y$ such that 
\begin{enumerate}[leftmargin=*, nosep]
    \item the joint distribution of $(\bU^\top \bz',y')$ is $A$; and 
    \item $\bz'_{U^\perp}$ is distributed according to  $\gaus(\vec{0},\Pi_{U^\perp})$ independent of the value of $(\bU^\top \bz',y')$,
    where $U$ is the column space of $\bU$ and $\gaus(\vec{0},\Pi_{U^\perp})$ is the standard Gaussian projected onto $U^\perp$.
\end{enumerate}
\end{definition}

We next give the generalization of \Cref{def:hyp-test-NGCA-high}.

\begin{definition}[Hypothesis Testing Version of RNGCA; Generalization of \Cref{def:hyp-test-NGCA-high} ]
\label{app:def:hyp-test-NGCA-high}
Let $d>k\ge 1$ be integers. 
For a joint distribution $A$ of $(\bx,y)$ supported on $\R^k\times \Y$,
one is given access to a distribution $D$ such that either:
$H_0$: $D=\gaus_d\otimes A_{y}$, or $H_1$: $D$ is given by $\p_\bU^A$,
where $\bU\sim U(\orthor_{d,k})$.
The goal is to distinguish between these two hypotheses $H_0$ and $H_1$.
\end{definition}

For the hidden distribution $A$ in the definition of RNGCA, 
the lower bound construction here 
requires that the conditional distribution of $A_{\bx|y}$ 
is well-defined for every $y$. In order to ensure that this conditional distribution is well-defined, we first introduce 
the following technical condition.

\begin{definition}[Regular Distribution] \label{app:cond:regular-conditional-distribution}
    Let $A$ be a joint distribution of $(\bx,y)$ supported on $\R^{k}\times \mathcal{Y}$. 
    We say that $A$ is \emph{regular} if there is a family of distributions $A_{\bx|y}$ on $\R^{k}$ for each $y\in \mathcal{Y}$
such that for any measurable set $S$ of $A$,
    $\pr_{(\bx,y)\sim A}[(\bx,y)\in S]=\int_{A_y}\pr_{\bx\sim A_{\bx|y}}[(\bx,y)\in S]dy\;.$
    We will call such distributions $A_{\bx|y}$ the conditional distributions 
    of $\bx$ given $y$.
\end{definition}

\begin{remark} \label{app:rem:regular}
{\em Note that $A$ is always regular if $\mathcal{Y}=\R^n$, 
which is a Polish space.}
\end{remark}

Our SQ lower bound construction crucially relies 
on the assumption that the conditional distributions $A_{\bx\mid y}$ 
approximately match their low-degree moments  
with the standard Gaussian,
i.e., that $A$ has similar low-degree moments  with $\gaus_k\otimes A_y$.
Roughly speaking, for each conditional distribution $A_{\bx\mid y}$, 
we characterize the mismatch between $A_{\bx\mid y}$ and the standard Gaussian 
as $\sup_{p} \left (\E_{\bx\sim A_{\bx\mid y}}[p(\bx)]-\E_{\bx\sim \gaus_k}[p(\bx)]\right )$, 
where $p$ is any low-degree polynomial 
with $\E_{\bx\sim \gaus_k}[p(\bx)^2]\leq 1$.
Then we take the $L^2$ norm of this quantity over the marginal distribution $A_y$ as the overall mismatch between $A$ and $\gaus_k\otimes A_y$, as described in the following definition
(generalizing the exact moment-matching 
in \Cref{def:exact-matching-moment}).

\begin{condition} [Relatively $\nu$-Matching Degree-$m$ Moments; Generalization of \Cref{def:exact-matching-moment}]\label{app:cond:matching-moment}
Let $0<\nu<2$, $m\in \N$, and $A$ be a regular distribution of $(\bx,y)$ supported on $\R^k\times \mathcal{Y}$.
We say that $A$ $\nu$-matches degree-$m$ moments 
with the standard Gaussian relative to $\Y$ if 
for any $f:\R^k\times \mathcal{Y}\to \R$ such that 
\begin{enumerate}[leftmargin=*]
    \item the function $f(\cdot,y)$ is a polynomial of degree at most $m$ for any $y\in \mathcal{Y}$; and
    \item $\|f\|_{\gaus_k\otimes A_y}\leq 1$, where $A_y$ is the $y$-marginal of $A$   
    and $\gaus_k\otimes A_y$ is the product distribution 
    of $\gaus_k$ and $A_y$,
\end{enumerate}
it holds that 
$\left |\E_{(\bx,y)\sim A}[f(\bx,y)]-\E_{(\bx,y)\sim \gaus_k\otimes A_y}[f(\bx,y)]\right |\leq \nu \;.$
\end{condition}

With this context, we are ready to state our main SQ lower bound theorem for RNGCA.  
Roughly speaking, we show that for any regular distribution $A$ that 
satisfies \Cref{app:cond:matching-moment}, 
there is an SQ lower bound for RNGCA 
using $A$ as the hidden distribution.

\begin{theorem}[SQ Lower Bound for RNGCA; Generalization of \Cref{thm:main-lb}] \label{app:thm:main-lb}
Let $\lambda\in (0,1)$ and $d,k,m\in \mathbb{N}$ with $m$ even and $k,m\leq d^{\lambda}$.
Let $0<\nu<2$ and $A$ be a regular distribution over $\R^k\times \mathcal{Y}$ such that $A$ $\nu$-matches degree-$m$ 
moments with the standard Gaussian relative to $\Y$.
Let $0<c<(1-\lambda)/4$ and $d$ be at least a sufficiently large constant depending on $c$.  
Then 
any SQ algorithm solving the $d$-dimensional RNGCA problem 
with hidden distribution $A$, 
as defined in \Cref{app:def:hyp-test-NGCA-high}, 
with $2/3$ success probability requires either 
a query to $\mathrm{STAT}\left (\tau\right )$, 
where $\tau<O_{k,m}\left (d^{-((1-\lambda)/4-c) m}\right )+(1+o(1))\nu$,
or $2^{d^{\Omega(c)}}$ many queries. 
\end{theorem}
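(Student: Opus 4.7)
My plan is to follow the general strategy of the non-relativized NGCA lower bound of \cite{DKRS23} and adapt it to the labeled, relativized setting. By standard SQ machinery, it suffices to show that for any bounded query $q:\R^d\times\Y\to[0,1]$, the quantity $\phi_q(\bU)\eqdef \E_{(\bx,y)\sim \p_\bU^A}[q(\bx,y)]-\E_{(\bx,y)\sim \gaus_d\otimes A_y}[q(\bx,y)]$ exceeds $\tau$ only for an exponentially small fraction of $\bU\sim U(\orthor_{d,k})$. I would establish this by computing $\E_{\bU}[\phi_q(\bU)^2]$ and applying Markov, which reduces the task to bounding a pairwise correlation between $\p_\bU^A-\gaus_d\otimes A_y$ and $\p_{\bU'}^A-\gaus_d\otimes A_y$ (inner product taken in the $\gaus_d\otimes A_y$ Hilbert space) for two independent Haar-distributed embeddings $\bU,\bU'$.

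Since $A_{\bx\mid y}$ is not assumed to have finite $\chi^2$ divergence to $\gaus_k$, the natural Hermite expansion of $\p_\bU^A/(\gaus_d\otimes A_y)$ need not converge. Following the truncation idea of \cite{DKRS23}, I would restrict $\bx$ to a ball $B_R$ of radius $R\approx d^{(1-\lambda)/4-c}$, obtaining a distribution $\tilde A$. Gaussian concentration on $\|\bx\|$ makes the change in SQ expectations at most $e^{-d^{\Omega(1)}}$. The novelty compared to \cite{DKRS23} is that this truncation also distorts the $y$-marginal, which would spoil the ambient inner product used in the Fourier analysis; to correct this I would introduce a $y$-dependent reweighting factor $\rho(y)$ so that the reweighted conditional still satisfies $\tilde A_y = A_y$ exactly. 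Because truncation only excises Gaussian tails, $\rho(y)$ is within $1\pm e^{-d^{\Omega(1)}}$ of $1$ for most $y$, preserving the equivalence of the $\gaus_d\otimes A_y$ norm before and after the modification.

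On this truncated-reweighted distribution, I would Hermite-expand the conditional density as $\tilde A(\bx,y)/(\gaus_k(\bx)A_y(y)) = \sum_{\ell\ge 0}\langle \bA_\ell(y),\bH_\ell(\bx)\rangle$, with $\bA_\ell(y)=\E_{\bx\sim \tilde A_{\bx\mid y}}[\bH_\ell(\bx)]$. The relativized $\nu$-moment-matching assumption (\Cref{app:cond:matching-moment}) implies $\E_{y\sim A_y}[\|\bA_\ell(y)\|_2^2]\lesssim \nu^2$ for $\ell\le m$, contributing the additive $(1+o(1))\nu$ term. For $\ell>m$, inserting the embedding yields tensors of the form $\bU^{\otimes \ell}\bA_\ell(y)$ on one side and $\bU'^{\otimes \ell}\bA_\ell(y)$ on the other; averaging the contraction $\langle\bU^{\otimes\ell},\bU'^{\otimes\ell}\rangle$ over the Haar measure on $\orthor_{d,k}$ produces the well-known factor $O(k/d)^{\ell/2}$ from orthogonal-group integration. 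The truncation controls $\|\bA_\ell(y)\|_2$ by roughly $R^{\ell}$ times combinatorial factors, so the high-degree tail contributes at most $\sum_{\ell>m}(R^2k/d)^{\ell/2}=O_{k,m}(d^{-((1-\lambda)/4-c)m})$, matching the stated tolerance.

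The main obstacle I anticipate is controlling the reweighting factor $\rho(y)$ uniformly in $y$: an atypical $y$ whose conditional $A_{\bx\mid y}$ places most of its mass outside $B_R$ would make $\rho(y)$ blow up and destroy the norm equivalence. I plan to handle this by a two-stage truncation: first discard a small $A_y$-measure set of bad $y$ for which the conditional tail mass outside $B_R$ is not exponentially small, absorbing their contribution into the $(1+o(1))\nu$ error; then apply the reweighting only on the complement, where $\rho(y)$ is uniformly close to $1$. Once this quantitative control over $\rho$ is in place, the three pieces—low-degree via moment matching, high-degree via Haar concentration, and truncation residuals—combine to give the announced bound $\tau < O_{k,m}(d^{-((1-\lambda)/4-c)m})+(1+o(1))\nu$, which together with the $2^{d^{\Omega(c)}}$ dimension from Haar measure concentration completes the SQ lower bound.
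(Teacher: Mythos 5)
Your high-level plan (truncate $\bx$ within the conditional distributions, reweight to preserve the Hilbert-space norm structure, then Hermite-expand and treat low and high degrees separately) is the right skeleton and matches the paper's strategy. However, there is a real gap in the concentration step. You propose to control $\Pr_\bU[\phi_q(\bU)>\tau]$ by computing the \emph{second} moment $\E_\bU[\phi_q(\bU)^2]$ and applying Markov. Since the moment-matching/truncation analysis shows that $\E_\bU[\phi_q(\bU)^2]$ is of order $\tau^2$ (the degree-$m$ contribution is precisely where the two become comparable), Markov on the second moment would only give a \emph{constant} failure probability per query. But the theorem needs failure probability $2^{-d^{\Omega(c)}}$ per query so that a union bound over $2^{d^{\Omega(c)}}$ queries still leaves the oracle consistent. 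The paper gets this by instead controlling the $a$-th moment of $\|(\bU^\top)^{\otimes i}\bT_i\|_{A'_y}$ for $a$ chosen polynomially large in $d$ (of order $d^\lambda/i$ or $T/i$, see \Cref{app:fct:VT-decay} and the degree-by-degree Markov bounds inside \Cref{app:lem:summation-upper-bound}); the resulting tail bound is then genuinely exponential. Second-moment-plus-Markov is the right idea structurally, but you need high-moment Markov to hit the exponential regime, and this has to be applied degree by degree to avoid blowup of $\|\bA_i\|$ against $\|(\bU^\top)^{\otimes i}\bT_i\|$ in a global second-moment calculation.

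On the reweighting, your proposal to keep $\tilde A_y = A_y$ exactly and handle atypical $y$'s with a secondary truncation is a genuinely different route from the paper. The paper (\Cref{app:def:truncated_reweighted}) instead reweights $A'_y(y)\propto A_y(y)\,\Pr_{\bx\sim A_{\bx\mid y}}[\bx\in\mathbb{B}^k(B)]^2$, which makes the normalization $1/\Pr[\bx\in B\mid y]$ in the truncated conditional cancel exactly against the new $y$-marginal (this is why the exponent is $2$; see \Cref{app:eq:ind-on-A}). Your two-stage truncation can probably be made to work, but the paper's exponent-$2$ reweighting eliminates the bad-$y$ blowup automatically and is quantitatively cleaner, avoiding an additional union bound over $y$. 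Also a minor point: your stated high-degree contribution $\sum_{\ell>m}(R^2k/d)^{\ell/2}$ should really start from $\ell=m$; the degree-$m$ term is the dominant one (the assumption matches moments up to degree $m$ only to within $\nu$, and after truncation that residual at level $m$ is of order $d^{-((1-\lambda)/4-c)m}$, exactly the claimed tolerance).
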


To prove the desired lower bound, we need to show that for any 
query function $f$ the algorithm selects, 
over the choice of the hidden subspace 
$\bU\sim U(\orthor_{d,k})$, the expectation 
$\E_{(\bx,y)\sim \p^{A}_{\bU}}[f(\bx,y)]$ is concentrated around $\E_{(\bx,y)\sim\gaus_d\otimes A_y}[f(\bx,y)]$.
Therefore, the algorithm cannot tell if the distribution is the alternative hypothesis distribution $\p^{A}_{\bU}$ or the null hypothesis distribution $\gaus_d\otimes A_y$.

Such a concentration result is given in 
the following proposition. 

\begin{proposition} \label{app:prp:main-tail-bound}
Let $\lambda\in (0,1)$ and $d,k,m\in \mathbb{N}$ with $m$ even and $k,m\leq d^{\lambda}$.
Let $0<\nu<2$ and $A$ be a regular distribution over $\R^k\times \mathcal{Y}$ such that $A$ $\nu$-matches degree-$m$ moments 
with the standard Gaussian relative to $\Y$.
Let $0<c<(1-\lambda)/4$, $d$ be at least a sufficiently large constant depending on $c$, and $f:\R^d\times \mathcal{Y}\rightarrow[0,1]$.
Then it holds that
\[
\pr_{\bU\sim U(\orthor_{d,k})}\left[\left |\E_{(\bx,y)\sim \p^{A}_{\bU}}[f(\bx,y)]-\E_{(\bx,y)\sim\gaus_d\otimes A_y}[f(\bx,y)]\right |\geq\tau\right]
\leq 2^{-d^{\Omega(c)}}\;,
\]
where 
\[
\tau=
\left (\frac{\Gamma(m/2+k/2)}{\Gamma(k/2)}\right )d^{-((1-\lambda)/4-c) m}+(1+o(1))\nu\; .
\]
\end{proposition}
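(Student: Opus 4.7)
The plan is to follow the Fourier/Hermite-analytic paradigm of \cite{DKRS23}, extended to the labeled (RNGCA) setting. Using regularity of $A$,
$$
\E_{\p_\bU^A}[f(\bx,y)] - \E_{\gaus_d \otimes A_y}[f(\bx,y)]
= \E_{y \sim A_y}\Bigl[\E_{\bx \sim \p_\bU^{A_{\bx|y}}}[f(\bx,y)] - \E_{\bx \sim \gaus_d}[f(\bx,y)]\Bigr].
$$
For each fixed $y$, I expand $f(\cdot,y)$ in the Hermite basis on $\R^d$ and split $f = f^{\leq m} + f^{> m}$, decomposing the deviation as $\Delta^{\leq m}(\bU) + \Delta^{> m}(\bU)$. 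The goal is to bound $|\Delta^{\leq m}(\bU)| \leq \nu$ deterministically, and $|\Delta^{> m}(\bU)|$ by the stated leading term in $\tau$ except on a set of $\bU$-probability at most $2^{-d^{\Omega(c)}}$.

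For $\Delta^{\leq m}(\bU)$, I integrate out the $\bz \sim \gaus(\vec 0,\Pi_{U^\perp})$ component and set
$$
g_{\bU,y}(\bu) \;\eqdef\; \E_{\bz \sim \gaus(\vec 0,\Pi_{U^\perp})}\bigl[f^{\leq m}(\bU\bu + \bz,\,y)\bigr],
$$
a polynomial of degree $\leq m$ in $\bu \in \R^k$; then
$$
\Delta^{\leq m}(\bU) = \E_{y\sim A_y}\Bigl[\E_{\bu \sim A_{\bx|y}}[g_{\bU,y}(\bu)] - \E_{\bu \sim \gaus_k}[g_{\bU,y}(\bu)]\Bigr].
$$
Since $f \in [0,1]$ and conditional expectation contracts in $L^2$, one has $\|g_{\bU,\cdot}\|_{\gaus_k \otimes A_y} \leq \|f\|_{\gaus_d \otimes A_y} \leq 1$; applying \Cref{app:cond:matching-moment} to $(\bu,y) \mapsto g_{\bU,y}(\bu)$ yields $|\Delta^{\leq m}(\bU)| \leq \nu$ for every $\bU$. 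For $\Delta^{>m}(\bU)$, fix a radius $R$ to be tuned later and introduce a reweighted truncation $A^{(R)}$ of $A$: the $y$-marginal is kept exactly equal to $A_y$, while $A^{(R)}_{\bx|y}$ is $A_{\bx|y}$ conditioned on $\{\|\bx\| \leq R\}$. Control the truncation error $|\Delta^{>m}(\bU) - \Delta^{>m,R}(\bU)|$ by applying \Cref{app:cond:matching-moment} to an appropriately rescaled copy of $\|\bx\|^{m}$, obtaining a bound of order $\Gamma((k+m)/2)/\Gamma(k/2) \cdot R^{-m} + O(\nu)$. On the truncated distribution, expand
$$
\Delta^{>m,R}(\bU) \;=\; \sum_{j > m}\E_{y \sim A_y}\bigl\langle \bA_j(y),\ T_\bU\bigl(\bB_j^{(R)}(y)\bigr)\bigr\rangle,
$$
where $\bA_j(y)$ and $\bB_j^{(R)}(y)$ are the degree-$j$ Hermite coefficient tensors of $f(\cdot,y)$ on $\R^d$ and of $A^{(R)}_{\bx|y}$ on $\R^k$, and $T_\bU$ is the natural $\bU$-induced embedding of a $k$-tensor into a $d$-tensor. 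Parseval on $\gaus_d \otimes A_y$ gives $\sum_j \E_y \|\bA_j(y)\|_F^2 \leq 1$, whereas compact support of $A^{(R)}_{\bx|y}$ enforces $\|\bB_j^{(R)}(y)\|_F \lesssim (CR)^j$. Sub-exponential concentration of the Haar polynomial $\bU \mapsto \langle \bA, T_\bU(\bB)\rangle$ at scale $(k/d)^{j/2}\|\bA\|_F\|\bB\|_F$, together with a union bound over $m < j \leq d^{\Omega(c)}$ and a crude Parseval tail for $j > d^{\Omega(c)}$, produces the $2^{-d^{\Omega(c)}}$ failure probability.

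The main obstacle is the truncation step. Since $A$ need not have finite $\chi^2$ with respect to $\gaus_k \otimes A_y$, Fourier analysis on $A$ cannot be performed directly, and the $\chi^2$-based SQ dimension framework of \cite{DKS17-sq,DKS19} is unavailable; the \cite{DKRS23} remedy of truncating $\bx$ to a ball must therefore be imported. However, naive truncation in the RNGCA setting alters the $y$-marginal, rescales the ambient inner product $\langle \cdot,\cdot\rangle_{\gaus_k \otimes A_y}$ used by \Cref{app:cond:matching-moment}, and breaks the Parseval identities driving the high-degree expansion. The new ingredient is the reweighting that keeps $A_y$ invariant while truncating $A_{\bx|y}$, so that moment matching, Parseval, and the Hermite expansion all coexist on $A^{(R)}$. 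Once this conceptual point is settled, the remaining, largely numerical, work is to choose $R$ so that the truncation error $\Gamma((k+m)/2)/\Gamma(k/2)\cdot R^{-m}$ and the random-rotation concentration scale $(kR^2/d)^{m/2}$ both match $d^{-((1-\lambda)/4 - c)m}$ (the constraints $k,m \leq d^\lambda$ give the necessary slack), which is precisely what produces the stated tolerance $\tau$ alongside the $2^{-d^{\Omega(c)}}$ failure probability.
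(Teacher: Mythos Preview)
Your split $f=f^{\le m}+f^{>m}$ and the deterministic bound $|\Delta^{\le m}(\bU)|\le\nu$ via \Cref{app:cond:matching-moment} is clean and correct; the paper instead passes to the truncated distribution first and recovers the $(1+o(1))\nu$ from the $1\le i<m$ range of its four-range case analysis (\Cref{app:lem:summation-upper-bound}).

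There is, however, a real gap in the truncation step for $\Delta^{>m}$. The function $f^{>m}$ is unbounded, so total variation between $\p^{A}_\bU$ and $\p^{A^{(R)}}_\bU$ does not control $|\Delta^{>m}-\Delta^{>m,R}|$; writing $f^{>m}=f-f^{\le m}$ transfers the problem to bounding the change in expectation of the unbounded degree-$m$ polynomial $g_{\bU,y}$ when each $A_{\bx|y}$ is conditioned on $\{\|\bx\|\le R\}$, and there is no reason this is $O(\nu)$ (and an extra additive $O(\nu)$ here would already destroy the $(1+o(1))\nu$ in the statement). The paper avoids this by truncating \emph{before} any splitting---it moves from $A$ to $A'$ using only boundedness of $f$, and then runs the Hermite decomposition entirely on $A'$. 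Relatedly, your reweighting ($A^{(R)}_y=A_y$, $A^{(R)}_{\bx|y}=A_{\bx|y}\mid\{\|\bx\|\le R\}$) is not the paper's: \Cref{app:def:truncated_reweighted} deliberately reweights $A_y$ by $\Pr_{A_{\bx|y}}[\|\bx\|\le B]^2$ precisely so that $\|\E_{A'_{\bx|y}}[\bH_i]\|_{A'_y}\approx\|\E_{A_{\bx|y}}[\bH_i\Ind_B]\|_{A_y}$ (equation \eqref{app:eq:ind-on-A}); with your choice, $\Pr_{A_{\bx|y}}[\|\bx\|\le R]$ may be arbitrarily small for some $y$, and these denominators reappear uncontrolled.

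The ``crude Parseval tail'' for $j>d^{\Omega(c)}$ also does not close. Parseval only gives $\sum_j\|\bA_j\|_{A_y}^2\le 1$ with no pointwise decay, while $\|\bB^{(R)}_j(y)\|_F$ grows like $(CR)^j$; Cauchy--Schwarz on each term does not make the tail small. More fundamentally, even on the truncated distribution the infinite Hermite series need not converge to $\E_{\p^{A^{(R)}}_\bU}[f]$ when $\chi^2(A^{(R)}_{\bx|y},\gaus_k)=\infty$ (e.g.\ $A$ singular), as the paper flags after \eqref{app:eq:hermite-decomposition-infty}. The paper's fix is to keep a finite cutoff $\ell$, bound $\sum_{i=1}^{\ell}$ uniformly in $\ell$ via \Cref{app:lem:summation-upper-bound}, and control the remainder $\E_{\p^{A'}_\bU}[f^{>\ell}]$ using \Cref{app:fct:high-degree-finite-chi-square}: once $A'$ is supported on a ball of radius $<d$, the $\bU$-averaged distribution has finite $\chi^2$ against $\gaus_d\otimes A'_y$, so the remainder can be driven to zero by taking $\ell$ large enough (depending only on $f$ and $d$). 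Some version of this argument is needed in your approach as well.
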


Given \Cref{app:prp:main-tail-bound}, the proof of \Cref{app:thm:main-lb} is 
straightforward. We just need to show that for all the queries the algorithm 
makes, with high probability, the expected values 
$\E_{(\bx,y)\sim \p^{A}_{\bU}}[f(\bx,y)]$ 
and $\E_{(\bx,y)\sim\gaus_d\otimes A_y}[f(\bx,y)]$ are always close to each 
other for any query function $f$. Therefore, the SQ oracle can always answer 
the queries with $\E_{(\bx,y)\sim\gaus_d\otimes A_y}[f(\bx,y)]$ and the 
algorithm cannot differentiate between the alternative and null hypotheses.

\begin{proof} [Proof of \Cref{app:thm:main-lb}]
Suppose there is an SQ algorithm $\mathcal{A}$ using $q < 2^{d^{\Omega(c)}}$ many queries of accuracy $\tau \geq \frac{\Gamma(m/2+k/2)}{\Gamma(k/2)} d^{-((1-\lambda)/4-c)m} + (1+o(1))\nu$ and succeeds with at least $2/3$ probability. 

We prove by contradiction that such an $\mathcal{A}$ cannot exist. 
Suppose that the input distribution is $\mathcal{N}_d\otimes A_y$, 
and the SQ oracle always answers 
$\E_{(\bx,y) \sim \mathcal{N}_d\otimes A_y}[f(\bx,y)]$ for any query $f$. Then the assumption on $\mathcal{A}$ implies that 
it answers ``null hypothesis'' with probability $\alpha > 2/3$. 
Now consider the case that the input distribution is $\p_{\bU}^A$ 
and $\bU \sim U(\orthor_{d,k})$. Suppose the SQ oracle still 
always answers $\E_{(\bx,y) \sim \mathcal{N}_d\otimes A_y}[f(\bx,y)]$ whenever possible. Let $f_1, \ldots, f_q$ be the queries the algorithm 
makes, where $q = 2^{d^{O(c)}}$ for a sufficiently small implied constant in the big-$O$. 
By \Cref{app:prp:main-tail-bound} and a union bound, we have 
\[
\pr_{\bU\sim U(\orthor_{d,k})}[\exists i \in [q], |\E_{(\bx,y)\sim \p^{A}_{\bU}}[f_i(\bx,y)]-\E_{(\bx,y)\sim\gaus_d\otimes A_y}[f_i(\bx,y)]|\geq\tau] = o(1) \;.
\]
Therefore, with probability $1-o(1)$, the oracle will be able to always answer $\E[f_i(\mathcal{N}_d\otimes\mathcal{A}_y)]$. From our assumption on $\mathcal{A}$, 
the algorithm needs to answer the ``alternative hypothesis'' 
with probability at least $\frac{2}{3}(1-o(1))$.

But since the oracle always answers 
$\E_{(\bx,y) \sim \mathcal{N}_d\otimes A_y}[f_i(\bx,y)]$ 
(which is the same as in the above discussed null hypothesis case), 
we know that the algorithm will return ``null hypothesis''
with probability $\alpha > 2/3$. This gives a contradiction 
and completes the proof of \Cref{app:thm:main-lb}. 
\end{proof}

The rest of the section is devoted to proving \Cref{app:prp:main-tail-bound}.
In the next subsection, we will first show that in order to prove \Cref{app:prp:main-tail-bound}, it suffices for us 
to apply Fourier analysis on the distribution $A'$, 
a modification of $A$ 
(for convenience of the analysis) that has bounded 
total variation distance with $A$. 
The approach here shares similarities with \cite{DKRS23}).
Then, in \Cref{app:sec:main-tail-bound-proof}, 
we put everything together and establish 
\Cref{app:prp:main-tail-bound}.

\subsubsection{Fourier Analysis using Hermite Polynomials}
We will first try to use Fourier Analysis to analyze the value of $\E_{(\bx,y)\sim \p^{A}_{\bU}} [f(\bx,y)]$.
We can calculate $\E_{\bx\sim \p^{A}_{\bU}}[f(\bx,y)]$ by its Hermite decomposition as stated in the following lemma.
\begin{lemma}[Fourier Decomposition Lemma] \label{app:lem:hermite-decomposition}
Let $A$ be a regular joint distribution $(\bx,y)$ supported on $\R^k\times {\mathcal Y}$, $\bU\in \R^{d\times k}$ and $\bU^\top\bU=\bI_k$.
Then for any $f:\R^d\times \mathcal{Y}\rightarrow \R$ and $\ell\in \N$, 
\[
\E_{(\bx,y)\sim \p^{A}_{\bU}} [f(\bx,y)]=\sum_{i=0}^\ell \langle \bU^{\otimes i}\bA_i(y),\bT_i(y)\rangle_{A_y}+\E_{(\bx,y)\sim \p^{A}_{\bU}} \left [f^{>\ell}(\bx,y)\right ]\; ,\]
where $\bA_i(y)=\E_{\bx\sim A_{\bx\mid y}} [\bH_i(\bx)]$ and $\bT_i(y)=\E_{\bx\sim \gaus_d} [f(\bx,y) \bH_i(\bx)]$ and $f^{>\ell}(\bx,y)=(f(\cdot ,y))^{>\ell}(\bx)$.
\end{lemma}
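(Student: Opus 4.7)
The plan is to decompose $f(\cdot,y)$ in the Hermite basis for each fixed $y$ and then compute the expectation of each Hermite tensor $\bH_i(\bx)$ under the conditional distribution of $\bx$ given $y$ induced by $\p^A_\bU$. By regularity of $A$ (inherited by $\p^A_\bU$), we may write
\begin{equation*}
\E_{(\bx,y)\sim \p^A_\bU}[f(\bx,y)] \;=\; \E_{y \sim A_y}\!\left[\,\E_{\bx\sim (\p^A_\bU)_{\bx\mid y}}[f(\bx,y)]\right].
\end{equation*}
For each fixed $y$, the definition of $\bT_i(y)$ as the degree-$i$ Hermite coefficients of $f(\cdot,y)$ gives $f(\bx,y) = \sum_{i=0}^\ell \langle \bT_i(y),\bH_i(\bx)\rangle + f^{>\ell}(\bx,y)$, so the task reduces to computing $\E_{\bx\sim (\p^A_\bU)_{\bx\mid y}}[\bH_i(\bx)]$ for $0\le i\le \ell$.

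The key step is a Hermite-tensor covariance identity: for any $\bz\in\R^k$ and $\bw\sim\gaus(\vec 0,\Pi_{U^\perp})$ independent of $\bz$, with $U$ the column space of $\bU$,
\begin{equation*}
\E_{\bw}\bigl[\bH_i(\bU\bz+\bw)\bigr] \;=\; \bU^{\otimes i}\bH_i(\bz).
\end{equation*}
I would prove this via the exponential generating function $\sum_{i\ge 0}\langle \bH_i(\bx),\bt^{\otimes i}\rangle/i! = \exp(\langle \bt,\bx\rangle-\tfrac12\|\bt\|^2)$ for $\bt\in\R^d$. Using $\|\bt\|^2 = \|\bU^\top\bt\|^2 + \|\Pi_{U^\perp}\bt\|^2$ together with the Gaussian Laplace transform of $\bw$, a short calculation gives $\E_{\bw}\!\left[\exp(\langle\bt,\bU\bz+\bw\rangle-\tfrac12\|\bt\|^2)\right] = \exp(\langle \bU^\top\bt,\bz\rangle-\tfrac12\|\bU^\top\bt\|^2)$, which is exactly the generating function of $\bH_i(\bz)$ evaluated at $\bU^\top\bt$. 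Matching the coefficient of $\bt^{\otimes i}$ on both sides, together with the tensor adjoint identity $\langle \bU^{\otimes i}\bH_i(\bz),\bt^{\otimes i}\rangle = \langle \bH_i(\bz),(\bU^\top\bt)^{\otimes i}\rangle$, yields the covariance claim.

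Granting this identity, the structural description of $\p^A_\bU$---namely that, conditional on $y$, the decomposition $\bx = \bU(\bU^\top\bx) + \bx_{U^\perp}$ has $\bU^\top\bx \sim A_{\bx\mid y}$ and $\bx_{U^\perp}\sim \gaus(\vec 0,\Pi_{U^\perp})$ independently---gives $\E_{\bx\sim (\p^A_\bU)_{\bx\mid y}}[\bH_i(\bx)] = \bU^{\otimes i}\E_{\bz\sim A_{\bx\mid y}}[\bH_i(\bz)] = \bU^{\otimes i}\bA_i(y)$. Substituting into the Hermite expansion, pulling the deterministic factor $\bU^{\otimes i}\bA_i(y)$ out of the inner product in $\bx$, and averaging over $y\sim A_y$ produces
\begin{equation*}
\E_{(\bx,y)\sim \p^A_\bU}[f(\bx,y)] \;=\; \sum_{i=0}^\ell \langle \bU^{\otimes i}\bA_i(y),\bT_i(y)\rangle_{A_y} + \E_{(\bx,y)\sim \p^A_\bU}\bigl[f^{>\ell}(\bx,y)\bigr],
\end{equation*}
which is the claim. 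The only substantive step is the covariance identity above; everything else is a routine application of conditioning, linearity of expectation, and Fubini (justified by regularity of $A$, which ensures that $\bA_i(y)$ is a well-defined measurable function of $y$).
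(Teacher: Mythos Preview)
Your proposal is correct and follows essentially the same approach as the paper: both condition on $y$ via regularity, handle the unlabeled inner expectation, and then average over $y\sim A_y$. The only difference is that the paper cites \cite{DKRS23}, Lemma~3.3, as a black box for the inner step $\E_{\bx\sim (\p^A_\bU)_{\bx\mid y}}[f(\bx,y)] = \sum_{i=0}^\ell \langle \bU^{\otimes i}\bA_i(y),\bT_i(y)\rangle + \E[f^{>\ell}(\bx,y)]$, whereas you unpack that lemma yourself via the Hermite generating function identity $\E_{\bw}[\bH_i(\bU\bz+\bw)] = \bU^{\otimes i}\bH_i(\bz)$; this is precisely the content of that cited lemma (combined with the paper's Fact~\ref{clm:othor-tran}), so the two proofs are substantively identical.
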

\begin{proof} [Proof of \Cref{app:lem:hermite-decomposition}]
The proof of the theorem directly follows by 
applying the law of total expectation on Lemma 3.3 from \cite{DKRS23}. 
We first state Lemma 3.3 from \cite{DKRS23} below.
\begin{fact} [Lemma 3.3 of \cite{DKRS23}]
Let $A$ be any distribution supported on $\R^k$, $\bU\in \R^{d\times k}$ and $\bU^\top\bU=\bI_k$.
Then for any $\ell\in \N$, 
\[
\E_{\bx\sim \p^{A}_{\bU}} [f(\bx)]=\sum_{i=0}^\ell \langle \bU^{\otimes i}\E_{\bx\sim A} [\bH_i(\bx)],\E_{\bx\sim \gaus_d} [f(\bx) \bH_i(\bx)]\rangle+\E_{(\bx,y)\sim \p^{A}_{\bU}} \left [f^{>\ell}(\bx)\right ]\; .\]
\end{fact}
Applying the law of total expectation, we get
\begin{align*}
    &\E_{(\bx,y)\sim \p^{A}_{\bU}}[f(\bx,y)]\\
    =&\E_{y\sim A_{y}}\left [\E_{\bx\sim  {\left (\p^{A}_{\bU}\right )}_{\bx\mid y}}[f(\bx,y)] \right ]\\
    =&\E_{y\sim A_{y}}\left [\E_{\bx\sim  \p^{\left (A_{\bx\mid y}\right )}_{\bU}}[f(\bx,y)] \right ]\\
    =&\E_{y\sim A_{y}}\left [\sum_{i=0}^\ell \left \langle \bU^{\otimes i}\E_{\bx\sim A_{\bx\mid y}} [\bH_i(\bx)],\E_{\bx\sim \gaus_d} [f(\bx,y) \bH_i(\bx)]\right \rangle+\E_{(\bx,y)\sim \p^{\left( A_{\bx\mid y}\right )}_{\bU}} \left [(f(\cdot,y))^{>\ell}(\bx)\right ] \right ]\\
    =&\E_{y\sim A_{y}}\left [\sum_{i=0}^\ell \langle \bU^{\otimes i}\bA_i(y),\bT_i(y)\rangle+\E_{(\bx,y)\sim \p^{\left( A_{\bx\mid y}\right )}_{\bU}} \left [(f(\cdot,y))^{>\ell}(\bx)\right ] \right ]\\
    =&\sum_{i=0}^\ell \langle \bU^{\otimes i}\bA_i,\bT_i\rangle_{A_y}+\E_{(\bx,y)\sim \p^{A}_{\bU}} \left [f^{>\ell}(\bx,y)\right ] \;.  
\end{align*}
This completes the proof of \Cref{app:lem:hermite-decomposition}.
\qedhere
\end{proof}
We note that, ideally, we would like to have 
\begin{equation} \label{app:eq:hermite-decomposition-infty}
    \E_{(\bx,y)\sim \p^{A}_{\bU}}[f(\bx,y)]=\sum_{i=0}^\infty \langle \bU^{\otimes i}\bA_i,\bT_i  \rangle_{A_y}\; . 
\end{equation}
However, \Cref{app:eq:hermite-decomposition-infty} 
is not true in general for technical reasons.
Namely, it is possible that $\chi^2(A,\gaus_k\otimes A_y)$ 
is infinite (this is true even assuming $A_\bx=\gaus_k$); 
therefore, the convergence in \Cref{app:eq:hermite-decomposition-infty} 
may not hold
(see Remark 3.4 of \cite{DKRS23} for a more detailed discussion). 
Instead, we will show that for a sufficiently large $l$, 
we can have $\|f^{\geq l}\|_{\gaus_k\otimes A_y}$ be arbitrarily 
close to $0$.  
Combining this with some other technical facts will suffice to obtain that $\E_{(\bx,y)\sim \p^{A}_{\bU}} [f^{>\ell}(\bx,y)]$ is also arbitrarily close to 0.
Now notice that 
\[\langle \bU^{\otimes 0}\bA_0,\bT_0  \rangle_{A_y}
=\E_{y\sim A_y}[\E_{\bx\sim \gaus_k} [f(\bx,y)] ]=\E_{(\bx,y)\sim \gaus_d\otimes A_y}[f(\bx,y)]\; .\]
Therefore, the quantity we want to bound is just 
\[
\E_{(\bx,y)\sim \p^{A}_{\bU}}[f(\bx,y)]-\E_{(\bx,y)\sim\gaus_d\otimes A_y}[f(\bx,y)]=\sum_{i=1}^\ell \langle \bU^{\otimes i}\bA_i,\bT_i\rangle_{A_y}+\E_{(\bx,y)\sim \p^{A}_{\bU}} \left [f^{>\ell}(\bx,y)\right ]\; .
\]
As we have mentioned above, for sufficiently large $l$, 
the second term 
$\E_{(\bx,y)\sim \p^{A}_{\bU}} \left [f^{>\ell}(\bx,y)\right ]$ will be 
arbitrarily close to 0. Therefore, we just need to bound the first term 
$\sum_{i=1}^\ell \langle \bU^{\otimes i}\bA_i,\bT_i\rangle_{A_y}$.

To bound $\sum_{i=1}^\ell \langle \bU^{\otimes i}\bA_i,\bT_i\rangle_{A_y}$, 
notice that
\begin{align*}
\sum_{i=1}^\ell \langle \bU^{\otimes i}\bA_i,\bT_i  \rangle_{A_y}
\leq \sum_{i=1}^\ell |\langle \bA_i,{(\bU^\top)}^{\otimes i}\bT_i  \rangle_{A_y}|
\leq \sum_{i=1}^\ell \|\bA_i\|_{A_y}\|{(\bU^\top)}^{\otimes i}\bT_i  \|_{A_y} \;.
\end{align*}
To proceed, we just need to bound the terms 
$\|\bA_i\|_{A_y}$ and $\|{(\bU^\top)}^{\otimes i}\bT_i\|_{A_y}$ 
for all $i$. We first establish the following fact, 
which can be derived from 
Lemma 3.7, Lemma 3.8 and Corollary 3.9 of \cite{DKRS23}.
This fact bounds from above the $a$th moment 
$\|{(\bU^\top)}^{\otimes i}\bT_i \|_{A_y}^a$ 
and implies that $\|{(\bU^\top)}^{\otimes i}\bT_i \|_{A_y}$
is $o(1)$ with high probability.
\begin{fact} \label{app:fct:VT-decay}
    Let $i,k,d\in \Z_+$ with $k<d$, $a\in \Z_+$ be even and $i'=ai/2$.
    Let $D$ be a distribution over ${\cal Y}$ and
    $\bT:{\cal Y}\to {\mathbb{R}^{d}}^{\otimes i}$. 
    Then 
    \[
    \E_{\bU\sim U(\orthor_{d,k})} \left[\|(\bU^\top)^{\otimes i}\bT(y)\|_{y\sim D}^a \right]=O\left (\frac{\Gamma\left (\frac{i'+k}{2}\right )\Gamma\left (\frac{d}{2}\right )}
    {\Gamma\left (\frac{i'+d}{2}\right )\Gamma\left (\frac{k}{2}\right )}\right )\|\bT(y)\|_{y\sim D}^a\;.
    \]
    Furthermore,
    \[
    \E_{\bU\sim U(\orthor_{d,k})}\left[\|(\bU^\top)^{\otimes i}\bT(y)\|_{y\sim D}^a \right]
    = O(2^{i'/2}(d/\max(k,i'))^{-i'/2})\|\bT(y)\|_{y\sim D}^a\;.
    \]
    In addition, if there exists some constant $c\in (0,1)$ such that $k\leq d^c<i'$, then
    \[
    \E_{\bU\sim U(\orthor_{d,k})}\left[\|(\bU^\top)^{\otimes i}\bT(y)\|_{y\sim D}^a \right]
    = \exp(-\Omega(d^c\log d))O\left (\left (\frac{d^c+d}{i'+d}\right )^{(d-k)/2}\right )\|\bT(y)\|_{y\sim D}^a\; .
    \]
\end{fact}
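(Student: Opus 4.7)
The plan is to reduce the relativized claim to the non-relativized statements (Lemma 3.7, Lemma 3.8, and Corollary 3.9 of \cite{DKRS23}) via a tensorization trick. Since $a$ is even, set $p=a/2\in\Z_+$ and expand
\[
\|(\bU^\top)^{\otimes i}\bT(y)\|_{y\sim D}^a=\bigl(\E_{y\sim D}[\|(\bU^\top)^{\otimes i}\bT(y)\|^2]\bigr)^p=\E_{y_1,\ldots,y_p\sim D}\!\left[\prod_{j=1}^p\|(\bU^\top)^{\otimes i}\bT(y_j)\|^2\right].
\]
Using that the Frobenius norm is multiplicative on tensor products and that the Kronecker product distributes over tensor products of vectors, one rewrites
\[
\prod_{j=1}^p\|(\bU^\top)^{\otimes i}\bT(y_j)\|^2 = \bigl\|(\bU^\top)^{\otimes i'}\bigl(\bT(y_1)\otimes\cdots\otimes\bT(y_p)\bigr)\bigr\|^2,
\]
with $i'=ip=ai/2$ matching the definition in the statement.

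Next, I would interchange the expectations over $\bU$ and over $(y_1,\ldots,y_p)$, which is permissible by Fubini since the integrand is nonnegative. For each fixed tuple $(y_1,\ldots,y_p)$, the inner quantity is a second moment $\E_\bU[\|(\bU^\top)^{\otimes i'}\tilde{\bT}\|^2]$ for the order-$i'$ tensor $\tilde{\bT}=\bT(y_1)\otimes\cdots\otimes\bT(y_p)\in(\R^d)^{\otimes i'}$. Invoking Lemma 3.7 of \cite{DKRS23} (the $a=2$ non-relativized identity) yields the Gamma-ratio prefactor times $\|\tilde{\bT}\|^2$, and by independence of the $y_j$'s we have
\[
\E_{y_1,\ldots,y_p\sim D}[\|\tilde{\bT}\|^2]=\prod_{j=1}^p\E_{y_j\sim D}[\|\bT(y_j)\|^2]=\|\bT(y)\|_{y\sim D}^{2p}=\|\bT(y)\|_{y\sim D}^a.
\]
This establishes the first bound. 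The remaining two bounds follow directly from the Gamma-ratio prefactor by the standard Stirling-type asymptotics carried out in Lemma 3.8 of \cite{DKRS23} (producing the $2^{i'/2}(d/\max(k,i'))^{-i'/2}$ form) and by Corollary 3.9 of \cite{DKRS23} (producing the $\exp(-\Omega(d^c\log d))$ refinement when $k\leq d^c<i'$); since the prefactor coming out of the tensorization trick is identical to the non-relativized case, these derivations transfer verbatim.

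The only delicate point is the tensorization identification: one must verify that
\[
(\bU^\top)^{\otimes ip}\bigl(\bT(y_1)\otimes\cdots\otimes\bT(y_p)\bigr)=\bigotimes_{j=1}^p\!\bigl((\bU^\top)^{\otimes i}\bT(y_j)\bigr)
\]
holds up to the natural reshuffle of tensor factors, and that this reshuffle leaves the Frobenius norm invariant. Both facts are routine — the first from $(A_1\otimes A_2)(v_1\otimes v_2)=(A_1 v_1)\otimes(A_2 v_2)$ applied iteratively, the second from permutation-invariance of the Frobenius norm — but they must be recorded cleanly so that the reduction to the $a=2$ non-relativized statement is rigorous. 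No new probabilistic input beyond the $L^2$-Fubini step is required; once the relativized second moment is reduced to the non-relativized case, all three conclusions inherit immediately from the corresponding \cite{DKRS23} estimates.
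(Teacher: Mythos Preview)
Your proposal is correct and takes essentially the same approach as the paper: both tensorize the $a$-th power into a single order-$i'$ tensor $\tilde{\bT}=\bigotimes_{j=1}^{a/2}\bT(y_j)$, reduce to the spectral-norm bound $\|\E_\bU[\bU^{\otimes i'}(\bU^\top)^{\otimes i'}]\|_{\mathrm{op}}$ from Lemma~3.7 of \cite{DKRS23}, and then feed in Lemma~3.8/Corollary~3.9 for the Gamma-ratio asymptotics. The only cosmetic difference is that you swap expectations via Fubini and invoke the $a=2$ non-relativized bound pointwise in $(y_1,\ldots,y_p)$, whereas the paper keeps the $y$-expectation inside and runs the spectral-norm argument inline; the resulting inequalities are identical.
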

\begin{proof} [Proof of \Cref{app:fct:VT-decay}]
    Notice that 
    \begin{align*}
    &\E_{\bU\sim U(\orthor_{d,k})}[\|(\bU^\top)^{\otimes i}\bT(y)\|_{y\sim D}^a]\\
    =&
    \E_{\bU\sim U(\orthor_{d,k})}\left [\E_{y\sim D}\left [\left \|(\bU^\top)^{\otimes i}\bT(y)\right \|^2_2\right ]^{a/2}\right ]\\
    =&
    \E_{\bU\sim U(\orthor_{d,k})}\left [\E_{y_1,\cdots,y_{a/2}\sim D^{\otimes a/2}}\left [\prod_{j=1}^{a/2}\left \|(\bU^\top)^{\otimes i}\bT(y_j)\right \|^2_2\right ]\right ]\\
    =&
    \E_{\bU\sim U(\orthor_{d,k})}\left [\E_{y_1,\cdots,y_{a/2}\sim D^{\otimes a/2}}\left [\left \|(\bU^\top)^{\otimes ai/2}\bigotimes_{j=1}^{a/2}\bT(y_j)\right \|^2_2\right ]\right ]\\
    =&
    \E_{\bU\sim U(\orthor_{d,k})}\left [\E_{y_1,\cdots,y_{a/2}\sim D^{\otimes a/2}}\left [\left \langle \bU^{\otimes ai/2}(\bU^\top)^{\otimes ai/2},\left (\bigotimes_{j=1}^{a/2}\bT(y_j)\right )^{\otimes 2}\right \rangle \right ]\right ]\\
    \leq  & \left \|\E_{\bU\sim U(\orthor_{d,k})}\left [\bU^{\otimes ai/2}(\bU^\top)^{\otimes ai/2}\right ]\right \|_{\rm spectral}\left \|\E_{y_1,\cdots,y_{a/2}\sim D^{\otimes a/2}}\left [\left (\bigotimes_{j=1}^{a/2}\bT(y_j)\right )^{\otimes 2}\right ]\right \|_2\\
    =& \left \|\E_{\bU\sim U(\orthor_{d,k})}\left [\bU^{\otimes ai/2}(\bU^\top)^{\otimes ai/2}\right ]\right \|_{\rm spectral}\left \|\E_{y\sim D}\left [\bT(y)^{\otimes 2}\right ]^{\otimes a/2}\right \|_2\\
    =& \left \|\E_{\bU\sim U(\orthor_{d,k})}\left [\bU^{\otimes ai/2}(\bU^\top)^{\otimes ai/2}\right ]\right \|_{\rm spectral}\left \|\E_{y\sim D}\left [\bT(y)^{\otimes 2}\right ]\right \|_2^{a/2}\\
    \leq & \left \|\E_{\bU\sim U(\orthor_{d,k})}\left [\bU^{\otimes ai/2}(\bU^\top)^{\otimes ai/2}\right ]\right \|_{\rm spectral}\E_{y\sim D}\left [\left \|\bT(y)\right \|_2^2\right ]^{a/2}\\
    =& \left \|\E_{\bU\sim U(\orthor_{d,k})}\left [\bU^{\otimes ai/2}(\bU^\top)^{\otimes ai/2}\right ]\right \|_{\rm spectral}\left \|\bT(y)\right \|_{y\sim D}^a\; ,
    \end{align*}
    where we used the notation 
    $\left \|\E_{\bU\sim U(\orthor_{d,k})}\left [\bU^{\otimes ai/2}(\bU^\top)^{\otimes ai/2}\right ]\right \|_{\rm spectral}$  for 
    the spectral norm of $\E_{\bU\sim U(\orthor_{d,k})}\left [\bU^{\otimes ai/2}(\bU^\top)^{\otimes ai/2}\right ]$, 
    which we consider as a $(\R^K)^{\otimes ai/2}\times (\R^K)^{\otimes ai/2}$ symmetric matrix.
    
    Therefore, we just need to bound 
    $\left \|\E_{\bU\sim U(\orthor_{d,k})}\left [\bU^{\otimes ai/2}(\bU^\top)^{\otimes ai/2}\right ]\right \|_{\rm spectral}$. 
    The calculation here follows Lemma 3.7 of \cite{DKRS23}.
    Namely, let $\bA=\E_{\bU\sim U(\orthor_{d,k})}[\bU^{\otimes ai/2}(\bU^\intercal)^{\otimes ai/2}]$, $\bT_0$ be the eigenvector 
    associated with the largest absolute eigenvalue, and let 
    $\bu=\mathrm{argmax}_{\bu\in \mathbb{S}^{d-1}} |\langle \bT_0, \bu^{\otimes ai/2}\rangle|$.
    Then, we have 
    \begin{align*}
        \|\bA\|_2 
        =&|\langle\bA\bT_0,\bu^{\otimes ai/2}\rangle|/|\langle \bT_0,\bu^{\otimes ai/2}\rangle|
        =|\langle\bT_0,\bA\bu^{\otimes ai/2}\rangle|/|\langle \bT_0,\bu^{\otimes ai/2}\rangle|\\
        =&|\langle\bT_0,\E_{\bU\sim U(\orthor_{n,m})}[(\bU\bU^{\intercal}\bu)^{\otimes ai/2}]\rangle|/|\langle \bT_0,\bu^{\otimes ai/2}\rangle|\\
        =&|\E_{\bU\sim U(\orthor_{n,m})}[\langle\bT_0,(\bU\bU^{\intercal}\bu)^{\otimes ai/2}\rangle]|/|\langle \bT_0,\bu^{\otimes ai/2}\rangle|\\
        \leq &\E_{\bU\sim U(\orthor_{n,m})}[|\langle\bT_0,(\bU\bU^{\intercal}\bu)^{\otimes ai/2}\rangle|]/|\langle \bT_0,\bu^{\otimes ai/2}\rangle|\\
        \leq & \E_{\bU\sim U(\orthor_{n,m})}[\|(\bU\bU^{\intercal}\bu)^{\otimes ai/2}\|_2|\langle \bT_0,\bu^{\otimes ai/2}\rangle|]/|\langle \bT_0,\bu^{\otimes ai/2}\rangle|\\
        = & \E_{\bU\sim U(\orthor_{n,m})}[\|(\bU\bU^{\intercal}\bu)^{\otimes ai/2}\|_2]\\
        =& \E_{\bU\sim U(\orthor_{n,m})}\left [\|\bU^\intercal\bu\|_2^{ai/2}\right ] \;,
    \end{align*}      
    where we used $\bu=\mathrm{argmax}_{\bu\in \mathbb{S}^{d-1}} |\langle \bT_0, \bu^{\otimes ai/2}\rangle|$ in the second inequality.
    Plugging everything back into the representation for $\E_{\bU\sim U(\orthor_{d,k})}[\|(\bU^\intercal)^{\otimes i}\bT\|_{y\sim D}^a]$,
we get
    $$\E_{\bU\sim U(\orthor_{d,k})}[\|(\bU^\intercal)^{\otimes i}\bT\|_{y\sim D}^a]\le\E_{\bU\sim U(\orthor_{d,k})}\left [\|\bU^\intercal\bu\|_2^{ai/2}\right ]
    \|\bT\|_{y\sim D}^a\;.
    $$
    Therefore, it only remains to bound the term $\E_{\bU\sim U(\orthor_{d,k})}\left [\|\bU^\intercal\bu\|_2^{ai/2}\right ]$,
    which can be bounded by Lemma 3.8 of \cite{DKRS23} as stated below.
    \begin{fact} [Lemma 3.8 of \cite{DKRS23}]
    For any even $i\in \N$, and $\bu\in \mathbb{S}^{d-1}$, we have that 
    $$\E_{\bU^{\intercal}\sim U(\orthor_{d,k})}[\|\bU\bu\|_2^i]=
    \Theta\left (\frac{\Gamma\left (\frac{i+k}{2}\right )\Gamma\left (\frac{d}{2}\right )}
    {\Gamma\left (\frac{i+d}{2}\right )\Gamma\left (\frac{k}{2}\right )}\right ) \;. $$     
    \end{fact}
    Plugging it back into the equation right before this fact 
gives
    \[
    \E_{\bU\sim U(\orthor_{d,k})}[\|(\bU^\top)^{\otimes i}\bT(y)\|_{y\sim D}^a]=O\left (\frac{\Gamma\left (\frac{i'+k}{2}\right )\Gamma\left (\frac{d}{2}\right )}
    {\Gamma\left (\frac{i'+d}{2}\right )\Gamma\left (\frac{k}{2}\right )}\right )\|\bT(y)\|_{y\sim D}^a\;.
    \]
    The remaining statements follow directly by simplifying the term 
    $\frac{\Gamma\left (\frac{i'+k}{2}\right )\Gamma\left (\frac{d}{2}\right )}
    {\Gamma\left (\frac{i'+d}{2}\right )\Gamma\left (\frac{k}{2}\right )}$, 
    which follows via the exact same calculation 
    as in Corollary 3.9 of \cite{DKRS23}.
\end{proof}

Given that $\|{(\bU^\top)}^{\otimes i}\bT_i \|_{A_y}$
is $o(1)$ with high probability as discussed above, 
we just need to show that 
$\|\bA_i\|_{A_{y}}=\|\E_{\bx\sim A_{\bx\mid y}}[\bH_i(\bx)]\|_{A_y}$ 
does not grow too fast with respect to $i$ compared to 
$\|{(\bU^\top)}^{\otimes i}\bT_i \|_{A_y}$, so that the summation converges.
However, for a general hidden distribution $A$, the quantity 
$\|\bA_i\|_{A_y}$ is not bounded. 
To overcome this obstacle, we leverage an idea from \cite{DKRS23}. 
Specifically, we can truncate the $\bx$ part of $A$ inside a ball 
to obtain a distribution $A'$. This incurs negligible 
total variation distance error between $A$ and $A'$ 
and forces $\|\E_{\bx\sim A'_{\bx\mid y}}[\bH_i(\bx)]\|_{A_y}$ 
to not grow too fast with respect to $i$. We can then proceed with the analysis 
with respect to $A'$ instead of $A$.

However, this naive approach of directly truncating $\bx$ 
will not work in our context for the following reason: 
this truncation also changes the marginal distribution of $y$ 
and the norm we need to bound (we now 
need to bound $\|\cdot\|_{A'_y}$, instead of $\|\cdot\|_{A_y}$).
To overcome this issue, we will do a proper importance sampling on $y$ 
after the truncation, so that the new distribution $A'$ 
is close in total variation distance to $A$ 
and has $\|\E_{\bx\sim A'_{\bx\mid y}}[\bH_i(\bx)]\|_{A'_y}$ bounded.
Since the total variation distance between $A$ and $A'$ is small, 
if we can show an SQ lower bound for the RNGCA problem 
with the hidden distribution $A'$, 
this implies an SQ lower bound for the RNGCA problem 
with hidden distribution $A$.

For $B\in \R_+$, we use $\mathbb{B}^{k}(B)\subseteq\R^k$ 
to denote the ball defined as 
$\mathbb{B}^{k}(B)\eqdef \{\bx\in \R^k\mid \|\bx\|_2\leq B\}$.
We first give the following definition and lemma about $A'$.
\begin{definition} [Truncated and Reweighted Distribution inside a Ball] \label{app:def:truncated_reweighted}
    Let $A$ be a regular joint distribution of $(\bx,y)$ over $\R^k\times \mathcal{Y}$ and $B\in\R_+$.
    We define the truncated and reweighted distribution $A'$ as the joint distribution of $(\bx',y')$ supported on  $\mathbb{B}^{k}(B)\times \mathcal{Y}$ obtained by the following process.
    We first sample $y'\sim A_y$, then we reject the sample with probability $1-\pr_{\bx\sim A_{\bx\mid y=y'}}\left [\bx\in \mathbb{B}^{k}(B)\right ]^2$.
    If the sample is not rejected, then we sample $\bx'\sim A_{\bx\mid y=y'\land \bx\in \mathbb{B}^{k}(B)}$. 
\end{definition}

We note that $A'$ is by definition a regular distribution since it is defined by $A'_{\bx\mid y}$ for each $y$ and $A'_y$ is also well defined. 
We now give the following lemma, which shows that $A$ and $A'$ 
are close in total variation distance and  
$\|\E_{\bx\sim A'_{\bx\mid y}}[\bH_i(\bx)]\|_{A'_y}$ is bounded.

\begin{lemma} \label{app:lem:truncation}
Let $k,m\in\mathbb{N}$ with $m$ be even.
Let $0<\nu<2$ and $A$ be a regular distribution on $\R^k\times \mathcal{Y}$ 
that $\nu$-matches degree-$m$ moments 
with the standard Gaussian relative to $\Y$.
Let $B\in \R_+$ such that 
$B^m\geq c_1\left (2^{m/2}\sqrt{\frac{\Gamma(m+k/2)}{\Gamma(k/2)}}\right )$, 
where $c_1$ is at least a sufficiently large universal constant. 
Let $A'$ be the truncated and reweighted distribution 
over $\mathbb{B}^{k}(B)\times \mathcal{Y}$, 
as defined in \Cref{app:def:truncated_reweighted}.
Then we have that 
\begin{enumerate}[leftmargin=*]
    \item $d_{\mathrm{TV}}(A,A')= O\left (2^{m/2}\sqrt{\frac{\Gamma(m+k/2)}{\Gamma(k/2)}}\right )B^{-m}$; and 
    \item For any $i\in \Z_+$, 
    \[
    \left \|\E_{\bx\sim A'_{\bx\mid y}}[\bH(\bx)]\right \|_{A_y'}     = \begin{cases}
      2^{O(i)}\left (2^{m/2}\sqrt{\frac{\Gamma(m+k/2)}{\Gamma(k/2)}}\right )B^{i-m}  \\
      +\left (1+O\left (2^{m/2}\sqrt{\frac{\Gamma(m+k/2)}{\Gamma(k/2)}}\right )B^{-m}\right )\nu,   & i<m\; ; \\
      2^{O(i)}\left (2^{m/2}\sqrt{\frac{\Gamma(m+k/2)}{\Gamma(k/2)}}\right ) B^{i-m}, & i\geq m\; .
    \end{cases}
    \]
\end{enumerate}
\end{lemma}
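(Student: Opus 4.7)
The plan is to reduce everything to a single master estimate: applying \Cref{app:cond:matching-moment} to the unit-$L^2$ degree-$m$ polynomial $\|\bx\|^m/C$ (which is indeed a polynomial in $\bx$ because $m$ is even) yields $\E_A[\|\bx\|^m]\le C(1+\nu)$. I introduce the shorthand $q(y)\eqdef\Pr_{\bx\sim A_{\bx|y}}[\|\bx\|\le B]$ and $Z\eqdef\E_{y\sim A_y}[q(y)^2]$, so that by construction $A'_y(y)=A_y(y)q(y)^2/Z$ and $A'_{\bx|y}(\bx)=A_{\bx|y}(\bx)\Ind(\|\bx\|\le B)/q(y)$. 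Markov's inequality applied to the master bound gives $\E_y[1-q(y)]=\Pr_A[\|\bx\|>B]\lesssim C/B^m$, and the elementary $1-q^2\le 2(1-q)$ yields $1-Z\lesssim C/B^m$. The hypothesis $B^m\ge c_1 C$ then makes $1/\sqrt{Z}=1+O(C/B^m)=1+o(1)$.

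For part~(1), I will split the integral $2\,d_{\mathrm{TV}}(A,A')=\int\!\!\int|A-A'|$ at the sphere $\|\bx\|=B$. On $\|\bx\|>B$ the integrand is just $A_y(y)A_{\bx|y}(\bx)$ since $A'$ vanishes, contributing exactly $\E_y[1-q(y)]\lesssim C/B^m$. On $\|\bx\|\le B$ the integrand equals $A_y(y)A_{\bx|y}(\bx)\,|1-q(y)/Z|$, and the double integral reduces to $(1/Z)\,\E_y[q(y)|Z-q(y)|]$; bounding $|Z-q(y)|\le(1-q(y))+(1-Z)$ and using the estimates above again produces $O(C/B^m)$. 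Summing yields the claimed TV bound.

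For part~(2), I first observe that
\[
\left\|\E_{\bx\sim A'_{\bx|y}}[\bH_i(\bx)]\right\|_{A'_y}^2=\frac{1}{Z}\,\E_{y\sim A_y}\!\left[\left\|\E_{\bx\sim A_{\bx|y}}[\bH_i(\bx)\Ind(\|\bx\|\le B)]\right\|^2\right],
\]
so it suffices to control the inner expression up to the multiplicative factor $1/\sqrt{Z}=1+O(C/B^m)$. For $1\le i<m$, I split $\bH_i\Ind(\|\bx\|\le B)=\bH_i-\bH_i\Ind(\|\bx\|>B)$. The unrestricted piece contributes at most $\nu$ in $A_y$-norm: by duality against symmetric rank-$i$ test tensors $\bT(y)$ with $\|\bT\|_{A_y}\le 1$, the Hermite orthonormality $\E_{\gaus_k}[\langle\bH_i,\bT\rangle^2]=\|\bT_{\mathrm{sym}}\|^2$ makes $\langle\bH_i(\bx),\bT(y)\rangle$ a degree-$i\le m$ polynomial in $\bx$ with unit $\gaus_k\otimes A_y$-norm and zero Gaussian mean, so \Cref{app:cond:matching-moment} gives $|\E_A[\langle\bH_i,\bT\rangle]|\le\nu$. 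For the tail piece I will use the pointwise Hermite estimate $\|\bH_i(\bx)\|\le c_0^i(\|\bx\|+\sqrt{k})^i/\sqrt{i!}$ derived from \Cref{def:Hermite-tensor}, combined with $\sqrt{k}\le\|\bx\|$ on the tail and the crucial inequality $\|\bx\|^i\Ind(\|\bx\|>B)\le B^{i-m}\|\bx\|^m$ (valid because $\|\bx\|/B>1$ on the tail and $i\le m$); this reduces the tail to $B^{i-m}\,\|\E_{A_{\bx|y}}[\|\bx\|^m]\|_{A_y}$. To bound the latter by $C(1+O(\nu))$, I will run a one-step bootstrap: letting $\phi(y)\eqdef\E_{\bx|y}[\|\bx\|^m]$, write $\|\phi\|_{A_y}^2=\E_A[\phi(y)\|\bx\|^m]$, note that $f(\bx,y)\eqdef\phi(y)\|\bx\|^m$ is a degree-$m$ polynomial in $\bx$ with $\|f\|_{\gaus_k\otimes A_y}=C\,\|\phi\|_{A_y}$, apply \Cref{app:cond:matching-moment} to obtain the quadratic inequality $\|\phi\|_{A_y}^2\le C^2(1+\nu)+\nu C\,\|\phi\|_{A_y}$, and solve. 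Combining with the $1/\sqrt{Z}$ factor delivers $(1+O(C/B^m))\nu+2^{O(i)}CB^{i-m}$.

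For $i\ge m$, moment matching no longer constrains $\E_{A_{\bx|y}}[\bH_i]$, so I bound $\|\E_{A_{\bx|y}}[\bH_i\Ind(\|\bx\|\le B)]\|$ directly via the same pointwise Hermite estimate, obtaining $\le c_0^i\,\E_{A_{\bx|y}}[(\|\bx\|+\sqrt{k})^i\Ind(\|\bx\|\le B)]/\sqrt{i!}\le (2c_0)^i\E_{\bx|y}[\|\bx\|^i+k^{i/2}]/\sqrt{i!}$. The key reorganization is $\|\bx\|^i\Ind(\|\bx\|\le B)\le B^{i-m}\|\bx\|^m$ (now valid on the interior) and $k^{i/2}\le CB^{i-m}$, which follows from $B\ge\sqrt{k}$ combined with the elementary lower bound $C\ge k^{m/2}$ (derivable from $\Gamma(m+k/2)/\Gamma(k/2)\ge(k/2)^m$). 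Both reductions funnel once again into $\E_A[\|\bx\|^m]\lesssim C$ through the bootstrap above, yielding the claimed $2^{O(i)}CB^{i-m}$ bound. The main obstacle throughout is precisely this $i\ge m$ regime: without moment matching we must re-extract the factor $B^{i-m}$ purely from the confinement $\|\bx\|\le B$, and the parasitic geometric term $k^{i/2}$ absorbs correctly only because the explicit form of $C$ already encodes the $(k/2)^{m/2}$ dependence. Once this reorganization is in hand, the remaining manipulations are routine Hermite-tensor bookkeeping.
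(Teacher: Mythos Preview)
Your approach to Part~(1) and to the $i<m$ half of Part~(2) is sound and in fact slightly cleaner than the paper's: the direct TV computation avoids the intermediate distribution $\bar A$, and the inequality $\|\bx\|^i\Ind(\|\bx\|>B)\le B^{i-m}\|\bx\|^m$ replaces the paper's layer-cake integration. Your bootstrap for $\|\phi\|_{A_y}$ with $\phi(y)=\E_{\bx\mid y}[\|\bx\|^m]$ is exactly what the paper also uses (there for $A'$ rather than $A$, but the mechanism is the same).

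The genuine gap is the pointwise Hermite estimate. The bound
\[
\|\bH_i(\bx)\|\;\le\;c_0^{\,i}\,\frac{(\|\bx\|+\sqrt k)^i}{\sqrt{i!}}
\]
is \emph{false}. Take $\bx=0$: the only surviving terms in \Cref{def:Hermite-tensor} are perfect matchings, and one computes $\|\bH_i(0)\|\asymp i^{(k-1)/2}$ up to factors depending only on $k$ (cf.\ \Cref{app:fct:hermite-upper-bound-large-k}), which grows polynomially in $i$. Your right-hand side, however, is $c_0^{\,i}k^{i/2}/\sqrt{i!}$, which decays super-exponentially. So for every fixed $k$ the inequality fails once $i$ is large enough. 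For the tail case $i<m$ this is harmless --- on $\|\bx\|>B>\sqrt k>k^{1/4}$ you can simply invoke \Cref{app:fct:hermite-upper-bound-integration} to get $\|\bH_i(\bx)\|\le 2^{O(i)}\|\bx\|^i$ (no $1/\sqrt{i!}$), and your argument goes through verbatim.

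For $i\ge m$ the gap is real: you apply the bound on all of $\mathbb B^k(B)$, including near the origin, and there is no single pointwise estimate of the shape you want that works. The paper's route is to expand $\bH_i$ explicitly as a signed sum of tensors $(-\bI)^{\otimes t}\otimes\bx^{\otimes(i-2t)}$, bound each monomial moment via $\|\bx\|^{i-2t}\le B^{\max(0,i-2t-m)}\|\bx\|^{\min(m,i-2t)}$, feed the degree-$\le m$ piece into the same bootstrap, and then verify that the combinatorial factor $\sum_t \sqrt{i!}/(2^t t!(i-2t)!)$ is $2^{O(i)}$. The identity $\max(0,2t-i+m)+\min(m,i-2t)=m$ is what lets the residual $k^{t/2}$ factors be absorbed back into $C$ using $B^2\ge k$. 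Your reorganization $\|\bx\|^i\Ind(\|\bx\|\le B)\le B^{i-m}\|\bx\|^m$ captures the $t=0$ term of this expansion, but the $t>0$ terms (which dominate near the origin) must be handled separately, and this is precisely the bookkeeping your shortcut was trying to avoid.
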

\begin{proof} [Proof of \Cref{app:lem:truncation}]
We first bound $d_{\mathrm{TV}}(A,A')$. For convenience of the analysis, we define the distribution $\bar{A}$ as the distribution of $(\bx,y)\sim A$ conditioned on $\bx\in \mathbb{B}^{K}(B)$.
Since $d_{\mathrm{TV}}(A,A')\leq d_{\mathrm{TV}}(A,\Bar{A})+d_{\mathrm{TV}}(\Bar{A},A')$, 
it suffices for us to bound each term separately.

We first bound $d_{\mathrm{TV}}(A,\bar{A})$ using the fact that $A$ 
$\nu$-matches degree-$m$ moments with the standard Gaussian 
relative to $\Y$.
Namely, we have that
\begin{align*}
\E_{(\bx,y)\sim A}[\|\bx\|_2^{m}]
\le &\E_{\bx\sim \gaus_k}[\|\bx\|_2^m]
+\nu\E_{\bx\sim \gaus_k}[\|\bx\|_2^{2m}]^{1/2}\\
= &\E_{t\sim \chi^2(k)}[t^{m/2}]+\nu\E_{t\sim \chi^2(k)}[t^{m}]^{1/2}\\
= &2^{m/2}\frac{\Gamma((m+k)/2)}{\Gamma(k/2)}+2^{m/2}\sqrt{\frac{\Gamma((2m+k)/2)}{\Gamma(k/2)}}\nu\\
\leq & c_2\left (2^{m/2}\sqrt{\frac{\Gamma(m+k/2)}{\Gamma(k/2)}}\right )\; ,
\end{align*}
where $c_2$ is a universal constant. 
Using Markov's inequality and the union bound, we have
\[ \pr_{(\bx,y)\sim A}[\bx\not\in \mathbb{B}^{k}(B)]\leq c_2\left (2^{m/2}\sqrt{\frac{\Gamma(m+k/2)}{\Gamma(k/2)}}\right )B^{-m} \;. \] 
By the definition of $\bar{A}$, we have that $d_{\mathrm{TV}}(A,\bar{A})\le \pr_{(\bx,y)\sim A}[\bx\not\in \mathbb{B}^{m}(B)]\le c_2\left (2^{m/2}\sqrt{\frac{\Gamma(m+k/2)}{\Gamma(k/2)}}\right )B^{-m}$.

Then, for $d_{\mathrm{TV}}(\bar{A},A')$, notice that $\bar{A}_{\bx\mid y}$ and $A'_{\bx\mid y}$ are the same for any $y$. Therefore,  
\begin{align*}
d_{\mathrm{TV}}(\bar{A},A')\leq & d_{\mathrm{TV}}(\bar{A}_y,A'_y)\leq d_{\mathrm{TV}}(\bar{A}_y,A_y)+d_{\mathrm{TV}}(A'_y,A_y)\\
= & c_2\left (2^{m/2}\sqrt{\frac{\Gamma(m+k/2)}{\Gamma(k/2)}}\right  )B^{-m}+d_{\mathrm{TV}}(A'_y,A_y)\; .
\end{align*}
So we just need to bound $d_{\mathrm{TV}}(A'_y,A_y)$.
From the definition of $A'$, notice that 
\begin{align*}
    d_{\mathrm{TV}}(A'_y,A_y)\leq &\E_{y\sim A_y}[1-\pr_{\bx\sim A_{\bx\mid y}}\left [\bx\in \mathbb{B}^{k}(B)]^2\right ]\\
    \leq & \E_{y\sim A_y}[2\pr_{\bx\sim A_{\bx\mid y}}[\bx\not \in \mathbb{B}^{k}(B)]]\\
    \leq & 2\pr_{(\bx,y)\sim A}\left [\bx\not \in \mathbb{B}^{k}(B)\right ]\leq  2c_2\left (2^{m/2}\sqrt{\frac{\Gamma(m+k/2)}{\Gamma(k/2)}}\right )B^{-m}\; .
\end{align*}
Combining the above, we get
$d_{\mathrm{TV}}(A,A')\leq 4c_2\left (2^{m/2}\sqrt{\frac{\Gamma(m+k/2)}{\Gamma(k/2)}}\right )B^{-m}$.

It remains to verify the bound on 
$\left \|\E_{\bx\sim A'_{\bx\mid y}}\left [\bH_i(\bx)\right ]\right \|_{A_y'}$.
We will analyze the cases $1\leq i< m$ and $i\geq m$ respectively.
We first prove the following bound that will be convenient 
for the analysis that follows. 
Notice that, by the definition of $A'$, 
we have that for any function $f:\R^k\to \R^n$,
\begin{equation} \label{app:eq:ind-on-A}
\begin{split}
    &\left \|\E_{\bx\sim A'_{\bx\mid y}}[f(\bx)]\right \|_{A'_y}\\
    =&\E_{y\sim {A'_y}}\left [\left \|\E_{\bx\sim A'_{\bx\mid y}}[f(\bx)]\right \|_2^2\right ]^{1/2}\\
    =&\E_{y\sim {A'_y}}\left [\left \|\E_{\bx\sim A_{\bx\mid y}}\left [f(\bx)\Ind\left (\bx\in \mathbb{B}^{k}(B)]\right )\right ]/\pr_{\bx\sim A_{\bx\mid y}}\left [\bx\in \mathbb{B}^{k}(B)\right ]\right \|_2^2\right ]^{1/2}\\
    =&\E_{y\sim {A_y}}\left [\left \|\E_{\bx\sim A_{\bx\mid y}}\left [f(\bx)\Ind\left (\bx\in \mathbb{B}^{k}(B)]\right )\right ]\right \|_2^2\right ]^{1/2}\E_{y\sim A_y}\left [\pr_{\bx\sim A_{\bx\mid y}}\left [\bx\in \mathbb{B}^{k}(B)\right ]^2\right ]^{-1}\\
    \leq &\left\|f(\bx)\Ind\left (\bx\in \mathbb{B}^{k}(B)]\right )\right \|_{A_y}(1-2\pr_{(\bx,y)\sim A_{\bx\mid y}}\left [\bx\not\in \mathbb{B}^{k}(B)\right ])^{-1}\\
    =&\left (1+O\left (2^{m/2}\sqrt{\frac{\Gamma(m+k/2)}{\Gamma(k/2)}}\right )B^{-m}\right )\left\|f(\bx)\Ind\left (\bx\in \mathbb{B}^{k}(B)]\right )\right \|_{A_y}\; ,
\end{split}
\end{equation}
where the last equality follows from the earlier bound that \[\pr_{(\bx,y)\sim A}\left [\bx\not\in \mathbb{B}^{k}(B)\right ]\leq c_2\left (2^{m/2}\sqrt{\frac{\Gamma(m+k/2)}{\Gamma(k/2)}}\right )B^{-m}\; ,\] and the assumption that $B^m\geq c_1\left (2^{m/2}\sqrt{\frac{\Gamma(m+k/2)}{\Gamma(k/2)}}\right )$.
Given \Cref{app:eq:ind-on-A}, we have 
\begin{align*} \label{app:eq:A-norm-small}
    &\left \|\E_{\bx\sim A'_{\bx\mid y}}\left [\bH(\bx) \right]\right \|_{A_y'}\\
    =&\left (1+O\left (2^{m/2}\sqrt{\frac{\Gamma(m+k/2)}{\Gamma(k/2)}}\right )B^{-m}\right )\left \|\E_{\bx\sim A_{\bx\mid y}}\left [\bH(\bx)\Ind\left (\bx\in \mathbb{B}^{k}(B)\right )\right ]\right \|_{A_y}\; .
\end{align*}
Therefore, we just need to bound $\left \|\E_{\bx\sim A_{\bx\mid y}}\left [\bH(\bx)\Ind\left (\bx\in \mathbb{B}^{k}(B)\right )\right ]\right \|_{A_y}$.

For the case $1\leq k<m$, notice that
\begin{align*}
    &\left \|\E_{\bx\sim A_{\bx\mid y}}\left [\bH(\bx)\Ind\left (\bx\in \mathbb{B}^{k}(B)\right )\right ]\right \|_{A_y}\\
    \leq& \left \|\E_{\bx\sim A_{\bx\mid y}}\left [\bH(\bx)\right ]\right \|_{A_y}+\left \|\E_{\bx\sim A_{\bx\mid y}}\left [\bH(\bx)\Ind\left (\bx\not\in \mathbb{B}^{k}(B)\right )\right ]\right \|_{A_y}\\
    \leq &\nu+\left \|\E_{\bx\sim A_{\bx\mid y}}\left [\left \|\bH(\bx)\right \|_2\Ind\left (\bx\in \mathbb{B}^{k}(B)\right )\right ]\right \|_{A_y}\; .
\end{align*}
To bound the second term, we will use the following fact from \cite{DKRS23}.
\begin{fact} [Fact B.1 of \cite{DKRS23}] \label{app:fct:hermite-upper-bound-integration}
Let $\bH_i$ be the $i$-th Hermite tensor in $k$ dimensions.
Suppose that $\|\bx\|_2\geq k^{1/4}$. 
Then $\|\bH_i(\bx)\|_2=2^{O(i)}\|\bx\|_2^i$.
\end{fact}
Given that 
$B^m\geq c_1\left (2^{m/2}\sqrt{\frac{\Gamma(m+k/2)}{\Gamma(k/2)}}\right )$,
we have $B^2>k$. Therefore,
using \Cref{app:fct:hermite-upper-bound-integration}, 
we get 
\begin{align*}
    &\left \|\E_{\bx\sim A_{\bx\mid y}}\left [\left \|\bH(\bx)\right \|_2\Ind\left (\bx\not\in \mathbb{B}^{k}(B)\right )\right ]\right \|_{A_y}\\
    \leq  &   \left \|\E_{\bx\sim A_{\bx\mid y}}\left [ 2^{O(i)}\|\bx \|_2^i\Ind\left (\bx\not\in \mathbb{B}^{k}(B)\right )\right ]\right \|_{A_y}\\
    \leq  & 2^{O(i)}  \left \|\E_{\bx\sim A_{\bx\mid y}}\left [ \|\bx \|_2^i\Ind\left (\bx\not\in \mathbb{B}^{k}(B)\right )\right ]\right \|_{A_y}\\
    \leq  &   2^{O(i)}\left \|\int_{0}^{\infty}\pr_{\bx\sim A_{\bx\mid y}}\left [ \|\bx \|_2\geq u\land \bx\not\in \mathbb{B}^{k}(B)\right ]du^i\right \|_{A_y}\\
    \leq  &   2^{O(i)}\int_{0}^{\infty}\left \|\pr_{\bx\sim A_{\bx\mid y}}\left [ \|\bx \|_2\geq u\land \bx\not\in \mathbb{B}^{k}(B)\right ]\right \|_{A_y}du^i\; ,
\end{align*}
where 
\begin{align*}
    \left \|\pr_{\bx\sim A_{\bx\mid y}}\left [ \|\bx \|_2\geq u\right ]\right \|_{A_y}\leq \left \|\E_{\bx\sim A_{\bx\mid y}}\left [ \|\bx \|_2^m\right ]/u^m\right \|_{A_y}=\left \|\E_{\bx\sim A_{\bx\mid y}}\left [ \|\bx \|_2^m\right ]\right \|_{A_y}/u^m\; .
\end{align*}
Therefore, using the earlier bound on $\left \|\E_{\bx\sim A_{\bx\mid y}}\left [ \|\bx \|_2^m\right ]\right \|_{A_y}$, we get
\begin{align*}
    &\left \|\E_{\bx\sim A_{\bx\mid y}}\left [\left \|\bH(\bx)\right \|_2\Ind\left (\bx\not\in \mathbb{B}^{k}(B)\right )\right ]\right \|_{A_y}\\
    \leq &2^{O(i)}\int_{0}^\infty \left (2^{m/2}\sqrt{\frac{\Gamma(m+k/2)}{\Gamma(k/2)}}\right )\min(B^{-m},u^{-m}) du^i\\
    \leq &2^{O(i)}\left (2^{m/2}\sqrt{\frac{\Gamma(m+k/2)}{\Gamma(k/2)}}\right )
    \left (\int_{0}^B B^{-m}du^i+\int_{B}^\infty u^{-m}du^{i}\right ) \\
    \leq &2^{O(i)}\left (2^{m/2}\sqrt{\frac{\Gamma(m+k/2)}{\Gamma(k/2)}}\right )B^{i-m}\; .
\end{align*}
Plugging this back in the ealier equation for $\left \|\E_{\bx\sim A'_{\bx\mid y}}\left [\bH(\bx) \right]\right \|_{A_y'}$, we get that for $1\leq i<m$,
\begin{align*}
    \left \|\E_{\bx\sim A'_{\bx\mid y}}[\bH(\bx)]\right \|_{A_y'} =&2^{O(i)}\left (2^{m/2}\sqrt{\frac{\Gamma(m+k/2)}{\Gamma(k/2)}}\right )B^{i-m}  \\
      &+\left (1+O\left (2^{m/2}\sqrt{\frac{\Gamma(m+k/2)}{\Gamma(k/2)}}\right )B^{-m}\right )\nu\; .  
\end{align*}
Now we bound $\|\E_{\bx\sim A'_{\bx\mid y}}[\bH_i(\bx)]\|_{A'_y}$
for $i\geq m$. 
For an order-$i$ tensor $\bA$, 
we use $\bA^{\pi}$ to denote the matrix  
$\bA^{\pi}_{j_1,\cdots,j_k}=\bA_{\pi(j_1,\cdots,j_k)}$ and $\|\bA\|_2=\|\bA^\pi\|_2$.
From the definition of the Hermite tensor, we have
\begin{align*}
    \bH(\bx)=\frac{1}{\sqrt{i!}}\sum_{t=0}^{\lfloor i/2\rfloor}\sum_{\text{Permutation $\pi$ of $[i]$}}
    \frac{1}{2^tt!(i-2t)!} \left ((-\bI)^{\otimes t}\bx^{\otimes (i-2t)}\right )^\pi\; .
\end{align*}
This implies that 
\begin{align*}
    &\left \|\E_{\bx \sim A'_{\bx\mid y}}[\bH_i(\bx)]\right \|_{A'_y}\\
    =&\left \|\frac{1}{\sqrt{i!}}\sum_{t=0}^{\lfloor i/2\rfloor}\sum_{\text{Permutation $\pi$ of $[i]$}}
    \frac{1}{2^tt!(i-2t)!} \left ((-\bI)^{\otimes t}\E_{\bx\sim A'_{\bx\mid y}}\left [\bx^{\otimes (i-2t)}\right ]\right )^\pi\right \|_{A'_y}\\
    \leq & \sum_{t=1}^{\lfloor i/2\rfloor}\frac{\sqrt{i!}}{2^t t!(i-2t)!} \|\bI^{\otimes t}\|_2\left \|\E_{\bx \sim A'_{\bx\mid y}}\left [\bx^{\otimes (i-2t)}\right ]\right \|_{A'_y} \\
    \leq & \sum_{t=1}^{\lfloor i/2\rfloor}\frac{\sqrt{i!}}{2^t t!(i-2t)!} \|\bI^{\otimes t}\|_2\left \|\E_{\bx \sim A'_{\bx\mid y}}\left [\left \|\bx^{\otimes (i-2t)}\right\|_2\right ]\right \|_{A'_y} \\
    = & \sum_{t=1}^{\lfloor i/2\rfloor}\frac{\sqrt{i!}}{2^t t!(i-2t)!} \|\bI^{\otimes t}\|_2\left \|\E_{\bx \sim A'_{\bx\mid y}}\left [\|\bx\|_2^{i-2t}\right ]\right \|_{A'_y} \\
    \leq & \sum_{t=1}^{\lfloor i/2\rfloor}\frac{\sqrt{i!}}{2^t t!(i-2t)!} k^{t/2}B^{\max(i-m-2t,0)}\left \|\E_{\bx \sim A'_{\bx\mid y}}\left [\|\bx\|_2^{\min(m,i-2t)}\right ]\right \|_{A'_y}\\
    \leq & \sum_{t=1}^{\lfloor i/2\rfloor}\frac{\sqrt{i!}}{2^t t!(i-2t)!} k^{\min(2t,i-m)/4}B^{\max(i-m-2t,0)}k^{\max(0,2t-i+m)/4}\left \|\E_{\bx \sim A'_{\bx\mid y}}\left [\|\bx\|_2^{\min(m,i-2t)}\right ]\right \|_{A'_y}\\
    \leq & B^{i-m}\sum_{t=1}^{\lfloor i/2\rfloor}\frac{\sqrt{i!}}{2^t t!(i-2t)!} k^{\max(0,2t-i+m)/4}\left \|\E_{\bx \sim A'_{\bx\mid y}}\left [\|\bx\|_2^{\min(m,i-2t)}\right ]\right \|_{A'_y}\; ,
\end{align*}
where the last inequality follows from the fact that $B^2\geq k$ 
(which, as already noted, is implied by the fact that $B^m\geq c_1\left (2^{m/2}\sqrt{\frac{\Gamma(m+k/2)}{\Gamma(k/2)}}\right )$).

We now bound the term $\left \|\E_{\bx \sim A'_{\bx\mid y}}\left [\|\bx\|_2^{\min(m,i-2t)}\right ]\right \|_{A'_y}$.
For convenience of the analysis,
let ${m'=\min(m,i-2t)}$ and 
$f(y)=\E_{\bx \sim A_{\bx\mid y}}\left [\|\bx\|_2^{m'}\right ]$.
Notice that the quantity we want to bound is $\|f\|_{A'_y}$.
Furthermore, using the fact that $A'$ $\nu$-matches degree-$m$ moments relative to $\Y$  
with the standard Gaussian, we have that
\begin{align*}
    \|f\|_{A'_y}^2
    &= \E_{(\bx,y)\sim A'}\left [f(y)\|\bx\|_2^{m'}\right ]\\
    &\leq \E_{(\bx,y)\sim \gaus_k\otimes A'_y}\left [f(y)\|\bx\|_2^{m'}\right ]
    +\nu \|f(y)\|\bx\|_2^{m'}\|_{\gaus_k\otimes A'_y}\\
    &= \E_{y\sim A'_y}[f(y)]\E_{\bx\sim \gaus_k}[\|\bx\|_2^{m'}]+\nu\|f\|_{A'_y}\E_{\bx\sim  \gaus_k}\left [\|\bx\|_2^{2m'}\right ]^{1/2}\\
    &= \E_{\bx\sim A'_x}[\|\bx\|_2^{m'}]\E_{\bx\sim \gaus_k}[\|\bx\|_2^{m'}]+\nu\|f\|_{A'_y}\E_{\bx\sim  \gaus_k}\left [\|\bx\|_2^{2m'}\right ]^{1/2}\\
    &= \left (\E_{\bx\sim \gaus_k}[\|\bx\|_2^{m'}] +\nu\E_{\bx\sim  \gaus_k}\left [\|\bx\|_2^{2m'}\right ]^{1/2}\right )\E_{\bx\sim \gaus_k}[\|\bx\|_2^{m'}]+\nu\|f\|_{A'_y}\E_{\bx\sim  \gaus_k}\left [\|\bx\|_2^{2m'}\right ]^{1/2}\\
    &= \E_{\bx\sim \gaus_k}[\|\bx\|_2^{m'}]^2+\nu\E_{\bx\sim  \gaus_k}\left [\|\bx\|_2^{2m'}\right ]^{1/2}\E_{\bx\sim \gaus_k}[\|\bx\|_2^{m'}]+\nu\|f\|_{A'_y}\E_{\bx\sim  \gaus_k}\left [\|\bx\|_2^{2m'}\right ]^{1/2}\; .
\end{align*}
Since $\nu=O(1)$, we must have 
\[
\|f\|_{A'_y}=O\left (\max \left (\E_{\bx\sim \gaus_k}[\|\bx\|_2^{m'}],\E_{\bx\sim  \gaus_k}\left [\|\bx\|_2^{2m'}\right ]^{1/2}\right )\right )\; .
\]
Notice that both quantities can be calculated 
using the $\chi^2$ distribution. From previous calculations, we have that
\[
\E_{\bx\sim \gaus_k}[\|\bx\|_2^{m'}]\leq \E_{\bx\sim  \gaus_k}\left [\|\bx\|_2^{2m'}\right ]^{1/2}=2^{\min(m,i-2t)/2}\sqrt{\frac{\Gamma(\min(m,i-2t)+k/2)}{\Gamma(k/2)}}\; .
\]
Therefore, we get $\|f\|_{A'_y}=O\left (2^{\min(m,i-2t)/2}\sqrt{\frac{\Gamma(\min(m,i-2t)+k/2)}{\Gamma(k/2)}}\right )$. 
Plugging it back in the ealier representation for $\left \|\E_{\bx \sim A'_{\bx\mid y}}[\bH_i(\bx)]\right \|_{A'_y}$
gives
\begin{align*}
    &\left \|\E_{\bx \sim A'_{\bx\mid  y}}[\bH_i(\bx)]\right \|_{A'_y}\\
    \leq & B^{i-m}\sum_{t=1}^{\lfloor i/2\rfloor}\frac{\sqrt{i!}}{2^t t!(i-2t)!} k^{\max(0,2t-i+m)/4}O\left (2^{\min(m,i-2t)/2}\sqrt{\frac{\Gamma(\min(m,i-2t)+k/2)}{\Gamma(k/2)}}\right )\\
    \leq & B^{i-m}O\left (2^{m/2}\sqrt{\frac{\Gamma(m+k/2)}{\Gamma(k/2)}}\right )\sum_{t=1}^{\lfloor i/2\rfloor}\frac{\sqrt{i!}}{2^t t!(i-2t)!} \;,
\end{align*}
where the second inequality follows from the elementary fact 
$\max(0,2t-i+m)+\min(m,i-2t)=m$.
One can see that the denominator is minimized when $t=i/2-O(\sqrt{i})$.
Then it follows that the sum is at most 
$2^{O(i)} B^{i-m} \left (2^{m/2}\sqrt{\frac{\Gamma(m+k/2)}{\Gamma(k/2)}}\right )$.

This completes the proof of \Cref{app:lem:truncation}. 
\end{proof}

Recall that the quantity we want to bound is 
$|\E_{(\bx,y)\sim \p^{A}_{\bU}}[f(\bx,y)]-\E_{(\bx,y)\sim\gaus_d\otimes A_y}[f(\bx,y)]|$.
Given that $A'$ and $A$ are close in total variation distance, 
we have that
for any $\bU\in \orthor_{d,k}$, the distributions 
$\p_{\bU}^A$ and $\p_{\bU}^{A'}$ are close in total variation distance.
Therefore, for any query function $f:\R^d\times \mathcal{Y}\to [-1,1]$, $\E_{(\bx,y)\sim \p_{\bU}^{A}}[f(\bx,y)]-\E_{(\bx,y)\sim \p_{\bU}^{A'}}[f(\bx,y)]$ is small
and $\E_{(\bx,y)\sim \gaus_d\otimes A_y}[f(\bx,y)]-\E_{(\bx,y)\sim \gaus_d\otimes A'_y}[f(\bx,y)]$ is small.
Thus, we can bound 
$|\E_{(\bx,y)\sim \p^{A'}_{\bU}}[f(\bx,y)]-\E_{(\bx,y)\sim\gaus_d\otimes A'_y}[f(\bx,y)]|$ instead. 

For that 
it suffices for us to apply the Hermite decomposition 
(\Cref{app:lem:hermite-decomposition}) to $A'$ instead of $A$ 
and analyze $\sum_{i=1}^\ell \langle \bU^{\otimes i}\bA_i,\bT_i\rangle_{A'_y}$, where $\bA_i(y)=\E_{\bx\sim A'_{\bx\mid y}}[\bH_i(\bx)]$.
We give the following upper bound on 
$\sum_{i=1}^\ell \langle \bU^{\otimes i}\bA_i,\bT_i\rangle{A'_y}$.

\begin{lemma} \label{app:lem:summation-upper-bound}
    Under the conditions of Proposition \ref{app:prp:main-tail-bound},
    and further assuming
    $m,k\leq d^{\lambda}/\log d$,
    $\nu< 2$
    and $\left (\frac{\Gamma(m/2+k/2)}{\Gamma(k/2)}\right )d^{-((1-\lambda)/4-c) m}<2$, the following holds:
    For any $d$ that is at least a sufficiently large  
    constant depending on $c$,
    there is a 
    $B<d$ such that the truncated and reweighted distribution $A'$ over $\mathbb{B}^{k}(B)\times \mathcal{Y}$,
    defined \Cref{app:def:truncated_reweighted}, satisfies 
    \[d_{\mathrm{TV}}(A,A')\leq \left (\frac{\Gamma(m/2+k/2)}{\Gamma(k/2)}\right )d^{-((1-\lambda)/4-c) m}\; .\]
    Furthermore for any $\ell\in \Z_+$,
    except with probability at most $2^{-d^{\Omega(c)}}$ 
    with respect to $\bU\sim U(\orthor_{d,k})$, it holds 
    \[  
    \left |\sum_{i=1}^{\ell} \langle \bA_i,(\bU^{\top})^{\otimes i}\bT_i\rangle_{A'_y}\right |\leq\left (\frac{\Gamma(m/2+k/2)}{\Gamma(k/2)}\right )d^{-((1-\lambda)/4-c) m}+(1+o(1))\nu \;, 
    \]
   where $\bA_i(y)=\E_{\bx\sim A'_{\bx\mid y}} [\bH_i(\bx)]$ and $\bT_i(y)=\E_{\bx\sim \gaus_d} [f(\bx,y) \bH_i(\bx)]$. 
\end{lemma}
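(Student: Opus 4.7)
My plan is to invoke \Cref{app:lem:truncation} to obtain the truncated and reweighted distribution $A'$, and then bound the target sum by splitting it at $i=m$ into a low-degree and a high-degree piece, each handled by a different technique.

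First I would set $B = d^{(1-\lambda)/4 - c}$, where any subpolynomial slack is absorbed into the Gamma-factor via the Stirling relation $\Gamma(m+k/2)/\Gamma(k/2) \asymp (\Gamma(m/2+k/2)/\Gamma(k/2))^{2}$. The hypotheses $m,k \leq d^\lambda/\log d$ and $\nu < 2$ ensure $B<d$ and that \Cref{app:lem:truncation} applies, yielding both the claimed TV bound and the pointwise bounds on $\|\bA_i\|_{A'_y}$.

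For the low-degree sum $S_L = \sum_{i=1}^{m-1}\langle \bA_i,(\bU^\top)^{\otimes i}\bT_i\rangle_{A'_y}$, I would package it as $\E_{A'}[q_\bU]$ for the test polynomial $q_\bU(\bx,y) = \sum_{i=1}^{m-1}\langle \bH_i(\bx),(\bU^\top)^{\otimes i}\bT_i(y)\rangle$. Parseval for Hermite tensors, the norm-contractivity of $(\bU^\top)^{\otimes i}$, and the fact that $f:\R^d\times\mathcal{Y}\to[0,1]$ is bounded by $1$ give
\[
\|q_\bU\|_{\gaus_k\otimes A_y}^2 = \sum_{i=1}^{m-1} \|(\bU^\top)^{\otimes i}\bT_i\|_{A_y}^2 \leq \sum_{i} \|\bT_i\|_{A_y}^2 \leq \|f\|_{\gaus_d\otimes A_y}^2 \leq 1.
\]
To transfer $\nu$-matching from $A$ to $A'$, I would use the explicit Radon--Nikodym derivative $dA'/dA = \alpha(y)\,\Ind(\bx\in\mathbb{B}^k(B))/Z$, where $\alpha(y) = \pr_{\bx\sim A_{\bx\mid y}}[\bx\in\mathbb{B}^k(B)]$ and $Z = \E_{A_y}[\alpha^2]$, and split $S_L = Z^{-1}\E_A[\alpha q_\bU] - Z^{-1}\E_A[\alpha q_\bU\,\Ind(\bx\notin\mathbb{B}^k(B))]$. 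Since $\alpha q_\bU$ is still a polynomial in $\bx$ of degree $<m$ with $\|\alpha q_\bU\|_{\gaus_k\otimes A_y} \leq \sqrt{Z}$, \Cref{app:cond:matching-moment} applied to $\alpha q_\bU$ gives $|\E_A[\alpha q_\bU]|\leq \nu\sqrt{Z}$, so the first term contributes $\nu/\sqrt{Z} = (1+o(1))\nu$, using $Z \geq (1-\pr_A[\bx\notin\mathbb{B}^k(B)])^2$ and that $\pr_A[\bx\notin\mathbb{B}^k(B)]$ is bounded by a constant multiple of the target TV quantity, which is $o(1)$ under the hypotheses. The second term is handled by Cauchy--Schwarz against $\sqrt{\pr_A[\bx\notin\mathbb{B}^k(B)]}$ and a crude $O(1)$ bound on $\E_A[q_\bU^2]$, and is small enough to be absorbed into the $d^{-((1-\lambda)/4-c)m}$ term.

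For the high-degree sum $S_H = \sum_{i\geq m}\langle \bA_i,(\bU^\top)^{\otimes i}\bT_i\rangle_{A'_y}$, I would apply pointwise Cauchy--Schwarz, $|\langle \bA_i,(\bU^\top)^{\otimes i}\bT_i\rangle_{A'_y}| \leq \|\bA_i\|_{A'_y}\,\|(\bU^\top)^{\otimes i}\bT_i\|_{A'_y}$, combine the per-level bound $\|\bA_i\|_{A'_y} \leq 2^{O(i)}\cdot 2^{m/2}\sqrt{\Gamma(m+k/2)/\Gamma(k/2)}\cdot B^{i-m}$ from \Cref{app:lem:truncation}, and control the random factor by Markov applied to the moment bound in \Cref{app:fct:VT-decay}. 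I would tune the moment order $a(i)$ so that $a(i)\cdot i/2 > d^c$, which triggers the third (exponentially decaying) form of \Cref{app:fct:VT-decay}; a Markov step then gives, with failure probability $2^{-d^{\Omega(c)}}$, the bound $\|(\bU^\top)^{\otimes i}\bT_i\|_{A'_y} \lesssim d^{-(1-\lambda)i/4}\,\|\bT_i\|_{A'_y}$ up to a $2^{O(i)}$ slack. On the complementary high-probability event (after a union bound over $i \in [m,\exp(d^{O(c)})]$, with still larger $i$ handled deterministically by the same fact), the product forms a geometric series in $i-m$ with ratio $B\cdot d^{-(1-\lambda)/4}\cdot 2^{O(1)} < d^{-\Omega(c)} < 1$, so the sum is dominated by its $i=m$ term, which evaluates to the claimed $\Gamma(m/2+k/2)/\Gamma(k/2)\cdot d^{-((1-\lambda)/4-c)m}$ using $\|\bT_m\|_{A'_y}\leq 1$.

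The hardest part will be the tight $(1+o(1))\nu$ factor in the low-degree analysis: a naive Cauchy--Schwarz over $i$ applied to the per-level bound from \Cref{app:lem:truncation} would lose an unavoidable $\sqrt{m}$ factor, inconsistent with the lemma's conclusion. Avoiding this requires packaging $S_L$ as a single expectation $\E_{A'}[q_\bU]$ and, since $A'$ does not directly inherit the $\nu$-matching property, executing the change-of-measure back to $A$ so that the composite polynomial $\alpha q_\bU$ still has Gaussian norm at most $\sqrt{Z}$ under $A_y$. The high-degree step is more mechanical but still requires care in choosing $a(i)$ so that the tail probability $2^{-d^{\Omega(c)}}$ holds uniformly in $i$ after the union bound.
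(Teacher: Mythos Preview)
Your central concern—that a per-level Cauchy--Schwarz would lose a $\sqrt{m}$ factor on the $\nu$ term—is misplaced, and the workaround you devise to avoid it introduces a genuine gap. The paper \emph{does} use the naive bound $|\langle \bA_i,(\bU^\top)^{\otimes i}\bT_i\rangle_{A'_y}| \leq \|\bA_i\|_{A'_y}\,\|(\bU^\top)^{\otimes i}\bT_i\|_{A'_y}$ for every $i$. The point you miss is that the random factor is also small: with failure probability $2^{-d^{\Omega(c)}}$ one has $\|(\bU^\top)^{\otimes i}\bT_i\|_{A'_y} \leq d^{-(1-\lambda_2)i/4}$ for each $1\leq i<m$ (via the \emph{second} form of \Cref{app:fct:VT-decay} with $a(i)i/2\leq d^\lambda$, then Markov). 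Multiplying by the $(1+o(1))\nu$ part of $\|\bA_i\|_{A'_y}$ from \Cref{app:lem:truncation} and summing gives the geometric series $(1+o(1))\nu\sum_{i\geq 1} d^{-(1-\lambda_2)i/4}=o(\nu)$, with no $m$-dependence at all.

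Your alternative low-degree route fails at the leftover term $Z^{-1}\E_A[\alpha q_\bU\,\Ind(\bx\notin\mathbb{B}^k(B))]$. First, the ``crude $O(1)$ bound on $\E_A[q_\bU^2]$'' is unjustified: $q_\bU(\cdot,y)^2$ has degree up to $2(m-1)$, and only degree-$m$ moments of $A$ are controlled by \Cref{app:cond:matching-moment}; nothing prevents $\E_A[q_\bU^2]$ from being enormous. Second, even granting that bound, Cauchy--Schwarz against the indicator only yields $\sqrt{\pr_A[\bx\notin\mathbb{B}^k(B)]}\asymp B^{-m/2}=d^{-((1-\lambda)/4-c)m/2}$, which is far too large to be absorbed into the target $d^{-((1-\lambda)/4-c)m}$ (and cannot go into the $\nu$ term either, since $\nu$ may be zero).

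Your high-degree plan is also too coarse. The bound $\|(\bU^\top)^{\otimes i}\bT_i\|_{A'_y}\lesssim d^{-(1-\lambda)i/4}$ is produced by the second clause of \Cref{app:fct:VT-decay}, which requires $a(i)i/2\leq d^\lambda$—the opposite of what you impose—and that clause stops working once $i>d^\lambda$. The paper handles $i\geq m$ in three separate regimes ($m\leq i\leq d^\lambda$, $d^\lambda<i\leq T$, $i>T$), switching both the bound on $\|\bA_i\|_{A'_y}$ (\Cref{app:lem:truncation} versus \Cref{app:fct:hermite-upper-bound-medium-k} versus \Cref{app:fct:hermite-upper-bound-large-k}) and the clause of \Cref{app:fct:VT-decay}; a single geometric-series argument with ratio $B\,d^{-(1-\lambda)/4}$ does not cover the full range.
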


\begin{proof}
For convenience in the relevant calculations, 
we will break the summation into four ranges. We can write
\[
\begin{aligned}
\left |\sum_{i=1}^{\ell}\left\langle\mathbf{A}_i,\left(\mathbf{V}^{\top}\right)^{\otimes i} \mathbf{T}_i\right\rangle_{A_{y}'}\right|
\leq &\sum_{i=1}^{\ell}\left|\left\langle\mathbf{A}_i,\left(\mathbf{V}^{\top}\right)^{\otimes i} \mathbf{T}_i\right\rangle_{A_{y}'}\right|\\
= & \sum_{i=1}^{m-1}\left|\left\langle\mathbf{A}_i,\left(\mathbf{V}^{\top}\right)^{\otimes i} \mathbf{T}_i\right\rangle_{A_{y}'}\right|+\sum_{i=m}^{d^\lambda}\left|\left\langle\mathbf{A}_i,\left(\mathbf{V}^{\top}\right)^{\otimes i} \mathbf{T}_i\right\rangle_{A_{y}'}\right| \\
& +\sum_{i=d^\lambda+1}^T\left|\left\langle\mathbf{A}_i,\left(\mathbf{V}^{\top}\right)^{\otimes i} \mathbf{T}_i\right\rangle_{A_{y}'}\right|+\sum_{i=T+1}^{\ell}\left|\left\langle\mathbf{A}_i,\left(\mathbf{V}^{\top}\right)^{\otimes i} \mathbf{T}_i\right\rangle_{A_{y}'}\right| \;, 
\end{aligned}
\]
where $T$ is a value we will later specify. To analyze each $\left|\left\langle\mathbf{A}_i,\left(\mathbf{V}^{\top}\right)^{\otimes i} \mathbf{T}_i\right\rangle_{A_{y}'}\right|$, recall that $\left|\left\langle\mathbf{A}_i,\left(\mathbf{V}^{\top}\right)^{\otimes i} \mathbf{T}_i\right\rangle_{A_{y}'}\right| \leq$ $\left\|\mathbf{A}_i\right\|_{A_{y}'}\left\|\left(\mathbf{V}^{\top}\right)^{\otimes i} \mathbf{T}_i\right\|_{A_{y}'}$, where $\mathbf{A}_i=\E_{\bx \sim A'_{\bx\mid y}}\left[\mathbf{H}_i(\bx)\right]$ is a constant (not depending on the randomness of $\mathbf{V})$. For $\left\|\left(\mathbf{V}^{\top}\right)^{\otimes i} \mathbf{T}_i\right\|_{A_{y}'}$, we can show it is small by bounding its $a$-th moment for even $a$ using \Cref{app:fct:VT-decay}. 
We will apply this strategy on the four different ranges of $i$.

Without loss of generality, we will assume that $\lambda \geq 4 c$. 
Suppose that $\lambda < 4 c$. 
Then we can simply consider a new pair $\lambda^{\prime}, c^{\prime}$, 
where $\lambda^{\prime}=\lambda+2 c$ and $c^{\prime}=c / 2$. 
Notice that $(1-\lambda) / 4-c=$ $\left(1-\lambda^{\prime}\right) / 4-c^{\prime}$; therefore, the SQ lower bound in the statement remains unchanged.

We start by picking the following parameters 
(the ``sufficiently close'' here only depends on $c$):
\begin{itemize}[leftmargin=*]
    \item We require $m, k \leq d^\lambda / \log d$;
    \item $B=d^\alpha$, where $\alpha<\left(1-\lambda_3\right) / 4$ and $\left(1-\lambda_3\right) / 4-\alpha$ is a sufficiently small constant fraction of $c$;
\item $T=d^{\max(2\alpha,\lambda)}$;
    \item We let $\lambda_3>\lambda_2>\lambda_1>\lambda$ 
    to be sufficiently close (the difference between these quantities will be a sufficiently small constant fraction of $c$). 
\end{itemize}
We now bound the summation $\sum_{i=1}^{\ell}\left|\left\langle\mathbf{A}_i,\left(\mathbf{V}^{\top}\right)^{\otimes i} \mathbf{T}_i\right\rangle_{A_y'}\right|$ as follows:
\item \textbf{ $\sum_{i=1}^{m-1}\left|\left\langle\mathbf{A}_i,\left(\mathbf{V}^{\top}\right)^{\otimes i} \mathbf{T}_i\right\rangle_{A_y'}\right|$ is small with high probability:}

Since $\left(\frac{\Gamma(m+k / 2)}{\Gamma(k / 2)}\right) d^{-((1-\lambda) / 4-c) m}<2$ and $B=d^\alpha$, 
where $\alpha$ is sufficiently close to $(1-\lambda) / 4$, 
the parameters satisfy the condition 
$B^m=\omega\left(2^{m / 2} \sqrt{\frac{\Gamma(m+k / 2)}{\Gamma(k / 2)}}\right)$ 
in \Cref{app:lem:truncation}. 
Since $i<m$, by \Cref{app:lem:truncation}, we have
\[
\begin{aligned}
 \left\|\mathbf{A}_i\right\|_{A_y'}
= 2^{O(i)}\left(2^{m / 2} \sqrt{\frac{\Gamma(m+k / 2)}{\Gamma(k / 2)}}\right) B^{i-m}+\left(1+O\left(2^{m / 2} \sqrt{\frac{\Gamma(m+k / 2)}{\Gamma(k / 2)}}\right) B^{-m}\right) \nu\;.
\end{aligned}
\]

Let $a$ be the largest even number such that $a i / 2 \leq d^\lambda$, where $m=o\left(d^\lambda\right)$ implies $a \geq 2$. Then using \Cref{app:fct:VT-decay}, we have
\[
\begin{aligned}
\E_{\mathbf{V} \sim U\left(\mathbf{O}_{d, k}\right)}\left[\left\|\left(\mathbf{V}^{\top}\right)^{\otimes i} \mathbf{T}_i\right\|_{A_{y}'}^a\right]
& =O\left(2^{a i / 4} d^{-(1-\lambda) a i / 4}\right)=O\left(d^{-\left(1-\lambda_1\right) a i / 4}\right)\;.
\end{aligned}
\]

Using Markov's Inequality, this implies the tail bound
\[
\pr\left[\left\|\left(\mathbf{V}^{\top}\right)^{\otimes i} \mathbf{T}_i\right\|_{A_{y}'} \geq d^{-\left(1-\lambda_2\right) i / 4}\right] \leq 2^{-\Omega\left(c d^\lambda\right)}=2^{-d^{\Omega(c)}}\;.
\]

Therefore, we have
\begin{align*}
& \sum_{i=1}^{m-1}\left|\left\langle\mathbf{A}_i,\left(\mathbf{V}^{\top}\right)^{\otimes i} \mathbf{T}_i\right\rangle_{A_{y}'}\right| \leq \sum_{i=1}^{m-1}\left\|\mathbf{A}_i\right\|_{A_{y}'}\left\|\left(\mathbf{V}^{\top}\right)^{\otimes i} \mathbf{T}_i\right\|_{A_{y}'} \\
\leq & \sum_{i=1}^{m-1} d^{-\left(1-\lambda_2\right) i / 4} 2^{O(i)}\left(2^{m / 2} \sqrt{\frac{\Gamma(m+k / 2)}{\Gamma(k / 2)}}\right) B^{i-m}\\
&+\sum_{i=1}^{m-1} d^{-\left(1-\lambda_2\right) i / 4}\left (1+O\left(2^{m / 2} \sqrt{\frac{\Gamma(m+k / 2)}{\Gamma(k / 2)}}\right) B^{-m}\right ) \nu \\
\leq & (1+o(1))\left (2^{m / 2} \sqrt{\frac{\Gamma(m+k / 2)}{\Gamma(k / 2)}} B^{-m}+\nu\right )\\
=&(1+o(1))\left(2^{m / 2} \sqrt{\frac{\Gamma(m+k / 2)}{\Gamma(k / 2)}} d^{-\alpha m}+\nu\right)\;,
\end{align*}
except with probability $2^{-d^{\Omega(c)}}$.
\item \textbf{ $\sum_{i=m}^{d^\lambda}\left|\left\langle\mathbf{A}_i,\left(\mathbf{V}^{\top}\right)^{\otimes i} \mathbf{T}_i\right\rangle_{A_y'}\right|$ is small with high probability:}

In the previous case, we have argued that the parameters satisfy the condition 
$B^m=\omega\left(2^{m / 2} \sqrt{\frac{\Gamma(m+k / 2)}{\Gamma(k / 2)}}\right)$ 
in \Cref{app:lem:truncation}. 
Since $k \geq m$, by \Cref{app:lem:truncation} we have
$
\left\|\mathbf{A}_k\right\|_{A_y'}=2^{O(i)}\left(2^{m / 2} \sqrt{\frac{\Gamma(m+k / 2)}{\Gamma(k / 2)}}\right) B^{i-m}
$.
Let $a$ be the largest even number that $a i / 2 \leq d^\lambda$, 
where $m=o\left(d^\lambda\right)$ implies $a \geq 2$. 
Applying \Cref{app:fct:VT-decay} yields
\[
\begin{aligned}
\E_{\mathbf{V} \sim U\left(\mathbf{O}_{d, k}\right)}\left[\left\|\left(\mathbf{V}^{\top}\right)^{\otimes i} \mathbf{T}_i\right\|_{A_y'}^a\right] 
& =O\left(2^{a i / 4} d^{-(1-\lambda) a i / 4}\right)=O\left(d^{-\left(1-\lambda_1\right) a i / 4}\right)\;.
\end{aligned}
\]
Therefore, we have
\[
\begin{aligned}
\sum_{i=m}^{d^\lambda}\left|\left\langle\mathbf{A}_i,\left(\mathbf{V}^{\top}\right)^{\otimes i} \mathbf{T}_i\right\rangle_{A_y'}\right| & \leq \sum_{i=m}^{d^\lambda}\left\|\mathbf{A}_i\right\|_{A_y'}\left\|\left(\mathbf{V}^{\top}\right)^{\otimes i} \mathbf{T}_i\right\|_{A_y'} \\
&\leq \sum_{i=m}^{d^\lambda} d^{-\left(1-\lambda_2\right) i / 4} 2^{O(i)}\left(2^{m / 2} \sqrt{\frac{\Gamma(m+k / 2)}{\Gamma(k / 2)}}\right) B^{i-m} \\
& =2^{O(m)} d^{-\left(\left(1-\lambda_2\right) / 4\right) m}=d^{-\left(\left(1-\lambda_3\right) / 4\right) m}\;,
\end{aligned}
\]
except with probability $2^{-d^{\Omega(c)}}$ 
(the first equality above follows from $B=d^\alpha=o\left(d^{(1-\lambda) / 4}\right)=$ $ o\left(d^{\left(1-\lambda_2\right) / 4}\right)$).

\item\textbf{ $\sum_{i=d^\lambda+1}^T\left|\left\langle\mathbf{A}_i,\left(\mathbf{V}^{\top}\right)^{\otimes i} \mathbf{T}_i\right\rangle_{A_y'}\right|$ is small with high probability:}
We assume without loss of generality that $d^\lambda<T$, since otherwise, this term is just $0$.
Notice that this implies that  $\lambda<2\alpha$.
We will then use Fact 3.5 of \cite{DKRS23} to bound $\|\bA_i\|_{A'_y}$.
\begin{fact} [Fact 3.5 of \cite{DKRS23}] 
\label{app:fct:hermite-upper-bound-medium-k}
Let $\bH_i$ be the $i$-th Hermite tensor in $k$ dimensions.
Suppose that $\|\bx\|_2\le B$. 
Then 
$\|\bH_i(\bx)\|_2\le 2^{i}k^{i/4}B^{i}i^{-i/2}\exp\left(\binom{i}{2}/B^2\right)$.
\end{fact}
Using \Cref{app:fct:hermite-upper-bound-medium-k}, we have
\[
\begin{aligned}
\left\|\mathbf{A}_i\right\|_{A_y'} & =\left\|\E_{\bx \sim A'_{\bx\mid y}}\left[\mathbf{H}_i(\bx)\right]-\E_{\bx \sim \mathcal{N}_k}\left[\mathbf{H}_i(\bx)\right]\right\|_{A_y'}=\left\|\E_{\bx \sim A'_{\bx\mid y}}\left[\mathbf{H}_i(\bx)\right]\right\|_{A_y'} \\
& \leq 2^i k^{i / 4} B^i i^{-i / 2} \exp \left(\binom{i}{2} / B^2\right) \leq 2^{O(i)} k^{i / 4} B^i i^{-i / 2} \;,
\end{aligned}
\]
where the last inequality follows from $\binom{i}{2}/B^2\leq iT/B^2\leq id^{2\alpha}/B^2= i$.
Then, let $a$ be the largest even number such that $a i / 2 \leq T$, 
where $i \leq T$ implies $a \geq 2$. 
Applying \Cref{app:fct:VT-decay} yields
\[
\begin{aligned}
\E_{\mathbf{V} \sim U\left(\mathbf{O}_{d, k}\right)}\left[\left\|\left(\mathbf{V}^{\top}\right)^{\otimes i} \mathbf{T}_i\right\|_{A_y'}^a\right] 
& =O\left(2^{a i / 4}(a i / 2 d)^{a i / 4}\right)=O\left(\frac{d}{a i}\right)^{-a i / 4}\;,
\end{aligned}
\]
which implies the tail bound
\[
\pr\left[\left\|\left(\mathbf{V}^{\top}\right)^{\otimes i} \mathbf{T}_i\right\|_{A_y'} \geq d^{-((1-\lambda) / 4) i} (i^{i / 4})\right] \leq d^{-\lambda a i / 4} \leq 2^{-\Omega(T)}=2^{-\Omega\left(d^{2 \alpha}\right)}=2^{-d^{\Omega(c)}}\;.
\]
Therefore, we have
\[
\begin{aligned}
&\sum_{i=d^\lambda+1}^T\left|\left\langle\mathbf{A}_i,\left(\mathbf{V}^{\top}\right)^{\otimes i} \mathbf{T}_i\right\rangle_{A_y'}\right| \leq  \sum_{i=d^\lambda+1}^T\left\|\mathbf{A}_i\right\|_{A_y'}\left\|\left(\mathbf{V}^{\top}\right)^{\otimes i} \mathbf{T}_i\right\|_{A_y'} \\
\leq &\sum_{i=d^\lambda+1}^T 2^{O(k)} k^{i / 4} B^i i^{-i / 4} d^{-((1-\lambda) / 4) i} \\
\leq &\sum_{i=d^\lambda+1}^T 2^{O(k)} B^i d^{-((1-\lambda) / 4) i} \\
=&O\left(B^{d^\lambda+1} d^{-((1-\lambda) / 4)\left(d^\lambda+1\right)}\right)=d^{-\Omega\left(c d^\lambda\right)}=d^{-\Omega(c m \log d)} \leq d^{-m}\;,
\end{aligned}
\]
except with probability $2^{-d^{\Omega(c)}}$, 
where the third line follows from $i>d^{\lambda}>k$.
\item \textbf{ $\sum_{i=T+1}^{\ell}\left|\left\langle\mathbf{A}_i,\left(\mathbf{V}^{\top}\right)^{\otimes i} \mathbf{T}_i\right\rangle_{A_y'}\right|$ is small with high probability:}

We will first need Fact 3.6 of \cite{DKRS23} to bound $\left\|\mathbf{A}_i\right\|_{A_y'}$.
\begin{fact} [Fact 3.6 of \cite{DKRS23}] \label{app:fct:hermite-upper-bound-large-k}
Let $\bH_i$ be the $i$-th Hermite tensor in $k$ dimensions. 
Then 
\[\|\bH_i(\bx)\|_2\leq 2^{O(k)} \binom{i+k-1}{k-1}^{1/2}\exp(\|\bx\|_2^2/4) \; .\] 
\end{fact}
Combining \Cref{app:fct:hermite-upper-bound-large-k} with the fact that $A^{\prime}$ is bounded inside $\mathbb{B}^k(B)$, we have that
\begin{align*}
\left\|\mathbf{A}_i\right\|_{A_y'}=\left\|\E_{\bx \sim A'_{\bx\mid y}}\left[\mathbf{H}_i(\bx)\right]-\E_{\bx \sim \gaus^k}\left[\mathbf{H}_i(\bx)\right]\right\|_{A_y'}&=\left\|\E_{\bx \sim A'_{\bx\mid y}}\left[\mathbf{H}_i(\bx)\right]\right\|_{A_y'}\\& \leq 2^{O(k)}\binom{i+k-1}{k-1}^{1 / 2} \exp \left(B^2 / 4\right)\;.
\end{align*}

We pick $a=2$. Note that $a i / 2\geq T = d^{\max(2 \alpha,\lambda)}$. Applying \Cref{app:fct:VT-decay} yields
\[
\begin{aligned}
\E_{\mathbf{V} \sim U\left(\mathbf{O}_{d, k}\right)}\left[\left\|\left(\mathbf{V}^{\top}\right)^{\otimes i} \mathbf{T}_i\right\|_{A_y'}^a\right]  =\exp \left(-\Omega\left(T \log d\right)\right) O\left(\left(\frac{T+d}{i+d}\right)^{(d-k) / 2}\right)\;.
\end{aligned}
\]

Applying Markov's inequality yields the tail bound
\[
\pr\left[\left\|\left(\mathbf{V}^{\top}\right)^{\otimes i} \mathbf{T}_i\right\|_{A_y'} \geq 2^{-\Omega\left(T \log d\right)} O\left(\left(\frac{T+d}{i+d}\right)^{(d-k) / 5}\right)\right] \leq \left(\frac{T+d}{i+d}\right)2^{-d^{\Omega(c)}}\;.
\]

Therefore, we have
\[
\begin{aligned}
&\sum_{i=T+1}^{\ell}\left|\left\langle\mathbf{A}_i,\left(\mathbf{V}^{\top}\right)^{\otimes i} \mathbf{T}_i\right\rangle_{A_y'}\right|  \\
\leq &\sum_{i=T+1}^{\infty}\left|\left\langle\mathbf{A}_i,\left(\mathbf{V}^{\top}\right)^{\otimes i} \mathbf{T}_i\right\rangle_{A_y'}\right| \leq \sum_{i=T+1}^{\infty}\left\|\mathbf{A}_i\right\|_{A_y'}\left\|\left(\mathbf{V}^{\top}\right)^{\otimes i} \mathbf{T}_i\right\|_{A_y'} \\
\leq & \sum_{i=T+1}^{\infty} 2^{O(k)}\binom{i+k-1}{k-1}^{1 / 2} \exp \left(B^2 / 4\right) 2^{-\Omega\left(T \log d\right)} O\left(\left(\frac{T+d}{i+d}\right)^{(d-i) / 5}\right) \\
\leq & \sum_{i=T}^{\infty} 2^{-\Omega\left(T \log d\right)}\binom{T+k}{k}^{1 / 2}  \left(\frac{i+k}{T+k}\right)^{k / 2}\left(\frac{T+d}{i+d}\right)^{(d-i) / 5} \\
\leq & \sum_{i=T}^{\infty} 2^{-\Omega\left(T \log n\right)}\left(\frac{i+k}{T+k}\right)^{k / 2}\left(\frac{T+d}{i+d}\right)^{d / 8}\;,
\end{aligned}
\]
where the last inequality follows from our choice of parameters. Therefore, we have that
\[
\begin{aligned}
\sum_{i=T+1}^{\ell}\left|\left\langle\mathbf{A}_i,\left(\mathbf{V}^{\top}\right)^{\otimes i} \mathbf{T}_i\right\rangle_{A_y'}\right| & \leq \sum_{i=T}^{\infty} 2^{-\Omega\left(T \log d\right)}\left(1+\frac{i-T}{T+k}\right)^{k / 2}\left(1+\frac{i-T}{T+d}\right)^{-d / 8} \\
& \leq \sum_{i=T}^{\infty} 2^{-\Omega\left(T \log d\right)}\left(1+\frac{i-T}{T+d}\right)^{(k / 2)(2 d / T)}\left(1+\frac{i-T}{T+d}\right)^{-d / 8} \\
& \leq \sum_{i=T}^{\infty} 2^{-\Omega\left(T \log d\right)}\left(1+\frac{i-T}{T+d}\right)^{-d / 8+d k / T} \\
& \leq \sum_{i=T}^{\infty} 2^{-\Omega\left(T \log d\right)}\left(\frac{d+i}{T+d}\right)^{-d / 16} \\
& \leq 2^{-\Omega\left(T \log d\right)} \int_{i=T-1}^{\infty}\left(\frac{d+i}{T+d}\right)^{-d / 16} d i \\
& =2^{-\Omega\left(T \log d\right)}\frac{\left(T+d\right)^{-d / 16}}{(d / 16-1)\left(T+d-1\right)^{d / 16-1}} \\
& =2^{-\Omega\left(d^{2 \alpha}\right)}\;,
\end{aligned}
\]
except with probability $\sum_{i=T}^\infty \left(\frac{T+d}{i+d}\right)2^{-d^{\Omega(c)}}=2^{-d^{\Omega(c)}}$.
Adding the four cases above together, we get for any $m, k \leq d^\lambda / \log d$ and $d$ at least a sufficiently large constant depending on $c$,
\[
\begin{aligned}
&\sum_{i=1}^{\ell}\left|\left\langle\mathbf{A}_i,\left(\mathbf{V}^{\top}\right)^{\otimes i} \mathbf{T}_i\right\rangle_{A_y'}\right| \\
\leq &(1+o(1))\left(2^{m / 2} \sqrt{\frac{\Gamma(m+k / 2)}{\Gamma(k / 2)}} d^{-\alpha m}+\nu\right)+d^{-\left(\left(1-\lambda_3\right) / 4\right) m}+d^{-m}+2^{-\Omega\left(d^{2 \alpha}\right)} \\
\leq &\left(\frac{\Gamma(m / 2+k / 2)}{\Gamma(k / 2)}\right) d^{-((1-\lambda) / 4-c / 2) m}+(1+o(1)) \nu \\
=&\left(\frac{\Gamma(m / 2+k / 2)}{\Gamma(k / 2)}\right) d^{-((1-\lambda) / 4-c) m}+(1+o(1)) \nu\;,
\end{aligned}
\]
except with probability $2^{-d^{\Omega(c)}}$, where the second line above follows from $\frac{\Gamma(m / 2+k / 2)}{\Gamma(k / 2)} \geq 2^{m/2} \sqrt{\frac{\Gamma(m+k / 2)}{\Gamma(k / 2)}}$.
This completes the proof of \Cref{app:lem:summation-upper-bound}.
\end{proof}

\subsubsection{Proof of \Cref{app:prp:main-tail-bound}} \label{app:sec:main-tail-bound-proof}

We are now ready to prove \Cref{app:prp:main-tail-bound} which 
is the main technical ingredient of our lower bound.
\Cref{app:prp:main-tail-bound} states that 
$ \left |\E_{(\bx,y)\sim \p^{A}_{\bU}}[f(\bx,y)]-\E_{(\bx,y)\sim\gaus_d\otimes A_y}[f(\bx,y)]\right |$
is small with high probability.
The main idea of the proof is to use Fourier analysis on 
$\E_{(\bx,y)\sim \p^{A'}_\bU}[f(\bx,y)]$ as we discussed in the last section, where $A'$ is the distribution obtained by truncating 
and reweighting $A$ (see \Cref{app:def:truncated_reweighted}) 
and is close to $A$ in total variation distance.

\begin{proof} [Proof of \Cref{app:prp:main-tail-bound}]
For convenience, we let $\zeta=(1-\lambda)/4-c$.
We will first truncate and reweigh $A$, 
as defined in \Cref{app:def:truncated_reweighted} and then apply \Cref{app:lem:summation-upper-bound}.
Notice that \Cref{app:lem:summation-upper-bound} additionally assumes
$m,k\leq d^{\lambda}/\log d$,
$\nu< 2$ and $\left (\frac{\Gamma(m/2+k/2)}{\Gamma(k/2)}\right )d^{-\zeta m}<2$.
We show that all these three conditions 
can be assumed true without loss of generality. 
If either the second or the third condition is not true,
then our lower bound here is trivialized 
and is always true since $f$ is bounded between $[-1,+1]$.
For $m,k\leq d^{\lambda}/\log d$, consider a $\lambda'>\lambda$ 
such that $(1-\lambda')/4-\zeta=\frac{(1-\lambda)/4-\zeta}{2}$.
Then it is easy to see that for any sufficiently large $d$ 
depending on $(1-\lambda)/4-\zeta$,
we have $m,k\leq d^{\lambda'}/\log d$ and $\zeta\leq (1-\lambda)/4-\zeta$.
Therefore, we can apply \Cref{app:lem:summation-upper-bound} for $\lambda'$.

Now let $B=d^{\alpha}$, where $\alpha<(1-\lambda)/4$ 
is the constant in \Cref{app:lem:summation-upper-bound}.
Then we consider the truncated and reweighted distribution $A'$, 
as defined in \Cref{app:def:truncated_reweighted}.
By \Cref{app:lem:summation-upper-bound}, we have
$d_{\mathrm{TV}}(A,A')\leq \left (\frac{\Gamma(m/2+k/2)}{\Gamma(k/2)}\right )d^{-\zeta m}$.
Given that $f$ is bounded between $[-1, 1]$,
this implies 
\begin{align*}
 &\left |\E_{(\bx,y)\sim \p^A_\bU}[f(\bx,y)]-\E_{(\bx,y)\sim \p^{A'}_\bU}[f(\bx,y)] \right |\\
\le &\dtv(\p^A_\bU,\p^{A'}_{\bU})
=\dtv(A,A')\leq \left (\frac{\Gamma(m/2+k/2)}{\Gamma(k/2)}\right )d^{-\zeta m}\; .
\end{align*}
Similarly, we have
\begin{align*}
    &|\E_{(\bx,y)\sim\gaus_d\otimes A_y}[f(\bx,y)]-\E_{(\bx,y)\sim\gaus_d\otimes A_y'}[f(\bx,y)] |\\
    \le  &  d_{\mathrm{TV}}(\gaus_d \otimes A_y,\gaus_d \otimes A'_y)
    =  d_{\mathrm{TV}}(A_y,A'_y)
    \leq  d_{\mathrm{TV}}(A,A')
    \leq \left (\frac{\Gamma(m/2+k/2)}{\Gamma(k/2)}\right )d^{-\zeta m}\; .
\end{align*}
Therefore, by the triangle inequality, it suffices for us to analyze 
the difference $\left |\E_{(\bx,y)\sim \p^{A'}_\bU}[f(\bx,y)]-\E_{(\bx,y)\sim\gaus_d\otimes A_y'}[f(\bx,y)] \right |$ 
instead of the difference 
$\left |\E_{(\bx,y)\sim \p^{A}_\bU}[f(\bx,y)]-\E_{(\bx,y)\sim\gaus_d\otimes A_y}[f(\bx,y)] \right |$.

Let $\ell=\ell_f(d)\in \N$ be a function depending only 
on the query function $f$ and the dimension $d$ 
($\ell$ to be specified later).
By Lemma \ref{app:lem:hermite-decomposition}, we have that 
\[\E_{(\bx,y)\sim \p^{A'}_\bU}[f(\bx,y)]= \sum_{i=0}^\ell|\langle \bA_i,(\bU^{\top})^{\otimes i}\bT_i\rangle_{A_y'}|
+\E_{(\bx,y)\sim \p^{A'}_\bU}[f^{>\ell}(\bx,y)]\;,\]
where $\bA_i(y)=\E_{\bx\sim A_{\bx\mid y}} [\bH_i(\bx)]$ and $\bT_i(y)=\E_{\bx\sim \gaus_d} [f(\bx,y) \bH_i(\bx)]$ and $f^{>\ell}(\bx,y)=(f(\cdot ,y))^{>\ell}(\bx)$.
Recall that we want to bound 
$$\left |\E_{(\bx,y)\sim \p^{A'}_\bU}[f(\bx,y)]-\E_{(\bx,y)\sim \gaus_d\otimes A_{y}'}[f(\bx,y)] \right |$$ 
with high probability,
where we note that  
$\E_{(\bx,y)\sim \gaus_d\otimes A_{y}'}[f(\bx,y)]= \langle \bA_0, \bT_0\rangle_{A_y'}$.
Therefore, we can write 
$\left |\E_{(\bx,y)\sim \p^{A'}_\bU}[f(\bx,y)]-\E_{(\bx,y)\sim \gaus_d\otimes A_{y}'}[f(\bx,y)]\right |\leq 
\left |\sum\nolimits_{i=1}^\ell \langle \bA_i,(\bU^{\top})^{\otimes i}\bT_i\rangle_{A_{y}'}\right |+
 \left |\E_{(\bx,y)\sim \p^{A'}_\bU}[f^{> \ell}(\bx,y)] \right |\; .
$
For the first term, by Lemma \ref{app:lem:summation-upper-bound}, 
we have that 
\[\left| \sum_{i=1}^\ell\langle \bA_i,(\bU^{\top})^{\otimes i}\bT_i\rangle_{A_{y}'} \right|
=\left (\frac{\Gamma(m/2+k/2)}{\Gamma(k/2)}\right )d^{-\zeta m}+(1+o(1))\nu \;,\]
except with probability $2^{-d^{\Omega(c)}}$. 

It now remains for us to show that 
$ \left |\E_{(\bx,y)\sim \p^{A'}_\bU}[f^{> \ell}(\bx,y)] \right |$
is also small with high probability. 

Consider the distribution $D=\E_{\bv\sim U(\orthor_{d,k})} [\p^{A'}_\bU ]$. 
We then use Lemma 3.11 of \cite{DKRS23} to show
that $D_{\bx\mid y}$ is continuous for any $y$ 
and $\chi^2(D,\gaus_d\otimes A'_y)$
is at most a constant only depending on $d$ 
(independent of the choice of the distribution $A$).
\begin{fact} [Lemma 3.11 of \cite{DKRS23}]\label{app:fct:high-degree-finite-chi-square}
Let $A$ be any distribution supported on $\mathbb{B}^{k}(d)$ 
for $d\in \Z_+$ which is at least a sufficiently large universal constant.
Let $D=\E_{\bU\sim U(\orthor_{d,k})}[\p^{A}_\bU ]$.
Then, $D$ is a continuous distribution and 
$\chi^2(D,\mathcal{N}_d) = O_d(1)$.
\end{fact}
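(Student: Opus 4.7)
The plan is to analyze the Radon–Nikodym derivative $R(\bz') = dD/d\gaus_d(\bz')$ directly, leveraging that the standard Gaussian factors under any orthogonal splitting. For any fixed $\bU \in \orthor_{d,k}$, both $\p^A_\bU$ and $\gaus_d$ share the common Gaussian factor on $U^\perp$ while differing along $U$, which yields the clean identity $\p^A_\bU/\gaus_d = r(\bU^\top \bz')$ where $r(\bw) = dA/d\gaus_k(\bw)$ on $\R^k$. Averaging over $\bU$ gives $R(\bz') = \E_{\bU \sim U(\orthor_{d,k})}[\,r(\bU^\top \bz')\,]$. Continuity of $D$ follows directly: even if $A$ is singular with respect to $\gaus_k$, the Gaussian noise in the $(d-k)$-dimensional orthogonal complement $U^\perp$ acts as a convolutional smoother, so each $\p^A_\bU$ (and hence the $\bU$-average) has a density with respect to Lebesgue measure on $\R^d$.

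For the chi-squared bound, I would compute
\[
\chi^2(D,\gaus_d) + 1 \;=\; \E_{\bU,\bU'}\!\left[\,\E_{\bz'\sim \gaus_d}\!\left[\,r(\bU^\top \bz')\, r(\bU'^\top \bz')\,\right]\right]\,.
\]
Expanding $r$ in the normalized Hermite-tensor basis of $\R^k$ as $r(\bw) = \sum_{i \ge 0} \langle \bC_i, \bH_i(\bw)\rangle$ with $\bC_i = \E_{\bw\sim A}[\bH_i(\bw)]$, and using the standard Gaussian-orthogonality identity $\E_{\bz'\sim \gaus_d}[\bH_i(\bU^\top \bz') \otimes \bH_j(\bU'^\top \bz')] = \delta_{ij}\, \mathrm{sym}((\bU^\top \bU')^{\otimes i})$ for correlated Hermite tensors, the inner expectation collapses to $\sum_i \langle \bC_i^{\otimes 2}, (\bU^\top\bU')^{\otimes i}\rangle$. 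Since $A$ is supported in $\mathbb{B}^k(d)$, \Cref{app:fct:hermite-upper-bound-large-k} gives $\|\bC_i\| \le 2^{O(k)} \binom{i+k-1}{k-1}^{1/2} \exp(d^2/4)$, which is polynomial in $i$ with constants depending only on $d$. Meanwhile \Cref{app:fct:VT-decay} shows that $\E_{\bU,\bU'}\|(\bU^\top \bU')^{\otimes i}\|$ decays geometrically in $i$ at a rate like $(k/d)^{\Omega(i)}$; for $d$ a sufficiently large universal constant, this decay dominates the polynomial growth of $\|\bC_i\|^2$, so the series converges to the desired $O_d(1)$ bound.

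The main obstacle I anticipate is making the Hermite expansion of $r$ rigorous when $A$ is singular with respect to $\gaus_k$, since then $r$ does not literally exist as an $L^2$-function on $\R^k$. I would sidestep this by an approximation argument: convolve $A$ with a Gaussian of small variance $\eta$ to produce smooth $A_\eta$ whose support remains essentially in $\mathbb{B}^k(d+o(1))$, establish the $\chi^2$-bound uniformly in $\eta$, and pass to the limit $\eta \to 0$ using the lower semicontinuity of $\chi^2$. A secondary technical care is that the Hermite-norm bounds in \Cref{app:fct:hermite-upper-bound-integration}, \Cref{app:fct:hermite-upper-bound-medium-k}, and \Cref{app:fct:hermite-upper-bound-large-k} each have different strengths in the regimes $i \ll k$, $i \asymp k$, and $i \gg k$, so splitting the series at these thresholds—analogously to the proof of \Cref{app:lem:summation-upper-bound}—is what makes the summation argument go through cleanly.
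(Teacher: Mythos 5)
This fact is cited verbatim as Lemma 3.11 of \cite{DKRS23}; the paper does not reprove it, so there is no in-paper argument to compare against. Assessing your proposal on its own merits: the chi-squared part is on the right track --- the identity $R(\bz') = \E_{\bU}[r(\bU^\top\bz')]$ when $A\ll\gaus_k$, the correlated-Gaussian Hermite identity, and the split of the series into regimes where \Cref{app:fct:hermite-upper-bound-integration}, \Cref{app:fct:hermite-upper-bound-medium-k}, and \Cref{app:fct:hermite-upper-bound-large-k} respectively apply is exactly the structure one would expect --- but one step in your argument is wrong and one is underjustified.

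The wrong step is the claim that ``the Gaussian noise in $U^\perp$ acts as a convolutional smoother, so each $\p^A_\bU$ has a density.'' That is false when $A$ is singular. If $A=\delta_{\bw_0}$, then $\p^A_\bU$ is supported on the affine $(d-k)$-dimensional flat $\bU\bw_0 + U^\perp$, which has Lebesgue measure zero; no amount of Gaussian noise inside $U^\perp$ spreads mass transversally to $U^\perp$. The smoothing that makes $D$ continuous comes entirely from averaging over the random rotation $\bU$: as $\bU$ varies the flat sweeps out a positive-measure region. So continuity is not ``direct'' per individual $\p^A_\bU$ --- it genuinely requires the $\bU$-average, and needs $d$ large relative to $k$ (for $A=\delta_0$ the relative density behaves like $\|\bz'\|^{-k}$ near the origin, so $\chi^2<\infty$ already forces $d>2k$, consistent with the hypothesis that $d$ be sufficiently large). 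The underjustified step is the mollification: convolving $A$ with $\gaus(0,\eta\bI_k)$ pushes mass outside $\mathbb{B}^k(d)$, and the Hermite coefficient bounds you invoke depend on the support radius, so you need to either rescale $A$ inward first or argue the estimates are stable under the support blowing up to $d+O(\sqrt{\eta\log(1/\eta)})$. Neither is fatal, but as written the proposal does not close either gap. Note also that the quantity you actually need to control is $\sum_i\|\E_\bU[\bU^{\otimes i}\bC_i]\|^2$, not $\E_{\bU,\bU'}\|(\bU^\top\bU')^{\otimes i}\|$ directly; \Cref{app:fct:VT-decay} does give the right decay after setting $\bT=\bU^{\otimes i}\bC_i$ and averaging again, but the proposal should say so explicitly rather than gesturing at an operator-norm bound that by itself is only $\le 1$.
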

Now for our regular distribution $A'$ supported on 
$\mathbb{B}^{k}(d)\times {\cal Y}$, 
by applying \Cref{app:fct:high-degree-finite-chi-square} 
for each $A_{\bx\mid y}$, we get that 
\[\chi^2\left (\E_{\bU\sim U(\orthor_{d,k})}\left [\p^{A'}_\bU \right ],\gaus_d\otimes A'_y\right )=\left \|\chi^2\left (\E_{\bU\sim U(\orthor_{d,k})}\left [\p^{A'_{\bx\mid y}}_\bU \right ],\gaus_d\right )\right \|_{A'_y}^2=O(1)\; .\]
Therefore, we have that
\begin{align*}
\E_{\bU\sim U(\orthor_{d,k})} \big [\big|\E_{(\bx,y)\sim \p^{A'}_\bU}[f^{>\ell}(\bx,y)]\big |\big ]&\le \E_{\bU\sim U(\orthor_{d,k})}\big[\E_{(\bx,y)\sim \p^{A'}_\bU}\big [|f^{> \ell}(\bx,y) |\big ]\big ]\\
&\le \E_{(\bx,y)\sim D}[|f(\bx,y)^{>\ell} |]\\
&\le \chi^2(D,\gaus_d\otimes A'_y)^{1/2}\|f^{>\ell}\|_{\gaus_d\otimes A'_y}\\
&\le \delta(d)\|f^{>\ell}\|_{\gaus_d\otimes A'_y}\;.
\end{align*}
We can take $\ell=\ell_f(d)$ ($\ell$ only depends on the query function $f$ and dimension $d$) to be a sufficiently large function 
such that 
$\|f^{> \ell} \|_{\gaus_d\otimes A'_y}\leq 
\left(\frac{2^{-d}}{\delta(d)}\right)\left (\frac{\Gamma(m/2+k/2)}{\Gamma(k/2)}\right )d^{-\zeta m}$. 
Then we get
\[\E_{\bU\sim U(\orthor_{d,k})}\big [\big |\E_{(\bx,y)\sim \p^{A'}_\bU} [f^{> \ell}(\bx,y) ]\big |\big ]
\leq  \delta(d) \|f^{> \ell} \|_{\gaus_d\otimes A'_y}
\leq 2^{-d}\left (\frac{\Gamma(m/2+k/2)}{\Gamma(k/2)}\right )d^{-\zeta m}\; .\]
This gives the tail bound $\pr_{\bU\sim U(\orthor_{d,k})}\big [\big |\E_{(\bx,y)\sim \p^{A'}_\bU} [f^{> \ell}(\bx,y) ]\big |\geq \left (\frac{\Gamma(m/2+k/2)}{\Gamma(k/2)}\right )d^{-\zeta m}\big ]\leq 2^{-d}$.

Using the above upper bounds, we have 
\begin{align*}
\big |\E_{(\bx,y)\sim \p^{A'}_\bU}[f(\bx)]-\E_{(\bx,y)\sim \gaus_d\otimes A'_y}[f(\bx)]\big |
&\leq 
\Big|\littlesum\nolimits_{i=1}^\ell  \langle \bA_i,(\bU^{\top})^{\otimes i}\bT_i\rangle\Big |+
 |\E_{\bx\sim \p^{A'}_\bU}[f^{> \ell}(\bx)] |
\\&=2\left (\frac{\Gamma(m/2+k/2)}{\Gamma(k/2)}\right )d^{-\zeta m}+(1+o(1))\nu\; ,
\end{align*}
except with probability $2^{-d^{\Omega(c)}}$ 
using the fact that $c=O(1)$.  
Therefore,
\begin{align*}
|\E_{(\bx,y)\sim \p^A_\bU}[f(\bx)]-\E_{(\bx,y)\sim \gaus_d\otimes A_y}[f(\bx)]|
&\le 6\left (\frac{\Gamma(m/2+k/2)}{\Gamma(k/2)}\right )d^{-\zeta m}+(1+o(1))\nu\; ,
\end{align*}
except with probability $2^{-d^{\Omega(c)}}$. 

In summary, notice that the above argument remains true 
if we take $\zeta'>\zeta$ 
such that $(1-\lambda)/4-\zeta'=\frac{(1-\lambda)/4-\zeta}{2}$.
Using the above argument for $\zeta'$, 
and given $d$ is a sufficiently large constant depending on 
$(1-\lambda)/4-\zeta=2((1-\lambda)/4-\zeta')$, we get
\begin{align*}
|\E_{(\bx,y)\sim \p^A_\bU}[f(\bx)]-\E_{(\bx,y)\sim \gaus_d\otimes A_y}[f(\bx)]|&\le \left (\frac{\Gamma(m/2+k/2)}{\Gamma(k/2)}\right )d^{-\zeta m}+(1+o(1))\nu\; ,
\end{align*}
except with probability $2^{-d^{\Omega((1-\lambda)/4-\zeta')}}=2^{-d^{\Omega(c)}}$. 
Replacing $\zeta$ with $(1-\lambda)/4-c$ completes the proof of \Cref{app:prp:main-tail-bound}].
\end{proof}

\subsection{SQ Lower Bounds for Learning Multi-index Models} \label{app:sec:mim-lb-app}

In this section, we prove our SQ lower bound for learning Multi-index Models, 
as an application of \Cref{app:thm:main-lb}.
We first give the formal statement of \Cref{thm:SQ-agnostic-body} below.

\begin{theorem} [SQ Lower Bound for Learning $K$-MIMs; Formal Version of \Cref{thm:SQ-agnostic-body}] 
\label{app:thm:SQ-agnostic}
Let $\C$ be a class of rotationally invariant $K$-MIMs on $\R^d$.
Suppose there exist $m \in \Z_+$, $\tau>0$, and 
a joint distribution $D$ of $(\bx,y)$ supported on 
$\R^d\times \R$ with $D_\x$ equal to  $\gaus_d$
such that 
for some subspace $V\subseteq \R^d$, we have:
\begin{enumerate}[leftmargin=*, nosep]
\item The distribution $D$ 
$\nu$-matches degree-$m$ moments relative to the subspace $V\times \R$,
where the extra $\R$ contains the label;\label{app:cond:realizeable-matching-moment}
\item Any function $h:\R^d\to \R$ has $\E_{(\bx,y)\sim D}[(h(\bx_{V})-y)^2]\geq \tau $; and
\item There exist $B, \delta\in \R_+$ such that
$\E_{y}[y^2\Ind(|y|>B)]\leq \delta$. \label{app:cond:extrame-label-variance}
\end{enumerate}
Then, for $m,K\leq d^\lambda$ for some $\lambda \in (0, 1)$, 
$\dim(V)\leq d/2$, $c\in (0,(1-\lambda)/4)$ and $d$ 
at least a sufficiently large constant depending on $c$, 
the following holds:
any SQ algorithm that learns  $\C$  
within error $\tau-7\delta-3\zeta B^2$
given $\opt\leq \inf_{c\in \C}\err_D(c)$
requires either
a query to $\mathrm{STAT}\left (\zeta\right )$ or $2^{d^{\Omega(c)}}$ many queries, where $\zeta=O_{K,m}\left (d^{-((1-\lambda)/4-c) m}\right )+(1+o(1))\nu$. 
\end{theorem}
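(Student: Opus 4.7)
The plan is to establish Theorem \ref{app:thm:SQ-agnostic} by reducing an RNGCA testing instance (governed by Theorem \ref{app:thm:main-lb}) to the task of agnostic learning $\C$, and then arguing that any SQ learner with the claimed accuracy would solve the testing problem. Fix the distribution $D$, the subspace $V$, and the MIM $c\in\C$ achieving $\opt=\inf_{c\in\C}\err_D(c)$ with hidden subspace $W$. Set $U=W_{V^\perp}$; by rotational invariance of $\C$, I may assume $V$ is spanned by the last $\dim(V)$ coordinates of $\R^d$ and $U$ by the $\dim(U)\le K$ coordinates immediately preceding them. I then define the hidden RNGCA distribution $A$ on $\R^{\dim(U)}\times\bigl(\R^{\dim(V)}\times\R\bigr)$ as the law of $(\bx_U,(\bx_V,y))$ when $(\bx,y)\sim D$, treating the label space as $\Y=\R^{\dim(V)}\times\R$. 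Condition (1) of the theorem says exactly that $A$ $\nu$-matches degree-$m$ moments relative to $\Y$ with the standard Gaussian on $\R^{\dim(U)}$, so the hypotheses of \Cref{app:thm:main-lb} hold with ambient dimension $d-\dim(V)\ge d/2$.

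Next, I would realize the two RNGCA hypotheses as labeled distributions over $\R^d\times\R$. Under $H_1$, the distribution $\p^A_\U$ is, up to a rotation of the first $d-\dim(V)$ coordinates, a rotated copy of $D$; appending $\bx_V$ from $\Y$ to the $(d-\dim(V))$-dimensional part yields $(\x,y)$ with the same law as $D$ (after rotation). Since $\C$ is rotationally invariant, $\inf_{c\in\C}\err_{\p^A_\U}(c)\le\opt$, so any agnostic learner must produce $h$ with error at most $\opt+(\tau-7\delta-3\zeta B^2)$. Under $H_0=\gaus_{d-\dim(V)}\otimes A_{(\bx_V,y)}$, the first $d-\dim(V)$ coordinates of $\x$ are independent of $y$, so for every $h:\R^d\to\R$ the best-in-$L^2$ predictor is $h'(\x_V)=\E[h(\x)\mid \x_V]$; by Condition (2), $\E_{H_0}[(h(\x)-y)^2]\ge\E_D[(h'(\x_V)-y)^2]\ge\tau$. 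Thus an accurate learner would separate the two cases by its output error.

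The main technical obstacle is simulating the squared-loss test inside the SQ framework, because queries must be bounded in $[0,1]$ while $y$ is unbounded. I would use Condition (3) to truncate $y$ to $[-B,B]$: in both hypotheses this changes $\E[(h(\x)-y)^2]$ by at most $O(\delta)$ once $h$ is also clipped to $[-B,B]$ (clipping $h$ only decreases the $H_0$ error and changes the $H_1$ error by $O(\delta)$ since the optimal MIM's label is close to $y$). After clipping, $(h(\x)-y)^2/(4B^2)\in[0,1]$ is a valid SQ query, and a single call to $\mathrm{STAT}(\zeta)$ estimates the risk up to additive $O(\zeta B^2)$. Carrying the $\delta$ and $\zeta B^2$ errors through the two cases (with the constants $7$ and $3$ absorbing the bookkeeping from truncating $y$ in both hypotheses and from the learner's own risk tolerance) yields a genuine gap between the reported risks under $H_0$ and $H_1$; thresholding at $\tau-7\delta-3\zeta B^2$ solves the RNGCA testing problem. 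Invoking \Cref{app:thm:main-lb} with the computed $\zeta=O_{K,m}\bigl(d^{-((1-\lambda)/4-c)m}\bigr)+(1+o(1))\nu$ then gives the claimed lower bound on either the tolerance or the number of queries.
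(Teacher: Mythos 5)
Your proposal follows the same overall reduction as the paper's proof: embed the RNGCA testing problem into agnostic learning of $\C$ by taking $A$ to be the law of $(\x_U,(\x_V,y))$, run the putative learner on the reassembled $\R^d\times\R$ distribution, and distinguish $H_0$ from $H_1$ by a single SQ estimate of the (truncated) squared loss of the output hypothesis. The structure is correct, and you also catch a real gap in the paper's argument: the paper's query $q(\bx,y)=(h(\bx)-y)^2\Ind(y\in[-B,B])/B^2$ is not a valid SQ query unless $h$ is bounded, since queries must take values in $[0,1]$. Clipping $h$ to $[-B,B]$, as you propose, is the right fix.

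However, the constants in your proposal do not yet close. If you clip $h$ to $[-B,B]$, the query must be normalized by $4B^2$ rather than $B^2$, so a $\mathrm{STAT}(\zeta)$ call estimates the truncated risk only to accuracy $4\zeta B^2$. Your $H_0$ lower bound is $\tau-4\delta$ (cleaner than the paper's $\tau-7\delta$ via Fact~\ref{app:fct:bound-variance-non-extreme}, but a different constant), and your $H_1$ upper bound is $\tau-7\delta-3\zeta B^2$. With estimate error $\pm 4\zeta B^2$, the $H_0$ estimate is $\ge\tau-4\delta-4\zeta B^2$ and the $H_1$ estimate is $\le\tau-7\delta+\zeta B^2$; these only separate if $5\zeta B^2<3\delta$, which is not assumed. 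To recover the stated theorem, either query at tolerance $\zeta/4$ (so the risk estimate is accurate to $\zeta B^2$, matching the paper's bookkeeping, at the cost of a constant factor in the final tolerance that is already absorbed by the $O_{K,m}(\cdot)$ notation), or weaken the target error to $\tau-c_1\delta-c_2\zeta B^2$ with larger constants. Saying the constants $7$ and $3$ ``absorb the bookkeeping'' is not justified as written; you should carry it through.

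Two smaller points. First, your sentence ``any agnostic learner must produce $h$ with error at most $\opt+(\tau-7\delta-3\zeta B^2)$'' is inconsistent with the theorem and with the downstream analysis: the statement and the paper's proof both use the \emph{absolute} guarantee $\E_D[(h(\x)-y)^2]\le\tau-7\delta-3\zeta B^2$, not an additive guarantee relative to $\opt$. Second, the phrase ``clipping $h$ only decreases the $H_0$ error'' points in the wrong direction for the null case and should be replaced by what you actually need: under $H_0$, \emph{every} $g:\R^d\to\R$ (clipped or not) has untruncated risk $\ge\tau$ by the conditional-independence/Pythagoras argument and Condition~(2), and then truncating $y$ costs at most $4\delta$ when $|g|\le B$. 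With these repairs your proof is correct and, via the clipping fix, arguably more careful than the paper's own.
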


Some comments regarding the difference between \Cref{thm:SQ-agnostic-body} and \Cref{app:thm:SQ-agnostic} are in order here.
We first note that Condition~\eqref{app:cond:realizeable-matching-moment} 
in \Cref{app:thm:SQ-agnostic} generalizes 
Condition~\eqref{cond:realizeable-matching-moment} 
in \Cref{thm:SQ-agnostic-body} with approximate moment matching, 
as defined in \Cref{app:cond:matching-moment}.
We then note that Condition~\eqref{app:cond:extrame-label-variance} 
in \Cref{app:thm:SQ-agnostic} is required for technical reasons, 
namely assuming that the extreme values of $y$ (i.e., $|y|>B$) 
have contribution at most $\delta$ to the variance. 
Without such a condition, it is possible that almost 
all the variance of the label comes from an arbitrarily
small mass of the input distribution.
For most applications, we will have $B=O(LK^{1/2}\omega(d))$ 
and $\delta=2^{-\omega(d)}$, where $L$ is the Lipschitzness 
of the functions in the concept class.
Under such circumstances, \Cref{app:thm:SQ-agnostic} rules 
out any algorithm that outperforms the best function 
in subspace $V$ by some additive factor of $o(1)$ (with respect to $d$).

\begin{proof} [Proof of \Cref{app:thm:SQ-agnostic}]
    The proof follows directly by embedding an RNGCA problem 
    to agnostic PAC learning of the class $\C$.
    Let $A'$ be the distribution $D$ supported on $\R^d\times \R$ in \Cref{app:thm:SQ-agnostic}
    and $W$ be the $K$-dimensional relevant subspace of a $K$-MIM $c\in \C$  that minimizes the error $\err_{A'}(c)$. 
    Let $V$ be the subspace satisfying the conditions in \Cref{app:thm:SQ-agnostic} and $U=W_{V^{\perp}}$, where $W_{V^{\perp}}\eqdef\{\bw_{V^{\perp}}:\bw\in W\}$.
    Let $\bU\in \R^{d\times \dim(U)}$ and $\bV\in \R^{d\times \dim(V)}$ be matrices whose column vectors are arbitrary orthonormal basis vectors that span the subspaces $U$ and $V$ respectively.

    Let $(\bx,y)\sim A'$.
    We define the distribution $A$ for RNGCA 
    (\Cref{def:hyp-test-NGCA-high}) as the joint distribution 
    of $(\bx',(\bx'',y))$ over $\R^{\dim (U)}\times \R^{\dim (V)+1}$ 
    (with $\Y=\R^{\dim (V)+1}$), where
    $\bx'=\bU^\top \bx$ and $\bx''=\bV^\top \bx$, i.e.,
    $\bx'$ contains the part of the relevant subspace (of the optimal hypothesis) outside $V$ and $(\bx'',y)$ contains $V$ and the label $y$. 
Then we consider the RNGCA problem of~\Cref{app:def:hyp-test-NGCA-high} 
    with hidden distribution $A$ and input distribution 
    supported on $\R^{d-\dim(V)}\times \R^{\dim (V)+1}$.
    By Condition~\ref{app:cond:realizeable-matching-moment} 
    in \Cref{app:thm:SQ-agnostic} and \Cref{app:thm:main-lb}, 
    by choosing the parameters $\lambda, c$ in \Cref{app:thm:main-lb} 
    to be the same as the parameters $\lambda, c$ in \Cref{app:thm:SQ-agnostic},   
    we have that any SQ algorithm that solves this RNGCA problem 
    must use either a query to $\mathrm{STAT}(\zeta)$ or 
    $2^{(d-\dim(V))^{\Omega(c)}}=2^{(d)^{\Omega(c)}}$ 
    many queries,
    where 
    \begin{align*}
    \zeta=&O_{\dim(U),m}\left ((d-\dim(V))^{-((1-\lambda)/4-c) m}\right )+(1+o(1))\nu\\
    =&O_{K,m}\left (d^{-((1-\lambda)/4-c) m}\right )+(1+o(1))\nu\; .
    \end{align*}
    Therefore, we just need to show that the $K$-MIM learning algorithm 
    described in \Cref{app:thm:SQ-agnostic} can solve this RNGCA problem.

   Let $\A$ be such an algorithm for learning Multi-index models 
   and $D'$ be the input distribution of $(\bx',\by')$ 
   supported on $\R^{d-\dim(V)}\times \R^{\dim(V)+1}$ for the RNGCA problem. 
   First notice that $D'$ can be equivalently thought of as 
   a labeled distribution supported on $\R^d\times \R$, 
   where we treat the coordinate corresponding to the $y$ part as the label. 
   Namely, we define the new input distribution $D$ as the joint distribution of 
   $(\bx,y)$ supported on $\R^d\times \R$, where $\bx$ contains $\bx'$ and all 
   except the last coordinate of $\by'$ and $y$ is the last coordinate of $\by'$.
    We then give $D$ as the input distribution to the algorithm $\A$ (notice that 
    any SQ query on $D$ can be answered with an SQ query on $D'$).
    Let $h:\R^d\to \R$ be the output hypothesis of the algorithm.
    Then we will check the value of 
    $\E_{(\bx,y)\sim D}[(h(\bx)-y)^2\Ind (y\in [-B,B])]$ 
    to error at most $\zeta B^2$.
    Notice that this can be done by using the query function 
    $q(\bx,y)={(h(\bx)-y)^2\Ind (y\in [-B,B])/B^2}$ with query tolerance $\zeta$.

    Now suppose that the original $D'$ is from the null hypothesis distribution.  Then, by the definition of RNGCA, we have that for
    $(\bx',\by')\sim D'$, $\bx'$ and $\by'$ are independent 
    and $\by'$ has the same distribution as the marginal distribution of 
    $A_{\by'}$, which is the marginal distribution of $(\bV^\top \bx,y)$ for $(\bx,y)\sim A'$.
    Notice that for any $h$ the algorithm satisfies that
    \[
        \E_{(\bx,y)\sim D}[(h(\bx)-y)^2\Ind (y\in [-B,B])]\\
        \geq \min_{g:\R^k\to \R}\E_{(\bx,y)\sim A'}[(g(\bV^\top\bx)-y)^2\Ind (y\in [-B,B])]\; .\\
    \]
    To bound this quantity, we will use the following fact, which states that 
    if the squared error of a function $f$ is large and the labels outside of $[-B,B]$ only have bounded variance, 
    then there is a lower bound on the squared error of $f$ 
    on the labels inside $[-B,B]$.
\begin{fact} \label{app:fct:bound-variance-non-extreme}
    Let $A$ be a joint distribution of $(\bx,y)$ over $X\times \R$ such that for any function $f:X\to \R$, $\E_{(\bx,y)\sim A}[(y-f(\bx))^2]\geq \tau$ and $\E_{y\sim A_y}[y^2\Ind(|y|>B)]\leq \delta$.
    Then for any $g:X\to \R$ we have 
    $
    \E_{(\bx,y)\sim A}[(y-g(\bx))^2\Ind(y\in [-B,B])]\geq \tau-7\delta\; .
    $
    \end{fact}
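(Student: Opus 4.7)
The plan is to prove the fact by a two-step truncation argument on the predictor $g$. Given an arbitrary $g:X\to \R$, I would introduce its clipped version $\wt g(\bx) \eqdef \max(-B,\min(B,g(\bx)))$, whose range lies in $[-B,B]$. Since the hypothesis $\E_{(\bx,y)\sim A}[(y-f(\bx))^2]\geq \tau$ holds for every $f$, I apply it with $f = \wt g$ to obtain $\E_{(\bx,y)\sim A}[(y-\wt g(\bx))^2]\geq \tau$.

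Next, I would split this expectation over the events $\{|y|\leq B\}$ and $\{|y|>B\}$. On the latter event, $|\wt g(\bx)|\leq B\leq |y|$, so by the triangle inequality $|y-\wt g(\bx)|\leq |y|+|\wt g(\bx)|\leq 2|y|$ and hence $(y-\wt g(\bx))^2\leq 4y^2$. Combined with the tail assumption, this yields
\begin{align*}
\E_{(\bx,y)\sim A}[(y-\wt g(\bx))^2\Ind(|y|>B)] \leq 4\,\E_{y\sim A_y}[y^2\Ind(|y|>B)]\leq 4\delta,
\end{align*}
and subtracting from the full expectation gives $\E_{(\bx,y)\sim A}[(y-\wt g(\bx))^2\Ind(|y|\leq B)]\geq \tau - 4\delta$.

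The final step is to pass from $\wt g$ back to $g$. The key pointwise inequality is that, for any real $z$ and any $y\in [-B,B]$, the clipped value $\max(-B,\min(B,z))$ lies between $z$ and $y$, so $|y-\max(-B,\min(B,z))|\leq |y-z|$. Applying this with $z=g(\bx)$ and restricting to $\{|y|\leq B\}$ gives $(y-\wt g(\bx))^2\Ind(|y|\leq B)\leq (y-g(\bx))^2\Ind(|y|\leq B)$ pointwise, whence
\begin{align*}
\E_{(\bx,y)\sim A}[(y-g(\bx))^2\Ind(y\in[-B,B])]\geq \tau-4\delta\geq \tau-7\delta.
\end{align*}
I do not expect any real obstacle here: the argument is one-dimensional, and the loose constant $7$ in the statement comfortably absorbs the $4$ produced by the triangle inequality. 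The only point worth stating carefully is the pointwise contraction of the clipping map, which is the elementary fact that projecting a real number onto a closed interval containing $y$ cannot increase its distance to $y$.
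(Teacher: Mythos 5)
Your proof is correct, and it takes a genuinely different (and cleaner) route from the paper's. The paper proceeds by a conditional-variance decomposition: it rewrites $\min_g \E[(y-g(\bx))^2\Ind(y\in[-B,B])]$ as $\E_{\bx}\bigl[\pr[y\in[-B,B]\mid\bx]\,\Var(y\mid\bx,\,y\in[-B,B])\bigr]$, then proves a one-dimensional helper lemma via the law of total variance together with Markov-type bounds on cross terms, producing the $-7\delta$ slack. Your argument instead clips $g$ to $[-B,B]$, applies the hypothesis to the clipped predictor $\wt g$, controls the tail $\E[(y-\wt g)^2\Ind(|y|>B)]\le 4\delta$ by the elementary bound $(y-\wt g)^2\le 4y^2$ when $|y|>B\ge|\wt g|$, and then uses the pointwise contraction property of the projection onto $[-B,B]$ (which is valid precisely because $y$ itself lies in $[-B,B]$ on the relevant event) to pass back from $\wt g$ to $g$. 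This avoids any conditional-variance bookkeeping, and it actually yields the stronger bound $\tau-4\delta$, which of course implies the stated $\tau-7\delta$. The only virtue of the paper's route is that the helper lemma isolates a reusable statement about variance under truncation; your argument is more self-contained and gives a better constant.
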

    \begin{proof} [Proof of \Cref{app:fct:bound-variance-non-extreme}]
        Notice that 
        \begin{align*}
        &\min_{g:X\to \R}\E_{(\bx,y)\sim A}[(g(\bx)-y)^2\Ind (y\in [-B,B])]\\
        =&\E_{\bx\sim A_{\bx}}\left [\pr_{y\sim A_{y\mid \bx}}[y\in [-B,B]]\Var(A_{y\mid \bx\land y\in [-B,B]})\right ]\; .
        \end{align*}
        Therefore, we just need to consider the distribution of $A_{y\mid \bx}$.
        For convenience of analysis, we give the following intermediate fact.
        \begin{fact} \label{app:fct:bound-variance-non-extreme-helper}
            Let $D$ be a distribution of $y$ over $\R$ such that $\Var(D)\geq \tau$ and ${\E_{y\sim D}[\Ind(y\not\in [-B,B])y^2]}\leq \delta$.
            Then ${\pr_{y\sim D}[y \in [-B,B]]}{\Var(D\mid y \in [-B,B])}\geq \tau-7\delta$. 
        \end{fact}
        \begin{proof} [Proof of \Cref{app:fct:bound-variance-non-extreme-helper}]
            Applying the law of total variance and the fact that $\pr_{y\sim D}[y\not\in [-B,B]]\leq \delta/B^2$, we have that
            \begin{align*}
                &{\pr_{y\sim D}[y \in [-B,B]]}{\Var(D\mid y \in [-B,B])}\\
                =&\Var(D)-{\pr_{y\sim D}[y\not \in [-B,B]]}{\Var(D\mid y\not\in [-B,B])}\\
                &-\pr_{y\sim D}[y\in [-B,B]]\pr_{y\sim D}[y\not\in [-B,B]]\left (\E_{y\sim D\mid y\in [-B,B]}[y]-\E_{y\sim D\mid y\not\in [-B,B]}[y]\right )^2\\
                \geq&\tau-\delta
                -\pr_{y\sim D}[y\not\in [-B,B]]\E_{y\sim D\mid y\in [-B,B]}[y]^2\\
                &-2\pr_{y\sim D}[y\not\in [-B,B]]\E_{y\sim D\mid y\in [-B,B]}[y]\E_{y\sim D\mid y\not\in [-B,B]}[y]\\
                &-\pr_{y\sim D}[y\not\in [-B,B]]\E_{y\sim D\mid y\not\in [-B,B]}[y]^2\\
                \geq &\tau-\delta-\pr_{y\sim D}[y\not\in [-B,B]]B^2-2\left |\E_{y\sim D}[\Ind(y\not\in  [-B,B])y]B\right |-\E_{y\sim D}[\Ind(y\not\in  [-B,B])y^2]\\
                \geq &\tau-3\delta-2\left |\E_{y\sim D}[\Ind(y\not\in  [-B,B])y]B\right |\; .
            \end{align*}
            So it only remains to bound $|\E_{y\sim D}[\Ind(y\not\in  [-B,B])y]|$.
            Notice that by Markov's inequality, we have
            \[
            |\E_{y\sim D}[\Ind(y\not\in  [-B,B])y]|\leq \E_{y\sim A}[|y|\Ind(y\not \in [-B,B])]\leq \int_0^B \delta/B^2 dt+\int_0^\infty \delta/t^2d t\leq 2\delta/B\; .
            \]
            Plugging it back gives ${\pr_{y\sim D}[y \in [-B,B]]}{\Var(D\mid y \in [-B,B])}\geq \tau-7\delta$. This completes the proof of \Cref{app:fct:bound-variance-non-extreme-helper}. 
        \end{proof}
        Now, using \Cref{app:fct:bound-variance-non-extreme-helper}, we get
        \begin{align*}
        &\min_{g:X\to \R}\E_{(\bx,y)\sim A}[(g(\bx)-y)^2\Ind (y\in [-B,B])]\\
        =&\E_{\bx\sim A_{\bx}}\left [\pr_{y\sim A_{y\mid \bx}}[y\in [-B,B]]\Var(A_{y\mid \bx\land y\in [-B,B]})\right ]\\
        \geq &\E_{\bx\sim A_{\bx}}\left [\Var(A_{y\mid \bx})-7{\E_{y\sim A_{y\mid \bx}}[\Ind(y\not\in [-B,B])y^2]}\right ]\\
        \geq & \tau-7\delta \; . 
        \end{align*}
        This completes the proof of \Cref{app:fct:bound-variance-non-extreme}.
    \end{proof}
    Applying \Cref{app:fct:bound-variance-non-extreme} gives that 
    \[
    \E_{(\bx,y)\sim D}[(h(\bx)-y)^2\Ind (y\in [-B,B])]\geq \tau-7\delta \; , 
    \]
    if the original $D'$ is from the null hypothesis case. 

    Now suppose that the original $D'$ is from the alternative hypothesis.  
    Then it is immediate that $D$ is the same 
    as the product distribution $\gaus^{d-\dim(V)-\dim(U)}\otimes A$ 
    up to a rotation in the first $d-\dim(V)$ coordinates.
    From the definition of $A$ ($A$ contains the part of $\bx$ in the optimal relevant subspace $W$) and $\C$ being rotation-invariant, 
    we must have $\inf_{c\in \C}\err_{A'}(c)=\inf_{c\in \C}\err_{D}(c)$.
    Therefore, from the definition of $\A$, we must have 
    \[\E_{(\bx,y)\sim D}[(h(\bx)-y)^2\Ind (y\in [-B,B])]\leq \E_{(\bx,y)\sim D}[(h(\bx)-y)^2]\leq \tau-7\delta-3\zeta B^2\; .\]
    Given the analysis above, we can simply check if our estimate of 
    ${\E_{(\bx,y)\sim D}[(h(\bx)-y)^2\Ind (y\in [-B,B])]}$ 
    (which has error at most $\tau B^2$) 
    is greater than $\tau-7\delta-3\zeta B^2/2$.
    If so, then it must be from the null hypothesis. Otherwise, it must be from the alternative hypothesis.
    This completes the proof of \Cref{app:thm:SQ-agnostic}.
\end{proof}

\subsection{Relation between Our Result and Other Complexity Measures}
\label{app:sec:comparison-lb-app}

In this section, we discuss the relationship between our conditions on 
efficient learnability of MIMs and other complexity measures. 

As noted in the related work section, prior work~\cite{abbe2023sgd} 
defined the notion of leap complexity and showed that 
it characterizes the CSQ complexity of learning hidden junta functions 
over the uniform distributions on the Boolean hypercube 
(these are discrete Multi-index models).
We remind the reader that CSQ lower bounds are in general 
strictly weaker compared to SQ lower bounds.

For the special case of SIMs under the Gaussian distribution, \cite{DPLB24}
defined the notion of generative exponent and showed that it essentially 
characterizes the complexity of parameter estimation. 
It is important to remark that our work focuses 
on the related but distinct notion of (agnostic) PAC learning, 
i.e., learning the label distribution to small error.
Indeed, PAC learning can be feasible even when parameter estimation is not.
For instance, if we consider distributions 
where the conditional distribution \( y \mid \bx \) 
is constant for all \( \bx \), then parameter estimation is impossible
(and the generative exponent becomes infinity).
In this case, while it is information-theoretically impossible 
to recover the hidden direction, it is trivial 
to output a hypothesis achieving small squared error.

The structure of this section is as follows: 
In \Cref{sssec:parameter-sim}, we show that our techniques 
imply as a corollary the main SQ lower bound of~\cite{DPLB24} 
for Single-Index models 
without a finite chi-squared condition implicitly used in their work.
In \Cref{sssec:ge-eq}, we show that for realizable PAC learning 
of Singe-Index models, 
our condition is essentially equivalent to an appropriate adaptation
of the generative exponent. 

\subsubsection{SQ Lower Bound for Parameter Estimation of SIMs} \label{sssec:parameter-sim}

We begin by providing the definition of the generative exponent for parameter recovery of SIMs. 
For convenience of the discussion, all the statements 
presented in this section are simplified for exact moment matching, 
which---while qualitative the same as the full statements---will 
be weaker quantitatively in some parameters.

\begin{definition} [Generative Exponent] \label{app:def:generative-exponent-single}
    For realizable learning of Single-Index models under the Gaussian distribution,  
    we define the Generative Exponent of 
    the link function $f:\R\to \R$ 
    as the smallest $m^*\in \Z_+$ such that 
    $
    \|\E_{t\sim A_{t\mid y}}[h_{m^*}(t)]\|_{A_y}>0\; ,
    $
    where $A$ is the joint distribution of $(t,y)$ on $\R\times \R$ with $t\sim \gaus_1$ and $y=f(t)$, and $h_i$ is the $i$-th normalized Hermite polynomial.
For a SIM $g:\R^d \to \R$ defined as $g(\x) = f(\w \cdot \x)$ for some vector $\w \in \R^d$ and link function $f:\R \to \R$, we define the generative exponent of $g$ to be that of the link function $f$.\end{definition}

\cite{DPLB24} gives the following lower bound on the problem. 
\begin{fact} [Theorem 3.2 of \cite{DPLB24}] 
\label{app:fct:generative-exponent-lb}
    Let $\bw \in \mathbb{S}^{d-1}$ be a $1$-dimensional subspace unknown to the algorithm and $D$ be a joint distribution of $(\bx,y)$ over $\R^d\times \R$ such that $\bx\sim \gaus_d$ and $y$ only depends on $\bx_\bw$ (i.e., $D_{y\mid \bx}=D_{y\mid \bx'}$ for any $\bx_\bw=\bx'_\bw$).
    Let $m^*$ be the generative exponent for the joint distribution of $(\bx_\bw,y)$, where $(\bx,y)\sim D$, 
    and assume that $\chi^2(D, \gaus_d \otimes D_y)$ is finite \footnote{
The assumption $\chi^2(D, \gaus_d \otimes D_y)$ being finite is required here in order for the Fourier expansion
to converge in $L^2$ norm,
which is not explicitly stated in \cite{DPLB24}.
}.  
    Then any SQ algorithm that returns a $\hat{\bw}$ such that 
    $|\bw \cdot \hat{\bw}|\geq \tilde \omega(d^{-1/2})$ 
    with probability at least $2/3$ requires either a query 
    to $\mathrm{STAT}(\Omega_{m^*}\left (d^{-0.24 (m^*-1)}\right ))$, or $2^{d^{\Omega(1)}}$ many queries.
\end{fact}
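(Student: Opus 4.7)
My plan is to derive \Cref{app:fct:generative-exponent-lb} as a direct corollary of \Cref{app:thm:main-lb}, via three steps: (i) translate the generative-exponent hypothesis into the relative moment-matching condition required by \Cref{app:thm:main-lb}; (ii) invoke that theorem (really its core concentration statement, \Cref{app:prp:main-tail-bound}) with $k=1$ to bound how much any bounded SQ query can move under a uniformly random hidden direction; and (iii) reduce parameter recovery to the corresponding null-vs-alternative distinguishing problem by a standard fake-oracle argument, exploiting rotational symmetry of $U(\mathbb{S}^{d-1})$.

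For step (i), let $A$ denote the joint distribution of $(t,y)$ with $t\sim\gaus_1$ and $y=f(t)$, so that the SIM distribution in the statement is exactly $\p_{\bw}^A$ for hidden direction $\bw$. The definition of generative exponent $m^*$ says $\|\E_{t\sim A_{t\mid y}}[h_i(t)]\|_{A_y}=0$ for all $1\le i\le m^*-1$, which forces $\E_{t\sim A_{t\mid y}}[h_i(t)]=0$ almost surely in $y$; combined with $\E_{t\sim\gaus_1}[h_i(t)]=0$, this is exactly the statement that $A$ relatively matches degree-$(m^*-1)$ moments with the standard Gaussian in the sense of \Cref{app:cond:matching-moment} with $\nu=0$. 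I then apply \Cref{app:prp:main-tail-bound} with $k=1$, with $m$ the largest even integer at most $m^*-1$, and with $\lambda,c$ chosen small enough that $(1-\lambda)/4-c>0.24$, obtaining: for any bounded $q:\R^d\times\R\to[0,1]$, with probability at least $1-2^{-d^{\Omega(c)}}$ over $\bw\sim U(\mathbb{S}^{d-1})$,
\[
\left|\E_{\p_{\bw}^A}[q(\bx,y)]-\E_{\gaus_d\otimes A_y}[q(\bx,y)]\right|\;\le\;O_{m^*}\!\bigl(d^{-0.24(m^*-1)}\bigr).
\]

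For step (iii), suppose toward contradiction that an SQ algorithm $\A$ recovers $\hat\bw$ with $|\bw\cdot\hat\bw|\ge\tilde\omega(d^{-1/2})$ with probability $\ge 2/3$ over uniformly random $\bw$, using at most $q\le 2^{d^{\Omega(c)}}$ adaptive queries of tolerance $\tau=\Omega_{m^*}(d^{-0.24(m^*-1)})$. Simulate $\A$ by answering each query $q_i$ with its null expectation $\E_{\gaus_d\otimes A_y}[q_i]$. For fixed internal randomness of $\A$ these answers are deterministic, hence the queries $q_1,\ldots,q_q$ become a deterministic sequence, and by the bound above together with a union bound over the $q$ of them, with probability $1-o(1)$ over $\bw$ each such answer lies within $\tau$ of the true alternative expectation $\E_{\p_{\bw}^A}[q_i]$, so the simulation is a valid SQ execution on $\p_{\bw}^A$. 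On that event the transcript, and therefore $\hat\bw$, is a deterministic function of $\A$'s internal randomness and $A_y$; in particular it is independent of $\bw$. Rotational invariance then forces $|\bw\cdot\hat\bw|=\tilde O(d^{-1/2})$ with probability $1-o(1)$, contradicting the assumed recovery.

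The main obstacle I expect is not the reduction (a textbook fake-oracle simulation) but the concentration in step (ii), since without the finite $\chi^2(D,\gaus_d\otimes D_y)$ assumption of \cite{DPLB24} one cannot naively expand the query in an $L^2$-convergent Hermite series. This is exactly what \Cref{app:thm:main-lb} is built to handle, via truncating and reweighting $A$ inside a ball (\Cref{app:def:truncated_reweighted}) so that the high-degree Hermite tail can be controlled by a crude $\chi^2$ bound on a truncated mixture. A minor remaining wart is the parity constraint ``$m$ even'' in \Cref{app:thm:main-lb}: when $m^*-1$ is odd the argument above literally yields the exponent $0.24(m^*-2)$ instead of $0.24(m^*-1)$, a loss of one power which either can be absorbed into the unspecified constant $0.24$ or closed by a parity-specific refinement of \Cref{app:prp:main-tail-bound}.
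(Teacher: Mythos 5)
The statement you are proving is cited from \cite{DPLB24} and is not proved in this paper, so there is no ``paper's own proof'' to compare against directly. What your argument actually re-derives is \Cref{app:crl:parameter-recover-single-index}, which is a neighboring but logically distinct claim: that corollary drops the finite-$\chi^2$ hypothesis of the Fact but \emph{adds} the hypothesis $m^*\le d^c$. Your route forces this addition, because \Cref{app:prp:main-tail-bound} (and the truncation-and-reweighting machinery behind it) is only available for $m\le d^{\lambda}$; the original \cite{DPLB24} argument instead uses $\chi^2(D,\gaus_d\otimes D_y)<\infty$ to get $L^2$ convergence of the Hermite expansion without truncation, and thus needs no bound on $m^*$. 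So your proposal does not prove the Fact as stated; it proves the paper's incomparable strengthening.

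Within the corollary's setting, your step (iii) takes a genuinely different route from the paper's proof of \Cref{app:crl:parameter-recover-single-index}. You run the standard fake-oracle simulation: answer every query with its null-hypothesis expectation, union-bound over the (at most $q$) queries that become deterministic once the randomness and the answers are fixed, conclude that with probability $1-o(1)$ over $\bw\sim U(\mathbb{S}^{d-1})$ this transcript is faithful, and then observe that $\hat\bw$ is a function of the algorithm's randomness only, so sphere anticoncentration of $\bw\cdot\hat\bw$ contradicts recovery. The paper instead reduces the RNGCA \emph{testing} problem to parameter recovery: it runs the recovery algorithm $t=d^4$ times on independently rotated inputs, averages the outer products $(\bA_i^{-1}\hat\bw_i)(\bA_i^{-1}\hat\bw_i)^\top$, and thresholds the top eigenvalue, which separates $H_0$ (uniform vectors) from $H_1$ (vectors correlated with $\bw$). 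Both reductions are valid; yours is shorter and avoids the rotation-and-eigenvalue step, while the paper's lets it invoke \Cref{app:thm:main-lb} as a black box on the decision problem.

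Your note about the parity constraint ($m$ even in \Cref{app:thm:main-lb} and \Cref{app:prp:main-tail-bound}) is a genuine gap, and it is also present, unremarked, in the paper's proof of \Cref{app:crl:parameter-recover-single-index}, which plugs in $m=m^*-1$ without checking parity. Your fallback of ``absorbing into the constant'' is too optimistic: replacing $m^*-1$ by $m^*-2$ changes the tolerance by a $d^{0.48}$ factor, which $\Omega_{m^*}(\cdot)$ cannot absorb since it must be independent of $d$. A parity-specific refinement of \Cref{app:prp:main-tail-bound}, as you suggest, is the right fix.
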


We note that since \Cref{app:fct:generative-exponent-lb} 
requires the condition \( \chi^2(D, \gaus_d \otimes D_y) \) being finite, 
it cannot be applied to the setting of realizable Single-index models, 
where \( y = f(\bx) \) without noise, as this 
induces an infinite \( \chi^2(D, \gaus_d \otimes D_y) \).

As a corollary of our techniques, this condition can be removed. 
In particular, for our \Cref{app:cond:matching-moment}, 
the generative exponent $m^*$ for a distribution $D$ 
is simply the smallest integer $m^*$ such that $D$ does not 
relatively match degree-$m^*$ moments
with the standard Gaussian.
An application of \Cref{app:thm:main-lb} would 
give an SQ lower bound to the same decision problem 
that is used to reduce to the subspace recovery problem in \cite{DPLB24}.
This in turn gives a similar lower bound on the subspace recovery problem 
as \Cref{app:fct:generative-exponent-lb}, 
but without the assumption that \( \chi^2(D, \gaus_d \otimes D_y) \) 
is finite. Specifically, we obtain: 

\begin{corollary} [SQ Lower Bound for Parameter Recovery in Single-index Model]\label{app:crl:parameter-recover-single-index}
    Let $\bw\in \mathbb{S}^{d-1}$ be a $1$-dimensional subspace unknown to the algorithm and $D$ be a joint distribution of $(\bx,y)$ over $\R^d\times \R$ such that $\bx\sim \gaus_d$ and $y$ only depends on $\bx_\bw$ 
    (i.e., $D_{y\mid \bx}=D_{y\mid \bx'}$ for any $\bx_\bw=\bx'_\bw$).
    Let $m^*$ be the generative exponent for the joint distribution of $(\bx_\bw,y)$, where $(\bx,y)\sim D$, 
    and assume that $m^*\leq d^c$ for a sufficiently small constant $c$. 
    Then any SQ algorithm that returns a $\hat{\bw}\in \mathbb{S}^{d-1}$ such that $|\bw \cdot \hat{\bw}|\geq \tilde \omega(d^{-1/2})$ with probability at least $2/3$ requires either a query to $\mathrm{STAT}(\Omega_{m^*}\left (d^{-0.24 (m^*-1)}\right ))$, or $2^{d^{\Omega(1)}}$ many queries.
\end{corollary}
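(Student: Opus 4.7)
The plan is to parallel the argument of \cite{DPLB24} for Fact~\ref{app:fct:generative-exponent-lb}, substituting our more general Theorem~\ref{app:thm:main-lb} in place of their NGCA-based testing lower bound. The crucial improvement is that Theorem~\ref{app:thm:main-lb} requires only moment matching and no assumption on $\chi^2(A, \gaus_1 \otimes A_y)$, so the argument extends to the noiseless realizable setting where this $\chi^2$-divergence is infinite.

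The first step is to translate the generative exponent assumption into the moment-matching condition of our framework. Let $A$ be the joint distribution of $(t, y) \in \R \times \R$ with $t \sim \gaus_1$ and $y$ generated by the link function of the SIM. By Definition~\ref{app:def:generative-exponent-single}, $\E_{t \sim A_{t\mid y}}[h_i(t)] \equiv 0$ in $L^2(A_y)$ for every $i < m^*$. Since the normalized Hermite polynomials form an orthonormal basis for degree-$\leq i$ polynomials under $\gaus_1$, expanding an arbitrary bounded test polynomial $f(t, y)$ in the Hermite basis in $t$ shows that $A$ exactly ($\nu = 0$) matches degree-$(m^* - 1)$ moments relative to $\R$ in the sense of Condition~\ref{app:cond:matching-moment}.

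Next, I would invoke Theorem~\ref{app:thm:main-lb} for the RNGCA testing problem (Definition~\ref{app:def:hyp-test-NGCA-high}) with hidden distribution $A$, $k = 1$, and $m$ equal to the largest even integer at most $m^* - 1$, together with $\lambda$ a small constant and $c$ chosen so that $((1-\lambda)/4 - c)\, m \geq 0.24(m^* - 1)$. The theorem then yields that any SQ algorithm distinguishing $H_0: \gaus_d \otimes A_y$ from $H_1: \p_\bU^A$ with $\bU \sim U(\mathbb{S}^{d-1})$ requires either a query to $\mathrm{STAT}(\Omega_{m^*}(d^{-0.24(m^* - 1)}))$ or $2^{d^{\Omega(1)}}$ queries. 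It remains to apply the same reduction as in \cite{DPLB24}: under $H_1$, the input is (up to a rotation under which SQ query complexity is invariant) the original SIM with $\bw$ uniform on $\mathbb{S}^{d-1}$, so any weak-recovery algorithm outputs $\hat\bw$ correlating with $\bw$; under $H_0$, the distribution is rotationally symmetric in $\bx$, forcing $\hat\bw$ to be uniform on the sphere. A correlation-check step—adding a single SQ query whose tolerance can be accounted for within the same budget—converts the recovery guarantee into a testing algorithm with the claimed tolerance and query complexity.

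The main obstacle I foresee is the parity condition on $m$ in Theorem~\ref{app:thm:main-lb}. When $m^* - 1$ is odd, taking $m = m^* - 2$ forces the ratio $(m^* - 1)/(m^* - 2)$ to be absorbed into the exponent constant $(1-\lambda)/4 - c$; since $(1-\lambda)/4$ can be pushed arbitrarily close to $1/4 = 0.25$, the target exponent $0.24(m^* - 1)$ is comfortably reachable for all $m^*$ large enough, and the remaining finitely many small odd-$m^*$ cases are either trivial (the threshold $d^{-0.24(m^* - 1)}$ is a small constant, easily attainable) or handled by a direct argument. A secondary concern is verifying that the correlation-check step fits the SQ tolerance budget; this is essentially identical to the argument in \cite{DPLB24} and introduces no new difficulty.
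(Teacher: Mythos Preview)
Your overall plan---translate the generative-exponent condition into exact degree-$(m^*{-}1)$ moment matching for $A$ (Condition~\ref{app:cond:matching-moment} with $\nu=0$), apply Theorem~\ref{app:thm:main-lb} with $k=1$ to obtain the RNGCA testing lower bound, then reduce testing to recovery---is exactly what the paper does. Your handling of the parity issue on $m$ is in fact more careful than the paper's, which does not discuss it explicitly.

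The one place your description goes wrong is the reduction from testing to recovery. You call it ``adding a single SQ query whose tolerance can be accounted for within the same budget,'' but a direct correlation check cannot work: the hidden direction $\bU$ is unknown, and any moment-based query along $\hat\bw$ detects the signal only at strength $|\hat\bw\cdot\bU|^{m^*}=\tilde\omega(d^{-m^*/2})$, which is below the allowed tolerance $d^{-0.24(m^*-1)}$ for every $m^*\geq 2$. The paper's actual reduction instead runs the recovery algorithm $t=d^4$ times, each time composing the input with an independent uniformly random rotation $\bA_i$, forms the empirical matrix $\bB=\tfrac{1}{t}\sum_i(\bA_i^{-1}\hat\bw_i)(\bA_i^{-1}\hat\bw_i)^\top$, and tests whether its top eigenvalue exceeds a $\Theta(d^{-1/2})$ threshold. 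Under $H_0$ the null distribution is rotation-invariant, so each $\bA_i^{-1}\hat\bw_i$ is uniform on $\mathbb{S}^{d-1}$ and $\bB\approx d^{-1}\bI$; under $H_1$ every un-rotated output correlates with the single fixed hidden $\bU$, forcing a large eigenvalue. This final check is purely computational---no additional SQ query is made---and the $d^4$-fold blowup in the recovery algorithm's query count is absorbed by the $2^{d^{\Omega(1)}}$ budget. If this multi-run eigenvalue test is what you meant by deferring to \cite{DPLB24}, your plan is complete; just drop the ``single SQ query'' description.
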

\begin{proof} [Proof of \Cref{app:crl:parameter-recover-single-index}]
    Let $A$ be the joint distribution of $(\bx_\bw,y)$ where $(\bx,y)\sim D$ where $(\bx,y)\sim D$.
    From the definition of generative exponent, we have that $A$ must be $(0,m^*-1)$-relatively matching moments with the standard Gaussian.
    Therefore, according to \Cref{app:thm:main-lb}, any SQ algorithm that solve the RNGCA for input distribution over $\R^d\times \R$ with hidden distribution $A$ with probability $2/3$ requires either a query to $\mathrm{STAT}\left (\Omega_{m^*}\left (d^{-0.24 (m^*-1)}\right )\right )$, or $2^{d^{\Omega(1)}}$ many queries.
    Therefore, it suffices for us to reduce the RNGCAdecision problem above to the parameter recovery problem for Single-index model.
    
    Let $A$ be such an algorithm for parameter recovery in Single-index model and $D$ be the input distribution of $(\bx,\by)$ over $\R^{d}\times\R$ for the RNGCAproblem. We will sample a random rotation matrix $\bA\sim U(\orthor_{d,d})$, which applies an random rotation over $\R^d$. Then we give the joint distribution of $(\bA\bx,y)$ (where $(\bx,y)\sim D$) to the algorithm $A$ as the input distribution and let $\hat{\bw}$ be the output vector.     
    We will repeat the above process for $t=d^4$ times and let $\bB$ be the empirical estimation of $(\bA^{-1}\hat{\bw})(\bA^{-1}\hat{\bw})^{\top}$, and let $\lambda$ be the max eigenvalue of $\bB$.

    Notice that if the original $D$ is the null hypothesis distribution, then we must have $\bA^{-1}\hat \bw\sim U(S^{d-1})$. 
    Let $\bw'_1,\cdots,\bw'_t$ be the value of $\bA^{-1}\hat \bw$ for each round and let $\bM=\E_{\bw'\sim  U(S^{d-1})}[\bw'{\bw'}^{\top}]$, then we have that 
    \begin{align*}
            &\E_{\bw'_1,\cdots, \bw'_t\sim U(S^{d-1})^{\otimes t}}\left [\left \|\sum _{i=1}^t \bw'_i{\bw'_i}^{\top}/t-\bM\right \|_F^2\right ]\\
            = 
            &\E_{\bw'_1,\cdots, \bw'_t\sim U(S^{d-1})^{\otimes t}}\left [\left \langle\frac{1}{t}\sum _{i=1}^t\left (\bw'_i{\bw'_i}^{\top}-\bM\right ),\frac{1}{t}\sum _{i=1}^t\left (\bw'_i{\bw'_i}^{\top}-\bM\right )\right \rangle\right ]\\
            =&\frac{1}{t}\E_{\bw'\sim U(S^{d-1})}\left [\|\bw'{\bw'}^{\top}-\bM\|_F^2\right ]=O(d^{-4}). 
    \end{align*}
    Notice that for any $\bw\in S^{d-1}$, we must have $\langle \bw\bw^{\top},M\rangle\leq d^{-1}$ from the symmetry argument.
    Given $\E_{\bw'_1,\cdots, \bw'_t\sim U(S^{d-1})^{\otimes t}}\left [\left \|\sum _{i=1}^t \bw'_i{\bw'_i}^{\top}/t-\bM\right \|_F^2\right ]=O(d^{-4})$, by Markov's inequality, with probability $1-o(1)$, we have that 
    $\left \|\sum _{i=1}^t \bw'_i{\bw'_i}^{\top}/t-\bM\right \|_F\leq d^{-1}$.
    Therefore, we must have $\lambda=O(d^{-1/2})$ with probability at least $1-o(1)$.

    On the other hand, if $D$ is from the alternative hypothesis, then
    since the algorithm succeeds with probability at least $2/3$ and outputs a $\hat{\bw}$ such that $|\bw,\hat{\bw}|=\omega(d^{-1/2})$,
    we must have that $\bw^{\top}\bB\bw=\omega(d^{-1})$.
    Therefore, we must have the max eigenvalue $\lambda=\omega(d^{-1/2})$.

    Given the above analysis, we can simply check if $\lambda\geq cd^{1/2}$ for a sufficiently large constant $c$. If so, the input distribution $D$ must be from the alternative hypothesis. 
    Otherwise, input distribution $D$ must be from the null hypothesis.
    This completes the proof of \Cref{app:crl:parameter-recover-single-index}.
\end{proof}

\subsubsection{Near-Equivalence with Generative Exponent} \label{sssec:ge-eq}

We now show that, for realizable SIMs, the generative exponent and the 
conditions in our lower bound result (\Cref{thm:SQ-agnostic-body}) are 
essentially equivalent up to some minor technicality, 
as stated by the proposition below.
Notice that the first condition below is essentially the same condition in 
our lower bound result (\Cref{thm:SQ-agnostic-body}), but without the 
technical assumption that the extreme values of labels have small 
contribution to the variance.

\begin{proposition} \label{app:prop:generative-exponent-equivalence}
    Let $\C$ be a class of rotational invariant SIMs on $\R^d$.
    Let $\tau\in \R_+$ and $m\in \Z_+$, then the following two conditions are equivalent:
    \begin{enumerate}[leftmargin=*]
        \item There exists an $f\in \C$ and a subspace $V\subseteq\R^d$ such that (a) 
        the joint distribution of $(\bx,f(\bx))$ with $\bx\sim \gaus_d$ matches degree-$m$ moments relative to the subspace $V$ (with the standard Gaussian projected onto $V^\perp$); and (b) for any function $h:V\to \R$, $\E_{\bx\sim \gaus_d}[(f(\bx)-h(\bx_V))^2]\geq \tau$. 
        \item There exists an $f\in \C$ with Generative exponent strictly greater than $m$ such that 
        the variance of $f(\bx)$ with $\bx\sim \gaus_d$ is at least $\tau$.
    \end{enumerate}
\end{proposition}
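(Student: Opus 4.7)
The plan is to prove both directions by showing that the \emph{same} $f\in\C$ witnesses the other condition. I will decompose every SIM as $f(\bx) = g(\bw\cdot\bx)$, write $r = \bw\cdot\bx$, and split $\bw = \alpha\bu+\beta\bv$ with $\bu\in V$, $\bv\in V^{\perp}$ unit vectors and $\alpha^2+\beta^2=1$; set $s=\bu\cdot\bx$, $t=\bv\cdot\bx$, so $(s,t)$ are independent standard Gaussians and $r=\alpha s+\beta t$. The workhorse is the Hermite addition identity
\[
\E_{Z\sim\gaus(0,1)}\!\left[h_k(\alpha x + \beta Z)\right] \;=\; \alpha^k\,h_k(x),
\]
which follows from the generating function of $He_k$.

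For $(2)\Rightarrow(1)$, I would pick $V=\{0\}$. Condition (b) reduces to $\E[(f-c)^2]\geq\tau$ for every constant $c$, i.e.\ $\Var(f)\geq\tau$. Condition (a) becomes $\E[\bH_k(\bx)\mid y]=0$ for $1\leq k\leq m$; writing $\bx$ as its $\bw$ and $\bw^{\perp}$ components, using that $\bx_{\bw^{\perp}}$ is independent of $y$, and invoking Hermite orthogonality, every multivariate Hermite tensor mixing the two parts vanishes in expectation, and the condition collapses to $\E[h_a(r)\mid y]=0$ for $1\leq a\leq m$, which is exactly generative exponent strictly greater than $m$.

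For $(1)\Rightarrow(2)$, the variance bound again follows by plugging the constant $h\equiv\E[f]$ into (b). For the generative exponent, first note that $\alpha=1$ would give $y=g(\bw\cdot\bx_V)$, an exact function of $\bx_V$, contradicting (b); hence $\alpha<1$. Next, since the $V$-coordinates orthogonal to $\bu$ form a Gaussian block independent of $(s,t,y)$, we have $t\mid\bx_V,y = t\mid s,y$, and condition (a) specializes (by contracting with $\bv^{\otimes j}$) to $\E[t^j\mid s,y]=\E_{\gaus}[t^j]$ for $j\leq m$. Expanding $h_k(r)=h_k(\alpha s+\beta t)$ as a polynomial in $t$ of degree $k\leq m$ with $s$-dependent coefficients and applying this together with the addition identity yields, for every bounded measurable $\phi$,
\[
\E[h_k(r)\phi(y)] \;=\; \E\!\left[\phi(y)\,\E_{Z\sim\gaus}[h_k(\alpha s+\beta Z)]\right] \;=\; \alpha^k\,\E[h_k(s)\phi(y)].
\]
A second use of the addition identity---via $s\mid r\sim\gaus(\alpha r,\beta^2)$ together with the conditional independence of $s$ and $y$ given $r$---gives $\E[h_k(s)\mid r]=\alpha^k h_k(r)$, hence $\E[h_k(s)\phi(y)] = \alpha^k\E[h_k(r)\phi(y)]$. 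Chaining the two identities yields $(1-\alpha^{2k})\E[h_k(r)\phi(y)]=0$; since $\alpha<1$ and $\phi$ is arbitrary, $\E[h_k(r)\mid y]=0$ a.s.\ for every $1\leq k\leq m$, completing the generative-exponent bound.

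The main conceptual point is this two-sided bootstrap: the Hermite addition identity first converts condition (a) into a relation between $\E[h_k(r)\phi(y)]$ and $\E[h_k(s)\phi(y)]$, and then converts it back, forcing both sides to vanish whenever $\alpha<1$. The only bookkeeping subtlety I anticipate is justifying the reduction $t\mid\bx_V,y=t\mid s,y$, which rests on $y$ depending on $\bx_V$ only through $s$ and on the remaining $V$-coordinates being Gaussian-independent of $(s,t,y)$.
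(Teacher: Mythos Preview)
Your proposal is correct. For $(2)\Rightarrow(1)$ you and the paper both take $V=\{0\}$ and reduce condition (a) to $\E[h_k(r)\mid y]=0$ for $1\le k\le m$ by decomposing $\bx$ along $\bw$ and $W^\perp$; the arguments are essentially identical.

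For $(1)\Rightarrow(2)$ the paper argues by contrapositive and needs only \emph{one} application of the Hermite addition identity: if the generative exponent were at most $m$, pick $k\le m$ with $\|\E[h_k(r)\mid y]\|_{A_y}>0$; then for any $V$ with $\bw\notin V$, the structural identity applied to the $V^\perp$-component $t$ (via $t\mid r\sim\gaus(\beta r,\alpha^2)$ and $y=g(r)$) gives $\E[h_k(t)\mid y]=\beta^k\,\E[h_k(r)\mid y]\neq 0$, which directly contradicts (a) since $h_k(t)$ is a zero-mean degree-$k$ polynomial in $\bx_{V^\perp}$. Your two-sided bootstrap is a valid alternative but takes a detour: instead of reading off $\E[h_k(t)\mid\bx_V,y]=0$ straight from (a), you use (a) only to transform $\E[h_k(r)\mid s,y]$ into $\alpha^k h_k(s)$, and then need the second structural identity $\E[h_k(s)\mid y]=\alpha^k\E[h_k(r)\mid y]$ to close the loop. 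Both routes rest on the same Hermite addition identity; the paper's one-step version is shorter, while your bootstrap has a pleasing symmetry and makes the role of $\alpha<1$ (rather than $\beta>0$) explicit.
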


\begin{proof}
    Notice that it suffices for us to fix a $f\in C$ and prove the equivalence.
    For convenience of analysis, let $\bw\in \mathbb{S}^{d-1}$ be the relevent direction of $f$ and $W$ be the $1$-dimensional subspace spanned by $\bw$.
    Let $D$ be the joint distribution of $(\bx,y)$ supported on $\R^d\times \R$ with $\bx\sim \gaus_d$ and $y=f(\bx)$ and $A$ be the joint distribution of $(t,y)$ supported on $\R\times \R$ with $t\sim \gaus_1$ and $y=f(t\bw)$.
    Then it suffices for us to prove that the following two are equivalent.
    \begin{enumerate}[leftmargin=*]
        \item There exists a subspace $V\subseteq\R^d$ such that (a) 
        $D$ matches degree-$m$ moments relative to the subspace $V$ (with the standard Gaussian projected onto $V^\perp$); and (b) for any function $h:V\to \R$, $\E_{(\bx,y)\sim D}[(y-h(\bx_V))^2]\geq \tau$. \label{app:cond:our}
        \item $f$ has Generative exponent strictly greater than $m$ and the variance of $f(\bx)$ with $\bx\sim \gaus_d$ is at least $\tau$.\label{app:cond:ge}
    \end{enumerate}
    The direction that Condition~\ref{app:cond:ge} implies Condition~\ref{app:cond:our} is immediate. We simply take $V=\{0\}$.
    Then Condition~\ref{app:cond:our}(b) follows directly from the fact that $\inf_{c\in \R}\E_{(t,y)\sim A}[(y-c)^2]\geq \tau$.
    Since the Generative exponent of $f$ is greater than $m$,
    we get that $\|\E_{t\sim A_{t\mid y}}[h_{k}(t)]\|_{A_y}=0=\|\E_{t\sim \gaus_1}[h_{k}(t)]\|_{A_y}$ for any $1\leq k\leq m$, which is $\E_{t\sim A_{t\mid y}}[h_{k}(t)]=\E_{t\sim \gaus_1}[h_{k}(t)]$ for almost all $y\sim A_y$.
    Notice that the matching degree-$m$ moments condition (\Cref{def:exact-matching-moment}) is the same as the $0$-matching degree-$m$ moments condition (\Cref{app:cond:matching-moment}).
    Therefore, we just need show that for any function $f:\R^{d+1}\to \R$ such that $f(\cdot,y)$ is a polynomial for any fixed $y$, then $\left |\E_{(\bx,y)\sim D}[f(\bx,y)]-\E_{(\bx,y)\sim \gaus_d\otimes D_y}[f(\bx,y)]\right |=0$.
    To do so, notice that
    \[
    \E_{(\bx,y)\sim D}[f(\bx,y)]=\E_{y\sim D_y}[\E_{\bx\sim D_{\bx\mid y}}[f(\bx,y)]]=\E_{y\sim D_y}[\E_{t\sim D_{\bx\mid y}}[\E_{\bx'\sim \gaus(\vec{0},\Pi_{W^\perp})}[f(t\bw+\bx',y)]]]\; .
    \]
    Let $f':\R^2\to \R$ be the function of 
    $f'(t,y)=\E_{\bx'\sim \gaus(\vec{0},\Pi_{W^\perp})}[f(t\bw+,y)]$. 
    Notice that $f'(\cdot,y)$ 
    is a polynomial for any fixed $y$. 
    Then using the fact that Hermite polynomials form an 
    orthornomal basis, we have 
    \begin{align*}
    \E_{(\bx,y)\sim D}[f(\bx,y)]&=\E_{(t,y)\sim A}[f'(t,y)]=\E_{(t,y)\sim \gaus_1\otimes A_y}[f'(t,y)]\\
    &=
    \E_{y\sim D_y}[\E_{t\sim \gaus_1}[\E_{\bx'\sim \gaus(\vec{0},\Pi_{W^\perp})}[f(t\bw+\bx',y)]]]=\E_{(\bx,y)\sim \gaus_d\otimes D_y}[f(\bx,y)]\; ,
    \end{align*}
    where we use the Generative exponent condition in the second equality.
    This proves Condition~\ref{app:cond:our}~(b) and completes the proof that Condition~\ref{app:cond:ge} implies Condition~\ref{app:cond:our}.

    For the direction that Condition~\ref{app:cond:our} implies Condition~\ref{app:cond:ge}, we prove its contrapositive.
    Assume that Condition~\ref{app:cond:ge} does not hold, then we must either have $\inf_{c\in \R}\E_{(t,y)\sim A}[(y-c)^2]< \tau$ or $f$ has Generative exponent at most $m$.
    If $\inf_{c\in \R}\E_{(t,y)\sim A}[(y-c)^2]< \tau$, then taking the $h$ in Condition~\ref{app:cond:our}~(b) to be the function $h(\bx)=c$ would imply that Condition~\ref{app:cond:our}~(b) does not hold for any subspace $V$.
    If $f$ has Generative exponent at most $m$, then we get that there must be a $1 \leq k\leq m$ such that $\|\E_{(t,y)\sim A_{t\mid y}}[h_{k}(t)]\|_{A_y}>0$.
    Now, suppose $V\subseteq \R^d$ is any subspace and we will analyze the 
    following cases.
    \begin{enumerate}[leftmargin=*]
        \item If $W\subseteq V$, then taking $h(\bx_V)=f_{\bx_W}$ is 
        well-defined and we have 
        $\E_{\bx\sim \gaus_d}[(f(\bx)-h(\bx_V))^2]=0$, 
        which implies that Condition~\ref{app:cond:our}~(b) does not hold.
        \item If $W\not\subseteq V$, we assume that Condition~\ref{app:cond:our}~(b) holds for the purpose of contradiction.
        Let $\bu=\bw_{V^\perp}/\|\bw_{V^\perp}\|_2$ and $\bw'=\bu_{W}/\|\bu_{W}\|_2$.
        We now consider the polynomial $p:V^{\perp}\to \R$ defined as $p(\bx_{V^{\perp}})=h_k(\langle \bu,\bx\rangle)$.
        Notice that 
        \begin{align*}
        &\E_{\bx\sim D_{\bx\mid y=y_0}}[p(\bx_{V^{\perp}})]\\
        =&\E_{\bx\sim D_{\bx\mid y=y_0}}[h_k(\langle \bu,\bx\rangle)]\\
        =&\E_{\bx_0\sim \gaus(\vec{0},\Pi_V)}\left [\E_{\bx\sim D}\left [h_k(\langle \bu,\bx\rangle)\mid \bx_V=\bx_0\land y=y_0\right ]\right ]\\
        =&\E_{\bx_0\sim \gaus(\vec{0},\Pi_V)}\left [\E_{\bx\sim D}\left [h_k(\langle \bw,\bu\rangle\langle \bw,\bx\rangle+\langle \bw',\bu\rangle\langle \bw',\bx\rangle)\mid \bx_V=\bx_0\land y=y_0\right ]\right ]\\
        =&\E_{z\sim \gaus_1, (\bx,y)\sim D}\left [h_k(\langle \bw,\bu\rangle\langle \bw,\bx\rangle+\langle \bw',\bu\rangle z)\mid y=y_0\right ]\\
        =&\E_{z\sim \gaus_1, (t,y)\sim A}\left [h_k(\langle \bw,\bu\rangle t+\langle \bw',\bu\rangle z)\mid y=y_0\right ]\\
        =&\E_{(t,y)\sim A}\left [ (U_{\langle \bw,\bu\rangle}h_k)( t)\mid y=y_0\right ]\\
        =&\langle \bw,\bu\rangle^k \E_{(t,y)\sim A}\left [ h_k( t)\mid y=y_0\right ]\;  ,
        \end{align*}
        where $U_a$ is the Ornstein-Uhlenbeck operator.
        Therefore, we get
        \[\|\E_{\bx\sim D_{\bx\mid y}}[p(\bx_{V^{\perp}})]\|_{A_y}=\langle \bw,\bu\rangle^k\|\E_{t\sim D_{t\mid y}}[h_k( t)]\|_{A_y}>0\; .\]
        Now notice that $p(\bx_{V^\perp})=\sum_{i\in [m]}\bA_i \left ({\bx_{V^\perp}}\right )^{\otimes i}$ for some linear maps $\bA_k:\left ({V^\perp}\right )^{\otimes k}\to \R$.
        Given Condition~\ref{app:cond:our}~(b) holds, we have
        \begin{align*}
            \E_{\bx\sim D_{\bx\mid y}}[p(\bx_{V^\perp})]
            =&\E_{\bx\sim D_{\bx\mid y}}\left [\sum_{i\in [m]}\bA_i \left ({\bx_{V^\perp}}\right )^{\otimes i}\right ]\\
            =&\sum_{i\in [m]}\bA_i \E_{\bx\sim D_{\bx\mid y}}\left [\left ({\bx_{V^\perp}}\right )^{\otimes i}\right ]\\
            \underset{y\sim D_y}{=}&
            \sum_{i\in [m]}\bA_i \E_{\bx\sim \gaus_d}\left [\left ({\bx_{V^\perp}}\right )^{\otimes i}\right ]\\
            =&\E_{\bx\sim \gaus_d}[p(\bx_{V^\perp})]=\E_{\bx\sim \gaus_d}[h_k(\langle \bu,\bx\rangle)]=0\; ,
        \end{align*}
        where $\underset{y\sim D_y}{=}$ denotes equivalence for almost all $y\sim D_y$.
        Therefore, $\|p(\bx)\|_{D_y}=0$, which contradicts the fact that $\|p(\bx)\|_{D_y}>0$. Thus, Condition~\ref{app:cond:our}~(b) does not hold.
    \end{enumerate}
    This proves the direction that Condition~\ref{app:cond:our} implies Condition~\ref{app:cond:ge} and completes the proof of \Cref{app:prop:generative-exponent-equivalence}.
\end{proof}

\section{Algorithms for Learning Real-valued MIMs} \label{sec:upper-app}

In this section, we establish \Cref{thm:MetaTheorem-Agnostic-main-body}, \Cref{thm:learninghom-main-body}, and \Cref{thm:learningRelus-main-body}.

\paragraph{Organization.} 
In \Cref{sec:agnostic}, we present our agnostic learning algorithm  
(\Cref{alg:MetaAlg1}), along with the formal conditions that a MIM must satisfy for the algorithm to succeed (see \Cref{def:agnosticMIMs}). 
In \Cref{sec:low-dim-y}, we demonstrate that our algorithm exhibits 
improved efficiency when the labels depend only on a low-dimensional 
subspace---a regime that encompasses the realizable setting 
as well as cases with added random noise. 
Finally, in \Cref{sec:applications}, 
we describe our applications to  positive-homogeneous 
Lipschitz functions (including homogeneous ReLU networks) 
and polynomials on a few relevant directions.

\subsection{Agnostically Learning Real-Valued MIMs}\label{sec:agnostic}

\subsubsection{Agnostic Learning Algorithm and Results}

In this section, we present an  algorithm that agnostically learns MIMs that 
satisfy a well-defined set of assumptions. 
The set of conditions we require is given in the following definition, 
which is a formal version of \Cref{def:agnosticMIMs-main-body}.

\begin{definition}[Well-Behaved MIMs]\label{def:agnosticMIMs}
We denote by 
\(
\mathcal{F}(m,\zeta,\alpha,K,M,L,B,\rho,\tau,\sigma)
\)
the class of all functions \( f:\mathbb{R}^d \to \mathbb{R} \) satisfying the following conditions: 
\begin{enumerate}[leftmargin=*]
     \item There exists a $K$-dimensional subspace $W$ of $\R^d$ such that $f(\bx)=f(\bx_W)$ for all $\bx\in \R^d$.
     \item $f$ is continuous everywhere and continuously differentiable almost everywhere, with $\E_{\bx \sim \cN_d}[\norm{\nabla f(\bx)}^2]\leq L$.
     \item $f$ has bounded norm $\E_{\bx \sim \cN_d}[f^2(\bx)]\leq M$ and is $\rho$-close to a $B$-bounded function, 
     i.e., there exists 
     $f_B:\R^d\to [-B,B]$ such that $\E_{\x\sim \cN_d}[(f(\x)-f_B(\x))^2]\leq \rho$.
     \footnote{This is a mild assumption which is satisfied for example when the function has bounded $2.1$-degree moment, that is $\E[f^{2.1}(\x)]$ is appropriately bounded.}
     \item For any subspace $V$ of $\R^d$ and any 
      distribution $D$ on $\R^d\times \R$ with $D_{\x}=\cN_d$ 
      such that $\E_{(\x,y)\sim D}[(f(\bx)-y)^2]\leq \zeta$ the following hold:
     \begin{enumerate}
         \item either $\E_{\bx \sim \cN_d}[(f(\bx)-g(\bx_V))^2]\leq \tau$ for some $g:V\to \R$.
         \item or with probability $\alpha$ over $\z\sim \cN_d$ 
      independent of $\bx$ there exists a  degree at most $m$, 
      zero-mean, unit variance polynomial $p:U\to \R$, 
      where $U=W_{V^{\perp }}$
      such that 
         $\E_{y_0 \sim (D_{y} \mid \bx_V = \z_V)}\left[\E_{\bx\sim \cN_d}[p(\bx_U ){\mid} \bx_V=\z_V{,} y=y_0]^2 \right]\geq \sigma$.
     \end{enumerate}
 \end{enumerate}
    
\end{definition}
A \emph{Well-Behaved MIM} is a bounded variation MIM function that exhibits distinguishing moments despite the presence of arbitrarily small $L_2^2$ adversarial noise. 
Using this robustness property, one can show that for a sufficiently fine partition of a subspace \(V\) into cubes and of the real line \(\R\) into intervals, there exists a constant fraction of the partition elements for which distinguishing moments are observable. 
In other words, conditioning on \(\x\) belonging to a particular cube and \(y\) lying within a particular interval,  distinguishing moments persist.

Before presenting our algorithm and results, we first introduce several key concepts used by the algorithm.

One of the crucial components of our algorithm is the discretization of the space 
of examples \( (\mathbf{x}, y) \) into thin regions.  
To achieve this, we partition \( \mathbf{x} \) and \( y \) separately into small, equal-width cubes and intervals, respectively.

In particular, to efficiently approximate distributions over a subspace \(V\), we partition \(V\) into equal-width cubes, excluding the region where any coordinate exceeds \(\sqrt{\log(k/\varepsilon)}\).
This approach ensures that we retain nearly all of the mass of the distribution while maintaining regions that are both sufficiently fine and can be sampled efficiently.

\begin{definition}[$\eps$-Approximating Partition]\label{def:approximatingPartition}
{Let $V$ be a $k$-dimensional subspace of $\mathbb{R}^d$ with an orthonormal basis $\vec v^{(1)}, \dots, \vec v^{(k)}$, and let $\eps \in (0,1)$.} An $\eps$-approximating partition with respect to $V$ is a collection of sets 
$\mathcal{S}$ defined as follows: For each multi-index $\vec{j}=(\vec j_1,\dots,\vec j_k)\in [M_\eps]^k$, define $S_{\vec j} = \{ \x \in \mathbb{R}^d : z_{\vec j_{i}-1} \leq \vec{v}^{(i)} \cdot \x \leq z_{\vec j_{i}}, i\in [k] \}$ where  $z_i$'s are defined as \( z_i = -\sqrt{2 \log(k/\epsilon)} + i \epsilon+t\), for $i\in \{0,\dots, M_{\eps}\},t\in(0,\eps/2)$ and \( M_{\eps} = \left\lceil (2\sqrt{2 \log(k/\epsilon)})/\epsilon \right\rceil \). 
\end{definition}
Moreover, we also discretize the label space \( \mathbb{R} \) into thin, equal-sized intervals,
and refer to the pair of these partitions as an approximating discretization of \( V \times \mathbb{R} \).
\begin{definition}[Approximating Discretization]\label{def:approximatingDiscretization}
 Let $V$ be a subspace of $\R^{d}$. 
We define an $(\eps_1,\eps_2,B)$‑approximating discretization of $V\times \R$ as a pair $(\mathcal{S},\mathcal{I})$ where
\begin{enumerate}[label=(\roman*), leftmargin=*]
    \item $\mathcal{S}$ is an $\eps_1$‑approximating partition of $V$;
    \item $\mathcal{I}$ is the set of intervals $\{[i\eps_2-\eps_2/2,i\eps_2+\eps_2/2]:i\in \Z, \abs{i}\leq B/\eps_2-1\}\cup\{[-\infty,B], [B,+\infty]\}$.
\end{enumerate}
We use the term $(\eps_1,\eps_2)$-approximating discretization to refer to the special case where $B=1/\eps_2^2$.
Moreover, when $\eps_1=\eps_2=\eps$, we simply refer to 
$(\mathcal{S},\mathcal{I})$ as an $\eps$‑approximating 
discretization.
\end{definition}
Note that \cite{diakonikolas2025robustlearningmultiindexmodels} does not use a discretization
over the label domain and instead obtains a complexity that scales with the number of its elements.
As a result, their approach becomes vacuous in the real-valued setting.  
In contrast, we bin the values of the label domain using a thin but reasonably efficient partition, thereby circumventing this issue.

In order to construct an  
approximation after identifying the 
appropriate subspace \( W \), we approximate \( y \) using a piecewise constant 
function defined on the projection \( \x_W \).
For this, we start with a partial partition \(\mathcal{S}\) of \( W \), and 
define a function that is constant on each element (region) in $\mathcal{S}$ 
and minimizes the $L_2$ loss.
In particular, given the partition $\mathcal{S}$,  the function assigns to each 
region $S\in \mathcal{S}$ the value \( \E[y \mid \x \in S] \).
We formalize this definition as follows:

\begin{definition}[Piecewise Constant Approximation]\label{def:h}
Let $D$ be a distribution over $\R^d\times \R$, let  $V$ be a subspace of $\mathbb{R}^d$ and let $\eps\in (0,1)$. 
Let $\mathcal{S}$ be a partial partition of $V$.
A piecewise constant approximation of the distribution $D$, with respect to $\mathcal{S}$, is the function $h_{\mathcal{S}}:\mathbb{R}^d\to \R$ such that for each $S\in \mathcal{S}$ and $\x\in S$, 
$h_{\mathcal{S}}$ is defined as  $h_{\mathcal{S}}(\x)=\E_{(\bx,y)\sim D}[y\mid \bx\in S]$.
Furthermore, for any point outside the partition \(\x\notin \bigcup_{S\in\mathcal{S}}S\), we define $h_{\mathcal{S}}(\x)=0$.
\end{definition}

We set \(h_{\mathcal{S}}(\x)=0\) for all points outside the partition to simplify variance control, since these regions will carry negligible probability mass do not need to be approximated.

We now present the main result of this section, which establishes that the aforementioned class of well-behaved distributions can be learned efficiently using \Cref{alg:MetaAlg1}.
\begin{theorem}[Agnostically Learning MIMs]\label{thm:MetaTheorem-Agnostic} 
Let $f:\R^d\to \R$ be a function from the class $\mathcal{F}(m,\zeta,\alpha,K,M,L,B,\eps^2/M,\tau,\sigma)$.
Let  $D$ be a distribution over $\R^d\times \R$ whose $\x$-marginal 
is $\cN_d$ and let $\opt \eqdef \E_{(\x,y)\sim D}[(f(\x)-y)^2]$ with $\zeta\ge \opt +O(\eps)$.
Then, \Cref{alg:MetaAlg1} draws at most $N ={d}^{O(m)}2^{\poly(2^m BKLM/(\alpha\eps\sigma ))}\log(1/\delta)$ 
i.i.d.\ samples from $D$, runs in time $\poly(N)$, 
and returns a  hypothesis $h$ such that, 
with probability at least $1 - \delta$, it holds 
\[\E_{(\bx,y)\sim D}[(h(\bx)- y)^2] \leq (\sqrt{\tau+\eps}+\sqrt{\opt})^2+\eps \;.\]
\end{theorem}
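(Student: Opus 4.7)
The plan is to iterate \textbf{FindDirection} and show that each successful call makes a quantifiable amount of progress, so that after at most $T=\poly(mK/(\alpha\sigma\eps))$ iterations we can no longer make progress and therefore must be in the ``good approximation'' branch (Condition 2a of \Cref{def:agnosticMIMs}). Formally, define the potential $\Phi_t = \|\Pi_W \Pi_{V_t}\|_F^2$, i.e., the squared Frobenius norm of the projection of $V_t$ onto $W$; this is bounded between $0$ and $K$. By \Cref{prop:update}, at any iteration $t$ for which the current piecewise constant hypothesis $h_{\mathcal{S}}$ on $V_t$ has $\E[(h_{\mathcal{S}}(\x)-y)^2]>\tau+\opt+\eps$, \textbf{FindDirection} returns a set $\mathcal{E}_t$ of size $(mK/(\eps\sigma))^{O(1)}$ containing a vector $\vec v$ with $\|\vec v_W\|\ge \gamma\eqdef(\eps\sigma/(mK))^{O(1)}$. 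After orthogonalizing $\vec v$ against $V_t$ (which preserves the projection onto $W$ up to a constant factor, provided the component already lying in $V_t$ is small), the appended direction increases $\Phi_t$ by at least $\gamma^2/2$. Since $\Phi_t\le K$, this can happen at most $T=O(K/\gamma^2)=\poly(mK/(\eps\sigma))$ times, so within $T$ iterations the precondition of \Cref{prop:update} must fail.

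Once the iteration halts, we have a subspace $V_T$ (of dimension $O(TK)=\poly(mK/(\eps\sigma))$) such that the optimal piecewise constant function on a fine enough $\eps$‑approximating partition $\mathcal{S}$ of $V_T$ satisfies $\E[(h_{\mathcal{S}}(\x)-y)^2]\le\tau+\opt+\eps$. To translate this into the guarantee on $\err_D(h)$, observe that by \Cref{def:agnosticMIMs} Condition~2a applied with the distribution $D$ (since $\zeta\ge\opt+\eps$), there exists $g:V_T\to\R$ with $\E[(f(\x)-g(\x_{V_T}))^2]\le\tau$; the best piecewise constant function $h_{\mathcal{S}}$ on a sufficiently fine $\mathcal{S}$ approximates this $g$ to within $O(\eps)$ in $L_2$ because $f$ is $\rho$-close to a $B$-bounded function and has $\E[\|\nabla f\|^2]\le L$, so standard bounded‑variation cube approximation arguments give
\begin{equation*}
\E_{\x\sim\cN_d}[(f(\x)-h_{\mathcal{S}}(\x))^2]\le \tau+O(\eps).
\end{equation*}
By Minkowski's inequality,
\begin{equation*}
\|h_{\mathcal{S}}-y\|_{L_2(D)}\le \|h_{\mathcal{S}}-f\|_{L_2}+\|f-y\|_{L_2(D)}\le \sqrt{\tau+O(\eps)}+\sqrt{\opt},
\end{equation*}
which squared yields the claimed $(\sqrt{\tau+\eps}+\sqrt{\opt})^2+\eps$ bound after absorbing constants into the $\eps$.

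For the sample and runtime complexity, each of the $T$ calls to \textbf{FindDirection} needs $d^{O(m)}(k/\eps)^{O(k)}/\sigma^{O(1)}$ samples by \Cref{prop:update}, where $k=\dim V_t\le \poly_m(K/(\eps\sigma))$; this gives $d^{O(m)}2^{\poly_m(K/(\eps\sigma))}$. The final step requires estimating $\E[y\mid \x\in S]$ for every cell $S$ of an $\eps$‑approximating partition of $V_T$, which has $(\log(1/\eps)/\eps)^{O(\dim V_T)}=2^{\poly_m(K/(\eps\sigma))}$ cells, each needing $\poly(B,M,1/\eps,\log(1/\delta))$ samples by a standard Chernoff/Bernstein bound exploiting the bounded‑function closeness of $f$ (which via $\opt$-closeness to $y$ also controls the tails of $y$). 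Summing yields the stated $N=d^{O(m)}2^{\poly(2^mBKLM/(\alpha\sigma\eps))}\log(1/\delta)$, and the high‑probability correctness is obtained by a union bound over the $T$ iterations and the cell‑estimation step.

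The main obstacle is the careful bookkeeping in the potential argument: \textbf{FindDirection} does not directly return a direction inside $W$, but a unit vector $\vec v$ with $\|\vec v_W\|\ge\gamma$, and moreover it is not guaranteed that $\vec v$ is orthogonal to $V_t$. I will handle this by explicitly Gram–Schmidt orthogonalizing against $V_t$ before adding, and by arguing that if $\vec v$ had $\|\vec v_{V_t}\|$ already close to $1$ then the $W$-projection of the orthogonal component would drop below $\gamma$; this can be ruled out by a small modification that discards such vectors and invokes \Cref{prop:update} on the remaining list (which contains at least one with significant $W_{V_t^{\perp}}$ component, by the very statement of Condition~2b of \Cref{def:agnosticMIMs}). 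The secondary technical issue is showing that the piecewise constant approximation on $V_T$ actually achieves $\tau+O(\eps)$ rather than $\tau$: this requires the bounded‑variation and bounded‑norm hypotheses in \Cref{def:agnosticMIMs} to control the truncation error outside the $\sqrt{\log(1/\eps)}$-box and the within‑cell variation error, both of which contribute $O(\eps)$ when the cube side length is chosen as a suitable polynomial in $\eps/(LM)$.
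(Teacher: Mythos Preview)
Your approach is essentially the paper's: a potential measuring the overlap of $V_t$ with $W$ (the paper uses the complementary $\Phi_t=\sum_i\|\bw^{*(i)}_{V_t^\perp}\|^2$, which decreases), moved by a fixed $\poly$ amount via \Cref{prop:MetaAlg2} each round the hypothesis is bad, forcing the precondition to fail within $T=\poly$ rounds. Three small points where your writeup deviates from the paper: (i)~the Gram--Schmidt concern is unnecessary---the eigenvectors of $\widehat{\bU}$ returned by \textbf{FindDirection} already lie in $V_t^\perp$ by construction (the matrix is built from $\nabla p_{S,I}(\bx_{V^\perp})$), so the potential increment follows directly without orthogonalization or the ad hoc ``discard and reinvoke'' step you propose; (ii)~you omit the argument that if the precondition fails at some $t^*<T$ then $h_{\mathcal S_T}$ is also good---this is \Cref{it:ErrorDecrease} in the paper, which holds because $\mathcal S_{t+1}$ refines $\mathcal S_t$ and the conditional mean minimizes $L_2$ error on each cell; (iii)~a Chernoff/Bernstein bound is not sufficient for the final empirical piecewise-constant step, since in the agnostic model $y$ need only have bounded second moment and can have arbitrarily heavy tails---the paper uses median-of-means (\Cref{cl:hConcetration}) instead.
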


\begin{algorithm}[h] 

    \centering
    \fbox{\parbox{5.45in}{
            { \textbf{Learn-MIMs}}: Robustly Learning Well-Behaved MIMs\\
            {\bf Input:}  Accuracy $\eps>0$, failure probability $\delta\in(0,1)$, 
             sample access to a distribution $D$ over $\mathbb{R}^d\times \R$ with $D_{\x}=\cN_d$ for which there exists $f\in \mathcal{F}(\theta)$ for some $\theta=(m,\opt+\eps,\alpha,K,M,L,B,\rho,\tau,\sigma)$ 
             known to the algorithm.\\
            {\bf Output:}  A hypothesis $h$ such that, w.p.  $1-\delta$ $\E_{(\bx,y)\sim D}[(h(\bx)- y)^2]\leq(\sqrt{\tau+\eps}+\sqrt{\opt})^2+\eps$.
            \medskip 
            
            \begin{enumerate}[leftmargin=*]
            \item  Let $T$ be a  sufficiently large constant-degree polynomial in $1/\alpha,1/\eps, 1/\sigma, K, L,M,B,2^m$, and let $C$ be a sufficiently large universal constant.\label{line:initMeta}.  
             \item  Let $L_1\gets \emptyset$,   $N\gets {d}^{C m}2^{T^C}\log(1/\delta)$, $\eps_1\gets 1/T, \eta\gets 1/T,\eps_2 \gets \eps^2/(CM),\\ \lambda\gets (a\sigma\eps/(MBK2^m))^C.$\label{line:initMetaParams}
                \item For $t=1,\dots,T$
            \label{line:loopMeta}
             \begin{enumerate}
             \item  Draw a set $S_t$ of $N$ i.i.d.\ samples from $D$.
             \item  $\mathcal{E}_t\gets$ \Cref{alg:MetaAlg2}($\eta,\eps_1,\eps_2,B, \lambda, \spaning(L_t), S_t, \theta$).
            \item $L_{t+1}\gets L_{t}\cup \mathcal{E}_t $.\label{line:updateMeta}
            \end{enumerate}
            \item Construct $\mathcal{S}$, an $\eps_1$-approximating partition with respect to $\spaning(L_t)$ (see \Cref{def:approximatingPartition}).
             \item Draw $N$ i.i.d.\ samples from $D$ and construct the piecewise constant function $h_{\mathcal{S}}$ as follows: For each $S \in \mathcal{S}$, assign the median of $O(\log(1/\delta))$ means of the labels from the samples falling in $S$.
   \item  Return $h_{\mathcal{S}}$. 
            \end{enumerate}
    }}
    \vspace{0.2cm}
  \caption{Robustly Learning Well-Behaved MIMs}
  \label{alg:MetaAlg1}
\end{algorithm}

\begin{algorithm}[h]    
    \centering
    \fbox{\parbox{5.45in}{
            { \textbf{FindDirection}}: Estimating a relevant direction\\
            {\bf Input:}  $\eta,\eps_1,\eps_2,B,\lambda{>}0$, a subspace $V \subseteq \R^d$,
             samples $\{(\bx^{(i)},y_i)\}_{i=1}^N$ from  a distr. $D$ over $\mathbb{R}^d\times \R$.          \\ 
            {\bf Output:} A set of unit vectors $\mathcal E$.

            \medskip
            
            \begin{enumerate}[leftmargin=*]
\item  Construct an $(\eps_1,\eps_2,B)$-approximating discretization of $V\times \R$,  $(\mathcal{S}, \mathcal{I})$ (see \Cref{def:approximatingDiscretization})
                \item \label{line:regression}For each $S\in \mathcal{S}$ and each $I \in \mathcal{I}$, find a polynomial $p_{S,I}(\bx)$ such that \begin{align*}
                \E_{(\bx, y)\sim D}[&(\Ind(y\in I) -p_{S,I}(\bx_{V^{\perp}}))^2\mid \bx \in S]\\&\le \min_{p'\in \mathcal{P}_m} \E_{(\bx, y)\sim D}[(\Ind(y\in I) -p'(\bx_{V^{\perp}}))^2\mid \bx \in S]+ \eta^2 \;.
            \end{align*}
    
            \item Let $\widehat{\bU}= \sum_{S\in \mathcal{S},I\in \mathcal{I}} \E_{\bx\sim D_{\bx}}[\nabla p_{S,I}(\bx_{V^{\perp}}) \nabla p_{S,I}(\bx_{V^{\perp}})^\top\mid \bx \in S]\pr_{(\bx,y)\sim {\widehat{D}}}[S]$.  \label{line:matrixMeta}
             \item   Return the set $\mathcal{E}$ of unit eigenvectors of $\widehat{\bU}$  with  corresponding eigenvalues at least $\lambda$.      
            \end{enumerate}
    }}
        \vspace{0.2cm}

\caption{Estimating a relevant direction}
  \label{alg:MetaAlg2}

\end{algorithm}
\subsubsection{Finding a Relevant Direction}

First, we show that if there exists a sufficiently accurate approximation of our function within $V$, then a piecewise constant approximation over a sufficiently fine partition of $V$ yields comparable error.
\begin{lemma}[Piecewise constant approximation suffices]
\label{lem:piecewise-constant-suffices}
Let $\eps,L,M,\tau\in \R_+$, $k,d\in \Z_+$ with $\tau\leq M$
 and $c>0$ be a sufficiently small absolute constant.
Let $f:\R^d\to \R$ be an almost everywhere continuously differentiable function such that $\E_{\bx \sim \cN_d}[\norm{\nabla f(\bx)}^2]\leq L$ and $\E_{\bx \sim \cN_d}[f^2(\x)]\leq M$.
Moreover, assume that there exists a $B>0$ and $f_B:\R^d\to [-B,B]$ such that $\E[(f(\x)-f_B(\x))^2]\leq 
c\eps^2/M$.
Let $V$ be a $k$-dimensional subspace of $\R^d$ and let $\mathcal{S}$ be an $\eta$-approximating partition of $V$ with $\eta\leq c\eps^2/(MBLk)$.
If there exists a function $g:V \to \R$ such that $\E_{\bx \sim \cN_d}[(f(\bx)-g(\bx_V))^2]<\tau$, then we have that $\E_{\bx \sim \cN_d}[(f(\bx)-h_{\mathcal{S}}(\bx))^2]<\tau+\eps$, where $h_{\mathcal{S}}$ denotes any piecewise constant approximation of $f$.
\end{lemma}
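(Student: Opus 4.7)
The plan is to exploit the fact that $h_{\mathcal{S}}$ equals the $L^2(\cN_d)$-orthogonal projection $\Pi$ of $f$ onto the closed subspace $\mathrm{PC}$ of functions that are piecewise constant on $\mathcal{S}$ and vanish outside $R := \bigcup_{S \in \mathcal{S}} S$. Decomposing $f = g^*(\x_V) + e$ with $g^*(\bz) := \E_{\x \sim \cN_d}[f(\x) \mid \x_V = \bz]$ and using that $\|I - \Pi\|_{\mathrm{op}} \le 1$ on $L^2(\cN_d)$ reduces the problem to bounding the piecewise-constant approximation error of $g^*(\x_V)$ alone.

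First I would pass from the given $g$ to the $L^2$-optimal predictor $g^*$; by the Pythagorean identity for conditional expectation, $\E[(f - g^*(\x_V))^2] \le \E[(f - g(\x_V))^2] < \tau$. The key benefit is that $g^*$ inherits regularity from $f$: writing $\x = \bz + \x_{V^\perp}$ with $\bz \sim \cN_V$ and $\x_{V^\perp} \sim \cN_{V^\perp}$ independent, then differentiating under the integral and applying Jensen's inequality, gives $\E_{\bz \sim \cN_V}[\|\nabla g^*(\bz)\|^2] \le \E_{\x \sim \cN_d}[\|\nabla f(\x)\|^2] \le L$; likewise $g_B^*(\bz) := \E[f_B(\x) \mid \x_V = \bz]$ is pointwise bounded by $B$ and satisfies $\E[(g^* - g_B^*)^2] \le c\eps^2/M$. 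Setting $\bar g^* := \Pi(g^*(\x_V))$, the triangle inequality in $L^2(\cN_d)$ combined with $\E[((I - \Pi)e)^2] \le \E[e^2] < \tau$ yields
$$\E[(f - h_{\mathcal{S}})^2]^{1/2} \;\le\; \E[(g^*(\x_V) - \bar g^*)^2]^{1/2} \;+\; \sqrt{\tau}.$$

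It therefore suffices to show $\E[(g^*(\x_V) - \bar g^*)^2] = O(\eps^2/M)$, which I would split into an in-partition term and a tail term. The in-partition contribution equals $\sum_{S \in \mathcal{S}} \Pr[\x \in S] \cdot \Var_{\cN_V \mid \x_V \in S^V}[g^*]$; since the Gaussian density is essentially constant on each cube $S^V \subset V$ of side $\eta$ inside the partition range, the $L^2$-Poincar\'e inequality on the cube (with Poincar\'e constant of order $\eta^2$) upgrades the gradient bound into $\sum_S \Pr[S] \cdot \Var_{\cN_V \mid S^V}[g^*] \le C\eta^2 \E[\|\nabla g^*\|^2] \le C\eta^2 L$. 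For the tail, expanding $(g^*)^2 \le 2(g_B^*)^2 + 2(g^* - g_B^*)^2$ together with the Gaussian concentration estimate $\Pr_{\cN_V}[\x_V \notin R^V] \le 2\eta$ (Mills' ratio plus a union bound over the $k$ basis directions) yields at most $4B^2\eta + 2c\eps^2/M$. Plugging in $\eta \le c\eps^2/(MBLk)$ with the universal constant $c$ sufficiently small makes both contributions $O(\eps^2/M)$, and squaring the previous display gives $\E[(f - h_{\mathcal{S}})^2] \le \tau + 2\sqrt{\tau} \cdot O(\eps/\sqrt{M}) + O(\eps^2/M) \le \tau + \eps$ using $\tau \le M$.

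The main technical obstacle is the regularity transfer from $f$ to $g^*$ and the simultaneous propagation of two different kinds of bounds: the $L^2$ gradient bound needed for the cubewise Poincar\'e inequality inside $R$, and closeness to a bounded surrogate needed for the Gaussian tail outside $R$. Conditional-expectation contractivity, together with the choice $g = g^*$, is what makes both inheritances work. A secondary subtlety is that $h_{\mathcal{S}}$ is defined to be $0$ outside $R$, so the bounded approximation $f_B$ is essential in order to absorb $\int_{R^c} f^2 \, d\cN_d$ without paying the full $L^2$ norm of $f$ itself.
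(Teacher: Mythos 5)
Your proposal follows the same high-level route as the paper: introduce the conditional mean $g^*(\bz)=\E[f(\x)\mid \x_V=\bz]$ (the paper calls it $s$), inherit the gradient bound $\E[\|\nabla g^*\|^2]\le L$ and closeness to a $B$-bounded surrogate by Jensen, and then approximate $g^*$ by the cube-wise average $h_{\mathcal{S}}$ using a cube Poincar\'e estimate inside the partition range plus the bounded surrogate for the tail; the paper packages the last two steps as \Cref{fact:piecewiseconstantfromboundedgradientnorm}. The observation that $h_{\mathcal{S}}$ is exactly the $L^2(\cN_d)$-orthogonal projection $\Pi$ onto piecewise constants vanishing outside the partition range is a nice way to organize this.

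However, your combination step is lossier than it needs to be, and this creates a real gap. You bound $\|f-h_{\mathcal S}\|_{L^2}\le \|g^*-\bar g^*\|_{L^2}+\sqrt{\tau}$ and then square, producing the cross term $2\sqrt{\tau}\,\|g^*-\bar g^*\|_{L^2}$. Because $\tau$ can be as large as $M$, this forces you to demand $\E[(g^*-\bar g^*)^2]=O(\eps^2/M)$. But the tail contribution is $\Theta(B^2\Pr[\x_V\notin R^V])+2\rho \approx 2B^2\eta+2c\eps^2/M$, and with $\eta\le c\eps^2/(MBLk)$ this is $\approx 2c\eps^2/M\cdot(1+B/(Lk))$; making this $O(\eps^2/M)$ requires $B=O(Lk)$, which is not among the hypotheses. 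The fix is that the decomposition is actually \emph{orthogonal}: $e:=f-g^*(\x_V)$ is $L^2(\cN_d)$-orthogonal to every $\sigma(\x_V)$-measurable function (by the defining property of conditional expectation), so in particular $\Pi e=0$ and $g^*-\bar g^*$ (a function of $\x_V$) is orthogonal to $e$. Hence you have the exact Pythagorean identity
\[
\E[(f-h_{\mathcal S})^2]\;=\;\E[e^2]+\E[(g^*(\x_V)-\bar g^*)^2]\;<\;\tau+\E[(g^*(\x_V)-\bar g^*)^2],
\]
and you only need $\E[(g^*-\bar g^*)^2]\le \eps$ --- an order-of-magnitude weaker target than $O(\eps^2/M)$ --- with no cross term to absorb. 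This is implicitly what the paper's proof uses, and it is what makes the stated hypothesis on $\eta$ adequate. I recommend replacing the triangle-inequality step with this Pythagorean step; the cube-Poincar\'e and tail estimates you derive are otherwise the same ingredients as \Cref{fact:piecewiseconstantfromboundedgradientnorm}.
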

\begin{proof}
Note that for any function $g:V\to \R$ the following holds 
\begin{align*}
    \E_{\bx \sim \cN_d}[(f(\bx)-\E_{\bz \sim \cN_d}[f(\bz)\mid \bz_V=\bx_V])^2]\leq \E_{\bx \sim \cN_d}[(f(\bx)-g(\bx_V))^2]\;.
\end{align*}
Therefore, we have that the function $s(\bx_V)\eqdef \E_{\bz \sim \cN_d}[f(\bz)\mid \bz_V=\bx_V]$ achieves squared error at most $\tau$.
Note that since $\x_V$ and $\x_{ V^\perp}$ are independent for $\x\sim \cN_d$,
we have that 
\[s(\bx_V)=\E_{\bz \sim \cN_d}[f(\bz)\mid \bz_V=\bx_V]=\E_{\bx_{ V^\perp}}[f(\bx_V+\bx_{ V^\perp})]\;.\]
Hence from the linearity of the derivative operator and Jensen's inequality we have that 
\(
\|\nabla s(\bx_V)\|^2 \leq \E_{\bx_{ V^\perp}}\Big[\left\|\nabla_{\x_V} f(\bx_V+\bx_{ V^\perp})\right\|^2 \Big]
\). Consequently
\[
\E_{\bx_V}\Big[\|\nabla s(\bx_V)\|^2\Big] \leq \E_{\bx\sim \cN_{d}}\Big[\left\|\nabla_{\x_V} f(\bx_V+\bx_{ V^\perp})\right\|^2 \Big] \leq \E_{\bx\sim \cN_{d}}\Big[\|\nabla f(\bx)\|^2\Big] \leq L.
\]

Moreover by Jensen's inequality we have that 
\begin{align*}
    \E_{\bx \sim \cN_d}[(s(\bx_V)- \E_{\x_{V^{\perp}}}[f_B(\x_V+\x_{V^{\perp}})])^2]&=
        \E_{\bx \sim \cN_d}[( \E_{\x_{V^{\perp}}}[f(\x_V+\x_{V^{\perp}}) -f_B(\x_V+\x_{V^{\perp}})])^2]\\
&\leq         \E_{\bx \sim \cN_d}[( f(\x) -f_B(\x))^2]\leq\rho\;.
\end{align*}
Therefore, $s$ is close to a bounded function.
Let $h$ be an approximation to $s$
that 
achieves squared error $\alpha$.
We have that the function $h(\x)\Ind(\x\in A^c)$
achieves squared error $2B^2\eps+2\rho+\alpha$
,where $A$ is any region of $\R^d$ with mass less than $\eps$.

As a result we can apply 
\Cref{fact:piecewiseconstantfromboundedgradientnorm}
for the function $s$
have that the piecewise constant function 
$h_{\mathcal{S}}(\bx_V){\eqdef} \E[ s(\bx_V) {\mid} \bx \in S]{=}\E[f(\bx){\mid} \bx\in S]$ for all $\bx \in S$ and $S\in \mathcal{S}$
and $h_{\mathcal{S}}(\bx_V)=0$ otherwise 
achieves error $\E_{\x\sim \cN_d}[( s(\bx_V)- h_{\mathcal{S}}(\bx_V))^2]\leq 4\rho+\eta+2\eta B^2$.

Finally, we have that $\E_{\x\sim \cN_d}[(f(\x)-h_{\mathcal{S}}(\x))^2]=\tau+\eps$, which concludes the proof of \Cref{lem:piecewise-constant-suffices}. 
\end{proof}
Moreover, under the conditions defined in \Cref{def:agnosticMIMs},
we show that there exists a non-negligible fraction of finite-width
cubes over $\bx_V$ and intervals over $y$ where distinguishing moments
can be observed. Furthermore, we demonstrate that if a direction of the
target model is learned to  accuracy $\epsilon$, then there exists an 
observable moment that depends only on the remaining directions.

Intuitively, the proof proceeds as follows.  Since $f$ is a well-behaved MIM, we can round $y$ to some small accuracy  while maintaining distinguishing moments.  This discretizes the label space into intervals, and so with non-trivial probability over~$\x_V$ there is a degree-$m$  moment that correlates with the rounded label.  Next, because $f$ has bounded variation, $f(\x)$ and $f(\x')$—where $\x'$ is obtained by averaging $\x_V$ within $\x$'s cube in~$\cS$—remain close in mean-squared error.  This insensitivity lets us discretize over $V$ as well. Finally, for any direction with small projection onto $W$, bounded variation and the well‐behaved MIM condition imply that averaging along that direction preserves the distinguishing moment, yielding a moment independent of these directions.
\begin{lemma}[Cube-interval Discretization Suffices]\label{lem:agnostic-partition}
There exists a sufficiently large constant $C>0$ such that the following holds.
Let $d,k,m\in \Z_+$, and $L,M,\zeta, \alpha, \tau, \sigma, \eps>0$ with $\zeta\leq M$. Let $f:\R^d\to \R$ be in $\mathcal{F}(m,\zeta+C\eps,\alpha,K,M,L,B,\eps^2/(CM),\tau,\sigma)$ and let $W$ be a $K$-dimensional subspace such that $f(\x)=f(\x_W)$. Let $D$ be a distribution over $\R^d\times \R$ such that $D_\x=\cN_d$ and $\E_{(\bx,y)\sim D}[(f(\bx)-y)^2]\leq \zeta$.
Let $V$ be a $k$-dimensional subspace, $k\geq 1$ and denote by $U=W_{V^{\perp}}$.
Let $(\cS,\cI)$ be an $(\eps_1,\eps_2,B)$-approximating discretization of $V\times \R$,
with $\eps_1\leq \eps^2/(2LM^2\sqrt{k})$ and $\eps_2\leq \eps^2/(CM)$.
Moreover, let $E\subseteq{V^{\perp}}$ be a subspace such that  $\|\bv_{W}\|\leq \eps/(CKLM)$ for every unit vector $\bv\in E$.

If $\E[(f(\bx)-g(\bx_V))^2]> \tau$ for all $g:V\to \R$, then there exists $\mathcal{T}\subseteq \mathcal{S}$ with $\sum_{S\in \mathcal{T}}\pr[S]\geq \alpha$ such that for all $S\in \mathcal{T}$ there exist  $I\in \mathcal{I}$ and zero mean variance one polynomial $p:U\to \R$ of degree at most $m$ such that the following hold:
\begin{enumerate}[leftmargin=*]
    \item[i)] $\E[p(\bx_U)\Ind(y\in I)\mid \bx_V\in S]\ge\poly(\sigma\eps/(MB3^m))$.
    \item[ii)]  $\nabla p(\bx_{U})\cdot \bv=0$ for all $\bv \in E$ and $\bx\in \R^d$.
\end{enumerate}

\end{lemma}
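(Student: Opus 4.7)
The plan is to apply clause~(2b) of \Cref{def:agnosticMIMs} to a suitably discretized version of $(\bx,y)$, so that the pointwise distinguishing moment it guarantees becomes a cube--interval moment, and then to post-process the resulting polynomial to obtain the $E$-orthogonality needed in~(ii).

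I would first replace $y$ by its rounding $\hat y$ to the nearest interval centre of $\cI$ inside $[-B,B]$, keeping $\hat y=y$ on the two tail intervals. Because $f$ is close to a bounded function in $L^2$ and $\eps_2\le\eps^2/(CM)$, this perturbation is $O(\eps)$ in $L^2$, so $(\bx,\hat y)$ still has expected squared loss $\le \zeta+O(\eps)$ and is covered by $f\in\mathcal F(m,\zeta+C\eps,\ldots)$. Since by hypothesis no $g\colon V\to\R$ approximates $f$ to error $\tau$, clause~(2a) fails and clause~(2b) produces, for $\z$ in a set $A$ of $\cN_d$-mass at least $\alpha$, a zero-mean unit-variance polynomial $p_\z\colon U\to\R$ of degree at most $m$ with
\[
\sum_{I\in\cI}\Pr[y\in I\mid\bx_V=\z_V]\,G_\z(I)^2\;\ge\;\sigma,\qquad G_\z(I)\eqdef\E\!\left[p_\z(\bx_U)\mid\bx_V=\z_V,\,y\in I\right],
\]
because $\hat y$ is determined by the interval containing $y$. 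For each cube $S$ with $\Pr[A\cap\{\z_V\in S\}]>0$ I would pick a representative $\z^*\in A$ with $\z^*_V\in S$ and set $p_S\eqdef p_{\z^*}$; the bounded-gradient assumption $\E[\|\nabla f\|^2]\le L$ together with $\eps_1\le\eps^2/(2LM^2\sqrt{k})$ gives a Poincaré-type estimate showing that the conditional law of $y$ given $\bx_V$ is $L^2$-close to its cube-average within $S$, so the displayed inequality transfers to conditioning on $\bx_V\in S$ for a family $\mathcal{T}\subseteq\cS$ of total probability at least $\alpha$.

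To extract a single interval $I_S$ realizing~(i), I would combine hypercontractivity for degree-$\le m$ Gaussian polynomials (which yields $\|p_S\|_4\le 3^m$ and hence controls the maximal value of $|G_S(I)|$) with a Paley--Zygmund argument for the random variable $I\mapsto G_S(I)^2$ under the weighting $\Pr[y\in I\mid\bx_V\in S]$: since $|\cI|=O(B/\eps_2)$, this yields an interval $I_S$ whose product $|G_S(I_S)|\cdot\Pr[y\in I_S\mid\bx_V\in S]$ is at least $\poly(\sigma\eps/(MB\,3^m))$. For~(ii), letting $U_1\eqdef U\cap E^{\perp}$ and $U_2\eqdef U\cap U_1^{\perp}$, I would set
\[
\tilde p_S(\bu)\;\eqdef\;\E_{\bg\sim\cN(0,\,\Pi_{U_2})}\!\bigl[p_S(\bu_{U_1}+\bg)\bigr],
\]
which is a polynomial of degree $\le m$ depending only on $\bu_{U_1}$, so $\nabla\tilde p_S(\bu)\in U_1\subseteq E^{\perp}$ and $\nabla\tilde p_S(\bu)\cdot\bv=0$ for every $\bv\in E$. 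The identity $\Pi_W=\Pi_W\Pi_U$ on $E$, which follows because $E\subseteq V^{\perp}$ and $V^{\perp}\cap U^{\perp}\subseteq W^{\perp}$, yields $U_2=\Pi_U(E)$ and shows that directions in $U_2$ inherit the small-$W$-projection property of $E$; hence replacing $\bx_{U_2}$ by an independent Gaussian copy perturbs $\bx_W$, and thus $f(\bx_W)$ and $\Ind(y\in I_S)$, by $o(1)$ in $L^2$, comfortably dominated by the signal above. Renormalizing $\tilde p_S$ to zero mean and unit variance costs only a constant factor.

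The hard part will be the quantitative transition from the normalized expected squared conditional mean $\sum_I\Pr[y\in I\mid\z_V]\,G_\z(I)^2\ge\sigma$ to the unnormalized product $|\E[p_S\,\Ind(y\in I_S)\mid\bx_V\in S]|\gtrsim\poly(\sigma\eps/(MB\,3^m))$: the signal must be coupled to the mass of a single interval, which forces simultaneous use of hypercontractivity (capping $|G_S(I)|$, whence the $3^m$ factor), Paley--Zygmund (locating mass where the signal concentrates) and the boundedness assumption (capping $|\cI|$ at $O(B/\eps_2)$). All the smoothing errors introduced when discretizing $y$, discretizing $\bx_V$, and projecting out $E$ must be dominated by this final lower bound, which in turn dictates the precise parameter settings $\eps_1\le\eps^2/(2LM^2\sqrt{k})$ and $\eps_2\le\eps^2/(CM)$ appearing in the statement.
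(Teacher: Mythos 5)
Your proposal follows the same broad arc as the paper's proof (discretize $y$, invoke clause (2b) of \Cref{def:agnosticMIMs}, localize to a cube--interval pair, then average out the $E$-directions), but two of the transfer steps you rely on are not justified, and the second one can actually fail.

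For item~(i), you pick a representative $\z^*\in A\cap S$, fix $p_S:=p_{\z^*}$, and argue via a ``Poincar\'e-type estimate'' (from $\E[\|\nabla f\|^2]\le L$) that conditioning on $\bx_V\in S$ instead of $\bx_V=\z^*_V$ changes the distinguishing moment only slightly. This step is not justified: the bounded-variation hypothesis constrains $f$, not $y$. In the agnostic setting $y-f(\bx)$ is an arbitrary $L^2$-bounded perturbation, so the conditional law of $y$ given $\bx_V=\z_V$ may oscillate arbitrarily fast as $\z_V$ moves within a single cube, and no gradient bound on $f$ controls it. The paper sidesteps this entirely by constructing a \emph{new} label $y'$ that is cube-invariant by design: one resamples $\bx'_V\sim\cN_d\mid\mathcal{S}_{\bx}$, sets $y'\sim(y\mid\bx')$, verifies $\E[(f(\bx)-y')^2]\le\zeta+O(\eps)$ via \Cref{fact:subspaceApprox}, and only then invokes clause~(2b) afresh on the distribution of $(\bx,y')$. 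Because $y'$ depends on $\bx_V$ only through its cube, the pointwise moment is \emph{identically} equal to the cube-conditioned moment; there is nothing to transfer.

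For item~(ii), you average $p_S$ over $U_2:=U\cap(U\cap E^\perp)^\perp=\Pi_U(E)$, claiming that unit vectors in $U_2$ ``inherit the small-$W$-projection property of $E$.'' This is false. Take $d=3$, $W=\spaning(\e_1,\e_2)$, $V=\spaning(\e_1)$, so $U=W_{V^\perp}=\spaning(\e_2)$, and let $E=\spaning(\cos\theta\,\e_3+\sin\theta\,\e_2)$ with $\sin\theta\le\eps/(CKLM)$. The hypothesis on $E$ is satisfied, yet $\Pi_U(E)=\spaning(\e_2)=U$, so $U_1=\{0\}$, $U_2=U$, and the unit vector $\e_2\in U_2$ has $\|(\e_2)_W\|=1$. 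Your averaging would produce the constant polynomial $\tilde p_S\equiv 0$ and destroy the signal. The underlying issue is that $\Pi_U$ can shrink $E$-vectors by a factor as small as $\eps/(CKLM)$, so renormalizing $\Pi_U\bv$ to a unit vector amplifies its $W$-component back to $\Theta(1)$; since $U_2\subseteq U$ and $U$ is essentially parallel to $W$, resampling $\bx_{U_2}$ perturbs $\bx_W$ by $\Theta(1)$ rather than by $o(1)$. The paper instead resamples the $E$-component of $\bx$ itself, sets $p'\eqdef\E_{\bx_E}[p(\bx_U)\mid\bx_{E^\perp}]$ (a degree-$\le m$ function of $\bx_{E^\perp}$, hence with gradient orthogonal to $E$), and uses \Cref{cl:nabla2smallchange} to show that resampling $\bx_E$ perturbs $f(\bx)$ by only $O(L(K\delta)^2)$ in $L^2$. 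Averaging over $E\subseteq V^\perp$ is benign precisely because $E$ is nearly orthogonal to $W$; averaging over $U_2\subseteq U$ is not.
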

\begin{proof}[Proof of \Cref{lem:agnostic-partition}]
We prove the two items in order.
Specifically, we first show that there exists a sufficiently fine discretization $(\cS, \cI)$ of $V \times \R$ such that, for a fraction of cubes $S \in \cS$, there exists a degree-$m$ polynomial that correlates nontrivially with the boolean function $\Ind(y \in I)$ conditioned on $\x \in S$.
Then, we show that averaging each such polynomial over the subspace $E$ preserves its correlation on $S$ while forcing all directional derivatives along $E$ to vanish.

\paragraph{Existence of correlating polynomials}
Without loss of generality, 
we can 
let \(y\) be the random variable obtained by truncating the original random variable $y$ to $[-B,B]$ 
(meaning that we assign the value $B\sign(y)$ 
if $\abs{y}\geq B$) and rounding 
to the nearest multiple of \(\eps_2\).
Indeed, let $y_B\eqdef\sign(y)\min(\abs{y},B)$ be the truncation of the original random variable $y$, 
 by expanding the square and applying \CS
$\E[(f_B(\x)-y)^2]\leq \zeta+O(\eps)$.
Hence,  $\E[(f_B(\x)-y_B)^2]\leq \E[(f_B(\x)-y)^2]\leq \zeta+O(\eps)$, which by a similar argument implies that 
$\E[(f(\x)-y_B)^2]\leq \zeta+O(\eps)$.
Similarly, rounding $y_B$ to multiples of $\eps_2\leq \eps^2/(CM)$
also keeps the random variable $(\zeta+O(\eps))$-close to $f$ in squared error.
Therefore, for the new random variable $y$ it holds that 
$\E[(f(\bx)-y)^2]= \zeta +O(\eps)$.

For any $\bx\in \R^d$, denote by $\mathcal{S}_\bx$ the set $S\in \mathcal{S}$ such $\bx\in S$.
We define a random vector $\bx'$ such that $\bx'_{V^{\perp}} = \bx_{V^{\perp}}$ and $\bx'_V$ is the sampled from the standard Gaussian over $V$ conditioned on $\mathcal{S}_{\bx}$, i.e., $\mathcal{N}_d\mid \mathcal{S}_\bx$.
Note that \(\bx'\) also follows the standard normal distribution over \(\R^d\) because its \(V\)-component is resampled from \(\mathcal{N}_d\).
Define $y'$ such that for all $\bz\in \R^d$, the distribution of $y'$ given $\bx'=\bz$ is the same as the distribution of $y$ given $\bx=\bz$. 
Notice that $y'$ and $y$ have the same support, i.e., $y'$ is also a multiple of $\eps$.
Denote by $D'$ the joint distribution of $(\bx,y')$.

By applying \Cref{fact:subspaceApprox} we have that $\E[(f(\bx)-f(\bx'))^2]\le \eps^2/M$. Moreover, it holds that $\E[(f(\bx)-y')^2]=\E[(f(\bx')-y)^2]$. Therefore, by expanding the square  we obtain
\begin{align*}
\E[(f(\bx)-y')^2]&=
\E[(f^2(\x)-y)^2]-2\E[(f(\x)-y)(f(\x)-f(\x'))]+\E[(f(\x)-f(\x'))^2]\\ 
&=\zeta+O(\eps)\;,
\end{align*}
where in the last inequality we used Cauchy-Schwarz and the assumption that $\zeta\leq M$.
Consequently, by Condition (4) of \Cref{def:agnosticMIMs} we have that with probability $\alpha$ over $\z\sim \cN_d$ there exist a zero mean, variance one polynomial $p:U\to \R$ such that 
$\E_{y_0 \sim (D'_{y'} \mid \bx_V = \z_V)}\left[\E_{(\x,y')\sim D'}[p(\bx_U ){\mid} \bx_V=\z_V{,} y'=y_0]^2 \right]\geq \sigma$.

Fix a $\z$ such that the aforementioned statement is satisfied.
Note that $\E[y'^2]=O(M)$. 
Recall that by the Gaussian hypercontractivity inequality (\Cref{fact:fourthmoment}), we have that 
\(
\E[q^4(\mathbf{x})] \le 3^{2m},
\)
for any zero-mean, 
unit-variance polynomial \(q: \R^d \to \R\) of degree at most \(m\).
Let $\eta\in(0,1)$ be a parameter to be quantified later
and denote by $\mathcal{Y}_{\eta}$ the set of all $y_0$ in the support of $y'$ such that $\pr_{y' \sim (D'_{y'} \mid \bx_V = \z_V)}[y'=y_0]\leq \eta$.
For a label $a\in \mathcal{Y}_{\eta}$ we have that
\begin{align*}
&\E_{y_0 \sim (D'_{y'} \mid \bx_V = \z_V)}[\Ind(y_0=a)\E_{(\x,y')\sim D'}[p(\bx_U ){\mid} \bx_V=\z_V{,} y'=y_0]^2 ]\\
&\leq \E_{y_0 \sim (D'_{y'} \mid \bx_V = \z_V)}[\Ind(y_0=a)\E_{(\x,y')\sim D'}[p^2(\bx_U ){\mid} \bx_V=\z_V{,} y'=y_0] ]\\
&\leq \sqrt{\eta}\sqrt{\E_{\x\sim \cN_d}[p^2(\bx_U ){\mid} \bx_V=\z_V] ]}\\
&\leq \sqrt{\eta 3^{2m}}\;,
\end{align*}
where in the first inequality we used Jensen and in the second inequality \CS and in the last inequality the hypercontractivity bound along with the fact that $U\subseteq V^{\perp}$. 

Note that the number of different values 
in the support of $y'$ are at most $O(B/\eps_2)$.
As a result, we have that 
\begin{align*}
  &\E_{y_0 \sim (D'_{y'} \mid \bx_V = \z_V)}\left[\E_{\bx\sim \cN_d}[p(\bx_U ){\mid} \bx_V=\z_V{,} y'=y_0]^2 \right]\\
&\leq  \E_{y_0 \sim (D'_{y'} \mid \bx_V = \z_V)}\left[\Ind(y_0\not\in \mathcal{Y}_{\eta})\E_{(\x,y')\sim D'}[p(\bx_U ){\mid} \bx_V=\z_V{,} y'=y_0]^2\right] +O(B/\eps_2)\sqrt{\eta3^{2m}}\;.
\end{align*}
Setting the parameter 
$\eta=(\eps^2\sigma/(2BM3^{m}))^2$ 
results to $O(B/\eps_2)\sqrt{\eta3^{2m}}\leq \sigma/2$, and thus 
$\E_{y_0 \sim (D'_{y'} \mid \bx_V = \z_V)}\left[\Ind(y_0\not \in \mathcal{Y}_{\eta})\E_{(\x,y')\sim D'}[p(\bx_U ){\mid} \bx_V=\z_V{,} y'=y_0]^2\right]\geq \sigma/2$. 
Therefore, there exists a $y_0$ with  $\pr_{y' \sim (D'_{y'} \mid \bx_V = \z_V)}[y'=y_0]> \eta$ such that ${
\E_{(\x,y')\sim D'}[p(\bx_U ){\mid} \bx_V=\z_V{,} y'=y_0]\geq \sqrt{\sigma/2}}$.
Hence, for the aforementioned $y_0$ it holds that 
$\E_{(\x,y')\sim D'}[p(\bx_U ) \Ind(y'=y_0) {\mid} \bx_V=\z_V]\geq \poly(\sigma\eps/(BM3^m))$.

Finally, since $y'$ is independent of $\z_V$ when conditioned on its cube $S_{\z_V}$ we have that $\E_{(\bx,y')\sim D'}[p(\bx_U)\Ind(y'=y_0)\mid \bx_V\in S]\geq \poly(\sigma\eps/(BM3^m))$. 
Noticing that the moments with respect to $U$ on each $S \in \mathcal{S}$ are the same for $y$ and $y'$ we have that also $\E_{(\bx,y)\sim D}[p(\bx_U)\Ind(y=y_0)\mid \bx_V\in S]\geq \poly(\sigma\eps/(BM3^m))$.
Therefore we have that the first part of the statement follows as conditioning on the truncated and rounded label to equal $i\eps_2$ is the equivalent as conditioning on the intervals $[i\eps_2-\eps_2/2,i\eps_2+\eps_2/2]$ for $\abs{i}\leq B/\eps-1$ and $[-\infty,-B],[B,\infty]$.

Using a similar strategy, we show that there exists a polynomial satisfying the second part of the statement also.
\paragraph{Averaging over $E$}Define the parameter $\delta \eqdef {\eps}/{(CLKM)}$. Recall that for all unit vectors $\vec{v} \in E$, it holds that $|\vec{v}_W| \leq \delta$.
Let $\x'=\z_{E}+\x_{ E^\perp}$, where $\z\sim \cN_d$ independent of $\x$.

In the following claim we show that $f(\x)$ is very close to $f(\x')$, to do this we simply 
integrate  the change of $f$ across a path from $\x$ to $\x'$.
Since $f$ is a function of bounded variation and $\|\Pi_W(\x-\x')\|$ this change is small.
\begin{claim}\label{cl:nabla2smallchange}
It holds that $\E[(f(\x)-f(\x'))^2]\lesssim L(K\delta)^2$.
\end{claim}
\begin{proof}[Proof \Cref{cl:nabla2smallchange}]
Note that since we want to utilize the fact that  $\E[\|\nabla f(\x) \|^2]\leq L$ we want integrate at along a rotation from $\x$ to $\x'$ to preserve the all intermediate points in the path to be standard Gaussian.

For that purpose let
$\vec u(\theta)=(\x_E\cos(\theta)+\z_E\sin(\theta)) +\x_{E^{\perp}}$.
Note $\vec u(0)=\x$ and $\vec u(\pi/2)=\x'$.
Now by the Fundamental Theorem of Calculus
\begin{align*}
f(\x')-f(\x)&= \int_0^{\pi/2}  \nabla f(\vec u(\theta))\cdot \frac{d}{d\theta }\Pi_{W}\vec u(\theta)d\theta\\
&= \int_0^{\pi/2}  \nabla f(\vec u(\theta))\cdot\Pi_{W}(\z_E\cos(\theta)-\x_E\sin(\theta) ) d\theta\\
&\leq \int_0^{\pi/2} \|\nabla f(\vec u(\theta))\|\|\Pi_{W}(\z_E\cos(\theta)-\x_E\sin(\theta) )\| d\theta\;,
\end{align*}
where in the last inequality we used \CS. Hence,
\begin{align*}
(f(\x')-f(\x))^2&\leq \left(\int_0^{\pi/2} \|\nabla f(\vec u(\theta))\|\|\Pi_{W}(\z_E\cos(\theta)-\x_E\sin(\theta) )\| d\theta\right)^2\\
&=(\pi/2)^2\left(\int_0^{\pi/2} (2/\pi) \|\nabla f(\vec u(\theta))\|\|\Pi_{W}(\z_E\cos(\theta)-\x_E\sin(\theta) )\| d\theta\right)^2\\
&\leq (\pi/2)\int_0^{\pi/2}  \|\nabla f(\vec u(\theta))\|^2\|\Pi_{W}(\z_E\cos(\theta)-\x_E\sin(\theta) )\|^2 d\theta\;,
\end{align*}
where in the last inequality we used Jensen.
Taking the expectation and noticing that $\vec u(\theta)$ and $\z_E\cos(\theta)-\x_E\sin(\theta)$ are independent (since they are jointly normally distributed and uncorrelated), we get
\begin{align*}
    \E[(f(\x')-f(\x))^2]&\lesssim \int_0^{\pi/2} \E[ \|\nabla f(\vec u(\theta))\|^2\|\Pi_{W}(\z_E\cos(\theta)-\x_E\sin(\theta) )\|^2]\\
    &\leq \int_0^{\pi/2} \E[ \|\nabla f(\vec u(\theta))\|^2]\E[\|\Pi_{W}(\z_E\cos(\theta)-\x_E\sin(\theta) )\|^2]\\
    &\lesssim L \E_{\x\sim \cN_d}[\|\Pi_W\Pi_E\x\|^2]\;,
\end{align*}
Noticing that $\E_{\x\sim \cN_d}[\|\Pi_W\Pi_E\x\|^2]=
\|\Pi_W\Pi_E\|_F^2\leq (\mathrm{rank}(\Pi_W\Pi_E)\|\Pi_W\Pi_E\|_{2})^2\leq (K\delta)^2$ completes the proof of \Cref{cl:nabla2smallchange}.
\end{proof}
Hence, by \Cref{cl:nabla2smallchange}, we have that $\E[(f(\x)-f(\x'))^2]\lesssim (\eps/(CM))^2$.
Consider the random variable $y'$ supported on $\R$ that is distributed like $y$ for $\x'$, i.e., $D(y'=p\mid \x'=\z)=D(y=p\mid \x=\z)$, for all $\z\in \R^d,p\in\R$.

Note that $\E[(f(\x) - y')^2] = \E[(f(\x') - y)^2]$, which, similarly to before, can be shown—by expanding the square and applying Cauchy–Schwarz—to be less than $\zeta + \eps$ for a sufficiently large constant $C$.
As a result, we have that by applying part (i) of the statement to $y'$ there exists $\mathcal{T}\subseteq \mathcal{S}$ with $\sum_{S\in \mathcal{T}} \pr[S]\ge\alpha$ such that for each $S\in \mathcal{T}$ there exists a zero-mean, variance-one polynomial $p:U\to \R$ of degree at most $m$ along with an interval $I\in\mathcal{I}$ such that $\E_{\x,y'}[p(\x_U)\Ind(y'\in I)\mid \x \in S]>\sigma $, where $U=(V+W)\cap V^\perp$. However, we have that for all $S\in \mathcal{T}$ 
\begin{align*}
\E_{\x,y'}[p(\x_U)\Ind(y'\in I)\mid \x \in S]= \E[\E_{\x_{E} }[p(\x_U)\mid \x_{ E^\perp}]\E_{\x_{E}}[\Ind(y\in I)\mid \x_{ E^\perp}] \mid \x \in S]]\;. 
\end{align*}
Notice that $p'(\x_U)\eqdef \E_{\x_{E} }[p(\x_U)\mid \x_{ E^{\perp}}]$ is a mean-zero polynomial of degree at most $m$ with variance at most one by Jensen's inequality. 
Furthermore, as $p'$ is independent of $\x_{E}$ we have that $\nabla p'(\x_U)\cdot\bv=0$ for any $\bv\in E$ and $\x\in \R^d$.
\end{proof}

Before proceeding with the proof of \Cref{thm:MetaTheorem-Agnostic}, we establish the following proposition, which states that in each iteration, as long as the current subspace $V$ yields an insufficient approximation of the labels, it is possible to extract a direction that is correlated with the remaining subspace.

\begin{proposition}[Finding a Relevant Direction]\label{prop:MetaAlg2}
Let $d,k,m,K\in \Z_+$, $\delta,\alpha\in (0,1)$, $\eta,\eps,M,L>0$
and let $C>0$ be a sufficiently large universal constant.
Let $f:\R^d\to \R$ be a function from the class $\mathcal{F}(m,\zeta,\alpha,K,M,L,B,\eps^2/(CM),\tau,\sigma)$ and denote by $W\subseteq \R^d$ with $\dim(W)=K$ the hidden subspace of $f$.
Let  $D$ be a distribution over $\R^d\times \R$ whose $\x$-marginal is $\cN_d$ and let $\opt \eqdef \E_{(\x,y)\sim D}[(f(\x)-y)^2]$ with $\zeta\ge \opt +O(\eps)$.
Let $V\subseteq \R^d$ be a $k$-dimensional subspace and let $(\mathcal{S},\mathcal{I})$ be an $(\eps_1,\eps_2,B)$-approximating discretization of $V\times\R$,
with $\eps_1\leq \eps^2/(CLM^2\sqrt{k})$ and $\eps_2\leq \eps^2/(CM)$.
Additionally, let $h_{\mathcal{S}}$ be a piecewise constant approximation of $D$, with respect to $\mathcal{S}$. There exists $N=(dm)^{O(m)}(k/\eps_1)^{O(k)}\log(B/(\delta\eps_2))/\eta^{O(1)}$ such that, if  $\E_{(\bx,y)\sim D}[(h_{\mathcal{S}}(\bx)- y)^2]>(\sqrt{\tau+\eps}+\sqrt{\opt})^2$, then \Cref{alg:MetaAlg2} {given $N$} i.i.d.\ samples from $D$ and parameters 
$\eta\leq (\sigma\eps/(MB2^m))^C,\eps_1,\eps_2, \lambda=(\alpha\sigma\eps/(MBK2^m))^C$, runs in $\poly(N)$ time, and with probability at least $1-\delta$, returns a list of unit vectors $\mathcal E$ of size $|\mathcal E|=\poly(2^mKMB/(\alpha\sigma\eps\eps_2))$, such that: For some $\bv\in \mathcal E$ and unit vector $\bw\in W$ it holds that
    $
        \bw_{V^{\perp}} \cdot \bv\ge \poly(\eps\sigma\eps_2\alpha/(KLMB2^m))\;.
    $
\end{proposition}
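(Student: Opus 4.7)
The plan follows the three-step strategy outlined in the proof sketch of \Cref{prop:update}. First, I translate the hypothesis that $h_{\mathcal{S}}$ has large error into a statement that Condition (2b) of \Cref{def:agnosticMIMs} is triggered; second, I show that the cube-interval discretization is fine enough to detect a distinguishing low-degree moment via conditional polynomial regression; third, I use a spectral decomposition of $\widehat{\bU}$ to produce an eigenvector with significant projection onto $W$.

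For the first step, the $L^2$ triangle inequality converts the hypothesis $\E[(h_{\mathcal{S}}-y)^2]>(\sqrt{\tau+\eps}+\sqrt{\opt})^2$ into $\E[(h_{\mathcal{S}}-f)^2]>\tau+\eps$, and the contrapositive of \Cref{lem:piecewise-constant-suffices} then implies that no function on $V$ approximates $f$ to error below $\tau$. This triggers Condition (2b), and \Cref{lem:agnostic-partition} (applied with $E=\{0\}$) produces $\mathcal{T}\subseteq \mathcal{S}$ of total probability $\geq \alpha$ such that, on each $S\in \mathcal{T}$, there is an interval $I_S\in \mathcal{I}$ and a zero-mean unit-variance polynomial $p_S:U\to\R$ of degree at most $m$ with $\E[p_S(\x_U)\,\Ind(y\in I_S)\mid \x\in S]\geq \kappa := \poly(\sigma\eps/(MB3^m))$.

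For the second step, the crucial observation is that the conditional distribution of $\x_{V^\perp}$ given $\x_V\in S$ is standard Gaussian on $V^\perp$ (by independence under $\cN_d$), so $p_{S,I_S}$ is an $\eta$-approximate $L^2$-projection of $\Ind(y\in I_S)$ onto degree-$\leq m$ polynomials on $V^\perp$. By orthogonality of the projection together with Cauchy-Schwarz, $\E[p_S(\x_U)\,p_{S,I_S}(\x_{V^\perp})\mid \x\in S]\geq \kappa/2$ provided $\eta\ll \kappa$. I then decompose $p_{S,I_S}=q_U+q_\perp$ in the Hermite basis on $V^\perp$, where $q_U$ collects multi-indices supported in an orthonormal basis of $U$. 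Hermite-orthogonality of $p_S$ against $q_\perp$ forces $\E[q_U^2]\geq \kappa^2/4$, and since $q_U$ depends only on $\x_U$, its gradient lies in $U$. A Hermite-calculus computation shows $\E[\langle \nabla q_U,\nabla q_\perp\rangle]=0$ (their Hermite supports after a gradient shift remain disjoint), so Cauchy-Schwarz yields $\E[\|\Pi_U\nabla p_{S,I_S}\|^2\mid \x\in S]\geq \E[\|\nabla q_U\|^2]=\Omega(\kappa^2)$ via Gaussian Poincar\'e applied to the mean-zero part of $q_U$.

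For the third step, summing this bound over $S\in \mathcal{T}$ weighted by $\pr[S]$ yields $\mathrm{tr}(\Pi_U\widehat{\bU}\Pi_U)\geq \alpha\kappa^2/4=:C_1$, while $\mathrm{tr}(\widehat{\bU})\leq m$, using $\sum_I\E[p_{S,I}^2\mid \x\in S]\leq 1$ (by regression optimality against $\Ind(y\in I)\in\{0,1\}$) together with the standard Hermite bound $\E[\|\nabla q\|^2]\leq m\,\E[q^2]$ for degree-$\leq m$ polynomials. Setting $\lambda=C_1/(2K)$ ensures eigenvalues below $\lambda$ contribute at most $\lambda\dim(U)\leq C_1/2$ to the $U$-projected trace, so some $\bv\in\mathcal{E}$ satisfies $\|\Pi_U\bv\|^2\geq C_1/(2\mathrm{tr}(\widehat{\bU}))=\Omega(\alpha\kappa^2/m)$; the size bound $|\mathcal{E}|=O(mK/C_1)$ follows from $\lambda|\mathcal{E}|\leq \mathrm{tr}(\widehat{\bU})$. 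Converting this into the stated lower bound on $\bw_{V^\perp}\cdot\bv$ for some unit $\bw\in W$ uses the singular value decomposition of the operator $\Pi_{V^\perp}|_W:W\to U$ and a pigeonhole over its $\leq K$ singular vectors (choosing $\bw$ to be the right singular vector aligned with the largest singular component of $\Pi_U\bv$). The sample complexity $N=(dm)^{O(m)}(k/\eps_1)^{O(k)}\log(B/(\delta\eps_2))/\eta^{O(1)}$ follows by standard polynomial regression under Gaussian inputs (via hypercontractivity for degree-$\leq m$ polynomials), combined with a union bound over the $(k/\eps_1)^{O(k)}\cdot O(B/\eps_2)$ cube-interval pairs, inflated by the inverse conditional cube mass $\Omega(\eps_1^k)^{-1}$. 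I expect the main obstacle to be precisely this uniform concentration: ensuring $\eta$-accuracy for every empirical conditional regression polynomial inside cubes of exponentially small probability, while keeping $\eta$ small enough for the orthogonality argument to deliver the $\Omega(\kappa^2)$ lower bound on the $U$-projected trace.
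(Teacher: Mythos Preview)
Your steps 1–2 and the trace/eigenvalue portion of step 3 are essentially correct; in fact your bound $\mathrm{tr}(\widehat{\bU})\leq O(m)$ via $\sum_I\E[p_{S,I}^2\mid \x\in S]\lesssim 1$ is tighter than the paper's $\|\widehat{\bU}\|_F\leq m|\mathcal{I}|$. However, the final conversion has a genuine gap.

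You show that some eigenvector $\bv\in\mathcal{E}\subseteq V^\perp$ satisfies $\|\Pi_U\bv\|^2\geq\Omega(\alpha\kappa^2/m)$, and then propose ``SVD of $\Pi_{V^\perp}|_W$ plus pigeonhole'' to obtain a unit $\bw\in W$ with large $\bw_{V^\perp}\cdot\bv=\bw\cdot\bv$. But writing $\Pi_{V^\perp}\bw_j=\sigma_j\bu_j$ for the singular pairs, pigeonhole on $\sum_j(\bv\cdot\bu_j)^2$ only yields $\bw_{V^\perp}\cdot\bv=\sigma_j(\bu_j\cdot\bv)$, and nothing in the hypotheses bounds $\sigma_j$ away from zero. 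Concretely, when $V$ is nearly aligned with part of $W$ some $\sigma_j$ is tiny, and the corresponding $\bu_j\in U$ has arbitrarily small projection onto $W$; in the agnostic setting the regression polynomials $p_{S,I}$ (which approximate $\E[\Ind(y\in I)\mid\x_{V^\perp},\x_V\in S]$, a function that need not depend only on $\x_W$) can put the spectral mass of $\widehat{\bU}$ exactly there. So $\|\Pi_U\bv\|$ large does not imply $\|\Pi_W\bv\|$ large.

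The paper closes this gap with a self-referential device you discarded by taking $E=\{0\}$. Define $E\eqdef\mathrm{span}\{\bv\in\mathcal{E}:\|\bv_W\|\leq\rho/\sqrt{|\mathcal{E}|}\}$, the span of the ``bad'' output eigenvectors; every unit vector in $E$ then has $W$-projection at most $\rho$, so \Cref{lem:agnostic-partition} applies with this $E$ and delivers distinguishing polynomials $q_S$ whose gradient vanishes along $E$ (this is precisely part (ii), which you did not use). The moment therefore lives in $U$ projected onto $E^\perp$, and running the eigenvector argument on that smaller space produces some $\bv\in\mathcal{E}$ with nonzero inner product against a direction in $E^\perp$. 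Since the bad eigenvectors span $E$, this $\bv$ cannot be one of them, hence $\|\bv_W\|>\rho/\sqrt{|\mathcal{E}|}$ directly---with no singular-value loss.
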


\begin{proof}[Proof of \Cref{prop:MetaAlg2}]
Let $\vec{w}^{(1)}, \dots, \vec{w}^{(K)}$ be an orthonormal basis for the subspace $W$. For each pair $S \in \mathcal{S}$ and $I \in \mathcal{I}$, let $p_{S,I}$ denote the regression polynomial of degree at most $m$ computed at Line~\ref{line:regression} of \Cref{alg:MetaAlg2}. Further, let $\widehat{\vec{U}}$ be the matrix computed at Line~\ref{line:matrixMeta} of \Cref{alg:MetaAlg2}, and let $\eta^2$ denote the $L_2^2$ error chosen in the polynomial regression step (see Line~\ref{line:regression} of \Cref{alg:MetaAlg2}).

We begin by proving the following general lemma, which states that if a fraction of the discretization cubes exhibit non-trivial moments, then—with a sufficiently large number of samples—the output set $\mathcal{E}$ will be non-empty and will contain only a small number of vectors.
\begin{lemma}[Existence of Correlating Vectors]\label{lem:cEnonempty}
Let $(\mathcal{S},\mathcal{I})$ be an $(\eps_1,\eps_2,B)$-approximating discretization of $V\times\R$.
    Assume that there exists a subset $\mathcal{T}\subseteq \mathcal{S}$ with $\sum_{S\in \mathcal{T}}\pr[S]\ge \alpha$, such that for each $S\in \mathcal{T}$ there exists an interval $I\in \mathcal{I}$ and a zero-mean, variance-one  polynomial $q_S:U\to \R$ of degree at most $m$  such that $\E_{(\bx,y)\sim D}[q_S(\bx_U)\Ind(y\in I)\mid \bx \in S]>\sigma\geq 2\eta $ for some subspace $U$ of $V^{\perp}$.
    Furthermore, assume that the eigenvalue threshold $\lambda=(\alpha\sigma/K)^C$, for some sufficiently large universal constant $C>0$.
    Then, there exists $N=(dm)^{O(m)}{(k/\eps_1)^{O(k)}}\log(\abs{\cI}/\delta)/{\eta^{O(1)}} $ such that with probability at least $1-\delta$ the output set $\mathcal{E}$  has cardinality at most $|\mathcal E|=\poly(mK/(\alpha\sigma\abs{\cI}))$ and contains at least one vector $\vec v\in \mathcal{E}$ such that $\vec u\cdot \vec v\geq \poly(\alpha\sigma\abs{\cI}/(mK)$ for some $\vec u\in U$.
\end{lemma}
\begin{proof}[Proof of \Cref{lem:cEnonempty}]
    Let $\bu^{(1)}, \dots, \bu^{(k')}$ denote an orthonormal basis of the subspace $U$. We prove the lemma in two stages. First, we analyze each cube $S$ that exhibits non-trivial moments by evaluating the quadratic forms of its influence matrix
$
\bM_{S,I} \eqdef \E_{\bx \sim D_{\bx}} \left[ \nabla p_{S,I}(\bx_{V^{\perp}}) \nabla p_{S,I}(\bx_{V^{\perp}})^\top \mid \bx \in S \right] $
    on the vectors $\bu^{(i)}$. 
    Then, we extend the analysis by averaging over all such cubes and examining the eigenvectors of the aggregated influence matrix, $\widehat{\vec U}$.

In the following claim, we leverage the existence of non-trivial  moments over $U$ for the regions  $S\in \mathcal{T}$, to show that if the sample size is sufficiently large then for each region $S\in \mathcal{T}$, there exists an interval $I\in \mathcal{I}$, such that the associated influence matrix, $\bM_{S,I}$,  has large quadratic form  for at least one of the  $\bu^{(i)}$'s.

\begin{claim}[Quadratic form of the Influence Matrix]
\label{it:corr-vec-existance} 
Let $C>0$ be  a sufficiently large universal constant.
Fix, $S\in \mathcal{T}$. 
If the number of samples that fall in $S$ is $N_S\ge (dm)^{Cm}\log(1/(\delta\eps_2))/\eta^{C}$,  then with probability at least $1-\delta$ there exists $i\in [k'] $ and $ I\in \mathcal{I}$ such that 
$(\bu^{(i)})^{\top}\bM_{S,I}\bu^{(i)}\ge \sigma^2/(2K)\;.$
\end{claim} 
\begin{proof}[Proof of \Cref{it:corr-vec-existance}]
Notice that membership of a point $\x$ in a cube $S$ depends solely on its projection onto $V$, i.e., on $\x_V$. Therefore, the error guarantee obtained in the polynomial regression step  $\E_{(\bx, y)\sim D}[(\Ind(y\in I) -p_{S,I}(\bx_{V^{\perp}}))^2{\mid} \bx \in S]{\le} \min_{p'\in \mathcal{P}_m} \E_{(\bx, y)\sim D}[(\Ind(y\in I) -p'(\bx_{V^{\perp}}))^2{\mid} \bx \in S]+\eta^2$ is equivalent to  
$\E_{(\bx, y)\sim D^S_{V^{\perp}}}[(\Ind(y\in I) -p_{S,I}(\bx))^2]{\le} \min_{p'\in \mathcal{P}_m} \E_{(\bx, y)\sim D^S_{V^{\perp}}}[(\Ind(y\in I) -p'(\bx))^2 ]+\eta^2$, where $D_{V^\perp}^S$ is  the marginal obtained by averaging $D$ over $V$ conditioned on  $S$, i.e., $D^S_{V^{\perp}}(\x_{V^{\perp}},y)=\E_{\x_V}[D(\x,y)\mid \x \in S]$.

Moreover, by the properties of the Gaussian distribution, we have that the $\x$-marginal of $D^S_{V^\perp}$ is a standard Gaussian.
Therefore, since $\abs{\mathcal{I}}=\poly(1/\eps_2)$ if $N_S\ge (dm)^{Cm}\log(1/(\delta\eps_2))/\eta^{C}$ for a sufficiently large universal constant $C>0$, then by applying the union bound and \Cref{fact:regressionAlg}, we have that with probability at least $1-\delta$:
\[\E_{(\x, y)\sim D^S_{V^{\perp}}}[(\Ind(y\in I) -p_{S,I}(\x))^2]\le \min_{p'\in \mathcal{P}_m} \E_{(\x, y)\sim D^S_{V^{\perp}}}[(\Ind(y\in I) -p'(\x))^2 ]+\eta^2\;,\]
for all $I\in \mathcal{I}$.
Furthermore, by the orthogonality of Hermite polynomials, we have that 
\begin{align*}
  &\E_{(\x, y)\sim D^S_{V^{\perp}}}[(\Ind(y\in I) -p_{S,I}(\x))^2]\\
  &=\sum_{\beta \subseteq \N^d} ( \E_{(\x,y)\sim D^S_{V^{\perp}}}[H_{\beta}( \x)\Ind(y\in I)] - \E_{(\x,y)\sim D^S_{V^{\perp}}}[H_{\beta}( \x)p_{S,I}(\x)])^2\;.
\end{align*} 
In particular, if we decompose the error into its Hermite polynomial components of degree $t$, we have that $\sum_{t=1}^m\E_{(\x, y)\sim D^S_{V^{\perp}}}[(\Ind(y\in I)^{[t]} -p_{S,I}^{[t]}(\x))^2]  \le \eta^2$.

{From the rotational invariance of the Gaussian without loss of generality,  we can denote by $\e_1,\dots,\e_{k'}$ the orthonormal vectors $\bu^{(1)},\dots,\bu^{(k')}$.}
Furthermore, since $U$ is a subset of $V^\perp$ we similarly have that  $\E_{(\bx,y)\sim D^S_{V^{\perp}}}[q_S(\bx_U)\Ind(y\in I)]>\sigma$ for some $I\in\mathcal{I}$.
Decomposing $q_S(\bx_{U})$ to the basis of Hermite polynomials we have that $q_S(\bx_{U})= \sum_{\beta \in \N^{d'}, 1\le \norm{\beta}_1\le m}\hat{q}_S(\beta)H_\beta(\bx_U)$ where $d'=\dim(U)$.
Moreover, note that since $q_S$ has no component outside $\e_1,\dots, \e_{k'}$ we have that $q_S(\bx_{U})=\sum_{\beta \in J}\hat{q}_S(\beta)H_\beta(\bx_U)$, where $J$ denotes the set of $\beta \in \N^{d'}$ such that $\beta_i\ge 1$ for some $i\in [k']$.
Considering the correlation of $q_S$ with the label interval $I$, we have that 
\begin{align*}
    \E_{(\bx,y)\sim D^S_{V^{\perp}}}[q_S(\bx_U)\Ind(y\in I)]&= \sum_{\beta\in J}\hat{q}_S(\beta)\E_{(\bx,y)\sim D^S_{V^{\perp}}}[H_\beta(\bx_U)\Ind(y\in I)]\\&\le \left(\sum_{\beta\in J}\E_{(\bx,y)\sim D^S_{V^{\perp}}}[H_\beta(\bx_U)\Ind(y\in I)]^2\right)^{1/2}\;,
\end{align*}
where we used the fact that $\|q_S\|_2=1$ and the Cauchy-Schwarz inequality. Hence, we have that the sum of the squares of the Hermite coefficients for of the degree $m$ restriction the random variable $\Ind(y\in I)$ is at least $\sigma^2$.

Now evaluating the quadratic form of {$\bM_{S,I}$}  using \Cref{fact:gradientNorm} gives us
\begin{align*}
\sum_{i=1}^{k'}\e_i^\top{\bM_{S,I}} \e_i&= \sum_{i=1}^{k'}\E_{(\bx,y)\sim D^S_{V^{\perp}}}[(\nabla p_{S,z}(\bx)\cdot\e_i )^2]\ge \sum_{\beta \in \N^d} (\sum_{i=1}^{k'} \beta_i) (\hat{p}_{S,z}(\beta))^2\\
&\ge \sum_{\beta \in J} \E_{(\bx,y)\sim D^S_{V^{\perp}}}[H_\beta(\bx_U)\Ind(y\in I)]^2- 2\eta  \\
&\ge \sigma^2- 2\eta \ge \sigma^2/2
\end{align*}
for $\eta\le \sigma/2$. Thus, since $\dim(U)\le K$ we have that $\e_j^{\top} {\bM_{S,I}}\e_j\ge \sigma^2/(2K)$ for some $j \in [k']$. This concludes the proof of \Cref{it:corr-vec-existance}.
\end{proof}

Now note that by the definition of an approximating partition (see \Cref{def:approximatingPartition}) and \Cref{fact:gaussianfacts}, for all $S\in \mathcal{S}$ we have that   $\pr_{D}[S ]{= (\eps_1/k)^{\Omega(k)}}$, hence $\abs{\mathcal{S}}{= ( k/\eps_1)^{O(k)}}$.
Therefore, by union bound and Hoeffding's inequality, it holds that if $N\ge {(k/\eps_1)^{Ck}}\log(\abs{\mathcal{S}}/\delta)= {(k/\eps_1)^{O(k)}}\log(1/\delta) $ for a sufficiently large constant $C>0$, then with probability at least $1-\delta$  we have that  $\abs{\pr_{\widehat{D}}[S]-\pr_{D}[S]}\le \pr_{D}[S]/2$ for all $S\in \mathcal{S}$.
Hence, it is  true that  $\pr_{\widehat{D}}[S]\ge \pr_{D}[S]/2$, i.e., the number of samples that fall in each set $S\in \mathcal{S}$ is at least $N\pr_{D}[S]/2= N{(\eps_1/k)^{\Omega(k)}}$.

Therefore, for $N\geq (dm)^{Cm}{(k/\eps_1)^{Ck}}\log(\abs{\cI}/\delta)/{\eta^{C}}$ for a sufficiently large universal constant $C>0$, by applying \Cref{it:corr-vec-existance}, we have that  for all $S\in \mathcal{T}$ it holds that ${(\bu^{(i)})^{\top}\bM_{S,I}\bu^{(i)}\ge \sigma^2/(2K)}$ for some {$i\in [k'], I\in\mathcal{I}$}.
Hence, {since $k'\le K$,} we have that there exists a subset $\mathcal{T}'\subseteq \mathcal{T}$ and $i\in [k']$ with $\sum_{S\in \mathcal{T'}}\pr_{\widehat{D}}[S]\ge \sum_{S\in \mathcal{T'}}\pr_{D}[S]/2= \Omega(\alpha/K)$ such that for all $S\in \mathcal{T'}$ we have that  {${(\bu^{(i)})^{\top}\bM_{S,I}\bu^{(i)}\ge \sigma^2/(2K)}$ for some $I \in \mathcal{I}$}.

Thus, for some $i \in [K]$ we have that 
\[(\bu^{(i)})^{\top}\widehat{\bU}\bu^{(i)} \gtrsim \frac{\alpha}{K}{\sigma^2/(2K)}\ge{\poly(\alpha\sigma/K)}\;,\]
where we used the fact that $\widehat{\bU}$ is PSD.

{Moreover, note that from \Cref{fact:regressionSubspaceBound} we have that $\tr(\bM_{S,I})=O(m)$. Therefore, we have that $\norm{\bM_{S,I}}_{F}\le \E[\norm{\nabla p_{S,i}}^2 ]=\tr(\bM_{S,I})$. As a result, by the triangle inequality we can see that 
\[\norm{\widehat{\bU}}_{F}\le \sum_{ S\in \mathcal{S},I\in\mathcal{I}}\norm{\bM_{S,I}}_F\pr_{\widehat{D}}[S] \le O(m)\sum_{ S\in \mathcal{S},I\in\mathcal{I}}\pr_{\widehat{D}}[S]\le m\abs{\mathcal{I}} \;.\]
} 

Hence, by \Cref{cl:coorelationofeigenvalues}, 
we have that there exists a unit eigenvector $\bv$ of $\widehat{\bU}$ with eigenvalue at least ${\poly(\alpha\sigma/K)}$ for a sufficiently small polynomial, such that  
$\bu^{(i)}\cdot \bv\ge {\poly(\alpha\sigma/(mK\abs{\cI})}$
for some $i\in [K]$. 
Moreover, the number of such eigenvectors, $\abs{\mathcal{E}}$, is at most {$\poly(mK\abs{\cI}/(\alpha\sigma))$}. Which concludes the proof of \Cref{lem:cEnonempty}.
\end{proof}

We now leverage the assumption that $h_{\mathcal{S}}$ exhibits large error,
along with the previously established lemmata, to complete the proof.

Define the subspace 
$E\eqdef \mathrm{span}(\{\bv\in \mathcal{E}:\norm{\bv_{W}}\le \rho/\sqrt{\abs{\mathcal{E}}}\})$, where $\rho\eqdef \eps^2/(CKLM)$.
Notice that $\norm{\bv_{W}}\le \rho$ holds for every unit vector $\bv\in E$.
Also denote by $U\eqdef W_{V^{\perp}}$.

Since $\E_{(\bx,y)\sim D}[(h_{\mathcal{S}}(\bx)-y)^2]>(\sqrt{\tau+\eps}+\sqrt{\opt})^2$ and $f$ is assumed to be in the class 
$\mathcal{F}(m,\opt +\eps,\alpha,K,M,L,B,\eps^2/(CM),\tau,\sigma)$ we 
have that \Cref{lem:piecewise-constant-suffices,lem:agnostic-partition} 
together imply that 
there exists a subset $\mathcal{T}\subseteq \mathcal{S}$ with
$\sum_{S\in \mathcal{T}}\pr[S]\ge \alpha$, such that for each
$S\in \mathcal{T}$ there exists an interval $I\in \mathcal{I}$
and a zero-mean, variance-one  polynomial $q_S:U\to \R$ of degree
at most $m$  such that 
$\E_{(\bx,y)\sim D}[q_S(\bx_U)\Ind(y\in I)\mid \bx \in S]>t$ for some $t=\poly(\sigma\eps/(MB2^m))$ and $\nabla_{E} q_S(\bx_{U})=0$.

Denote by $\bu^{(1)},\dots,\bu^{(k')}$ an orthonormal basis of the space   $U$ projected onto $E^{\perp}$. We have that $k'\ge 1$, otherwise we would not have the existence of $q_S$ that is defined over this space. 

Finally, note that for $C$ a sufficiently large constant $\eta \leq t/2$ 
and also $\lambda=(t\alpha/K)^C$.
Therefore, since  $N=(dm)^{O(m)}(k/\eps_1)^{O(k)}\log(B/(\delta\eps_2))/\eta^{O(1)}$ applying \Cref{lem:cEnonempty} for the space $U$ projected onto $E^{\perp}$, we have that $|\mathcal E|=\poly(2^mKMB/(\alpha\sigma\eps\eps_2))$ and 
there exists at least one vector $\vec v\in \mathcal{E}$ such that $\bu^{(i)}\cdot \vec v\neq 0$
for some $i\in [K]$.
However, note that since $\bu^{(i)}\cdot \bv\neq 0$, we have that $\bv$ can not belong in  $E$. Thus there exists a unit vector $\bw\in W$ such that $\abs{\bv\cdot \bw}\ge \rho/\sqrt{\mathcal{E}}\ge  \poly(\eps\sigma\eps_2\alpha/(KLMB2^m))$
which completes the proof of \Cref{prop:MetaAlg2}.
\end{proof}

\subsubsection{Proof of \Cref{thm:MetaTheorem-Agnostic}} \label{sec:meta-thm-proof}
In this section, given \Cref{prop:MetaAlg2}, we proceed to the proof of  \Cref{thm:MetaTheorem-Agnostic}. {Recall that, in \Cref{prop:MetaAlg2}, we have shown that if our current approximation to the hidden subspace is not accurate enough to produce a function that has sufficiently small error, then \Cref{alg:MetaAlg2} efficiently finds a direction that has non-trivial correlation to the hidden subspace. In the proof that follows, we iteratively apply this argument to show that, after a moderate number of iterations, \Cref{alg:MetaAlg1} outputs a function with sufficiently small error. }
\begin{proof}[Proof of \Cref{thm:MetaTheorem-Agnostic}]
Denote by $W^*$ the $K$-dimensional subspace defining $f$. 
We show that \Cref{alg:MetaAlg1}, with high probability, returns a hypothesis $h$ with $L_2^2$ error at most $(\sqrt{\tau+\eps}+\sqrt{\opt})^2+\eps$. Denote by $\bw^{*(1)},\ldots,\bw^{*(K)} \in \mathbb{R}^d$ an orthonormal basis of $W^*$.
Let $L_t$ be the list of vectors updated by the algorithm (Line \ref{line:updateMeta} of \Cref{alg:MetaAlg1}) and $V_t=\mathrm{span}(L_t)$, $\dim(V_t)=k_t$.
Also,  let $\mathcal{S}_t$ for $t\in [T]$ be arbitrary $\eps_1$-approximating partitions of $V_t$ where $\eps_1$ the value set at Line \ref{line:initMeta}.
Let $h_t:\mathbb{R}^d\to [K]$ be a piecewise constant functions, defined as $h_t= h_{\mathcal{S}_{t}}$ according to \Cref{def:h} for the distribution $D$.

To prove the correctness of \Cref{alg:MetaAlg1}, we need to show that if $h_t$ has significant error, then the algorithm improves its approximation $V_t$ to $W^*$.
For quantifying the improvement at every step, we consider the following potential function
$
\Phi_t = \sum_{i=1}^{K}\norm{\bw^{*(i)}_{ V_t^\perp}}^2\;.
$

We will use the following fact that quantifies how much adding a correlating direction decreases $\Phi_t$.
\begin{fact}
    [Potential Decrease, e.g., see Claim 2.16 \cite{diakonikolas2025robustlearningmultiindexmodels} ]\label{it:potentialDecrease}Let $\beta \ge 0$. If there exists a unit vectors $\bv^{(t)}\in   V_{t+1}$ and $\bw\in W^*$ such that
    $\bw\cdot \bv^{(t)}_{ V_t^\perp}\ge   \beta\;,$ 
    then $\Phi_{t+1}\le \Phi_t-\beta^2$.
\end{fact}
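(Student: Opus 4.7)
The plan is to observe that because $V_t\subseteq V_{t+1}$ (which holds since the algorithm sets $L_{t+1}=L_t\cup \mathcal{E}_t$), the potential drop $\Phi_t-\Phi_{t+1}$ equals exactly the total squared length of $W^*$ projected onto the ``new'' directions $V_{t+1}\cap V_t^\perp$, and then to exhibit a single unit vector in this new space whose Rayleigh quotient already contributes $\beta^2$.

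First, I would use the identity $\Pi_{V_t^\perp}-\Pi_{V_{t+1}^\perp}=\Pi_{V_{t+1}}-\Pi_{V_t}=\Pi_{V_{t+1}\cap V_t^\perp}$, which follows from $V_t\subseteq V_{t+1}$ via the orthogonal decomposition $V_{t+1}=V_t\oplus(V_{t+1}\cap V_t^\perp)$. Writing $\|\bw^{*(i)}_{V^\perp}\|^2=(\bw^{*(i)})^\top\Pi_{V^\perp}\bw^{*(i)}$ for each of $V=V_t$ and $V=V_{t+1}$ and summing over $i\in[K]$ gives
\[
\Phi_t-\Phi_{t+1}=\sum_{i=1}^K \|\Pi_{V_{t+1}\cap V_t^\perp}\bw^{*(i)}\|^2=\tr\bigl(\Pi_{W^*}\Pi_{V_{t+1}\cap V_t^\perp}\bigr),
\]
using $\sum_i \bw^{*(i)}(\bw^{*(i)})^\top=\Pi_{W^*}$ from orthonormality. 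Note that this expression is independent of the choice of orthonormal basis for $W^*$.

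Next, I would produce the witness. Set $\bu\eqdef \bv^{(t)}_{V_t^\perp}=\Pi_{V_t^\perp}\bv^{(t)}$. Since $\bv^{(t)}\in V_{t+1}$ and $V_t\subseteq V_{t+1}$, the vector $\bu=\bv^{(t)}-\Pi_{V_t}\bv^{(t)}$ lies in $V_{t+1}$ as well; by construction it also lies in $V_t^\perp$, so $\bu\in V_{t+1}\cap V_t^\perp$. The hypothesis $\bw\cdot \bu\ge\beta$ together with $\|\bw\|=1$ forces $\|\bu\|\ge\beta$, and because $\bu$ is an orthogonal projection of the unit vector $\bv^{(t)}$ we have $\|\bu\|\le 1$.

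Finally, since $\bu/\|\bu\|$ is a unit vector in $V_{t+1}\cap V_t^\perp$, the trace of the PSD product $\Pi_{W^*}\Pi_{V_{t+1}\cap V_t^\perp}$ is at least a single Rayleigh quotient, giving
\[
\tr\bigl(\Pi_{W^*}\Pi_{V_{t+1}\cap V_t^\perp}\bigr)\ge \frac{\bu^\top\Pi_{W^*}\bu}{\|\bu\|^2}\ge\frac{(\bw\cdot\bu)^2}{\|\bu\|^2}\ge\frac{\beta^2}{1}=\beta^2,
\]
where the middle step uses $\|\Pi_{W^*}\bu\|^2\ge(\bw\cdot\bu)^2$ for the unit vector $\bw\in W^*$, and the last uses $\|\bu\|\le 1$. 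Combining with the first step yields $\Phi_{t+1}\le\Phi_t-\beta^2$. The argument is essentially mechanical; the only subtle point worth checking is $\bu\in V_{t+1}$, which relies on the monotonicity $V_t\subseteq V_{t+1}$ built into the algorithm's subspace updates, and the basis-independence of $\Phi$ that lets us express its drop as a trace of a product of projections.
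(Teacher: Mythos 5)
Your proof is correct. A minor remark: the paper itself does not supply a proof here, it simply cites Claim 2.16 of \cite{diakonikolas2025robustlearningmultiindexmodels}, so there is no in-paper argument to compare against. Your argument is the natural one and is sound: the identity $\Pi_{V_t^\perp}-\Pi_{V_{t+1}^\perp}=\Pi_{V_{t+1}\cap V_t^\perp}$ (valid because the algorithm enforces $V_t\subseteq V_{t+1}$) rewrites the potential drop as $\tr(\Pi_{W^*}\Pi_{V_{t+1}\cap V_t^\perp})=\tr(\Pi_{V_{t+1}\cap V_t^\perp}\Pi_{W^*}\Pi_{V_{t+1}\cap V_t^\perp})\ge 0$, and testing this PSD matrix against the normalized witness $\bu/\|\bu\|\in V_{t+1}\cap V_t^\perp$ gives the lower bound $(\bw\cdot\bu)^2/\|\bu\|^2\ge\beta^2$ using $\|\bu\|\le 1$. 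The only cosmetic gap is the degenerate case $\|\bu\|=0$: there the division is undefined, but then $\bw\cdot\bu=0\ge\beta$ forces $\beta=0$ and the desired inequality $\Phi_{t+1}\le\Phi_t$ is already implied by the nonnegativity of the trace in your first step, so the argument goes through with this one-line note added. You could also avoid the trace language entirely by extending $\bu/\|\bu\|$ to an orthonormal basis of $V_{t+1}\cap V_t^\perp$ and lower-bounding $\sum_i\|\Pi_{V_{t+1}\cap V_t^\perp}\bw^{*(i)}\|^2\ge\sum_i(\bu\cdot\bw^{*(i)})^2/\|\bu\|^2=\|\Pi_{W^*}\bu\|^2/\|\bu\|^2$, which is the same computation phrased more elementarily.
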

We next prove the following claim which shows that the error of the functions $h_t$ decreases as we add more vectors.
\begin{claim}[Error Decrease]\label{it:ErrorDecrease} For each $t\in[T]$, it holds  $\E[(h_{t+1}(\bx) - y)^2]\le  \E[(h_{t}(\bx)- y)^2]$. 
\end{claim}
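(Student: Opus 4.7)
The plan is to reduce the claim to the standard fact that conditioning on a finer $\sigma$-algebra cannot decrease the $L_2$-norm of the conditional expectation of $y$. A short computation shows that for the piecewise-constant approximation $h_t$ of \Cref{def:h}, one has $\E[h_t(\bx)\,y] = \E[h_t^2(\bx)]$: indeed, conditioning on each cube $S\in\mathcal{S}_t$ gives $\E[h_t(\bx)\,y\mid\bx\in S]=\E[y\mid\bx\in S]^2$, and $h_t\equiv 0$ outside the coverage region of $\mathcal{S}_t$ kills the remaining contribution. Consequently, $\E[(h_t(\bx)-y)^2] = \E[y^2]-\E[h_t^2(\bx)]$, and the claim reduces to the monotonicity $\E[h_{t+1}^2(\bx)] \ge \E[h_t^2(\bx)]$.

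To establish this monotonicity, I would exploit the fact that $V_t\subseteq V_{t+1}$ to choose the orthonormal basis of $V_{t+1}$ to extend that of $V_t$ and use a consistent grid shift in \Cref{def:approximatingPartition}. Under this choice, each cube $S\in\mathcal{S}_t$ decomposes (up to a region of small Gaussian mass coming from the ``new'' coordinates in $V_{t+1}\setminus V_t$) as a disjoint union of cubes $S'\in\mathcal{S}_{t+1}$ whose projection onto $V_t$ equals $S$. Applying the conditional Jensen inequality to the tower identity $\E[y\mid\bx\in S]=\sum_{S'\subseteq S}\pr[S'\mid S]\,\E[y\mid\bx\in S']$ gives $\pr[S]\,\E[y\mid S]^2 \le \sum_{S'\subseteq S}\pr[S']\,\E[y\mid S']^2$. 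Summing over $S\in\mathcal{S}_t$ and adding the non-negative contributions from the cubes of $\mathcal{S}_{t+1}$ lying outside the $V_t$-coverage region yields $\E[h_t^2]\le\E[h_{t+1}^2]$, as desired.

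The main subtlety, and the part I expect to require the most care, is the mismatch between the coverage regions of $\mathcal{S}_t$ and $\mathcal{S}_{t+1}$: the threshold $\sqrt{2\log(k/\eps)}$ in \Cref{def:approximatingPartition} grows with the subspace dimension $k$, so the coverage of $\mathcal{S}_{t+1}$ neither contains nor is contained in that of $\mathcal{S}_t$. The symmetric-difference region has Gaussian mass $O(\eps)$, and since $\E[f^2]\le M$ and $\opt$ is bounded (so $\E[y^2]$ is bounded), its contribution to both $\E[h_t^2]$ and $\E[h_{t+1}^2]$ is $O(\eps)$. This slack can be absorbed into the analysis by fixing the coverage thresholds consistently across iterations (for instance, using the common conservative bound $\sqrt{2\log(K/\eps)}$ throughout), which yields exact monotonicity of $\E[h_t^2]$ and hence the stated inequality.
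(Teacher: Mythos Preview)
Your approach is essentially the same as the paper's: both align the orthonormal bases and grid points so that $\mathcal{S}_{t+1}$ refines $\mathcal{S}_t$, and then use the minimizing property of conditional expectation—the paper writes this directly as $\E[(\E[y\mid S_i]-y)^2\mid S_i]\le\E[(c-y)^2\mid S_i]$ for any constant $c$, while you reach the same conclusion via the identity $\E[(h_t-y)^2]=\E[y^2]-\E[h_t^2]$ together with Jensen on the tower relation. The coverage-region subtlety you flag is treated in the paper in the same informal way (it simply asserts that with aligned thresholds ``$\mathcal{S}_{t+1}$ is a subdivision of $\mathcal{S}_t$''), so your treatment is, if anything, slightly more explicit; one small correction is that even with a common threshold the new coordinate constraints still trim the covered region, so exact nesting does not literally hold, but this residual is the same harmless $O(\eps)$ slack present in the paper's argument.
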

\begin{proof}
{Since the statement of \Cref{prop:MetaAlg2} holds for any $\eps_1$-partitions $\mathcal{S}_t$, $t\in [T]$, independent of the choice of basis and threshold points,}
{we can assume that all the approximations $h_t$} are computed with respect to extensions of a common orthonormal basis and that all threshold points are aligned.
Hence, $\mathcal{S}_{t+1}$ is a subdivision of $\mathcal{S}_t$.
Thus, each set $S\in \mathcal{S}_t$ can be written as a union of sets $S_1,\dots,S_l\in \mathcal{S}_{t+1}$, i.e., $S=\cup_{i=1}^l S_i$. 

By definition, for any set $S\in \mathcal{S}_t$ we have 
\(
h_t(\bx)= \E_{(\bx,y)\sim D}[y\mid \bx\in S] \quad \text{for } \bx\in S,
\)
and similarly, for any $S_i\in \mathcal{S}_{t+1}$,
\(
h_{t+1}(\bx)= \E_{(\bx,y)\sim D}[y\mid \bx\in S_i] \quad \text{for } \bx\in S_i.
\)
Thus, for any $S\in \mathcal{S}_t$,
\begin{align*}
\E_{(\bx,y)\sim D}[(h_{t+1}(\bx)-y)^2,\,\bx\in S] 
&=\sum_{i=1}^l \Pr[S_i]\,\E_{(\bx,y)\sim D}[(h_{t+1}(\bx)-y)^2 \mid \bx\in S_i]\\[1mm]
&=\sum_{i=1}^l \Pr[S_i]\,\E_{(\bx,y)\sim D}\Bigl[\Bigl(\E_{(\bx,y)\sim D}[y\mid \bx\in S_i]-y\Bigr)^2 \,\Big|\, \bx\in S_i\Bigr]\\
&\leq \sum_{i=1}^l \Pr[S_i]\,\E_{(\bx,y)\sim D}\Bigl[\Bigl(\E_{(\bx,y)\sim D}[y\mid \bx\in S]-y\Bigr)^2 \,\Big|\, \bx\in S_i\Bigr]\\
&=\E_{(\bx,y)\sim D}[(h_{t}(\bx)-y)^2,\,\bx\in S]\;,
\end{align*}
where we used the fact that $\E_{x\sim P}[(x-\E[x])^2]\leq \E_{x\sim P}[(x-z)^2]$ for any $z\in \R$ and distribution $P$ supported on $\R$.
Summing over all $S\in \mathcal{S}_t$, completes the proof of \Cref{it:ErrorDecrease}.
\end{proof}

Finally, we prove the following claim which 
shows that the population piecewise constant 
functions are close to the corresponding 
empirical ones.
\begin{claim}[Concentration of Piecewise Constant Approximation]\label{cl:hConcetration}
Let $\eps, \eps', \delta\in (0,1)$ and $k,K\in \Z_+$ with $\eps' \le \eps/2$. Let $V\subseteq \R^d$ be a $k$-dimensional subspace, and consider a piecewise constant {approximation}  $h:\R^d\to [K]$ {of $D$,}
with respect to an $\eps'$-approximating 
partition of $V$. Let $\widehat{D}$ be the empirical distribution obtained from $N$ i.i.d.\ samples drawn from $D$, and let $\widehat{h}$ be a piecewise constant {approximation of $\widehat{D}$} defined with respect to the same partition.
If $\widehat{h}$ is computed using the median of means estimator (\Cref{fact:medianofmeans}) 
for each $S\in \mathcal{S}$,
then there exists $N= (k/\eps')^{O(k)}(M/\eps)^{O(1)}\log(1/\delta)$ 
such that, with probability at least $1-\delta$, we have
$\E[(\widehat{h}(\bx)- y)^2]\le \E[(h(\bx)- y)^2]+\eps \;.$
\end{claim}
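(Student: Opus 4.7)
The plan is to decompose the squared error difference cube-by-cube via a Pythagorean identity and then apply median-of-means concentration uniformly over cubes. Specifically, because $h_S := \E[y \mid \x \in S]$ is the $L^2$-optimal constant on each cube $S \in \cS$ and both $h$ and $\hat h$ vanish outside $\bigcup_{S \in \cS} S$, the cross term $\E[(\hat h(\x) - h(\x))(h(\x) - y)]$ vanishes cube-by-cube (on each cube $\hat h - h$ is constant while $\E[h - y \mid \x \in S] = 0$), yielding the identity
\begin{equation*}
\E_{(\x,y)\sim D}[(\hat h(\x) - y)^2] - \E_{(\x,y)\sim D}[(h(\x) - y)^2] \;=\; \sum_{S \in \cS} \Pr_D[\x \in S] \, (\hat h_S - h_S)^2,
\end{equation*}
where $\hat h_S$ is the median-of-means estimate on $S$. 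Hence it suffices to bound this weighted sum by $\eps$ with probability at least $1 - \delta$.

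As a preprocessing step I truncate each label at $B' = \poly(M/\eps)$, i.e., replace $y$ by $y_{B'} := \sgn(y)\min(|y|, B')$. Because $\E[y^2] = O(M + \opt) = O(M)$ and $f$ is $\rho$-close to a bounded function, a standard tail bound shows the truncation perturbs both the population and the empirical target error by at most $\eps/3$. After this reduction I may assume $|y| \le B'$ everywhere, so $\Var(y \mid \x \in S) \le B'^2$ for every cube, and both $h_S$ and $\hat h_S$ lie in $[-B', B']$. I then split $\cS$ into heavy cubes $\cS_H = \{S : \Pr_D[\x \in S] \ge p_0\}$ and light cubes $\cS_L = \cS \setminus \cS_H$, with threshold $p_0$ chosen as a sufficiently small constant times $\eps/(B'^2 |\cS|)$. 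For heavy cubes, a Chernoff bound on $n_S \sim \mathrm{Bin}(N, \Pr_D[\x \in S])$ gives $n_S \ge N \Pr_D[\x \in S]/2$ uniformly over $\cS_H$ (with failure probability $\delta/3$), which is $\Omega(\log(|\cS|/\delta))$ for the stated $N$. Invoking \Cref{fact:medianofmeans} with confidence $\delta/(3|\cS|)$ and union-bounding yields $(\hat h_S - h_S)^2 \lesssim B'^2 \log(|\cS|/\delta)/n_S$ for every $S \in \cS_H$; multiplying by $\Pr_D[\x \in S]$ and summing, the heavy contribution telescopes into $O(B'^2 |\cS| \log(|\cS|/\delta)/N) \le \eps/3$ for the claimed $N$. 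For the light cubes I use the trivial bound $(\hat h_S - h_S)^2 \le 4 B'^2$, giving total contribution at most $4 B'^2 |\cS| p_0 \le \eps/3$.

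The main obstacle is the heavy-cube step: a naive bound on the unweighted sum $\sum_S \Var(y \mid \x \in S)$ is far too large. The key observation is that the outer $\Pr_D[\x \in S]$ weight cancels against the $n_S \approx N \Pr_D[\x \in S]$ in the denominator of the median-of-means bound, reducing the heavy-cube contribution to an essentially $|\cS|/N$ term; the pre-truncation at $B'$ then pins down the hidden constant, and the choice of $p_0$ simultaneously controls the light contribution. This interplay is what makes the claimed sample complexity $N = (k/\eps')^{O(k)} (M/\eps)^{O(1)} \log(1/\delta)$ suffice.
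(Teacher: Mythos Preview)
Your Pythagorean decomposition
\[
\E_D[(\hat h - y)^2] - \E_D[(h - y)^2] \;=\; \sum_{S \in \cS} \Pr_D[\x \in S]\,(\hat h_S - h_S)^2
\]
is correct and in fact cleaner than the paper's route, which instead bounds $|\hat h - h| \le \eps$ uniformly over cubes and then expands the square, picking up an extra $\eps\sqrt{\E[(h-y)^2]} \le \eps\sqrt{2M}$ cross term. But your truncation step does not go through, and the heavy/light split overlooks a structural fact that, once used, also removes the need for truncation.

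The gap is the truncation. From $\E[y^2] = O(M)$ alone you cannot conclude that truncating at $B' = \poly(M/\eps)$ perturbs the errors by at most $\eps/3$: a second-moment bound gives no control on $\E[y^2\Ind(|y|>B')]$ (take $y$ supported on $\{0,\pm M^{1/2}/\eta^{1/2}\}$ for tiny $\eta$). Your appeal to ``$f$ is $\rho$-close to a $B$-bounded function'' imports the ambient parameter $B$, which is independent of $M,\eps$ and does not appear in the claimed $N$; and even granting it, the perturbation is governed by $\E[(y - f_B)^2] = O(\opt + \rho)$, which in the agnostic setting need not be $O(\eps)$. So the reduction to bounded labels is not justified at the stated sample complexity.

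What the paper uses instead---and what repairs your argument without truncation---is that by the definition of an $\eps'$-approximating partition (\Cref{def:approximatingPartition}) together with \Cref{fact:gaussianfacts}, \emph{every} cube $S \in \cS$ already satisfies $\Pr_D[\x \in S] \ge (\eps'/k)^{\Omega(k)}$; there are no light cubes. This immediately gives the variance proxy $\Var(y \mid \x \in S) \le \E[y^2 \mid \x \in S] \le M/\Pr_D[\x\in S] \le M(k/\eps')^{O(k)}$ to feed into median-of-means. A Chernoff bound yields $n_S \ge N\Pr_D[\x\in S]/2$ for all $S$ simultaneously, and then \Cref{fact:medianofmeans} with confidence $\delta/|\cS|$ gives $|\hat h_S - h_S| \le \eps$ uniformly once $N = (k/\eps')^{O(k)}(M/\eps)^{O(1)}\log(1/\delta)$. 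Plugging this into your Pythagorean identity (or the paper's cruder expansion) finishes the proof.
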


\begin{proof}
Denote  by $\mathcal{S}$ the approximating partition for the functions $h,\widehat{h}$. 
First, note that by the definition of an approximating partition (\Cref{def:approximatingPartition}) and  by \Cref{fact:gaussianfacts}, we have that $\pr_{D}[S ]= (\eps'/k)^{\Omega(k)}$ and hence $\abs{\mathcal{S}}= (k/\eps')^{O(k)}$. 

Fix $S\in \mathcal{S}$. Note that since $\pr_D[S]$ is lower bounded we have that the second moment of $y$ conditioned on $S$ can not be arbitrarily large. Specifically,  $\E[y^2\mid \bx\in S]=\E[y^2\Ind(\bx\in S)]/\pr_{D}[S]= M(k/\eps')^{O(k)}$.
 Therefore, for any $S\in \mathcal{S}$ by applying \Cref{fact:medianofmeans} we have that, if the number of samples that fall in $S$ is $N_{S}=C(k/\eps')^{Ck} M^2/\eps^2\log(1/\delta)$ for a sufficiently large constant $C>0$, then with probability $1-\delta$ it holds $\abs{\widehat{h}(\bx)-\E[y\mid \bx \in S]}\leq \eps$ for all $\bx\in S$. Noting that by definition $h(\bx)=\E[y\mid \bx \in S]$ for all $\bx\in S$, we have that $\abs{\widehat{h}(\bx)-h(\bx)}\leq \eps$ for all $\bx\in S$.

Hence, by union bound and Hoeffding's inequality, if $N\ge {(k/\eps')^{Ck}}\log(\abs{\mathcal{S}}/\delta)= {(k/\eps')^{O(k)}}\log(1/\delta) $ for a sufficiently large constant $C$, then with probability $1-\delta$  we have that  $\abs{\pr_{\widehat{D}}[S]-\pr_{D}[S]}\le \pr_{D}[S]/2$ for all $S\in \mathcal{S}$.
Hence, it is  true that  $\pr_{\widehat{D}}[S]\ge \pr_{D}[S]/2$, i.e., the number of samples that fall in each set $S\in \mathcal{S}$ is at least $N\pr_{D}[S]/2= N{(\eps'/k)^{\Omega(k)}}$.  Therefore, if $N\geq C(k/\eps')^{Ck} M^2/\eps^2\log(1/\delta)$ for a sufficiently large constant $C$, then with probability at least $1-\delta$ for all $S\in \mathcal{S}$ it holds that $\abs{\widehat{h}(\bx)-h(\bx)}\leq \eps$ for all $\bx\in \R^d$.

As a result, by  Cauchy-Schwarz  $\E[(\widehat{h}(\bx)- y)^2]\leq \E[(h(\bx)- y)^2] +\eps^2+\eps \sqrt{\E[(h(\bx)- y)^2]}$.
Moreover,   by Jensen's inequality $\E[y^2],\E[h^2(\bx)]\leq M$, hence  $\sqrt{\E[(h(\bx)- y)^2]}\leq \sqrt{2M}$. Therefore, if $N\geq C(k/\eps')^{Ck} M^3/\eps^2\log(1/\delta)$, with probability at least $1-\delta$ it holds that  $\E[(\widehat{h}(\bx)- y)^2]\leq \E[(h(\bx)- y)^2] +\eps $.   Which completes the proof of \Cref{cl:hConcetration}.
\end{proof}

Note that from Lines \ref{line:initMeta} and \ref{line:loopMeta} of \Cref{alg:MetaAlg1}, we perform at most $\poly(BKLM2^m/(\eps\alpha\sigma))$ iterations. Furthermore, in each iteration, we update the vector set with at most  $\poly(BKM2^m/(\eps\alpha\sigma))$ vectors. 
Hence, it follows that $k_t\le \poly(BKLM2^m/(\eps\alpha\sigma))$, for all $t=1,\dots, T$.

 Assume that $\E_{(\bx, y)\sim D}[(h_t(\bx)- y)^2]>(\sqrt{\tau+\eps}+\sqrt{\opt})^2$ for all $t=1,\dots, T$.  
 Using the fact that $N=d^{Cm}2^{(BKLM2^m/(\eps\alpha\sigma))^C}\log(1/\delta)$ for a sufficiently large universal constant $C>0$ (Line \ref{line:initMetaParams} of \Cref{alg:MetaAlg1}), we can apply \Cref{prop:MetaAlg2} and conclude that, with probability $1-\delta$, there exists unit vectors $\bv^{(t)}\in  V_{t+1}$ and unit vectors $\w^{(t)}\in W^*$ for $ t=[T]$ such that 
 $\bw^{(t)}\cdot \bv^{(t)}_{ V_t^{\perp}}\ge \poly(\eps\sigma\alpha/(BMKL2^m))$.
 Thus, by \Cref{it:potentialDecrease}, we have that with probability $1-\delta$, for all $t\in [T]$,  $\Phi_t\le \Phi_{t-1}- \poly(\eps\sigma\alpha/(BMKL2^m))$. 
 After $T$ iterations, it follows that $\Phi_T\le \Phi_{0}- T\poly(\eps\sigma\alpha/(BMKL2^m))=
 K-T\poly(\eps\sigma\alpha/(BMKL2^m))$.
 However, since $T$ is set to be a sufficiently large polynomial of $2^m,B,M,L,K,1/\eps, 1/\alpha$, and $1/\sigma$ we would arrive at a contradiction, since $\Phi_T\ge 0$.
 Hence, we have that $\E_{(\bx, y)\sim D}[(h_t(\bx)- y)^2]\leq (\sqrt{\tau+\eps}+\sqrt{\opt})^2$, for some $t\in \{1,\dots, T\}$. 
 Since the error of $h_t$ can only be decreasing by \Cref{it:ErrorDecrease} and  $h_t$ is close to its sample variant by \Cref{cl:hConcetration}, we have that $\E_{(\bx, y)\sim D}[(h(\bx)- y)^2]\le (\sqrt{\tau+\eps}+\sqrt{\opt})^2+\eps$.

\item \textbf{Sample and Computational Complexity:} From the analysis above we have that the algorithm terminates in $\poly(BKLM2^m/(\eps\alpha\sigma))$ iterations and at each iteration we draw of the order of $d^{O(m)}2^{\poly(BKLM2^m/(\eps\alpha\sigma))}\log(1/\delta)$ samples. 
Hence, we have that the total number of samples is  $d^{O(m)}2^{\poly(BKLM2^m/(\eps\alpha\sigma))}\log(1/\delta)$. 
Moreover, we use at most $\poly(N)$ time as all operations can be implemented in polynomial time.
\end{proof}

\begin{remark} \label{rem:sc-alg}
{\em While our algorithm is sufficient to obtain 
a polynomial dependence on $d$ for any constant values 
of $m$ and the other parameters, 
it is worth looking closer at the exponent of this polynomial. 
The algorithm that we have presented 
requires $d^m$ samples in order to accurately estimate 
the degree-$m$ parts of the relevant indicator functions 
in Line \ref{line:regression} of \Cref{alg:MetaAlg2}. Observe that there is a quadratic 
gap between this bound and our SQ lower bound. Recall that
our SQ lower bound requires either exponentially many queries 
or a query of accuracy 
$d^{-m/4}$, which in turn requires
roughly $d^{m/2}$ samples to simulate.

We believe that this gap can be closed 
with a slightly different algorithm. 
In particular, instead of finding the polynomial approximation $p_{S,I}$ for each $S$ and $I$ in our discretizing 
approximation, and combining them into a matrix $\vec U$ to find directions that are often influential, 
we instead merely sample $(S,I)$ pairs 
and find the influential directions for each sampled pair (and note that with high probability we should find one which correlates with $W$). Then instead of estimating $p_{S,I}$ 
in $L_2$-norm (which requires roughly $d^m$ samples), 
we can treat it as an $m$-tensor $\vec T$, which in turn 
we flatten into a 
$d^{\lfloor m/2 \rfloor)} \times d^{\lceil m/2 \rceil}$ 
matrix $\vec M$. Since by assumption $\vec T$ has a large component 
in the $W$-directions, $\vec M$ must have some singular vectors 
with relatively large singular values 
that themselves correspond to polynomials in which the $W$-directions are influential.
We can again find these if we estimate a suitable approximation to $\vec M$, 
but importantly for our purposes, 
we only need to estimate $\vec M$ to small error in operator norm rather than small error in Frobenius norm. 
This allows the algorithm to succeed with only around 
$d^{\lceil m/2 \rceil}$ samples.}
\end{remark}

\subsection{Learning Real-Valued MIMs: Realizable and Random Label Noise} \label{sec:low-dim-y}

In this section, we demonstrate that our algorithmic approach becomes more 
efficient when the label \( y \) depends only on the projection of $\x$ to a low-dimensional subspace.

Specifically, we assume that there exists 
a $K$-dimensional subspace \( W \) such that \( y \) depends on \( \x \) only through its projection onto \( W \); that is,
\(
\pr[y = z \mid \x = \vec u] = \Pr[y = z \mid \x_W = \vec u_W]
\)
for all \( \vec u\in \R^d,z\in \R \). 
This is a setting that captures both the realizable and the independent noise settings.
We refer to this as the \emph{MIM distribution} setting, indicating that the random variable \( y \) is a MIM, i.e.,  it depends only on a low-dimensional subspace.

This structural assumption implies that all non-zero moments of 
the joint distribution are entirely within \( W \). 
Consequently, our algorithm achieves a constant correlation gain 
in each iteration (unlike the agnostic setting analyzed in 
\Cref{prop:MetaAlg2}), resulting in only \( O(K) \) iterations 
overall.
This leads to significant improvements in 
sample and computational complexities.

We begin by formally defining the class of distributions for which our algorithm guarantees a satisfactory solution. In \Cref{sec:applications}, we will demonstrate that for several MIM function classes, their distribution of examples belongs to this class. 
\begin{definition}[Well-Behaved MIM Distributions]\label{def:well-behaved-realizable}
Fix $d, K, m\in \Z_+$, $\alpha\in (0,1)$ and  $M,\tau,\sigma,\eps_1,\eps_2>0$. 
We say that a distribution $D$ over $\R^d\times \R$ whose marginal is $\cN_d$ is a 
$(m,\alpha,K,M,\tau,\sigma,\eps_1,\eps_2)$-well-behaved MIM distribution, if the following conditions hold:
\begin{enumerate}[leftmargin=*]
    \item There exists a subspace $W\subseteq\R^d$ of dimension at most $K$ such that $y$ depends on $\x$ only through the projection onto $W$, i.e., \(\pr_{(\x,y)\sim D}[y = z \mid \x = \vec u] = \Pr_{(\x,y)\sim D}[y = z \mid \x_W = \vec u_W]\), for all \( \vec u\in \R^d,z\in \R \). 
    \item The label has bounded variance, i.e.,  $\E_{(\bx,y)\sim D}[y^2]\leq M$.
    \item For any subspace \(V\subseteq \R^d\) with $\dim(V)\leq K$ and for any \((\eta_1,\eta_2)\)-approximating discretization \((\mathcal{S},\cI)\) with $\eta_1 \leq \eps_1$, $\eta_2 \leq \eps_2$
\begin{enumerate}[leftmargin=*]
    \item either $\E_{(\bx,y)\sim D}[(h_{\mathcal{S}}(\bx_{V})- y )^2]\le \tau$, where $h_{\mathcal{S}}$ denotes the piecewise constant approximation of $D$ according to \Cref{def:h}.
    
  \item  or there is a subset $\mathcal{T}\subseteq \mathcal{S}$ such that $\sum_{S\in \mathcal{T}}\pr[S]\geq \alpha$ and
 for $U=W_{V^\perp}$, there exists a polynomial $p:U\to \R$ of degree at most $m$ and an interval $I\in \mathcal{I}$  such that $\E_{(\bx,y)\sim D}[p(\bx_U)\Ind(y\in I)\mid \bx \in S]>\sigma \|p(\bx_U)\|_2 $ and $\E_{\bx}[p(\bx_{U})]=0$.
\end{enumerate}
\end{enumerate}
    
\end{definition}

\begin{algorithm}[h] 

    \centering
    \fbox{\parbox{5.45in}{
            { \textbf{Learn-MIM-Distributions}}: Learning Well-Behaved MIM distributions\\
            {\bf Input:}  Accuracy $\eps \in(0,1)$, failure probability $\delta\in(0,1)$,  sample access to a distribution $D$ over $\mathbb{R}^d\times \R$, and parameters $\theta=(m,\alpha,K,M,\tau,\sigma,\eps_1,\eps_2)$ for which $D$ is a $\theta$-well-behaved MIM distribution.\\
            {\bf Output:} A hypothesis $h$ such that, with probability at least $1-\delta$, $\E_{(\bx,y)\sim D}[(h(\bx)- y)^2]\leq \tau+\eps$.
            
            \smallskip
            
            \begin{enumerate}[leftmargin=*]
            \item Let $C$ be a sufficiently large universal constant.\label{line:initMeta-realizable}
             \item Let $L_1=\emptyset$,  $N\gets{(dm)}^{C m}(m K/(\eps_1\eps_2 \alpha ))^{CK}(M/(\eps\sigma))^{C}\log(1/\delta)$.\label{line:initMetaParams-realizable}
                \item For $t=1,\dots,T$
             \label{line:loopMeta-realizable}
             \begin{enumerate}
             \item  Draw a set $S_t$ of $N$ i.i.d.\ samples from $D$.
             \item  $\mathcal{E}_t\gets$ \Cref{alg:MetaAlg2}($(K \abs{\cI} /(\sigma\eps\alpha))^{C},\eps_1/K^4,\eps_2,1/\eps_2^2, (\sigma\alpha/K)^C, \spaning(L_t), S_t,\theta$).
            \item Construct $L_{t+1}$ by adding one vector of $\mathcal{E}_t$ to $L_t$.\label{line:updateMeta-realizable}
            \end{enumerate}
            \item Construct $\mathcal{S}$, an $\eps_1$-approximating partition with respect to $\spaning(L_t)$ (see \Cref{def:approximatingPartition}).
             \item Draw $N$ i.i.d.\ samples from $D$ and construct the piecewise constant function $h_{\mathcal{S}}$ as follows: For each $S \in \mathcal{S}$, assign the median of $O(\log(1/\delta))$ means of the labels from the samples falling in $S$.
   \item  Return $h_{\mathcal{S}}$.  
            \end{enumerate}
    }}
         \vspace{0.2cm}

  \caption{Learning Well-Behaved MIM distributions.}
  \label{alg:MetaAlgRealizable}
\end{algorithm}

    We now present the main theorem of this section, which shows that \Cref{alg:MetaAlg1} achieves improved complexity in this setting.

\begin{theorem}[Learning Well-Behaved MIM Distributions]\label{thm:MetaTheorem-realizable}
Let $D$ be a $(m,\alpha,K,M,\tau,\sigma,\eps_1,\eps_2)$-well-behaved MIM distribution supported on $ \mathbb{R}^d \times \R$. 
Then, \Cref{alg:MetaAlg1} draws $N ={(dm)}^{O(m)}2^{\poly(K)}(m/(\eps_1\eps_2 \alpha ))^{O(K)}(M/(\eps\sigma))^{O(1)}\log(1/\delta)
$ i.i.d.\ samples from $D$,
runs in time $\poly(N)$, and returns a  hypothesis $h$ such that, with probability at least $1 - \delta$, 
   $$\E_{(\bx,y)\sim D}[(h(\bx)-y)^2] \leq\tau+\eps\;.$$
\end{theorem}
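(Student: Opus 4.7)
The plan is to adapt the analysis of \Cref{thm:MetaTheorem-Agnostic} by exploiting the structural assumption that $y$ depends on $\x$ only through $\x_W$. The key qualitative improvement is that the directions recovered by \textbf{FindDirection} now lie essentially inside $W$ (rather than merely correlating with it by a small polynomial factor), so the potential function decreases by $\Omega(1)$ per iteration, yielding only $O(K)$ iterations and removing the $2^{\poly(K/\eps)}$ blow-up in favor of $2^{\poly(K)}$.

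First, I would prove an analog of \Cref{prop:MetaAlg2} in this setting. Assume the current subspace $V_t$ yields a piecewise-constant approximation $h_{\mathcal{S}}$ with $\E[(h_{\mathcal{S}}(\x)-y)^2] > \tau + \eps$. By the well-behavedness condition 3(a) of \Cref{def:well-behaved-realizable} (contrapositive), condition 3(b) must hold: there is a subset $\mathcal{T} \subseteq \mathcal{S}$ with $\sum_{S \in \mathcal{T}}\pr[S] \geq \alpha$ such that on each $S \in \mathcal{T}$, a degree-$m$ polynomial $p$ on $U = W_{V_t^\perp}$ correlates nontrivially with some indicator $\Ind(y \in I)$.

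Second, and this is the crucial structural step, I would show that any degree-$m$ $L_2$-regression polynomial $p_{S,I}(\bx_{V_t^\perp})$ fitting $\Ind(y \in I)$ conditional on $\x \in S$ has, in the population limit, all of its relevant Hermite mass supported on $U = W_{V_t^\perp}$. The reason is that $\Ind(y \in I)$ depends on $\x$ only through $\x_W$, and $\x_{V_t^\perp}$ decomposes orthogonally as $\x_U \oplus \x_{U^\perp \cap V_t^\perp}$ with the latter independent of $\x_W$ given $\x \in S$ (since membership in $S$ depends only on $\x_V$). Thus the $L_2$-optimal polynomial is a function of $\x_U$ alone, and any component outside $U$ would strictly increase the regression error. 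Consequently, the population influence matrix has range inside $U \subseteq W_{V_t^\perp}$, and any eigenvector with eigenvalue $\geq \lambda$ lies in $W_{V_t^\perp}$ up to sampling error controlled by $\eta$ (the regression slack) and standard concentration of the empirical influence matrix in operator norm.

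Third, I would combine this with the argument of \Cref{lem:cEnonempty} (whose ``existence of a correlating polynomial'' guarantee still applies verbatim) to extract a unit vector $\vec v \in \mathcal{E}$ with $\|\vec v_W\| \geq 1 - \eps'$ for arbitrarily small $\eps'$ controlled by $\eta$ and $\lambda$. Substituting into the potential function $\Phi_t = \sum_{i} \|\bw^{*(i)}_{V_t^\perp}\|^2$ and using \Cref{it:potentialDecrease} then gives $\Phi_{t+1} \leq \Phi_t - (1 - O(\eps'))$. Since $\Phi_0 \leq K$ and $\Phi_t \geq 0$, the algorithm terminates within $T = O(K)$ iterations, at which point either the error bound already holds or no more directions correlating with $W_{V_t^\perp}$ exist, forcing condition 3(a) and hence $\E[(h_{\mathcal{S}}(\x)-y)^2] \leq \tau$.

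Fourth, I would close with a concentration argument (\Cref{cl:hConcetration}) to bound the difference between the empirical piecewise-constant approximation and its population version by $\eps/2$, giving final error at most $\tau + \eps$. The sample complexity bookkeeping follows: \textbf{FindDirection} needs $(dm)^{O(m)}(K/\eps_1)^{O(K)}\poly(K|\mathcal{I}|/(\sigma\alpha\eps))\log(1/\delta)$ samples per call by \Cref{lem:cEnonempty}, and there are $O(K)$ calls; the final piecewise-constant estimation adds an $(K/\eps_1)^{O(K)}(M/\eps)^{O(1)}\log(1/\delta)$ term; combining yields the claimed bound.

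The main obstacle is the second step: rigorously arguing that the empirical eigenvectors of the influence matrix lie \emph{inside} $W_{V_t^\perp}$, not merely correlate with it. This requires controlling the operator-norm deviation of the empirical influence matrix from its population counterpart, which in turn depends on the regression accuracy $\eta$ being small compared to the eigenvalue threshold $\lambda$. The parameters set on Line~\ref{line:initMetaParams-realizable} (with $\eta = (K|\mathcal{I}|/(\sigma\eps\alpha))^{-C}$ and $\lambda = (\sigma\alpha/K)^{C}$) are calibrated precisely so that $\eta \ll \lambda$, giving enough margin to conclude $\|\vec v_W\| \approx 1$ for the returned eigenvector.
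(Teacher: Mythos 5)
Your high-level plan matches the paper's: exploit the MIM structure so each recovered direction lies nearly inside $W$, get $O(K)$ iterations, and thereby replace the $2^{\poly(K/\eps)}$ dependence with $2^{\poly(K)}$. Your observation in step two — that the conditional function $\widetilde g(\z)=\Pr[y\in I\mid\x_V\in S,\x_{V^\perp}=\z]$ is a function of $\z_U$ alone (since $\x_W$ is determined by $(\x_V,\x_U)$ and $\x_{U^\perp\cap V^\perp}$ is independent of $\x_W$), hence the optimal regression polynomial has gradient in $U=W_{V^\perp}$ — is correct. But the inference you draw from it has a gap.

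Knowing the population influence matrix has range in $U=W_{V^\perp}$ does \emph{not} give $\|\vec v_W\|\ge 1-\eps'$ for a unit eigenvector $\vec v$. The subspace $U=W_{V^\perp}$ is \emph{not} close to $W$ when $V$ is only approximately (not exactly) inside $W$: a unit vector $\vec v=\w_{V^\perp}/\|\w_{V^\perp}\|$ for $\w\in W$ satisfies $\vec v_{W^\perp}=-\Pi_{W^\perp}\w_V/\|\w_{V^\perp}\|$, so $\|\vec v_{W^\perp}\|\le \eps'\|\w_V\|/\|\w_{V^\perp}\|$, and this ratio blows up when $\w$ is mostly in $V$. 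Equivalently, what you actually prove is that the quadratic form of $\widehat{\bU}$ vanishes on $U^\perp\cap V^\perp=W^\perp\cap V^\perp$, whose orthogonal complement inside $V^\perp$ is $U=W_{V^\perp}$; but what the potential argument needs (via \Cref{it:potentialDecrease}) is that the quadratic form is small on the \emph{strictly larger} subspace $(V+W^\perp)\cap V^\perp=W^\perp_{V^\perp}$, whose orthogonal complement in $V^\perp$ is $W\cap V^\perp\subseteq W$. Your structural observation does not control $\vec u\cdot\nabla\widetilde g$ for directions $\vec u\in W^\perp_{V^\perp}$ outside $W^\perp\cap V^\perp$, and these are precisely where the difficulty lives.

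The paper's proof closes exactly this gap with \Cref{cl:Nobadvector2} and, crucially, \Cref{cl:der-bound}: it decomposes any unit $\vec u\in(V+W^\perp)\cap V^\perp$ as $\vec a+\vec b$ with $\vec a\in V$, $\vec b\in W^\perp$, uses $\vec u\perp V$ and the $\eps'$-closeness of $V$ to $W$ to deduce $\|\vec a\|=O(\eps')$, and then shows that the directional derivative along $\vec u$ equals the derivative of the cube-averaged indicator under a shift of $S$ by $-t\vec a$ (because $\vec b\in W^\perp$ is invisible to $y$). The symmetric-difference calculation then bounds this derivative by $O(K^5\eps'/\eps_1)$, which, combined with the regression error $m\eta^2$ and \Cref{cl:smallQuadraticFormNotLargeEigProj}, gives the desired $|\vec u\cdot\vec z|\le\eps$ for all $\vec z\in W^\perp$. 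Your proof should be revised to either carry out this cube-shifting derivative bound explicitly or otherwise control the quadratic form on all of $W^\perp_{V^\perp}$ rather than just $W^\perp\cap V^\perp$.
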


We now prove the following proposition, which demonstrates that—compared to the agnostic setting (see \Cref{prop:MetaAlg2})—improved correlation can be achieved in the MIM distribution setting.
\begin{proposition}[Correctness of Learning Well-Behaved MIM Distributions]\label{prop:inprovedCorrelation}
There exists a sufficiently large universal constant $C>0$ such that the following holds.
Let $D$ be a $(m,\alpha,K,M,\tau,\sigma,\eps_1,\eps_2)$-well-behaved MIM 
distribution supported on $ \mathbb{R}^d \times \R$ 
and let $W\subseteq \R^d$ with $\dim(W)=K$ be the hidden subspace of $D$.
Let $V\subseteq \R^d$ be a $k$-dimensional subspace
 with $k\leq K$ such that $\norm{\vec v_{W^{\perp}}}\leq \eps'\leq \eps(\eps_1\eps_2\alpha\sigma/K)^C$, for all unit vectors $\vec v\in V$.
Also, let 
$(\mathcal{S},\mathcal{I})$ be an $(\eps_1/K^4,\eps_2)$-approximating dicretization of $V\times \R$.
Additionally, let $h_{\mathcal{S}}$ be a piecewise constant approximation of $D$, with respect to $\mathcal{S}$.
There exists  
$N=(dm)^{O(m)}(K/\eps_1)^{O(k)}\log(\abs{\cI}/\delta) /\eta^{O(1)}$ 
such that if $\E_{(\bx,y)\sim D}[(h_{\mathcal{S}}(\bx)- y)^2]>\tau +\eps$, 
then \Cref{alg:MetaAlg2}, when given $N$ i.i.d.\ samples 
from $D$ and parameters  $\eta\leq (\eps\sigma\eps_2\alpha/(mK))^C,\eps_1/K^4,\eps_2, \lambda= (\sigma\alpha/K)^C$, runs in time
$\poly(N)$ and, with probability at least $1-\delta$, 
returns a list of unit vectors $\mathcal E$ of size 
$|\mathcal E|=\poly(mK/(\eps_2\alpha\sigma))$, 
such that for every vector $\vec v\in \mathcal E$, 
there exists a unit vector $\w\in W$ with $\w \cdot \vec v\ge 1- \eps\;.$
\end{proposition}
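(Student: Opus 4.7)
The plan is to mimic the proof of \Cref{prop:MetaAlg2} (the agnostic case) but to leverage the MIM-distribution structure to upgrade the conclusion from ``non-trivial correlation with $W$'' to ``correlation at least $1-\eps$ with $W$''. The essential new input is that when the labels depend only on $\x_W$ and $V$ lies almost entirely inside $W$, the best degree-$m$ polynomial approximation of $\Ind(y\in I)$ by polynomials of $\x_{V^\perp}$ is essentially a function only of $\x_U$, where $U=W_{V^\perp}$. Consequently the empirical influence matrix $\widehat{\vec U}$ is almost supported on $U$ (and hence almost on $W$), forcing every eigenvector above the threshold $\lambda$ to point almost inside $W$.

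In the first step I verify that the well-behaved MIM condition of \Cref{def:well-behaved-realizable}, applied to $V$ and the $(\eps_1/K^4,\eps_2)$-approximating discretization $(\mathcal{S},\mathcal{I})$, yields Condition~3(b): the hypothesis $\E[(h_{\mathcal{S}}(\x)-y)^2]>\tau+\eps$ rules out 3(a). Hence there exists a subset $\mathcal{T}\subseteq\mathcal{S}$ with $\sum_{S\in\mathcal{T}}\pr[S]\ge\alpha$ such that every $S\in\mathcal{T}$ admits a zero-mean degree-$m$ polynomial $p_S$ on $U$ and an interval $I\in\mathcal{I}$ with $\E[p_S(\x_U)\Ind(y\in I)\mid\x\in S]\ge\sigma\|p_S(\x_U)\|_2$. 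Degree-$m$ polynomial regression on $\Ind(y\in I)$ with $N$ samples and $L^2$-error $\eta=(\eps\sigma\eps_2\alpha/(mK))^C$ returns $p_{S,I}$ whose Hermite coefficients match the population-optimal ones to $L^2$-accuracy $\eta$. Arguing as in the proof of \Cref{it:corr-vec-existance}, this produces a unit direction $\bu^{(i)}\in U$ with $(\bu^{(i)})^{\top}\bM_{S,I}\bu^{(i)}=\Omega(\sigma^2/K)$ for every $S\in\mathcal{T}$ and an appropriate interval $I$, so that $\widehat{\vec U}$ has at least one eigenvalue of size $\Omega(\sigma^2\alpha/K^2)\ge\lambda$.

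The key new ingredient is showing that $\widehat{\vec U}$ is concentrated on $U$, not merely that it possesses one eigenvector correlated with $U$. Let $E\subseteq V^\perp$ denote the orthogonal complement of $U$ inside $V^\perp$. Because $W\cap V^\perp\subseteq U$ and every unit vector $\bv\in V$ satisfies $\|\bv_{W^\perp}\|\le\eps'$, a principal-angle calculation shows that every unit vector $\bv\in E$ obeys $\|\bv_W\|=O(\eps'\cdot\poly(K))$. Mimicking \Cref{cl:nabla2smallchange} together with the averaging argument in the proof of \Cref{lem:agnostic-partition} via Gaussian hypercontractivity, averaging any degree-$m$ polynomial in $\x_{V^\perp}$ over $E$-directions changes it by at most $O(\eps'K\cdot 3^m)$ in $L^2$-norm. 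Since $\Ind(y\in I)$ depends on $\x$ only through $\x_W$ and since $\x_V$ is independent of $\x_{V^\perp}$, its population-optimal degree-$m$ approximant in $\x_{V^\perp}$ differs from a polynomial depending only on $\x_U$ by $O(\eps'K\cdot 3^m)$ in $L^2$. Combining this with the $\eta$-accuracy of regression and the Hermite-basis identity \Cref{fact:gradientNorm}, the gradient of $p_{S,I}$ deviates from $U$ in mean-square by at most $m(\eta+\eps'K\cdot 3^m)^2$, so $\|\Pi_E\widehat{\vec U}\,\Pi_E\|_2\le m^2(\eta+\eps'K\cdot 3^m)^2|\mathcal{I}|$. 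By our choice of parameters this bound is smaller than $\eps\lambda/2$, where $\lambda=(\sigma\alpha/K)^C$.

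A standard perturbation argument closes the proof. For any unit eigenvector $\bv$ of $\widehat{\vec U}$ with eigenvalue at least $\lambda$, the variational principle combined with the preceding spectral estimate gives $\|\Pi_E\bv\|^2\le\eps/2$, whence $\|\Pi_U\bv\|\ge 1-\eps/2$. Since $U$ is $O(\eps'\cdot\poly(K))$-close to a subspace of $W$ in operator norm, there exists a unit $\w\in W$ with $\bv\cdot\w\ge 1-\eps$. The cardinality bound $|\mathcal{E}|=\poly(mK/(\eps_2\alpha\sigma))$ follows as in \Cref{lem:cEnonempty} from $\|\widehat{\vec U}\|_F\le m|\mathcal{I}|$. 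The main technical obstacle is the propagation of the hypercontractivity factor $3^m$ and the principal-angle error through the regression step: these force $\eps'$ to scale polynomially in $1/K$ and inversely in the other parameters, which is exactly what the assumption $\eps'\le\eps(\eps_1\eps_2\alpha\sigma/K)^C$ provides for $C$ sufficiently large.
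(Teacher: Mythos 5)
Your proposal follows the right high-level strategy (show the empirical influence matrix $\widehat{\vec U}$ has small quadratic form on the complement of $U$ inside $V^\perp$, then conclude by a spectral perturbation argument), but the central technical step — bounding how much the population-optimal degree-$m$ approximant of $\Ind(y\in I)$ conditioned on $\x_V\in S$ depends on $E$-directions — is derived by an argument that does not apply here.

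You invoke a \Cref{cl:nabla2smallchange}-style averaging argument (plus hypercontractivity) to claim that the optimal approximant differs from a polynomial in $\x_U$ by $O(\eps'K\cdot 3^m)$ in $L^2$. That argument relies on the target function having bounded gradient norm. But the relevant function here is $\widetilde{g}(\z)=\pr[y\in I\mid \x_V\in S,\ \x_{V^\perp}=\z]$, a conditional probability over a thin cube $S$, and the label $y$ in the MIM-distribution setting need not be a smooth (or even continuous) function of $\x_W$; hence $\widetilde{g}$ may have no useful pointwise gradient bound at all. The effect that your estimate misses is that conditioning on the thin box $S$ makes $\widetilde{g}$ sensitive to translations: a perturbation $\vec u=\vec a+\vec b$ with $\vec a\in V$, $\vec b\in W^\perp$, $\|\vec a\|=O(\eps')$ shifts the box $S$ by $t\vec a$, and the resulting change in $\widetilde{g}$ scales like $\pr[S\Delta(S-t\vec a)]/\pr[S]\sim |t|\,\eps'/\eps_1$ (up to $\poly(K)$). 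This produces an unavoidable $1/\eps_1$ factor that does not appear in your $O(\eps'K\cdot 3^m)$ bound. The paper's \Cref{cl:der-bound} derives exactly this $K^5\eps'/\eps_1$ derivative bound via an anti-concentration/symmetric-difference volume computation on the shifted cube, with no smoothness assumption on $y$, and feeds it into \Cref{fact:gradientNorm} to control the Hermite mass in $(V+W^\perp)\cap V^\perp$ directions; your sketch replaces this with a Lipschitz-style argument that is not available in this generality.

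Your conclusion is, coincidentally, still numerically attainable because the hypothesis $\eps'\leq\eps(\eps_1\eps_2\alpha\sigma/K)^C$ provides headroom to absorb a $1/\eps_1$ factor; so the \emph{statement} you want at the end of your middle paragraph is reachable, but not by the mechanism you describe. To fix the gap you would need to replace the polynomial-averaging/hypercontractivity step with an argument that directly bounds the sensitivity of the conditional expectation to box-translation, as in \Cref{cl:der-bound}, and only then transfer to the low-degree polynomial approximant via \Cref{fact:gradientNorm}. The remainder of your proposal (non-emptiness of $\mathcal{E}$ via \Cref{lem:cEnonempty}, the cardinality bound from $\|\widehat{\vec U}\|_F\leq m|\mathcal{I}|$, and the final spectral argument via \Cref{cl:smallQuadraticFormNotLargeEigProj}) matches the paper's proof.
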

\begin{proof}
Let $\widehat{\vec U}$ be the matrix computed in Line \ref{line:matrixMeta} of \Cref{alg:MetaAlg2} and let $\eta^2$  be the $L_2^2$ error chosen in the polynomial-regression step, i.e., Line \ref{line:regression} of \Cref{alg:MetaAlg2}.

In the next claim, we show that if the sample size 
for a cubic region $S\in \mathcal{S}$ is sufficiently large, 
then for every  $I\in \mathcal{I}$  the  influence matrix, $\bM_{S,I}\eqdef \E_{\bx\sim D_{\bx}}[\nabla p_{S,I}(\bx_{V^{\perp}}) \nabla p_{S,I}(\bx_{V^{\perp}})^\top\mid \bx \in S]$, 
(see Line \ref{line:matrixMeta} of \Cref{alg:MetaAlg2}) 
has small quadratic form when evaluated 
for any  unit vector in $W^{\perp}+V$.

\begin{lemma}[No Bad Vector]\label{cl:Nobadvector2}
Fix $S\in \mathcal{S}$. Suppose that the number of samples falling in $S$ is $N_S\ge(dm)^{Cm}\log(1/(\delta\eps_2))/\eta^{C}$ for a sufficiently large universal constant $C>0$. 
Then, for any   unit vector $\vec v\in  (V+W^{\perp} )\cap V^{\perp}$ we have that $ \vec v^{\top }\bM_{S,I}\vec v \lesssim {\sqrt{K} K^4\eps'}/{\eps_1}+m\eta^2 $.
\end{lemma}
\begin{proof}[Proof of \Cref{cl:Nobadvector2}]
Let $N_S, S\in \mathcal{S}$, be the number of samples that land in the set $S$.
First, observe that the guarantee obtained at the regression step 
\[\E_{(\x, y)\sim D}[(\Ind(y\in I) -p_{S,I}(\x_{V^{\perp}}))^2\mid \x\in S]\le \min_{p'\in \mathcal{P}_m} \E_{(\x, y)\sim D}[(\Ind(y\in I) -p'(\x_{V^{\perp}}))^2\mid \x\in S]+\eta^2\;\]
is equivalent to
\[\E_{(\x, y)\sim D^S_{V^{\perp}}}[(\Ind(y\in I) -p_{S,I}(\x))^2]\le \min_{p'\in \mathcal{P}_m} \E_{(\x, y)\sim D^S_{V^{\perp}}}[(\Ind(y\in I) -p'(\x))^2]+\eta^2\;,\] 
where $D^S_{V^\perp}$ is defined to be the marginal obtained by averaging $D$ over $V$ conditioned on  $S$, i.e., $D^S_{V^{\perp}}(\x_{V^{\perp}},y)=\E_{\x_V}[D(\x,y)\mid \x \in S]$.
Moreover, by the properties of the Gaussian distribution, we have that the $\x$-marginal of $D^S_{V^\perp}$ is a standard Gaussian.
Hence, by applying \Cref{fact:regressionAlg} and the union bound, we have that with $N_S=(dm)^{O(m)}\log(1/(\delta\eps_2))/\eta^{O(1)}$ i.i.d.\ samples and runtime $\poly(N_S,d)$, we can compute polynomials 
$p_{S,I}$ that satisfy the aforementioned condition with probability at least $1-\delta$.

For $\beta\in\N^d$, 
the Hermite coefficients of $p_{S,I}$ and $\Ind(y\in I)$ are defined 
by $\widehat{p}_{S,I}(\beta)\eqdef \E_{(\x,y)\sim D^S_{V^{\perp}}}[H_{\beta}( \x)p_{S,I}(\x)]$ and 
$ \widehat{g}_I(\beta)\eqdef \E_{(\x,y)\sim D^S_{V^{\perp}}}[H_{\beta}( \x)\Ind(y\in I)] $.
By the orthogonality of Hermite polynomials, we have 
\[\E_{(\x, y)\sim D^S_{V^{\perp}}}[(\Ind(y\in I) -p_{S,I}(\x))^2]=\sum_{\beta \in \N^d} (\widehat{p}_{S,I}(\beta)- \widehat{g}_I(\beta) )^2\;.\] 
Thus, restricting the sum to multi-indices $\beta$ with $1\le \|\beta\|_1\le m$, it follows that $\sum_{\beta\in \N^d,1\le \|\beta\|_1\le m }(\widehat{p}_{S,I}(\beta)- \widehat{g}_I(\beta))^2 \le \eta^2$.

Note that $\widehat{g}_I(\beta)= \E_{\z\sim \cN_d}[H_{\beta}( \z_{V^{\perp}} )\pr_{(\x,y)\sim D}[ y\in I\mid \x_V\in S,\z_{V^{\perp}}=\x_{V^{\perp} }]] $. For $z\in V^{\perp}$ define the function  $\widetilde{g}(\z)\eqdef\pr_{(\x,y)\sim D }[ y\in I\mid \x_V\in S,\z=\x_{V^{\perp} }]$. 

In the following claim, we show that the directional derivative of the function $\widetilde{g}$ in directions within $V + W^\perp$ is small. This implies that the quadratic form of $\vec M_{S,I}$ in these directions is also small, since $\widetilde{g}$ and $p_{S,I}$  match Hermite coefficients.

\begin{claim}[Directional‐derivative bound on the averaged indicator]\label{cl:der-bound}
Fix an interval $I\in \mathcal{I}$ and cube $S\in \cS$.
Let
$\vec u\in(V+W^\perp)\cap V^\perp$ be a  unit vector.
Then for all $\x\in V^{\perp}$ it holds that 
\begin{align*}
\left| \frac{d}{dt}\widetilde{g}(\x+t\vec u)\right|\leq \frac{K^5\eps'}{\eps_1}.    
\end{align*}
\end{claim}

\begin{proof}[Proof of \Cref{cl:der-bound}]
Define the function $g(\z)= \pr_{(\x,y)\sim D}[y\in I\mid \x=\z]$ and note that for all $\z\in \R^d$ it holds that 
$\widetilde{g}(\z_{V^{\perp}})=\E_{\x\sim \cN_d}[g(\x_V+\z_{V^\perp})\mid \x_V\in S,\x_{V^{\perp}}=\z_{V^{\perp}}]$.

Let $\vec u= \vec a+ \vec b$, where $\vec a\in V$ and $\vec b\in W^{\perp}$.
Since $\vec u\in V^{\perp}$, 
it holds that 
$\|\vec a\|^2+ \vec a\cdot \vec b= \vec u\cdot \vec a= 0$.
Note that by assumption $\norm{\vec a- \vec a_W}\leq \eps' \norm{\vec a}$, thus $\vec a=\vec a_W+ \eps' \norm{\vec a}\vec v$,  for some unit vector $\vec v\in \R^d$.
Hence, we have that $\|\vec a\|^2+ \vec a\cdot \vec b=\|\vec a\|^2+\eps' \norm{\vec a}(\vec v\cdot \vec b)$ which implies that $\norm{\vec a}\leq \eps'\norm{\vec b}$.
Therefore, by triangle inequality $\norm{\vec b}\leq\norm{\vec a}+1$ 
as a result $\norm{\vec a}=O(\eps')$.

Since $y$ depends on $\x$ only through the projection onto $W$, 
we have that $g(\x+t\vec u)= g(\x+t \vec a)$. 
Thus, shifting $\x$ by $t\vec u$ is equivalent 
to shifting by $t\vec a$.

Denote by $\y$ a standard normal random variable over $V$ and by $\x$ a standard normal random variable over $V^{\perp}$. From the fact that $g$ is invariant in changes in $W$ we have that
\begin{align*}
  \widetilde{g}(\x+t\vec u)=\frac{\E_{\vec y}[g(\x+\y+t\vec u)\Ind(\y\in S)]}{\pr[\y\in S]}&=\frac{\E_{\y}[g(\x+\y+t\vec a) \Ind(\y\in S)]}{\pr[\y\in S]}\;.
\end{align*}
Now define the new random variable $\z \eqdef \y+t\vec a$. By a change of variables we have that
\begin{align*}
 \widetilde{g}(\x+t\vec u) &=\frac{\E_{\z}[g(\x+\z) \Ind(\z-t\vec a\in S)]}{\pr[\y\in S]}
\end{align*}
Therefore, shifting the argument by $t\vec u$ 
results to shifting the box by $-t\vec a$.
Thus, in order to bound the derivative, 
it suffices to bound $\pr[S\Delta(S-t\vec a)]$, 
where $\Delta$ denotes the symmetric difference of the two sets.

Recall that the edge width of each cube is $\eps_1/K^4$, as defined in the statement of the proposition.
Denote by $\phi:V\to \R$ the density function of a standard Gaussian random variable in $V$ and by $\vec v^{(1)},\dots,\vec v^{(k)}$ the orthonormal basis of $V$ used to define $S$.
Note that from the anti-concentration of the Gaussian distribution, 
it suffices to bound the volume of the symmetric difference. 
To this end define the following sequence of sets $S=S_0,S_1,\dots,S_k=S-t\vec a$, where $S_i$ is equal to $S-t(\sum_{l=1}^{i}(\vec a\cdot \vec v^{(l)}) \vec v^{(l)})$.
By the triangle inequality and a simple volume calculation we have that
\begin{align*}
    \pr[S\Delta(S-t\vec a)]&=  \pr[\Ind(\x\in S)\neq \Ind(\x\in(S-t\vec a))] \\
    &\leq \sum_{i=1}^k \pr[\Ind(\x\in S_{i-1})\neq \Ind(\x\in S_{i})]\\
    &\leq (2/K^{4(k-1)})   \sum_{i\in [k]} \eps_1^{k-1}\abs{t(\vec a \cdot \vec v^{(i)})} \sup_{\x\in S\cup (S-t\vec a)}\phi(\x)\\&\leq (2/K^{4(k-1)}) \sqrt{k}\norm{t\vec a}\eps_1^{k-1} \sup_{\x\in S\cup (S-t\vec a)}\phi(\x)\;,
\end{align*}
where in the last inequality we used the \CS inequality.
Define the following ratio   
\begin{align*}
\rho_{S}(t)\eqdef \frac{\sup_{\x\in S\cup (S-t\vec a)}\phi(\x)}{\inf_{\x\in S}\phi(\x)}\;.
\end{align*}
Therefore, we have that 
\begin{align*}
    \abs{\widetilde{g}(\x+t\vec u)-\widetilde{g}(\x)}\leq \frac{\pr[S\Delta(S-t\vec a) ]}{\pr[\x\in S]}
    &\leq \frac{K^{4k}\inf_{\x\in S}\phi(\x)}{\eps_1^k}\pr[S\Delta(S-t\vec a) ]\\&
    \lesssim K^4\frac{\eps_1^{k-1}\sqrt{k}\norm{t\vec a}}{\eps_1^k}\rho_{S}(t)
    \\&\lesssim K^4\frac{\sqrt{k}\abs{t}\eps'}{\eps_1}\rho_{S}(t)\;,
\end{align*}
where in the second equation we used the fact that $\pr[\x\in S]\ge \inf_{\x\in S}\phi(\x)\eps_1/K^{4k}$ and in the third one we substituted our prederived upper bound for  $\pr[S\Delta(S-t\vec a)]$.
Finally, note that $\rho_S(t)=\exp((\x-\y)\cdot (\x+\y)/2)$ for some $\x\in S$ and $\y\in S\cup (S-t\vec a)$.
Hence,   $\lim_{t\to 0}\rho_{S}(t)\leq \exp(\eps_1 K^{-4}k^{3/2} \sqrt{\log(k^5/\eps_1)})$ which is bounded by a constant since $k\leq K$.
This concludes the proof of \Cref{cl:der-bound}.
\end{proof}

Let a unit vector $\vec u\in (V+W^\perp)\cap V^\perp$, from \Cref{cl:der-bound} we have that $\abs{\vec u\cdot \nabla\widetilde{g}(\x_{V^{\perp}})}\lesssim {K^5\eps'}/{\eps_1}$ for all $\x\in \R^d$.
From the rotational invariance of the standard gaussian, 
let us for simplicity denote $\vec u$ by $\e_1$.
By applying \Cref{fact:gradientNorm}, we have that  
\begin{align*}
 \sum_{\beta \in \N^d} \beta_1 \widehat{g}_I(\beta)^2\lesssim \frac{K^5\eps'}{\eps_1}\;.
\end{align*}
Consequently,
\begin{align*}
  \vec e_1^{\top}\E_{(\x,y)\sim D^S_{V^{\perp}}}[\nabla p_{S,I}(\x)\nabla p_{S,I}(\x)^{\top}]\vec e_1=  \sum_{\beta \in \N^d} \beta_1\widehat{p}_{S,I}(\beta)^2\lesssim \frac{K^5\eps'}{\eps_1}+m\eta^2\;.
\end{align*}
This completes the proof of~\Cref{cl:Nobadvector2}. 
\end{proof}

First note that by the definition of an approximating partition (\Cref{def:approximatingPartition}) and \Cref{fact:gaussianfacts}, for all $S\in \mathcal{S}$ we have that   $\pr_{D}[S ]= (\eps_1/(Kk))^{\Omega(k)}$ and $\abs{\mathcal{S}}= ( kK/\eps_1)^{O(k)}$.
Therefore, by the union bound and Hoeffding's inequality, it holds that if $N\ge (kK/\eps_1)^{Ck}\log(\abs{\mathcal{S}}/\delta)= (kK/\eps_1)^{O(k)}\log(1/\delta) $ for a sufficiently large constant $C>0$, then with probability at least $1-\delta$  we have that  $\abs{\pr_{\widehat{D}}[S]-\pr_{D}[S]}\le \pr_{D}[S]/2$ for all $S\in \mathcal{S}$.
Hence,  $\pr_{\widehat{D}}[S]\ge \pr_{D}[S]/2$, i.e., the number of samples that fall in each set $S\in \mathcal{S}$ is at least $N\pr_{D}[S]/2= N(\eps_1/(Kk))^{\Omega(k)}$.

Therefore, 
{if $N\ge (dm)^{Cm}(kK/\eps_1)^{Ck}\log(1/(\delta\eps_2))/\eta^{C}$ for a sufficiently large universal constant $C>0$, then} 
by applying \Cref{cl:Nobadvector2} we have 
the following: with probability $1-\delta$, 
for any unit vector $\vec v\in (W^{\perp}+V)\cap V^{\perp}$ 
\begin{align*}
    \vec v^{\top} \widehat{\vec U} \vec v\le \sum_{S\in \mathcal{S}, I\in \mathcal{U}} m\eta^2\pr_{\widehat{D}}[S] \le \abs{\cI}\left(m\eta^2 +\frac{K^5\eps'}{\eps_1}\right)\;.
\end{align*}
Hence, since $\widehat{\vec U}$ is a symmetric PSD matrix, applying \Cref{cl:smallQuadraticFormNotLargeEigProj} we have that for all $\vec u\in \mathcal{E}$ and  unit vectors $\vec v\in W^{\perp}$ it holds that $\abs{\vec u\cdot \vec v}\le (m\eta^2+K^5\eps'/\eps_1)/\sqrt{\poly(\sigma\eps_2\alpha/K})$. 
As a result, 
since $C$ is sufficiently large, 
substituting $\eps'$ and $\eta$ we have that 
for all $\vec u^{(i)}\in \mathcal{E}$ and any 
$\vec v\in W^{\perp}$ it holds 
$\abs{\vec u^{(i)}\cdot \vec v}\le \eps$.

Consequently, by applying \Cref{lem:cEnonempty}, 
we have that if 
$N\ge (dm)^{Cm}(kK/\eps_1)^{Ck}\log(1/(\delta\eps_2))/\eta^{C}$ 
{ for a sufficiently large universal constant $C>0$}, 
the set $\mathcal{E}$ in non-empty. 
Thus, as $\vec v$ is a unit vector and 
$\norm{\vec v_{W^{\perp}}}\le \eps$, 
we have $\norm{\vec v_{W}}\ge \sqrt{1-\eps^2}\ge 1-\eps$.  
This completes the proof of \Cref{prop:inprovedCorrelation}.
\end{proof}
    
Now we are ready to prove \Cref{thm:MetaTheorem-realizable}. Our proof is similar to the proof of \Cref{thm:MetaTheorem-Agnostic}. The difference is that we are going to use \Cref{prop:inprovedCorrelation} which provides improved correlation when compared to \Cref{prop:MetaAlg2}.

\begin{proof}[Proof of \Cref{thm:MetaTheorem-realizable}]
We show that \Cref{alg:MetaAlgRealizable}, with high probability, returns a hypothesis $h$ with $L_2^2$ error at most $\tau+\eps$.
Let $W$ be the $K$-dimensional subspace that $y$ depends on. 
Let $L_t$ be the set of vectors maintained by the algorithm (Line \ref{line:updateMeta-realizable}) and $V_t=\mathrm{span}(L_t)$, $\dim(V_t)=k_t$. 
Also let $\eps_1$ be the partition width parameter (see \Cref{def:well-behaved-realizable}), and for $t\in [T]$ let $\mathcal{S}_t$ be arbitrary $\eps_1/K^4$-approximating
partitions with respect to $V_t$ (see \Cref{def:approximatingPartition}).
Let $h_t:\mathbb{R}^d\to [K]$ be  piecewise constant functions, defined as $h_t= h_{\mathcal{S}_{t}}$ according to \Cref{def:h} for the distribution $D$.

Note that from Lines \ref{line:initMeta-realizable}, \ref{line:loopMeta-realizable} of \Cref{alg:MetaAlgRealizable}, we perform  $T\eqdef K$ iterations. Furthermore, in each iteration, we update the vector set by adding one vector. Hence, 
$k_t\leq K$ for all $t\in [T]$.

 Assume that $\E_{(\bx, y)\sim D}[(h_t(\bx)- y)^2]> \tau+\eps/2$ for all $t=[T]$.
 Denote by $\vec v^{(t)}\in V_{t+1}\cap V_{t}^{\perp}, t\in [T]$ the unit vectors added at each iteration and let $C$ be a sufficiently large universal constant.
 Note that in order to add a new vector $\vec v^{(t)}\in V_{t+t}\cap V_{t}^{\perp}$ with $\norm{(\vec v^{(t)})^W}\geq 1-\rho$ by applying \Cref{prop:inprovedCorrelation}, 
 we need to already have that every unit vector 
 $\vec v\in V_{t-1}$ satisfies 
 $\norm{\vec v^W}\geq 1-\rho(\eps_1\eps_2\alpha/(mK))^{C}$.
Moreover, since $\vec v^{(t)}$ are orthonormal in this case, 
for all unit vectors  
$\vec v\in V_{t}$ it holds that ${\norm{\vec v^W}\geq 1-\rho(\eps_1\eps_2\alpha/(mK))^{C}}$.
Thus, if the number of samples is sufficiently large, 
for all iterations $t\in [T]$ 
applying the proposition for $\rho=(1/(2K))(\eps_1\eps_2\alpha/(mK))^{CK}$ (in place of $\eps$) 
would result to orthonormal vectors $\vec v^{(t)}$ 
with 
$\norm{(\vec v^{(t)})^W}\geq 1-1/2K$ for all $t\in [K]$.
 
 Therefore, using the fact that $N={(dm)}^{C m}(m K/(\eps_1\eps_2 \alpha ))^{CK}(M/(\eps\sigma))^{C}\log(1/\delta)$  (Line \ref{line:initMeta-realizable} of \Cref{alg:MetaAlgRealizable}),
 we can iteratively apply \Cref{prop:inprovedCorrelation} 
  and conclude that, with probability $1-\delta$, there exist unit vectors $\vec v^{(t)}\in  V_{t+1}$ and unit vectors $\w^{(t)}\in W$ for $ t\in[T]$ such that $\w^{(t)}\cdot \vec v^{(t)}_{ V_t^{\perp}}\ge  1-1/(2K)$.
 Thus, from \Cref{it:potentialDecrease}, we have that with probability $1-\delta$, for all $t\in [T]$,  $\Phi_t\le \Phi_{t-1}- 1+1/(2K)$. 
 After $T$ iterations, it follows that $\Phi_T\le \Phi_{0}- T +T/(2K)$.
 However, if $T$ is set to be $K+1$ we would arrive at a contradiction, since $\Phi_T\ge 0$.
 Hence, we have that $\E_{(\bx, y)\sim D}[(h_t(\bx)- y)^2]\leq \tau+\eps/2$, for some $t\in \{1,\dots, T\}$. 
 Since the error of $h_t$ can only be decreasing  (see \Cref{it:ErrorDecrease}), and  $h_t$ is close to its sample variant by \Cref{cl:hConcetration}, we have that $\E_{(\bx, y)\sim D}[(h(\bx)- y)^2]\le \tau+\eps$.

\vspace{-0.1cm}

\item \textbf{Sample and Computational Complexity:} Note that the algorithm terminates in $O(K)$ iterations 
and at each iteration we draw $N={(dm)}^{O(m)}2^{\poly(K)}(m/(\eps_1\eps_2 \alpha ))^{O(K)}(M/(\eps\sigma))^{O(1)}\log(1/\delta)$ samples.
Hence, the total sample size is  
${(dm)}^{O(m)}2^{\poly(K)}(m/(\eps_1\eps_2 \alpha ))^{O(K)}(M/(\eps\sigma))^{O(1)}\log(1/\delta)$. 
Moreover we use at most $\poly(N)$ time, 
as all operations can be implemented in polynomial time.
\end{proof}

\subsection{Algorithmic Applications to Structured Multi-Index Model Classes}\label{sec:applications}
In this section, we show that our general algorithm 
can be leveraged to obtain  state-of-the-art 
guarantees for learning positive-homogeneous 
Lipschitz MIMs and polynomials in a few relevant directions.  
The former result is new and subsumes prior work on homogeneous 
ReLU networks.

\subsubsection{Learning Positive-Homogeneous Lipschitz MIMs}\label{sec:hom}
For each application, we show that the resulting distribution \( D \) over examples \( (\mathbf{x}, y) \)
is a well-behaved MIM distribution with favorable parameters, and we consequently apply \Cref{thm:MetaTheorem-realizable}.

First, we recall the target class definition.
\begin{definition}[Positive-Homogeneous Lipschitz MIMs]
For $K \in \Z_+$ and $L > 0$, we define $\mathcal{H}_{K,L}$ to be the class of all $L$-Lipschitz $K$-MIMs $f : \R^d \to \R$ such that $f$ is positive-homogeneous, meaning $f(t\x) = t f(\x)$ for all $t > 0$ and $\x \in \R^d$, and $f$ has unit $L_2^2$ norm under the Gaussian distribution, that is, $\E_{\x \sim \cN_d}[f^2(\x)] = 1$.
\end{definition}
This class generalizes the class of Lipschitz and homogeneous ReLU networks of arbitrary depth, since the ReLU activation is itself positive-homogeneous.
We prove that by applying our algorithm we can learn the aforementioned class efficiently. 
Specifically, note the following theorem.
\begin{theorem}[PAC Learning $\mathcal{H}_{K,L}$]\label{thm:learninghom}
Let $f:\R^d\to \R$ be a function in $ \mathcal{H}_{K,L}$ and let 
 $D$ be the joint distribution of $(\x,f(\x))$, where $\x\sim \cN_d$. 
Then, \Cref{alg:MetaAlgRealizable} draws $N =d^22^{O(K^3L^2/\eps^2)}\log(1/\delta)$ i.i.d.\ samples from $D$, runs in time $\poly(N)$, and returns a  hypothesis $h$ such that, with probability at least $1 - \delta$, it holds 
   $\E_{(\bx,y)\sim D}[(h(\bx)-y)^2] \leq \eps\;.$
\end{theorem}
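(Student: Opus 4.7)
The plan is to invoke \Cref{thm:MetaTheorem-realizable} by certifying that $D$ is a $(m,\alpha,K,M,\tau,\sigma,\eps_1,\eps_2)$-well-behaved MIM distribution (\Cref{def:well-behaved-realizable}) with second-order moments, i.e.\ $m=2$, $M=1$, and $\tau = \eps/2$, together with $\alpha,\sigma,\eps_1,\eps_2$ chosen polynomially in $\eps/(KL)$ so that the bound $d^{O(m)} 2^{\poly(K)}(m/(\eps_1\eps_2\alpha))^{O(K)}(M/(\eps\sigma))^{O(1)}$ from \Cref{thm:MetaTheorem-realizable} collapses to $d^{2}\,2^{O(K^{3}L^{2}/\eps^{2})}$. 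Conditions~(1) and~(2) of \Cref{def:well-behaved-realizable} are immediate: $f$ depends on $\x$ only through $\x_W$ by definition of $\mathcal{H}_{K,L}$, and $\E[y^{2}]=\E[f^{2}(\x)]=1=M$. The real work is the distinguishing-moment condition~(3).

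Fix a subspace $V$ with $\dim(V)\leq K$ and an $(\eta_{1},\eta_{2})$-approximating discretization $(\cS,\cI)$ with $\eta_{i}\leq \eps_{i}$. Assume the conditional-mean piecewise constant $h_{\cS}$ fails to meet error $\tau$, and set $r(\x)\eqdef f(\x)-h_{\cS}(\x_V)$, so $\E[r^{2}(\x)]>\tau$. The core structural claim (mirroring \Cref{sec:tech-overview}) is that for a suitable threshold $T=\Theta(\sqrt{\tau})$, the super-level event $A\eqdef\{|r(\x)|\geq T\}$ has mass $\Omega(\tau)$ by a Paley--Zygmund-type argument using the bounded fourth moment of $f$ (which itself follows from Gaussian hypercontractivity applied to a Lipschitz function), and, conditioned on $A$, the projection $\x_U$ onto $U\eqdef W_{V^{\perp}}$ has inflated second moment. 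The latter exploits both properties of $\mathcal{H}_{K,L}$: positive homogeneity gives $f(\vec 0)=0$, so $L$-Lipschitzness of $f$ yields $|f(\x)|\leq L\,\|\x_W\|$; within a fixed cube $S$ the value $h_{\cS}(\x_V)$ is essentially constant, so $|r(\x)|\geq T$ forces $\|\x_W\|\gtrsim T/L$ and hence (up to the $\x_{W\cap V}$ part, which is pinned down by the cube) $\|\x_U\|\gtrsim T/L$. Positive homogeneity also rules out the degenerate scenario in which $f$ is ``trivial in $U$ but large'': since $f(t\x)=t f(\x)$ and $\|f\|_{2}=1$, the distribution of $|f(\x)|$ is spread out, which keeps $\pr[A]$ bounded below as $T$ is chosen on the scale $\sqrt{\tau}$.

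Quantitatively, I would take $p(\x_U)=\la\vec v,\x_U\ra^{2}-\|\vec v\|^{2}$ for a unit direction $\vec v$ extracted as the top eigenvector of $\E[\x_U\x_U^{\top}\mid \x\in A]-\bI_U$, which by the previous paragraph has an eigenvalue $\Omega(\tau/(KL^{2}))$. To match the discrete form of \Cref{def:well-behaved-realizable}~(3b), I would decompose the event $A$ across the intervals of $\cI$: pigeonholing over $\cI$ yields a single $I\in\cI$ on which $\Ind(y\in I)$ retains $\Omega(\eps_{2}/B)$ of the correlation with $p(\x_U)$ for $B=1/\eps_{2}^{2}$ the truncation radius. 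By the law of total variance applied to $\E[r^{2}(\x)]>\tau$ across cubes, an $\alpha=\Omega(\tau)$ fraction of cubes (weighted by mass) are ``heavy'', so the pigeonhole step produces a distinguishing polynomial on an $\Omega(\alpha)$ fraction of them, with $\sigma=\poly(\eps,\eps_{2},1/(KL))$. Plugging these parameters into \Cref{thm:MetaTheorem-realizable} with $m=2$ gives $d^{2}\,2^{O(K^{3}L^{2}/\eps^{2})}\log(1/\delta)$ after balancing the $(K/(\eps_{1}\eps_{2}\alpha))^{O(K)}$ and $(1/(\eps\sigma))^{O(1)}$ terms.

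The main obstacle will be controlling the $y$-discretization cleanly while keeping the parameter dependence of the form $K^{3}L^{2}/\eps^{2}$: because $y=f(\x)$ is unbounded, truncating it to $|y|\lesssim L\sqrt{K\log(1/\eps)}$ via Gaussian concentration of Lipschitz functions is required before the pigeonhole, and the resulting $B/\eps_{2}$ intervals contribute the dominant $(K/\eps_{2})^{O(K)}$ factor that becomes $2^{O(K^{3}L^{2}/\eps^{2})}$. A secondary subtlety is the interaction between the cube-level fluctuations of $h_{\cS}(\x_V)$ and the global threshold $T$: I would decouple these by conditioning on each cube separately and using that the restriction of $f$ to any affine slice $\{\x:\x_V = \vec u\}$ inherits the same Lipschitz/homogeneity-flavored properties along the $\x_U$ directions, so the distinguishing direction $\vec v$ can be chosen uniformly in $U$ rather than per-cube.
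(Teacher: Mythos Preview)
Your high-level plan is right (show $D$ is well-behaved with $m=2$ and invoke \Cref{thm:MetaTheorem-realizable}), but the core distinguishing-moment argument has a genuine gap, and the parameter claims are internally inconsistent with the target bound.

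The gap is in the threshold choice. You take $T=\Theta(\sqrt{\tau})$ and argue that $|r(\x)|\ge T$ forces $\|\x_U\|\gtrsim T/L$. Even granting this (which is not immediate with $h_{\cS}$ in place of $f(\x_V)$; the clean bound $|f(\x)-f(\x_V)|\le L\|\x_U\|$ comes from Lipschitzness \emph{along} $U$ and really needs the comparison point to be $f(\x_V)$, not a cube average), the conclusion $\|\x_U\|\gtrsim \sqrt{\eps}/L$ is far below the baseline $\E[\|\x_U\|^2]=\dim U\le K$. So the event does \emph{not} force $\|\x_U\|^2-\dim U>0$, and there is no reason the top eigenvalue of $\E[\x_U\x_U^{\top}\mid A]-\bI_U$ should be $\Omega(\tau/(KL^2))$; for instance, for radially symmetric $f$ the small-threshold event places almost no constraint on $\x_U$. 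The paper's structural lemma (\Cref{lem:structRelus}) takes the \emph{large} threshold $\tau>2\sqrt{K-k}\,L$ precisely so that on the event one has deterministically $\|\x_U\|^2\ge 2(K-k)$, giving $\w^{\top}\vec M_\tau^V\w\ge (K-k)\Pr[\text{event}]$ for some $\w\in U$. The price is that the event probability is controlled only by the anticoncentration estimate for positive-homogeneous Lipschitz functions (\Cref{lem:anti1}), which yields $\exp(-\Theta(K^2L^2/\eps^2))$; this, and not Paley--Zygmund, is where homogeneity is actually used.

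That exponential is what drives the whole parameter budget: to pass from the event $\{|y-f(\x_V)|>\tau\}$ to a single cube-interval cell via \Cref{lem:approxTransform}, the discretization error from \Cref{lem:transformationRelus} must be $\lesssim \sigma^2$, which forces $\eps_1,\eps_2\sim e^{-\Theta(K^2L^2/\eps^2)}$; the $|\cS|\cdot|\cI|$ pigeonhole then loses another $(K/\eps_1)^{O(K)}=e^{O(K^3L^2/\eps^2)}$. So the final parameters are $\alpha,\sigma\sim e^{-\Theta(K^3L^2/\eps^2)}$, not $\poly(\eps/(KL))$ as you propose---and note that polynomial parameters would give a complexity of $2^{O(K\log(KL/\eps))}$, strictly better than $2^{O(K^3L^2/\eps^2)}$, so your parameter claim and your target bound cannot both be right. (A minor point: ``Gaussian hypercontractivity applied to a Lipschitz function'' is not a thing---hypercontractivity is for polynomials; the relevant moment control here is just $|f(\x)|\le L\|\x_W\|$.)
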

Moreover, consider the following class of bounded depth ReLU Networks.
\begin{definition}[Lipschitz and Homogeneous ReLU Networks]\label{def:relus}
Let $\mathcal{F}_{S,K,L}$ denote the concept class of $L$-Lipschitz, homogeneous (feedforward) ReLU networks over $\R^d$ of size $S$ that depend only on the projection onto a subspace of dimension at most $K$.
Specifically, $f \in \mathcal{F}_{S,K,L}$ if $f$ is $L$-Lipschitz, $\E_{\x\sim \cN_d}[f^2(\x)]=1$ and  there exist weight matrices
$\vec W_i \in \mathbb{R}^{k_{i+1} \times k_{i}}$, $i\in [D-1]$
with $k_1=d$ and $k_D=1$, $\mathrm{rank}(\vec W_1) \leq K$,
for which
$f(\x) = \vec W_D \phi(\vec W_{D-1}(\cdots \phi(\vec W_1\x)\cdots)),$
where $\phi(z) = \max\{z, 0\}$ is the ReLU activation applied entrywise, and $k_1 + \cdots + k_{D-1} = S$.
\end{definition}

Since the class of ReLU Networks we defined is positive homogeneous we can apply \Cref{thm:learninghom} and obtain the following implication. 

\begin{corollary}[Learning Homogeneous ReLU Networks]\label{thm:learningRelus}
Let $f:\mathbb{R}^d\to\mathbb{R}$ be a ReLU network in the class $\mathcal{F}_{S,L,K}$ and let $D$ be the joint distribution of $(\x,f(\x))$, where $\x\sim \cN_d$. 
Then, \Cref{alg:MetaAlgRealizable} draws $N =d^22^{O(K^3L^2/\eps^2)}\log(1/\delta)$ i.i.d.\ samples from $D$, runs in time $\poly(N)$, and returns a  hypothesis $h$ such that, with probability at least $1 - \delta$, it holds 
$\E_{(\bx,y)\sim D}[(h(\bx)-y)^2] \leq \eps\;.$
\end{corollary}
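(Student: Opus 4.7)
The plan is to derive the corollary as a direct consequence of Theorem \ref{thm:learninghom-main-body} by verifying the containment $\mathcal{F}_{S,K,L} \subseteq \mathcal{H}_{K,L}$. Once this containment is established, the sample complexity, runtime, and error guarantees carry over verbatim, since Theorem \ref{thm:learninghom-main-body} holds uniformly over the broader class and makes no use of a parametric description of the target function. In particular, neither the algorithm nor its analysis depends on the depth $D$ or the total width $S = \sum_{i=2}^D k_i$; the only role of these quantities in prior work was to control the approximation or optimization, which is avoided here.

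To verify $\mathcal{F}_{S,K,L} \subseteq \mathcal{H}_{K,L}$, I would check each defining property of $\mathcal{H}_{K,L}$ in turn. Lipschitzness and the unit norm condition $\E_{\x\sim\cN_d}[f^2(\x)]=1$ are built into the definition of $\mathcal{F}_{S,K,L}$ (Definition \ref{def:relus}), so nothing needs to be shown. For the $K$-MIM property, the rank constraint $\mathrm{rank}(\vec W_1)\leq K$ implies that $\vec W_1 \x$ depends only on the projection of $\x$ onto the $K$-dimensional row-space $W$ of $\vec W_1$, and this dependence propagates through the remaining layers: one can write $f(\x) = g(\Pi_W \x)$ for the function $g(\x) = \vec W_D\, \phi(\vec W_{D-1}(\cdots \phi(\vec W_1 \x)\cdots))$ restricted to $W$. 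For positive homogeneity, the key observation is that the ReLU activation satisfies $\phi(t z) = t\,\phi(z)$ for every $t>0$ and every $z \in \R$, and linear maps are homogeneous of degree one. A simple induction on the depth of the network then gives $f(t\x) = t f(\x)$ for all $t>0$ and $\x \in \R^d$.

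Having verified the inclusion, the proof of the corollary amounts to a one-line invocation of Theorem \ref{thm:learninghom-main-body}: the learner for $\mathcal{H}_{K,L}$, when run on samples from $D$ with parameters $K$, $L$, $\eps$, and $\delta$, draws $N = d^2\,2^{O(K^3 L^2/\eps^2)}$ samples, runs in $\poly(N)$ time, and returns a hypothesis $h$ with $\err_D(h) \leq \eps$ with high probability. There is no real technical obstacle; the only item worth being careful about is the homogeneity step, where one must note explicitly that homogeneity is required to hold for $t > 0$ (and not for all $t \in \R$), since $\phi$ is not odd. This matches exactly the definition of $\mathcal{H}_{K,L}$, so no additional argument is needed.
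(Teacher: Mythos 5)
Your proposal is correct and matches the paper's approach: the corollary is stated in the paper as an immediate consequence of Theorem~\ref{thm:learninghom} via the inclusion $\mathcal{F}_{S,K,L} \subseteq \mathcal{H}_{K,L}$, which you verify by the same three observations (Lipschitzness and unit norm are definitional, the rank bound on $\vec W_1$ gives the $K$-MIM structure, and positive homogeneity follows by induction from $\phi(tz)=t\phi(z)$ for $t>0$). Nothing further is needed.
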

We remark that our primary result about learning general positive-homogeneous Lipschitz functions is not achievable
by the algorithm of \cite{chen2022FPT}, as it is a proper algorithm that always outputs a homogeneous ReLU network.
The fact that we use a general approximation of Lipschitz functions by piecewise constant ones (\Cref{def:h}) makes this result possible.
Furthermore, the complexity of \cite{chen2022FPT} depends exponentially on the size of the network $S$,
which can be significantly larger than the rank $K$ of the first layer.

Before we prove \Cref{thm:learninghom},
we first present a key structural result for the class $\mathcal{H}_{K,L}$.
For a distribution \( D \) over \( \mathbb{R}^d \times \mathbb{R} \), a function
\( f: \mathbb{R}^d \to \mathbb{R} \), a scalar \( \tau > 0 \), and a subspace
\( V \subseteq \mathbb{R}^d \), define the following matrix:
\begin{align}\label{eq:relus-transformation}
\vec M_\tau^V \eqdef 
\mathbb{E}_{(\x,y)\sim D}
\Bigl[
T(\x_V,y)
\bigl(\x_{V^{\perp}}(\x_{V^{\perp}})^\top - \Pi_{V^{\perp}}\bigr)
\Bigr],\; T(\x_V,y)=\Ind(\abs{y - f\bigl(\x_V\bigr)} >\tau).   
\end{align}

The following lemma states that this filtered second moment matrix has large correlation with some direction in $W^{\perp V}$. 
\begin{lemma}[Generalization of Lemma 5.5 in \cite{chen2022FPT}]\label{lem:structRelus}
Let $V,W$ be subspaces of $\R^d$ with $\dim(W)=K,\dim(V)=k$ and  $V\subseteq W$. 
Let $f:\R^d\to \R$ be a function in $\mathcal{H}_{K,L}$ such that $f(\x_W)=f(\x)$. 
Suppose that 
$
\mathbb{E}_{\x \sim \cN_d}
\bigl[
\bigl(f(\x) - f\bigl(\x_V\bigr)\bigr)^2
\bigr]
\ge \eps^2$ 
for some $\eps > 0.$
For \( \tau > 2\sqrt{K - k}\,L \), and \( \vec{M}_\tau^V \in \mathbb{R}^{d \times d} \) the matrix defined in \Cref{eq:relus-transformation}, we have that there exists a unit vector \( \mathbf{w} \in W \) such that
$
\w^{\top}\vec M_\tau^V \w
\gtrsim
e^{-3K\tau^2/\eps^2}
\;\frac{\tau\eps}{\sqrt{K}L^2}.
$
\end{lemma}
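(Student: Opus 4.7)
The plan is to find a unit vector $\w$ in $W' \eqdef W\cap V^\perp$ (a subspace of $W$ of dimension $K-k$, using $V\subseteq W$) achieving a large quadratic form, via two steps: a trace bound, and a lower bound on the ``active'' probability. Let $g(\x) \eqdef f(\x)-f(\x_V)$. Since $V\subseteq W$ and $f$ depends only on $\x_W$, $g$ depends only on $\x_W$, is positive-homogeneous, is $2L$-Lipschitz (as a difference of two $L$-Lipschitz functions), and vanishes on $V$; these combine to give $|g(\x)|\leq 2L\|\x_{W'}\|$. In the realizable setting of \Cref{thm:learninghom} we have $y=f(\x)$, so $T(\x_V,y)=\Ind(|g|>\tau)$; for any unit $\w\in W'$, the identities $\Pi_{V^\perp}\w=\w$ and $\x_{V^\perp}\cdot\w=\x_{W'}\cdot\w$ yield
\[\w^\top \vec M_\tau^V \w = \E\bigl[\Ind(|g|>\tau)\bigl((\x_{W'}\cdot\w)^2-1\bigr)\bigr].\]

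For the first step, I will sum this identity over an orthonormal basis of $W'$ to obtain $\tr(\vec M_\tau^V|_{W'})=\E[\Ind(|g|>\tau)(\|\x_{W'}\|^2-(K-k))]$, then use the layer-cake identity $\E[g^2\Ind(|g|>\tau)] = \tau^2 q + \int_\tau^\infty 2s\,\pr[|g|>s]\,ds\geq \tau^2 q$ (with $q\eqdef\pr[|g|>\tau]$) together with the Lipschitz bound $g^2\leq 4L^2\|\x_{W'}\|^2$ to conclude $\tr\geq q(\tau^2/(4L^2)-(K-k))$. Since $\tau>2L\sqrt{K-k}$ this is positive, and taking $\w$ to be the top eigenvector of $\vec M_\tau^V|_{W'}$ gives $\w^\top \vec M_\tau^V \w\geq q(\tau^2/(4L^2(K-k))-1)\gtrsim q\tau^2/(L^2K)$ once $\tau$ is a constant factor above the threshold.

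For the second step, I will lower bound $q$ using positive homogeneity. Writing $\x_W=r\vec u$ with $r\sim\chi_K$ and $\vec u\sim U(S^{K-1})$ independent, positive homogeneity gives $g(\x_W)=r\,h(\vec u)$ for $h\eqdef g|_{S^{K-1}}$, and the hypothesis $\E[g^2]\geq \eps^2$ becomes $\E_{\vec u}[h^2]\geq \eps^2/K$ with $|h|\leq 2L$. A Paley--Zygmund argument applied to $X=h^2$ (using $\E[X^2]/\E[X]\leq 4L^2$) then gives $\pr_{\vec u}[h^2>\eps^2/(2K)]\gtrsim\eps^2/(KL^2)$; combining this spherical event with $r>\tau\sqrt{2K}/\eps$ forces $|g|>\tau$. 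Independence together with the standard chi-squared tail $\pr[\chi_K^2\geq 2K\tau^2/\eps^2]\gtrsim e^{-K\tau^2/\eps^2}/\poly(K,\tau/\eps)$ then yields $q\gtrsim(\eps^2/(KL^2))\,e^{-K\tau^2/\eps^2}/\poly$.

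Combining the two steps gives $\w^\top \vec M_\tau^V \w\gtrsim(\eps^2\tau^2/(K^2L^4))\,e^{-K\tau^2/\eps^2}/\poly$, which comfortably dominates the target $e^{-3K\tau^2/\eps^2}\tau\eps/(\sqrt{K}L^2)$ thanks to the large gap between $e^{-K\tau^2/\eps^2}$ and $e^{-3K\tau^2/\eps^2}$. The main obstacle will be tracking the polynomial prefactors and constants in both the Paley--Zygmund and chi-squared tail estimates carefully enough that the final exponent remains bounded by $3K\tau^2/\eps^2$ uniformly for $\tau$ as small as $2L\sqrt{K-k}$; the explicit lower bound $\pr[\chi_K^2\geq s]\gtrsim(s/K)^{K/2-1}e^{-s/2+K/2}/\sqrt{K}$ for $s\geq K$, combined with $\log x\leq x$ for $x\geq 1$ to absorb the polynomial factor into the exponent, should handle this.
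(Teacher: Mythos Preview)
Your approach is essentially the paper's: a trace argument on the restriction to $W'=W\cap V^\perp$ followed by anticoncentration of the positive-homogeneous residual via the spherical decomposition $\x_W=r\vec u$. The paper isolates the second step as a standalone claim (its \Cref{lem:anti1}) proved with the same bounded-second-moment argument you sketch, combined with a chi-squared tail bound.

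One point to tighten: you bound $|g(\x)|\le 2L\|\x_{W'}\|$ by treating $g$ as a difference of two $L$-Lipschitz maps, but since $g(\x)=f(\x_W)-f(\x_V)$ and $\x_W-\x_V=\x_{W'}$, a single application of $L$-Lipschitzness of $f$ gives the sharper $|g(\x)|\le L\|\x_{W'}\|$. This matters precisely at the threshold: with your $2L$, the trace bound $q\bigl(\tau^2/(4L^2)-(K-k)\bigr)$ vanishes as $\tau\downarrow 2\sqrt{K-k}\,L$, which is why you needed the ``constant factor above threshold'' caveat. With the correct constant $L$, the event $\{|g|>\tau\}$ forces $\|\x_{W'}\|^2>\tau^2/L^2>4(K-k)$, so $\tr\ge 3(K-k)q$ uniformly, and the top eigenvector on $W'$ gives $\w^\top \vec M_\tau^V\w\ge 3q$ directly---this is exactly the paper's route and removes the caveat. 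The same sharpening gives $|h|\le L$ on the sphere (rather than $2L$), matching the paper's constants in the anticoncentration step.
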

\begin{proof}[Proof of \Cref{lem:structRelus}]
Let $U$ denote the projection of the subspace $W$ onto $V^{\perp}$.
 Note that since $f$ is $L$-Lipschitz the condition $\abs{f(\x)-f(\x_V)}\geq 2\sqrt{K-k}L$ implies that $\norm{\x_{U}}^2\geq 2(K-k)$.
 As a result, by taking the trace inner product with the projection matrix onto \( W \), we have that there exists a unit vector \( \mathbf{w} \in W \) such that
\begin{align*}
\w^{\top}\vec M_\tau^V \w\geq (K-k)\pr[\abs{f(\x)-f(\x_V)}\geq 2\sqrt{K-k}L]\;.
 \end{align*}
 Note that since $f$ is a positive-homogeneous function we have that $f(\x)-f(\x_V)$ is positive-homogeneous.
 Moreover, because $f$ depends only on the projection of $\x$ onto $W$ and $V$ is a subspace of $W$, the function $f(\x)-f(\x_V)$
 depends also only on the projection of $\x$ onto $W$.
 Thus in order to complete the proof it suffices to prove an anticoncetration result about positive-homogeneous functions:
 
\begin{claim}[Anticoncetration of Positive-Homogeneous Functions]\label{lem:anti1}
If $G: \R^K\to\R$ is positive-homogeneous and $L$-Lipschitz and $\E[G^2] \ge \sigma^2$, then for any $s \ge 0$, \begin{equation*}
\pr_{\x\sim \cN_K}[|G(\x)| > s] \gtrsim  \exp(-3Ks^2/\sigma^2) \frac{s \sigma}{\sqrt{K}L^2}\;.
\end{equation*}
\end{claim}

\begin{proof}[Proof of \Cref{lem:anti1}]
Note that if $\x\sim \cN_K$ then it can be decomposed as $\x=\sqrt{r}\vec v$ where $r\sim\chi^2_m$ and $\vec v$ is drawn uniformly from $\mathbb{S}^{K-1}$ independent of $r$. 
First note that by independence of $r$ and $\vec v$ we have that 
\begin{align*}
 \sigma^2=\E_{\x\sim \cN_K}[G^2(\x)]=\E[r]\E[G^2(\vec v)]=K\E[G^2(\vec v)]\;.   
\end{align*}
Thus, $\E[G^2(\vec v)]=\sigma^2/K$.
Hence, by elementary anticoncetration \Cref{fact:basicanti} we have that 
\begin{align*}
 \pr_{\x\sim \cN_K}[\abs{G(\x)}\geq s]\geq \pr[rG^2(\vec v)\geq s^2]&\geq \pr[r\geq2Ks^2/\sigma^2] \pr[\abs{G(\vec v)}\geq \sigma/\sqrt{2K}\; ] \\
 &\geq  \pr[r\geq2Ks^2/\sigma^2]\frac{\sigma^2}{2KL^2}\\
 &\gtrsim  \exp(-3Ks^2/\sigma^2) \frac{s \sigma}{\sqrt{K}L^2}\;,
\end{align*}
where  we used the well-known fact that $\pr_{r\sim \chi_{m}^2}[r\geq x]\ge \mathrm{erfc}(1/\sqrt{x})$ and  $\mathrm{erfc}(x)\ge\sqrt{2/\pi}\frac{xe^{-x^2/2}}{x^2+1} $, for all $x\ge 0$.
Which concludes the proof of \Cref{lem:anti1}.
\end{proof}
 Therefore, we can apply \Cref{lem:anti1} for the function $f(\x)-f(\x_V)$,  which concludes the proof of \Cref{lem:structRelus}.
\end{proof}

Now following this structural result, in order to show that the class $\mathcal{H}_{K,L}$ leads to well-behaved
 MIM distributions and allows the application of \Cref{thm:MetaTheorem-realizable},
 it suffices to establish the existence of non-trivial moments for a cube-interval pair.

We prove this result in two stages. First, we show that if it is possible to obtain
 distinguishing moments by conditioning on a region of \( V \times \mathbb{R} \)
 that is well-approximated by cubes and intervals, then there exists a specific
 cube-interval pair exhibiting distinguishing moments. 
Consequently, we prove that the region \( T \) defined
 in \Cref{eq:relus-transformation} can indeed be well-approximated
 by such cube-interval pairs.

\begin{lemma}[Label Transformation Approximation]\label{lem:approxTransform}
Let $D$ be a distribution supported on $\mathbb{R}^d \times \mathbb{R}$, whose $\x$-marginal is $\cN_d$, and let $V$ be a subspace of $\mathbb{R}^d$. Suppose that $T(\x_V, y): V \times \mathbb{R} \to \{0,1\}$ is a label transformation function and that $p: \mathbb{R}^d \to \mathbb{R}$ is a  zero mean,  variance one polynomial such that $\E_{(\x,y)\sim D}[p(\x) T(\x_V, y)] \geq \sigma$. Let $P$ be a partition of $V \times \mathbb{R}$, and let $P' \subseteq P$. 
Define the approximation $\widehat{T}(\x_V, y) \eqdef \sum_{R \in P'} \Ind((\x_V, y) \in R)$.
If $\pr_{(\x,y)\sim D}[T(\x_V, y) \neq \widehat{T}(\x_V, y)]\leq \frac{\sigma^2}{4}$, then there exists some $R \in P'$ such that $\E\big[p(\x) \Ind((\x_V, y) \in R)\big] \geq \frac{\sigma}{2 \lvert P' \rvert}$. \end{lemma}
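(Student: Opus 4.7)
The plan is to argue that the correlation of $p$ with the approximation $\widehat{T}$ is almost as large as with $T$, and then extract a single partition element via pigeonhole.

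First, I would split $\E[p(\x)\,T(\x_V,y)] = \E[p(\x)\,\widehat{T}(\x_V,y)] + \E[p(\x)(T(\x_V,y) - \widehat{T}(\x_V,y))]$ and bound the error term. Since both $T$ and $\widehat{T}$ take values in $\{0,1\}$, we have $|T - \widehat{T}| = \Ind(T \neq \widehat{T})$, so by Cauchy--Schwarz together with $\E[p^2] = 1$ (as $p$ has zero mean and unit variance),
\begin{equation*}
\bigl|\E[p(\x)(T-\widehat{T})]\bigr| \;\le\; \sqrt{\E[p^2(\x)]}\,\sqrt{\E[(T-\widehat{T})^2]} \;=\; \sqrt{\Pr[T \neq \widehat{T}]} \;\le\; \sigma/2,
\end{equation*}
where the last step invokes the hypothesis $\Pr[T \neq \widehat{T}] \le \sigma^2/4$.

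Combining this with $\E[p(\x)\,T(\x_V,y)] \ge \sigma$ yields $\E[p(\x)\,\widehat{T}(\x_V,y)] \ge \sigma/2$. By the definition of $\widehat{T}$ and linearity of expectation,
\begin{equation*}
\sum_{R \in P'} \E\bigl[p(\x)\,\Ind((\x_V,y) \in R)\bigr] \;=\; \E[p(\x)\,\widehat{T}(\x_V,y)] \;\ge\; \sigma/2.
\end{equation*}
A standard pigeonhole argument then guarantees that at least one summand is at least $\sigma/(2|P'|)$, which is exactly the conclusion we want.

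There is essentially no real obstacle here; the only subtle point is ensuring the Cauchy--Schwarz bound actually gives $\sqrt{\Pr[T\neq \widehat{T}]}$ rather than a weaker bound, which relies on exploiting that $T$ and $\widehat{T}$ are $\{0,1\}$-valued so that $(T-\widehat{T})^2 = \Ind(T \neq \widehat{T})$. Once that is observed the proof is a few lines.
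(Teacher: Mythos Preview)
Your proposal is correct and follows essentially the same approach as the paper's proof: split off the error term, bound it via Cauchy--Schwarz using $(T-\widehat{T})^2=\Ind(T\neq\widehat{T})$ to get $\sigma/2$, deduce $\E[p\,\widehat{T}]\ge\sigma/2$, and pigeonhole over $P'$. If anything, your write-up is slightly cleaner in tracking the constants.
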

\begin{proof}
First we can write
\[
\E_{(\x,y)\sim D}\bigl[p(\x)T(\x_V,y)\bigr] 
=\E_{(\x,y)\sim D}\bigl[p(\x)\widehat{T}(\x_V,y)\bigr] + \E_{(\x,y)\sim D}\bigl[p(\x)(T(\x_V,y)-\widehat{T}(\x_V,y))\bigr].
\]
Therefore, we have that
\[
\E_{(\x,y)\sim D}\bigl[p(\x)\widehat{T}(\x_V,y)\bigr] 
\ge \sigma - \Bigl|\E_{(\x,y)\sim D}\bigl[p(\x)(T(\x_V,y)-\widehat{T}(\x_V,y))\bigr]\Bigr|.
\]
Since $p(\x)$ has variance one, by Cauchy–Schwarz,
\[
\Bigl|\E_{(\x,y)\sim D}\bigl[p(\x)(T(\x_V,y)-\widehat{T}(\x_V,y))\bigr]\Bigr|
\le \sqrt{\E[p(\x)^2]\E[(T(\x_V,y)-\widehat{T}(\x_V,y))^2]}
\le \sqrt{\sigma/2}\,.
\]
Thus,
\[
\E_{(\x,y)\sim D}\bigl[p(\x)\widehat{T}(\x_V,y)\bigr] \ge \sigma/2.
\]
We can write
\[
\E_{(\x,y)\sim D}\bigl[p(\x)\widehat{T}(\x_V,y)\bigr]
=\sum_{R\in P'} \E_{(\x,y)\sim D}\bigl[p(\x)\Ind((\x_V,y)\in R)\bigr].
\]
Hence, by the pigeonhole principle, there exists some $R\in P'$ such that
\[
\E_{(\x,y)\sim D}\bigl[p(\x)\Ind((\x_V,y)\in R)\bigr]
\ge \frac{1}{\abs{P'}}\,\E_{(\x,y)\sim D}\bigl[p(\x)\widehat{T}(\x_V,y)\bigr]
\ge \frac{\sigma}{2\abs{P'}}.
\]
This completes the proof of \Cref{lem:approxTransform}.
\end{proof}

In the following lemma we show that the label transformation defined in \Cref{eq:relus-transformation} can be approximated arbitrary well by a piecewise constant function over a discretization of $V\times \R$ in cubes and intervals.

\begin{lemma}[Cube-Interval Approximation of $T$]\label{lem:transformationRelus}
Let $d,k\in \Z_+,\eps,\eta\in (0,1)$, $\tau,L\in \R_+$ with $\tau\geq L\geq \eps$.
Let $V,W$ be  subspaces of $\R^d$ with $V\subseteq W$, $\dim(V)=k$ and $\dim(W)=K$.
Let $f:\R^d\to \R$ be function in $\mathcal{H}_{K,L}$ such that  $f(\x)=f(\x_W)$, and let $D$ be the joint distribution of $(\x,f(\x))$ where $\x\sim \cN_d$. 
Denote by $T:V\times \R\to \{0,1\}$ the function defined in \Cref{eq:relus-transformation}.
Let $(\mathcal{S},\mathcal{I})$ be an $\eta$-approximating discretization of $V\times \R$, with $\eta\leq\eps^4/(L\sqrt{k}K)$.

There exists  a subset of the discretization $P\subseteq \mathcal{S}\times \mathcal{I}$ such that for the function $\widehat{T}(\x_V,y)=\sum_{(S,I)\in P}\Ind(\x_V\in S,y\in I)$ it holds that 
$
    \pr_{(\x,y)\sim D}[T(\x_V,y)\neq \widehat{T}(\x_V,y)]\lesssim \eps\;.$
\end{lemma}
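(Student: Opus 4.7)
The plan is to choose $P$ by a simple center-point rule and bound the resulting disagreement probability by splitting into (i) tail regions of $\x_V$ or $y$ that are not finely covered and (ii) boundary rectangles where $T$ changes sign. Concretely, for each rectangle $R = S \times I$ with $S \in \mathcal{S}$ and $I \in \mathcal{I}$ a finite-width interval, I include $R$ in $P$ iff the center $(\x_V^\star, y^\star)$ of $R$ satisfies $|y^\star - f(\x_V^\star)| > \tau$; the two tail intervals $[-\infty, -1/\eta^2]$ and $[1/\eta^2, +\infty]$ I exclude from $P$. Then $\widehat T(\x_V, y) = \sum_{(S,I) \in P}\Ind(\x_V \in S, y \in I)$.

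The total error $\pr[T \neq \widehat T]$ decomposes into three pieces. First, the mass where $\x_V$ lies outside the central box $[-\sqrt{2\log(k/\eta)}, \sqrt{2\log(k/\eta)}]^k$ is at most $\lesssim \eta$ by the Gaussian tail bound applied to each of the $k$ coordinates. Second, the mass where $y = f(\x)$ falls in a tail interval is $\pr[|f(\x)| > 1/\eta^2] \le \E[f^2(\x)]\,\eta^4 = \eta^4$ by Markov and the unit-norm hypothesis. Third, inside a non-tail rectangle $R = S \times I$, a disagreement $T(\x_V,y) \neq \widehat T(\x_V,y)$ can occur only if $R$ contains both a point with $|y - f(\x_V)| > \tau$ and one with $|y - f(\x_V)| \leq \tau$; since $f$ is $L$-Lipschitz and $R$ has diameter at most $\sqrt{k}\eta$ in $\x_V$ and $\eta$ in $y$, the variation of $y - f(\x_V)$ inside $R$ is at most $(1 + L\sqrt{k})\eta \leq 2L\sqrt{k}\,\eta =: \delta$. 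All such points therefore lie in the strip $\mathcal{B}_\delta = \{|\,|y - f(\x_V)| - \tau\,| \leq \delta\}$, so this piece is bounded by $\pr[|f(\x) - f(\x_V)| \in [\tau - \delta, \tau + \delta]]$.

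The main obstacle is this last probability, since a priori there is no anticoncentration bound for $|f(\x) - f(\x_V)|$ near the specific value $\tau$. To overcome it I will use positive homogeneity to decouple a radial factor with bounded density. Since $f(\x) = f(\x_W)$, write $\x_W = s\,\vec u$ with $s = \|\x_W\| \sim \chi_K$ and $\vec u$ uniform on the unit sphere of $W$, independent of $s$. Then by positive homogeneity $f(\x) = s\,f(\vec u)$ and $f(\x_V) = f(s\,\vec u_V) = s\,f(\vec u_V)$, so $|f(\x) - f(\x_V)| = s\,|H(\vec u)|$ for $H(\vec u) := f(\vec u) - f(\vec u_V)$. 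A short Stirling calculation shows that the $\chi_K$ density is bounded by a universal constant $C_0$, hence $\pr[s\alpha \in [\tau-\delta, \tau+\delta] \mid |H(\vec u)|=\alpha] \leq 2C_0\delta/\alpha$ for $\alpha > 0$, while the event is empty when $\alpha = 0$ because $\tau \geq L > \delta$. Splitting on $|H(\vec u)| \gtrless \alpha_0$ for $\alpha_0 := \delta/\eps$, the first part contributes at most $2C_0 \eps$, while the second part is bounded by $\pr[s \geq (\tau-\delta)\eps/\delta]$; the hypothesis $\eta \leq \eps^4/(L\sqrt{k}K)$ gives $(\tau-\delta)\eps/\delta \gtrsim \sqrt{K/\eps}$, so the standard $\chi_K$ tail bound makes this $\leq \eps$ for $d$ large. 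Summing the three pieces yields $\pr[T \neq \widehat T] \lesssim \eta + \eta^4 + \eps \lesssim \eps$, completing the proof.
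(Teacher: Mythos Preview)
Your proof is correct and follows essentially the same route as the paper: handle the tail mass of $\x_V$ and $y$, reduce the remaining disagreement to the strip $\bigl||f(\x)-f(\x_V)|-\tau\bigr|\le 2L\sqrt{k}\,\eta$, and then exploit positive homogeneity to factor out the radial part $s=\|\x_W\|$ and split on the size of the angular part $|H(\vec u)|$. The only notable difference is in the anticoncentration step: the paper bounds $\pr[s\in J]$ via Carbery--Wright applied to the degree-$2$ polynomial $s^2$, whereas you use the more elementary fact that the $\chi_K$ density is uniformly bounded by an absolute constant; your argument is cleaner here and yields the same $O(\eps)$ bound. (Minor slip: the final tail bound $\pr[s\ge (\tau-\delta)\eps/\delta]$ has nothing to do with $d$---it is a $\chi_K$ tail---so ``for $d$ large'' should read ``for $\eps$ small'', which is implicit in $\eta<1$ anyway.)
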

\begin{proof}
Without loss of generality, we can assume that the discretizations \( \mathcal{S} \) and \( \mathcal{I} \) of the spaces \( V \) and \( \mathbb{R} \),  extend to their entire respective domains (see \Cref{def:approximatingDiscretization}).
This is justified because the partition $\mathcal{S}$ is defined over the subset of $\R^d$ whose coordinates, in an orthonormal basis of $V$, are at most $\sqrt{\log(k/\eta)}$. By the union bound, the probability mass outside this region is at most $\eta$. 
Similarly, the same holds for $\mathcal{I}$, since $\E_{\x \sim \mathcal{N}_d}[f^2(\x)] = 1$, there is at most an $\eps$ fraction of the probability mass outside the relevant interval when $\abs{y} \geq 1/\eps^2$. 
As a result, by the union bound, we have
\[
\Pr\left[T(\x_V, y) \neq \widehat{T}(\x_V, y), \ \x \notin \bigcup_{S \in \mathcal{S}} S \ \lor \ y \notin \bigcup_{I \in \mathcal{I}} I \right] \leq 2\eps.
\]

Define the set $A=\{(\x_V,y):\abs{y-f(\x_V)}=\tau\}$ to be the boundary set of boolean function $T$. 
We set $P$ to be the subset of the discretization regions $(S,I)$ with $S\in \mathcal{S}$ and $I\in \mathcal{I}$ such that for all points in $(\x_V,y)\in (S,I)$ it holds that $\abs{y-f(\x_V)}>\tau$. 
Note the diameters of the sets $S$ and $I$ are $\sqrt{k}\eta $ and $\eta$ for all regions $(S,I)$ with $S\in \mathcal{S}$ and $I\in \mathcal{I}$ respectively. 
Hence, we have that $\pr_{(\x,y)\sim D}[T(\x_V,y)\neq \widehat{T}(\x_V,y)]\leq \pr_{(\x,y)\sim D}[(\x_V,y)\in E]$, where we define 
$E\eqdef\{(\x_V,y)\mid \exists (\x'_V,y')\in A :\norm{\x_V-x'_V }\leq \sqrt{k}\eta,\abs{y-y'}\leq \eta \}$.

Note that for every $(\x_V,y)\in E$ since $f$ is $L$-Lipschitz we have that $\abs{y-f(\x_V)}=\tau \pm 2 L\sqrt{k}\eta$. Therefore, it suffices to upper bound $\pr_{(\x,y)\sim D}[\abs{y-f(\x_V)}= \tau \pm 2 L\sqrt{k}\eta]$ or in other words to show anticoncetration of the random variable $y-f(\x_V)=f(\x)-f(\x_V)$.

Define the parameter $\delta= 2L\sqrt{k}\eta$.
Since \(f\) depends only on the projection onto the \(K\)-dimensional subspace \(W\), the function \(g(\x_V)=f(\x)-f(\x_V)\) also depends only on \(\x_W\).  
Moreover, because \(f\) is positive-homogeneous, so is \(g\).  Hence we may define an induced function \(\widetilde{g}:\R^K\to\R\) by \(\widetilde{g}(\z)=f(\z)-f(\z_{V})\), which is likewise positive-homogeneous, and observe that \(g(\x)=\widetilde{g}(\x_W )\).
Under \(\x\sim\mathcal N_d\), the projected vector \(\x_W\) is distributed as \(\mathcal N_K\).  
Therefore 
\(\Pr_{\x\sim\mathcal N_d}[\,g(\x)=\tau\pm \delta\,]
=\Pr_{\z\sim\mathcal N_K}[\,\widetilde{g}(\z)=\tau\pm \delta\,]\),
and one can carry out the anticoncentration analysis on the positive-homogeneous function \(\widetilde{g}:\R^K\to\R\).

We show anticoncetration of the function $\widetilde{g}$. We do this in two stages. First we show anticoncetration over fibers, i.e. fixed lines that go through the origin, and then we take the expectation over fibers.

We can rewrite a gaussian vector $\x$ as $r \vec v$, where $\vec v$ as a uniform unit random vector and $r$ a scalar random variable independent of $\vec v$ such that $r^2\sim \chi_K$. Note that for any fixed direction $\vec v$ it holds that 
\begin{align*}
    \pr[\widetilde{g}(\x)=\tau\pm\delta \mid \vec v=\z]&= \pr[r\widetilde{g}(\z)=\tau\pm\delta]\;.
\end{align*}
Let $\alpha>0$ be a parameter to be quantified later.
First, consider the case where $\abs{\widetilde{g}(\z)}\geq \alpha$ in this case by the Carbery-Wright inequality (see \Cref{fact:CW}) we have that 
\begin{align*}
\pr[r\widetilde{g}(\z)=\tau\pm\delta\mid \vec v=\z]\leq\pr\left[r=\frac{1}{\alpha}( \tau\pm \delta)\right]\lesssim  \frac{1}{\alpha}\frac{\sqrt{\tau\delta}}{ K^{1/4}}\;.    
\end{align*}
Second, we consider the case where $\abs{\widetilde{g}(\z)}< \alpha$. We have that 
\begin{align*}
\pr[r\widetilde{g}(\z)=\tau\pm\delta\mid \vec v=\z]\leq \pr[r\geq \tau/(2\alpha)]\;,    
\end{align*}
since $\tau-\delta\geq \tau/2$. Note that setting $\alpha=\delta^{1/4}\tau/(2\sqrt{K})$  by the Gaussian Annulus theorem (\Cref{fact:Annulus}) we have that $
\pr[r\widetilde{g}(\z)=\tau\pm\delta\mid \vec v=\z]\leq e^{-\sqrt{K/\delta}}\leq \sqrt{\delta/K}$.

Thus in both cases we have that $\pr[r\widetilde{g}(\z)=\tau\pm\delta]\leq (K\delta)^{1/4}$. Taking the expectation over $\vec v$ completes the proof of \Cref{lem:transformationRelus}.
\end{proof}

Moreover, we can also show that $f$ is very close in squared error to a bounded function.
This is needed in order for us to be able to approximate $f$ using a finite collection of cubes.
\begin{lemma}[Functions in $\mathcal{H}_{K,L}$ are almost bounded]\label{nn:almost-bounded}
Let $f:\R^d\to \R$ be a function in $\mathcal{H}_{K,L}$ then for any $B\geq C\sqrt{K}L\ln(LK/\eps)$, for a sufficiently large constant $C>0$,
there exists a function $f_B:\R^d\to [-B,B]$ such that 
$\E_{\x\sim \cN_d}[(f(\x)-f_B(\x))^2]\leq \eps$.
\end{lemma}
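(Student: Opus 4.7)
The plan is to define $f_B$ by simply clipping $f$ at $\pm B$, i.e., $f_B(\x)\eqdef \sign(f(\x))\min(|f(\x)|,B)$, and then bound the $L_2^2$ distance between $f$ and $f_B$ using the fact that $f$ grows at most linearly in $\|\x_W\|$ together with Gaussian concentration of $\|\x_W\|$.

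First I would observe that positive-homogeneity together with continuity forces $f(\vec 0)=0$: for any $t>0$, $f(\vec 0)=f(t\vec 0)=tf(\vec 0)$, so $f(\vec 0)=0$. Since $f$ depends only on $\x_W$ and is $L$-Lipschitz, this gives the key pointwise bound
\[
|f(\x)|=|f(\x_W)-f(\vec 0)|\le L\|\x_W\|.
\]
With $f_B$ defined as above, one has $(f(\x)-f_B(\x))^2=(|f(\x)|-B)^2\,\Ind(|f(\x)|>B)\le f(\x)^2\Ind(|f(\x)|>B)\le L^2\|\x_W\|^2\Ind(L\|\x_W\|>B)$, so it suffices to bound
\[
L^2\,\E_{\x\sim \cN_d}\bigl[\|\x_W\|^2\,\Ind(\|\x_W\|>B/L)\bigr].
\]

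Since $W$ is a $K$-dimensional subspace and $\x\sim\cN_d$, the quantity $\|\x_W\|^2$ is distributed as $\chi^2_K$. I would then apply Cauchy--Schwarz to split the expectation as $\sqrt{\E[\|\x_W\|^4]}\sqrt{\pr[\|\x_W\|>B/L]}$, use the standard moment bound $\E[\|\x_W\|^4]=K(K+2)\le 2K^2$, and use the sub-Gaussian chi-squared tail $\pr[\|\x_W\|>r]\le \exp(-r^2/8)$ valid for $r\ge 2\sqrt{K}$ (or equivalently Laurent--Massart). This yields
\[
\E_{\x\sim \cN_d}[(f(\x)-f_B(\x))^2]\le L^2\sqrt{2}\,K\,\exp\bigl(-B^2/(16L^2)\bigr).
\]
Choosing $B\ge C\sqrt{K}\,L\,\ln(LK/\eps)$ for a sufficiently large absolute constant $C$ makes the right-hand side at most $\eps$, which gives the claim.

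I do not expect any real obstacle here: the only non-routine ingredient is the observation $f(\vec 0)=0$ from positive-homogeneity, which reduces the Lipschitz-growth control to the $K$-dimensional $\chi^2$ tail rather than a $d$-dimensional one. The rest is a standard Gaussian tail computation, and the logarithmic dependence of $B$ on $1/\eps$ matches precisely the exponential decay provided by the chi-squared tail.
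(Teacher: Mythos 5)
Your proposal is correct and follows essentially the same route as the paper: clip $f$ at $\pm B$, reduce to a tail bound on $\|\x_W\|$ via $|f(\x)|\le L\|\x_W\|$ (the paper uses this implicitly in $\pr[|f(\x)|\ge t]\le\pr[\|\x_W\|\ge t/L]$; you make the underlying observation $f(\vec 0)=0$ explicit), then pick $B$ logarithmic in $1/\eps$. The only difference is the bookkeeping: the paper bounds the truncated second moment via the tail-integral formula with a linear-in-$t$ Gaussian-annulus tail, while you use Cauchy--Schwarz with a sub-Gaussian quadratic chi-squared tail; both give the required $\eps$ for $B\gtrsim\sqrt{K}L\ln(LK/\eps)$.
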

\begin{proof}
Define the function $f_B(\x)=\sgn(f(\x))\min(|f(\x)|,B)$. 
Note that since $f\in \mathcal{H}_{K,L}$ from \Cref{fact:Annulus} we have that for $t\geq L\sqrt{K}$ and some universal constant $C'>0$
\begin{align*}
\pr[\abs{f(\x)}\geq t]\leq \pr[\norm{\x_W}\geq t/L]\leq e^{-C't/(L\sqrt{K})}\;.
\end{align*}
Therefore, by applying this tail bound, we have
\begin{align*}
\E\bigl[(f(\x)-f_B(\x))^2\bigr]\leq
\E[f^2(\x)\Ind(\abs{f(\x)}>B]
&=B^2\pr[\abs{f(\x)}\geq B]+ \int_{B^2}^{\infty}\Pr[\abs{f(\x)}^2\geq t]dt\\    
&=B^2e^{-C'B/(L\sqrt{K})}+ 2\int_{B}^{\infty}t\Pr[\abs{f(\x)}\geq t]dt\\
&=e^{-C'B/(L\sqrt{K})}(B^2+BL\sqrt{K}/C'+L\sqrt{K}/C') \;.
\end{align*}
Choosing $B$ to be $C\sqrt{K}L\ln(LK/\eps)$, for a sufficiently large constant $C>0$, completes the proof.
\end{proof}

Finally, before proceeding to the proof of our theorem, we make the following remark concerning the precise dependence of our algorithm's sample complexity on the dimension.
\begin{remark}\label{rem:samplecomplexity}
{\em We remark that the sample complexity bound in \Cref{thm:MetaTheorem-realizable} is $O(d^{O(m)})$, since at each step we perform polynomial regression of degree $m$ (see \Cref{fact:regressionAlg}).
In the special case $m=2$, we can reduce this to $O(d^2)$  by noticing that the polynomial regression task is directly reducible to covariance estimation in the Frobenius norm.
Furthermore, achieving $O(d)$ sample complexity is possible by replacing the regression in Line \ref{line:regression} with a simple covariance‐estimation step since covariance estimation in the operator norm requires $O(d)$ samples.
Concretely, for each interval $I\in\mathcal I$ and region $S\in\mathcal S$, define $\vec M_{S,I} = \E_{(\x,y)\sim D}[\Ind(y\in I)(\x\x^\top - I)\mid \vec x\in S]$, and let $\vec u_{I,S}$ be its top eigenvector. Then, following the filtering in Line \ref{line:matrixMeta}, set
$\widehat{ \vec U} = \sum_{I\in\mathcal I,S\in\mathcal S}\Pi_{V^{\perp}} \vec u_{I,S}\,\vec u_{I,S}^\top\,\Pi_{V^{\perp}}\,\Pr[S]$.
By essentially the same argument as in the proof of \Cref{prop:inprovedCorrelation}, $\widehat{\vec U}$ provides an arbitrarily accurate projection onto $W$, while it requires only $O(d)$ samples.}
\end{remark}

Given \Cref{lem:transformationRelus}, we  now prove the main theorem of this section, which shows that the class $\mathcal{H}_{K,L}$ can be learned efficiently using our algorithmic approach.

\begin{proof}[Proof of \Cref{thm:learninghom}]
Let \( V \subseteq \mathbb{R}^d \) be a subspace of $W$, and let \( (\mathcal{S}, \mathcal{I}) \) denote an \( \eta \)-approximating discretization of \( V \times \mathbb{R} \), where \( \eta \eqdef e^{-CK^2L^2/\eps^2} \) (see \Cref{def:approximatingDiscretization}).

    The proof consists of showing that $D$ is a $(2,e^{-CK^3L^2/\eps^2},K,1,\eps,e^{-CK^3L^2/\eps^2}, e^{-CK^2L^2/\eps^2},  e^{-CK^2L^2/\eps^2})$-well-behaved MIM distribution, as in \Cref{def:well-behaved-realizable}, 
    for a sufficiently large constant $C>0$, and then simply applying \Cref{thm:MetaTheorem-realizable}. 

    First, observe that for the distribution \( D \) Conditions (1) and (2) are satisfied with parameters $\E_{(\x,y)\sim \cN_{d}}[y^2]=1$ and dimension of the low dimensional subspace equal to \(K\). 

    Let $W$ be a $K$ dimensional subspace of $\R^d$ such that $f(\x)=f(\x^W)$ (existence of $W$ is guaranteed since $f\in \mathcal{H}_{K,L}$) and let $V$ a subspace of $W$.
    Notice that since $\eta$ has been chosen appropriately small \Cref{fact:structResultPoly,lem:approxTransform,lem:transformationRelus} 
    together imply that if 
    $\E_{\x\sim \cN_d}[(f(\x)-f(\x_V))^2]\geq \eps$, then there exists $(S,I)\in (\mathcal{S},\mathcal{I})$ and zero mean, unit variance polynomial $p:W_{V^{\perp}}\to \R$ of degree at most $2$ such that 
    $\E_{(\x,y)\sim D}[p(\x_{W_{V^{\perp}}})\Ind(y\in I)\mid \x_V\in S]\geq e^{-CK^3L^2/\eps^2}$.

    We can generalize the above statement for a general subspace $V$ of $\R^d$ with $\dim(V)\leq K$ by noticing that $f$ also depends only on $W'=W+V$.
    Since $V\subseteq W'$ and $\dim(W')\leq 2K$ we have that 
    if 
    $\E_{\x\sim \cN_d}[(f(\x)-f(\x_V))^2]\geq \eps$, then there exists $(S,I)\in (\mathcal{S},\mathcal{I})$ and zero mean, unit variance polynomial $p:W'_{V^{\perp}}\to \R$ of degree at most $2$ such that 
    $\E_{(\x,y)\sim D}[p(\x_{W'_{V^{\perp}}})\Ind(y\in I)\mid \x_V\in S]\geq e^{-8CK^3L^2/\eps^2}$.
    Noticing that $W'_{V^{\perp}}=W_{V^{\perp}}$ gives us the  statement for a general subspace $V$ of $\R^d$ of dimension at most $K$.

    Therefore, by the assumption that \( f \) is \( L \)-Lipschitz, to verify Condition (3), we can apply the aforementioned statement along with \Cref{nn:almost-bounded,lem:piecewise-constant-suffices},   imply that if for a piecewise constant approximation \( h_{\mathcal{S}} \) (see \Cref{def:h}) it holds that  $\E_{(\x,y)\sim D}[(h_{\mathcal{S}}(\x) -y )^2]\leq \eps$, then 
    $\E_{(\x,y)\sim D}[p(\x_{W'_{V^{\perp}}})\Ind(y\in I)\mid \x_V\in S]\geq e^{-32CK^3L^2/\eps^2}$.
    Consequently, conclude that Condition (3) is satisfied with the specified parameters.

     Taking into account \Cref{rem:samplecomplexity} on the sample complexity of the algorithm, concludes the proof of \Cref{thm:learninghom}.
\end{proof}

\subsubsection{Polynomials in a Few Relevant Directions}
\label{sec:poly}

In this section, we demonstrate an application of our algorithm to the problem of learning polynomials that depend on only a few directions. Specifically, consider the class of $\alpha$‑non‑degenerate, low‑rank polynomials.
\begin{definition}\label{def:non-degenerate}
A polynomial \(q : \mathbb{R}^d \to \mathbb{R}\) is $\alpha$-\emph{non-degenerate} if
\[
  \bM =\E_{\x \sim \cN_d}\bigl[\nabla q(\x)\,\nabla q(\x)^\top\bigr]
  \quad\text{satisfies}\quad
  \bM \;\succeq \; \alpha\,\|\vec M\|_{2}\vec I.
\]
We say a rank-\(K\) polynomial \(p : \mathbb{R}^d \to \mathbb{R}\) is non-degenerate if \(p\) is non-degenerate in the \(K\)-dimensional subspace corresponding to the relevant directions. That is, there exist orthonormal vectors \(\w^{(1)}, \dots, \w^{(K)}\) such that
\(
  p(\x) = q(\w^{(1)}\cdot  \x,\dots, \w^{(K)} \cdot \x)
\)
and \(q\) is non-degenerate. 
We denote by $\mathcal{P}_{K,m}^{\alpha}$ the class of $\alpha$ non-degenerate polynomials of rank $K$, degree at most $m$ that have zero mean and unit variance under the standard gaussian.
\end{definition}
Note the assumption on the mean and the variance is without loss of generality as we can normalize the samples and obtaining a variance dependency however we assume it for simplicity.
We first present a key structural result of \cite{CM20} for the aforementioned class of polynomials. 
Specifically, for a distribution $D$ of $\R^d\times \R$, a scalar $\tau>0$ and subspace $V$ with orthonormal basis $\vec v^{(1)},\dots,\vec v^{(k)}$, $\dim(V)=k$ define the following matrix:
\begin{align}
\vec M_\tau^V =
\biggl(
  \E_{(\x,y)\sim D}
  \Bigl[T(\x_V,y)
  \bigl(\x_{V^{\perp}}(\x_{V^{\perp}})^\top {-} \Pi_{V^{\perp}}\bigr)\Bigr]
\biggr), \text{ }T(\x_V,y)= \Ind(\lvert y\rvert>\tau, \abs{\vec v^{(i)}\cdot \x} \le 1, i\in [k])\label{eq:polynomialsTransformation}    
\end{align}
The following fact states that this label transformation $T$ leads to a non-trivial moment.
\begin{fact}[e.g., Lemma 4.2 \cite{CM20}]\label{fact:structResultPoly}
Let $d,m,K\in \Z_+$ and $\alpha>0$. There exists constants $\tau$ and $ \lambda$ that depend only on $K,m$ and $\alpha$ such that the following holds.
Let $V$ and $W$ be a subspaces of $\R^d$ with $\dim(V)< \dim(W)=K$ such that $\norm{\vec v_W}\geq 1-\lambda$ for all unit vectors $\vec v\in V$.
Let $p:\R^d\to \R$ be a  polynomial  in the class  $\mathcal{P}_{K,m}^{\alpha}$ (see \Cref{def:non-degenerate}) with $p(\bx)=p(\bx_W)$.
There exists a unit vector $\vec u\in W_{V^{\perp}}$ such that 
${\vec u}^{\top} \vec M_{\tau}^V \vec u\geq \lambda\;.$
\end{fact}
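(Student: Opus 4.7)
The plan is to (i) reduce to the clean case $V \subseteq W$ by a small perturbation argument, (ii) observe that under this reduction $\vec M_\tau^V$ is supported on the subspace $\bar W \eqdef W_{V^{\perp}} = W \cap V^{\perp}$, and (iii) lower bound the top eigenvalue of the $\bar W \times \bar W$ block by combining non-degeneracy of $p$ with Carbery--Wright anticoncentration applied to slices $\x_V \mapsto p(\x_V + \cdot)$.

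\emph{Reduction to $V \subseteq W$.} The hypothesis $\|\vec v_W\| \ge 1-\lambda$ for every unit $\vec v \in V$ produces an orthogonal transformation $\vec R$, acting nontrivially only on a subspace of dimension at most $2K$, with $\|\vec R - \vec I\|_2 = O(\sqrt{\lambda})$, and $\vec R V \subseteq W$. Transporting $\vec M_\tau^V$ through $\vec R$ changes it by $O(\sqrt{\lambda})$ in operator norm, using that $T$ is bounded and that $p$ has bounded Gaussian moments of all orders up to $m$ by hypercontractivity. Taking $\lambda$ small enough compared to the target bound, it suffices to prove the claim assuming $V \subseteq W$ exactly.

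\emph{Support on $\bar W$.} Decompose $V^{\perp} = \bar W \oplus W^{\perp}$. Because $p(\x) = p(\x_V + \x_{\bar W})$, the label $y$ and the indicator $T(\x_V, y)$ depend only on $(\x_V, \x_{\bar W})$, hence are independent of $\x_{W^{\perp}}$. By independence of the Gaussian components $\x_{\bar W}$ and $\x_{W^{\perp}}$, the cross block $\E[T \cdot \x_{\bar W}\x_{W^{\perp}}^\top]$ vanishes, and the diagonal $W^{\perp} \times W^{\perp}$ block $\E[T(\x_{W^{\perp}}\x_{W^{\perp}}^\top - \Pi_{W^{\perp}})]$ vanishes by Gaussian centering. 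Hence $\vec M_\tau^V$ is supported on $\bar W$, and it suffices to lower bound the spectral norm of
\[
\vec N \;\eqdef\; \E\!\left[T(\x_V, y)\bigl(\x_{\bar W}\x_{\bar W}^\top - \Pi_{\bar W}\bigr)\right].
\]

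\emph{Lower bound via anticoncentration.} For each $\x_V$ in the cube $C = [-1,1]^k$ (in the basis $\vec v^{(i)}$ of $V$), the restricted polynomial $p_{\x_V}(\cdot) : \bar W \to \R$ has degree at most $m$, and averaging the non-degeneracy of $p$ over $C$ yields an effective non-degeneracy parameter $\alpha' = \alpha'(\alpha, K, m) > 0$ for $p_{\x_V}$ on $\bar W$ (with $\Pr[\x_V \in C]$ a function only of $K$). Expanding $\Ind(|p_{\x_V}(\x_{\bar W})| > \tau)$ in the Hermite basis on $\bar W$, the degree-$2$ coefficient is controlled from below, in Frobenius norm, by $\|\nabla p_{\x_V}(\nabla p_{\x_V})^\top\|_F$ via a standard correlation identity; integrating over $\x_V \in C$ and choosing $\tau = \tau(K,m,\alpha)$ so that Carbery--Wright gives $\Pr[|p(\x)| > \tau \mid \x_V \in C] \gtrsim c(K,m,\alpha) > 0$, one obtains $\|\vec N\|_F \ge \lambda_0(K,m,\alpha)$. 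Since $\vec N$ is a $\bar W \times \bar W$ matrix of dimension at most $K$, the spectral norm satisfies $\|\vec N\|_2 \ge \|\vec N\|_F / \sqrt{K} \ge \lambda(K,m,\alpha)$, giving the desired direction $\vec u \in \bar W$.

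\emph{Main obstacle.} The delicate step is propagating non-degeneracy from $p$ to the averaged slices $p_{\x_V}$: different slices can have their dominant quadratic directions misaligned, so a naive argument about a single direction $\vec u$ fails, and integration over $\x_V$ risks cancellation in the signed matrix $\vec N$. The standard fix is to control the Frobenius norm (which is additive in squares, hence avoids cancellation) rather than a specific quadratic form, and then to use that $\bar W$ is at most $K$-dimensional to convert into an operator-norm bound with only a $\sqrt{K}$ loss. Making this uniform in $\x_V \in C$ while keeping $\tau$ and $\lambda$ depending only on $K,m,\alpha$ is the technical crux and forces the constants' dependence on all three parameters.
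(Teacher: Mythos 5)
The paper does not prove this fact---it is cited verbatim from Lemma~4.2 of \cite{CM20}---so there is no in-paper proof to compare against. Evaluating your attempt on its own merits, the reduction to $V\subseteq W$ and the observation that $\vec M_\tau^V$ is supported on $\bar W = W\cap V^\perp$ (the cross and $W^\perp$ blocks vanish by independence and Gaussian centering) are both sound. The gap is in the final step and in the remedy you propose for it.

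You correctly flag that integrating the signed matrix $\E_{\x_{\bar W}}\bigl[\Ind(|p_{\x_V}|>\tau)(\x_{\bar W}\x_{\bar W}^\top-\Pi_{\bar W})\bigr]$ over $\x_V\in C$ can cancel, but the proposed ``fix'' of controlling the Frobenius norm does not resolve this: the Frobenius norm of an integral of matrices is \emph{not} the integral of Frobenius norms, so the same cancellation can zero out $\|\vec N\|_F$. Moreover, the intermediate claim that the degree-$2$ Hermite coefficient of the level-set indicator $\Ind(|p_{\x_V}(\cdot)|>\tau)$ is controlled from below in Frobenius norm by $\|\nabla p_{\x_V}\nabla p_{\x_V}^\top\|_F$ is not a standard identity; for a generic degree-$m$ polynomial these two quantities are not comparable without further structure, so this step needs a proof that is not supplied. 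Your own closing paragraph concedes that making the argument uniform in $\x_V$ is ``the technical crux,'' which is precisely the part the proposal leaves open.

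The way \cite{CM20} (and, analogously, \Cref{lem:structRelus} for the positive-homogeneous case in this paper) avoids this difficulty is a trace/pigeonhole argument rather than a Frobenius-norm argument: one chooses $\tau=\tau(K,m,\alpha)$ large enough that the event $\{T=1\}$, i.e.\ $|p(\x)|>\tau$ together with $\|\x_V\|_\infty\le 1$, forces $\|\x_{\bar W}\|^2\ge 2\dim(\bar W)$---this uses only that a degree-$m$, unit-variance polynomial is uniformly bounded on any ball of radius $O(\sqrt K)$. Then $\tr_{\bar W}(\vec M_\tau^V)\ge \dim(\bar W)\,\pr[T=1]$, and one lower bounds $\pr[T=1]$ by combining the non-degeneracy of $p$ (to guarantee each conditional slice $p_{\x_V}$ retains non-trivial variance over $\bar W$) with Carbery--Wright anticoncentration. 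Pigeonhole on an orthonormal basis of $\bar W$ then yields a single direction $\vec u$ with $\vec u^\top\vec M_\tau^V\vec u\ge\pr[T=1]\ge\lambda(K,m,\alpha)$, entirely sidestepping the cancellation issue rather than trying to argue it away.
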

In the following lemma we show that the label transformation defined in \Cref{eq:polynomialsTransformation} can be approximated arbitrary well by a piecewise constant function over a discretization of $V\times \R$ in cubes and intervals.
\begin{lemma}\label{lem:transformationPolynomials}
Let $d,k\in \Z_+$ and let $\eps>0$ such that $\eps< c$, for a sufficiently small constant $c>0$.
Let $p:\R^d\to \R$ be a  polynomial of degree $m$ that 
has mean zero and variance one under $\cN_d$,  
and let $D$ be the joint distribution of $(\x,p(\x))$ where $\x\sim \cN_d$.
Let $V$ be a $k$-dimensional subspace of $\R^d$, $k\geq 1$.
Denote by $T:V\times \R\to \{0,1\}$ the function defined in \eqref{eq:polynomialsTransformation}, and let $(\mathcal{S},\mathcal{I})$ be an $\eps$-approximating discretization of $V\times \R$ (see \Cref{def:approximatingDiscretization}).
Assume that $T$ and $\mathcal{S}$ are defined with respect to the 
same orthonormal basis of $V$.
There exists  a subset of the discretization $P\subseteq \mathcal{S}\times \mathcal{I}$ such that for the function $\widehat{T}(\x_V,y)=\sum_{(S,I)\in P}\Ind(\x_V\in S,y\in I)$ it holds that 
$
    \pr_{(\x,y)\sim D}[T(\x_V,y)\neq \widehat{T}(\x_V,y)]=O(k\eps+m\eps^{1/m})\;.
$
\end{lemma}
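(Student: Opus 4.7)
The plan is to choose $P \subseteq \mathcal{S}\times\mathcal{I}$ to be the set of all discretization cells $(S,I)$ that lie entirely inside the region $\{T=1\}$, i.e.\ every point $(\x_V, y) \in S\times I$ satisfies $\abs{y} > \tau$ and $\abs{\vec v^{(i)}\cdot \x} \le 1$ for all $i\in[k]$. With this choice, $\widehat{T}(\x_V, y)$ and $T(\x_V, y)$ disagree only when $(\x_V, y)$ lies in a cell whose interior is crossed by the boundary of $\{T=1\}$, or when $(\x_V, y)$ lies outside the union of all cells in $\mathcal{S}\times\mathcal{I}$.

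The boundary of $\{T=1\}$ is contained in $\{\abs{y}=\tau\} \cup \bigcup_{i=1}^{k}\{\abs{\vec v^{(i)}\cdot \x}=1\}$. I would bound the mass of bad cells by decomposing:
\begin{align*}
\Pr[T \neq \widehat{T}] \;\le\; \Pr[\abs{\abs{y}-\tau} \le \eps] \;+\; \sum_{i=1}^{k}\Pr[\abs{\abs{\vec v^{(i)}\cdot\x}-1}\le \eps] \;+\; \Pr[(\x_V,y)\notin \bigcup_{(S,I)} S\times I].
\end{align*}
For the first term, since $y = p(\x)$ is a degree-$m$ polynomial of unit variance and mean zero, the Carbery--Wright inequality (\Cref{fact:CW}) gives $\Pr[\abs{p(\x)\mp\tau}\le\eps] = O(m\eps^{1/m})$. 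For each term in the sum, $\vec v^{(i)}\cdot\x\sim \cN(0,1)$, so by the standard Gaussian density bound $\Pr[\abs{\abs{\vec v^{(i)}\cdot\x}-1}\le\eps] = O(\eps)$, contributing $O(k\eps)$ overall. For the complement of the cell union, the $\x_V$ part is outside the cubes with probability at most $O(\eps)$ by the union bound and Gaussian tails (since the cubes cover coordinates up to $\sqrt{2\log(k/\eps)}$ in absolute value), and the $y$ part is outside $[-1/\eps^2,1/\eps^2]$ with probability $O(\eps^4)$ by Markov applied to $\E[y^2]=1$.

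The three contributions sum to $O(k\eps + m\eps^{1/m})$, which is the desired bound. There is no significant obstacle here: the main ingredients are Carbery--Wright for the polynomial boundary in $y$ and elementary Gaussian anticoncentration for the linear boundaries in $\x_V$. The one minor subtlety worth double-checking is that both $T$ and the partition $\mathcal{S}$ are defined with respect to the same orthonormal basis of $V$ (as stated in the hypothesis), so that the boundaries $\{\abs{\vec v^{(i)}\cdot \x}=1\}$ are axis-aligned with the cubes, and the only cubes affected are the $O(k \cdot M_\eps^{k-1})$ cubes lying in the $\eps$-neighborhood of those hyperplanes---which, after integrating against the Gaussian density, yields the clean $O(k\eps)$ bound without any dimension-dependent blow-up.
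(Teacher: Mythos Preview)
Your proposal is correct and follows essentially the same approach as the paper: both choose $P$ to be the cells lying entirely inside $\{T=1\}$ (the paper phrases this as a Cartesian product $\mathcal{S}'\times\mathcal{I}'$, which is equivalent since $T$ factors), bound the $\x_V$-boundary contribution by Gaussian anticoncentration on each coordinate to get $O(k\eps)$, bound the $y$-boundary contribution by Carbery--Wright to get $O(m\eps^{1/m})$, and handle the complement of the discretization via Gaussian tails and Markov on $\E[y^2]=1$. Your observation that the shared orthonormal basis keeps the $\x_V$-boundaries axis-aligned (so the relevant slab width is $\eps$ rather than $\sqrt{k}\,\eps$) matches exactly the point the paper relies on.
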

\begin{proof}
Let $\vec v^{(1)},\dots,\vec v^{(k)}$ be a basis of $V$ used to define $T$; this same basis is also used to construct the $\eps$-approximating partition $\mathcal{S}$.
We  construct $P$ as a cartesian product of subsets  $\mathcal{S}'\subseteq \mathcal{S}$ and $\mathcal{I}'\subseteq\mathcal{I}$.

Define $R\eqdef \{\x: \abs{\vec v^{(i)}\cdot \x}\leq 1, i \in [k]\}$. 
Since \( 1 \leq \sqrt{\log(k/\eps)} \) for \( \eps \leq c \), for a sufficiently small constant \( c > 0 \), it follows from the definition of an \( \eps \)-approximating partition (see \Cref{def:approximatingPartition}) that for all \( \x \in R \), there exists some \( S \in \mathcal{S} \) such that \( \x \in S \).

We define $\mathcal{S}'$ to be the union of the sets  $S\in \mathcal{S}$ such that $S\subseteq R$. 
Note that in order 
for $\sum_{S\in \mathcal{S}'}\Ind(\x\in S)$ and $\Ind(\x\in \R)$ to disagree on some point $\x\in \R^d$, it must be that $\abs{\vec v^{(i)}\cdot \x}\in [1,1-\eps]$ for some $i\in [k]$. 
Indeed, if  $\x$ satisfies $\abs{\vec v^{(i)}\cdot \x}\leq 1-2\eps$ for all $i\in [k]$, then the corresponding $S\in \mathcal{S}$ that contains $\x S$ must lie lie entirely within $R$

Using the union bound and the anti-concentration of the Gaussian distribution, we obtain the following bound on the  disagreement probability
\begin{align*}
  \pr_{\x\sim \cN_d}[\exists i\in [k]:\abs{\vec v^{(i)}\cdot \x}\in [1,1-2\eps]]\leq 2k\eps\;.
\end{align*}

Next, since $\E_{\x\sim \cN_d}[p^2(\x)]=1$, by Markov's inequality we have that $\pr_{\x\sim \cN_d}[\abs{y}\geq 1/\eps^2]\leq \eps$. 
Hence,  all but an $\eps$ fraction of the probability mass of $y$ that satisfying the condition $\abs{y}\geq \tau$ lies within the discretization $\bigcup_{I\in \mathcal{I}} I$. 

We define $\mathcal{I}'$ to be  the union of all $I\in \mathcal{I}$ such that for all $y\in I$, we have that $\abs{y}>\tau$.
Then, similar to the  argument above have that 
\begin{align*}
    \pr[ \Ind(y\in \mathcal{I}')\neq \Ind(\abs{y}>\tau) ]\leq \pr[\abs{y}\in(\tau-\eps,\tau)]\lesssim m \eps^{1/m}\;,
\end{align*}
where the final inequality follows from the Carbery-Wright inequality (see \Cref{fact:CW}).
Applying the union bound concludes the proof of \Cref{lem:transformationPolynomials}.
\end{proof}

Now given \Cref{lem:transformationPolynomials}  we can prove the following theorem which states that polynomials in a few relevant directions can learned by using our algorithmic approach.
\begin{theorem}[Learning Polynomials in a Few Relevant Directions]\label{thm:learning-polynomials}
Let $K,m,d\in \Z_+, \alpha >0$ and $\delta,\eps\in (0,1)$.
There exists a constant $C(K,m,\alpha)$ that depends only on $K,m$ and $\alpha$ such that the following holds.
Let $p:\R^d\to \R$ be a  polynomial in $\mathcal{P}_{K,m}^{\alpha}$  (see \Cref{def:non-degenerate}) and let $D$ be the joint distribution of $(\x,p(\x))$ where $\x\sim \cN_d$.
Then, \Cref{alg:MetaAlgRealizable} draws $N =d^2\log(1/\delta)C(K,m,\alpha)/\eps^{O(mK)}$ i.i.d.\ samples from $D$, runs in time $\poly(N)$, and returns a  hypothesis $h$ such that, with probability at least $1 - \delta$, it holds 
   $\E_{(\bx,y)\sim D}[(h(\bx)-y)^2] \leq \eps\;.$
\end{theorem}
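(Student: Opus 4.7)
The plan is to invoke \Cref{thm:MetaTheorem-realizable} after verifying that the distribution $D$ of $(\x, p(\x))$ is a $(2, \alpha', K, 1, \eps, \sigma', \eta', \eta')$-well-behaved MIM distribution in the sense of \Cref{def:well-behaved-realizable}, where the parameters $\alpha', \sigma', \eta'$ depend only on $K, m, \alpha$ and polynomially on $\eps$. Conditions (1) and (2) are immediate: $p \in \mathcal{P}_{K,m}^{\alpha}$ depends only on a $K$-dimensional subspace $W$, so $y = p(\x) = p(\x_W)$, and $\E[y^2] = \E[p^2(\x)] = 1$ by normalization.

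For condition (3), I will string together the three structural results already supplied. Fix an arbitrary subspace $V \subseteq \R^d$ of dimension at most $K$ and an $\eta$-approximating discretization $(\mathcal{S}, \mathcal{I})$ with $\eta$ a sufficiently small $\poly_{K,m,\alpha}(\eps)$. If the piecewise constant approximation $h_\mathcal{S}$ already achieves error at most $\tau$, clause (3)(a) holds. Otherwise, applying \Cref{lem:piecewise-constant-suffices}—whose hypotheses ($\E[\|\nabla p\|^2] = O(m)$ by standard Hermite calculus, and $p$ being $L_2$-close to a bounded function by the Gaussian hypercontractive tail of degree-$m$ polynomials)—implies that no function of $\x_V$ can approximate $p$ within $\tau$. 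Then \Cref{fact:structResultPoly} provides constants $\tau, \lambda$ and a unit vector $\vec u \in W_{V^\perp}$ on which the quadratic label transformation $T(\x_V, y) = \Ind(|y| > \tau,\, |\vec v^{(i)} \cdot \x| \le 1)$ exhibits a non-trivial second-moment, namely $\vec u^\top \vec M_\tau^V \vec u \ge \lambda$. Taking the zero-mean, unit-variance degree-$2$ polynomial $q(\x_{V^\perp}) = ((\vec u \cdot \x)^2 - 1)/\sqrt{2}$, this translates to $\E[q(\x_{V^\perp}) T(\x_V, y)] \gtrsim \lambda$. Now \Cref{lem:transformationPolynomials} constructs a cube-interval approximation $\widehat T$ of $T$ with disagreement probability $o_\eta(1) \ll \lambda^2$, and \Cref{lem:approxTransform} then transfers the correlation to some specific cell $(S, I) \in \mathcal{S} \times \mathcal{I}$, yielding clause (3)(b).

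The subtlety—already encountered in the proof of \Cref{thm:learninghom}—is that \Cref{fact:structResultPoly} formally requires $V$ to be approximately contained in $W$ via $\|\vec v_W\| \ge 1 - \lambda$, whereas \Cref{def:well-behaved-realizable} quantifies over arbitrary $V$. I will handle this exactly as in the Lipschitz case: for a general $V$, replace $W$ by $W' = W + V$, which has dimension at most $2K$, contains $V$, and satisfies $p(\x) = p(\x_{W'})$. The non-degeneracy of $p$ restricted to $W \subseteq W'$ is enough to push the argument of \Cref{fact:structResultPoly} through for $V \subseteq W'$, since the output direction lies in $W'_{V^\perp}$ but in fact coincides with a direction in $W_{V^\perp}$ because the extra $W' \setminus W$ directions do not affect $p$. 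This is the main technical step to transcribe carefully, but it directly mirrors the ``$W' = W + V$'' paragraphs near the end of the proof of \Cref{thm:learninghom}.

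Having established well-behavedness, \Cref{thm:MetaTheorem-realizable} applied with $m = 2$ immediately yields a learner with sample complexity $(d\cdot 2)^{O(2)} \cdot 2^{\poly(K)} \cdot C(K, m, \alpha) / \eps^{O(K)} \log(1/\delta)$. The $d^2$ scaling claimed in the theorem statement then follows from \Cref{rem:samplecomplexity}: since the moments exploited here are purely second-order, the degree-$2$ polynomial regression inside \Cref{alg:MetaAlg2} can be replaced by covariance estimation, which requires only $O(d^2)$ samples per iteration (and, as the remark notes, could even be driven down to $O(d)$ with the operator-norm variant). Combining these bounds gives the claimed $N = d^2 \log(1/\delta) \cdot C(K, m, \alpha)/\eps^{O(mK)}$, completing the proof outline. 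The principal obstacle is the general-$V$ reduction in the previous paragraph; the remaining steps are direct consequences of the already-stated lemmas.
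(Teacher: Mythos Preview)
Your plan to verify that $D$ is well-behaved in the sense of \Cref{def:well-behaved-realizable} and then invoke \Cref{thm:MetaTheorem-realizable} as a black box has a real gap, precisely at the step you flag as the ``principal obstacle.'' The $W' = W + V$ trick, which works cleanly in the homogeneous case because Lipschitzness and positive-homogeneity are inherited when viewing $f$ as a function on the larger space $W'$, does \emph{not} transfer to \Cref{fact:structResultPoly}: that fact requires $p \in \mathcal{P}^{\alpha}_{K,m}$, i.e., $p$ must be $\alpha$-non-degenerate on its full relevant subspace. If $V$ is not already close to $W$, then $p$ viewed as a function on $W' = W + V$ has rank $K < \dim(W')$, so its gradient covariance on $W'$ has a nontrivial kernel and $p$ is degenerate. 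Thus the hypothesis of \Cref{fact:structResultPoly} fails for the pair $(V, W')$, and you cannot ``push the argument through'' without reopening the proof of that lemma from \cite{CM20} and redoing it in the degenerate setting---which is not a transcription exercise.

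The paper sidesteps this entirely by \emph{not} verifying Condition~(3) of \Cref{def:well-behaved-realizable} for arbitrary $V$. Instead it re-runs the iterative argument of the proof of \Cref{thm:MetaTheorem-realizable} directly: starting from $V_1 = \{\vec 0\}$, it applies \Cref{prop:inprovedCorrelation} (together with \Cref{lem:approxTransform} and \Cref{fact:structResultPoly}) $K$ times, using the inductive invariant that every unit vector in $V_t$ satisfies $\|\vec v_W\| \ge 1 - \lambda$. This invariant is exactly the hypothesis \Cref{fact:structResultPoly} needs, and \Cref{prop:inprovedCorrelation} guarantees that each newly added direction is $(1-\eps')$-close to $W$, so the invariant is maintained. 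After $K$ rounds one has $\|\Pi_W - \Pi_{V_{K+1}}\| \le \eps/(mK)$, and the final error bound follows from \Cref{cl:nabla2smallchange}, \Cref{lem:piecewise-constant-suffices}, and \Cref{cl:hConcetration}. Your use of \Cref{rem:samplecomplexity} for the $d^2$ scaling and the polynomial tail bound for boundedness are correct and match the paper; the structural difference is that the moment condition is supplied on-the-fly for the specific $V_t$ rather than established once for all $V$.
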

\begin{proof}
The proof is very similar to the proof of \Cref{thm:MetaTheorem-realizable}.
Let $W$ be a $K$-dimensional subspace of $\R^d$ such that $p(\x)=p(\x_W)$ (note that the existence of $W$ is guaranteed by since $p\in \mathcal{P}_{K,m}^{\alpha}$).

First, observe that for the distribution $D$, Conditions (1) and (2) of \Cref{def:well-behaved-realizable}  are satisfied for $ \E_{(\x,y)\sim D}[y^2]=1$ and dimension of the low dimensional subspace equal to $K$.

We can apply \Cref{prop:inprovedCorrelation} together with \Cref{lem:approxTransform,fact:structResultPoly} iteratively $K+1$ times to obtain subspaces $V_t$, each of dimension $t-1$ for $t \in [K+1]$, starting from $V_1 = \{\vec{0}\}$ (exactly as in the proof of \Cref{thm:MetaTheorem-realizable}).
Using $N = d^2 \log(1/\delta) C(K, m, \alpha)/\eps^{O(mK)}$ samples and $\poly(N)$ time for all $K$ iterations,
we have that every unit vector in $V_{K+1}$ is arbitrarily $\eps/(K^2m)$-close to some unit vector in $W$

Using this we prove that 
$\|\Pi_W-\Pi_{V}\| \le\eps/(mK) $. 
Denote by $V\eqdef V_{K+1}$, by $\{\vec v^{(i)}\}_1^K$ and $\{\vec w^{(i)}\}_1^K$ orthonormal basis of $V$ and $W$ respectively.
Also denote by $\vec M_{V}$ and $\vec M_{W}$ matrices that have $\{\vec v^{(i)}\}_1^K$ and $\{\vec w^{(i)}\}_1^K$ as column vectors.
We have that 
\[\|\Pi_W-\Pi_{V}\|\leq K- \|\vec M_W^\top\vec M_{V}\|_F^2 = K-\sum_{i=1}^K\|\vec v^{(i)}_W\|\leq \eps/(mK)\;.\]
Hence, it also holds that
$\|\Pi_W\,\Pi_{V^\perp}\|=\|\Pi_W-\Pi_V\|\le\eps/(mK).$

Therefore, by applying \Cref{fact:gradientNorm} we have that   $\E_{\x\sim \cN_d}[\norm{\nabla p(\x)}^2]\leq m$.
Hence, the difference 
$\E[(p(\x_W)-\E[p(\z+\x_{W_{V^{\perp}}} )\mid z=\x_{W_V}])]$ 
can be bounded above 
by $O(\eps)$ using  \Cref{cl:nabla2smallchange}.
Thus, we have that after $K$ iterations 
that there exists a function $g(\x_V)$ that 
achieves error $\eps$.

 Finally, note that from the well-known fact that
 for $t>2^{O(m)}$ it holds that 
 $\pr[\abs{p(\x)}\geq t]\leq \exp(-O(mt^{2/m}))$ by simply integrating we can show that $p$ is $\eps$-close in squared error to a function bounded on $[-B,B]$ with $B=m/\eps^{O(m)}$.
 Hence, we can apply \Cref{lem:piecewise-constant-suffices,cl:hConcetration} for the aforementioned 
 number of samples
 we conclude that the difference 
 $\E_{(\x,y)\sim D}[(h(\x) -y )^2]= O(\eps)$, for the output hypothesis $h$.
Taking into account \Cref{rem:samplecomplexity} on the sample complexity of the algorithm, concludes the proof of \Cref{thm:learning-polynomials}.
\end{proof}

\section{Omitted Technical Facts}\label{sec:ommited}

\subsection{Basic Mathematical Facts}
\begin{fact}[see e.g. Claim 2.3 in \cite{DKRS23}]
 \label{clm:othor-tran}
Let $1\le m<n$. Let $\bB\in\R^{m\times n}$ with $\bB\bB^\intercal=\bI_m$.
It holds that
$\bH_k(\bB\bx)=\bB^{\otimes k}\bH_k(\bx),\bx\in\R^n$.
\end{fact}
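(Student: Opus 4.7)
The plan is to prove the identity by passing through the generating function of the normalized Hermite tensors and reducing everything to the single scalar identity $\|\bB^\intercal \bs\|^2 = \|\bs\|^2$, which is exactly where the hypothesis $\bB\bB^\intercal = \bI_m$ enters.

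First I would establish (or recall) the generating function identity
\[
\sum_{k \ge 0} \frac{1}{\sqrt{k!}} \, \langle \bH_k(\bx), \bt^{\otimes k}\rangle \;=\; \exp\!\Bigl(\bt \cdot \bx - \tfrac12 \|\bt\|^2\Bigr), \qquad \bx,\bt \in \R^n.
\]
This drops out of \Cref{def:Hermite-tensor} by contracting with $\bt^{\otimes k}$: each singleton $\{c\}$ in a partition $P$ contributes a factor $\bx \cdot \bt$, each pair $\{a,b\}$ contributes $-\|\bt\|^2$ (summing $-\bI_{i_a i_b} t_{i_a} t_{i_b}$), so $\langle \bH_k(\bx), \bt^{\otimes k}\rangle = \frac{1}{\sqrt{k!}} \mathrm{He}_k(\bx \cdot \bt/\|\bt\|) \|\bt\|^k$; summing in $k$ yields the scalar probabilist's Hermite generating function with $z = \|\bt\|$ and $u = \bx\cdot\bt/\|\bt\|$, which evaluates to $\exp(\bt\cdot\bx - \|\bt\|^2/2)$.

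Next I apply this identity twice. With $\bs \in \R^m$ as a test vector, the left-hand side $\bH_k(\bB\bx)$ contracted against $\bs^{\otimes k}$ gives
\[
\sum_{k}\frac{1}{\sqrt{k!}}\langle \bH_k(\bB\bx), \bs^{\otimes k}\rangle = \exp\!\bigl(\bs \cdot \bB\bx - \tfrac12\|\bs\|^2\bigr).
\]
For the right-hand side, using the adjoint identity $\langle \bB^{\otimes k}\bH_k(\bx), \bs^{\otimes k}\rangle = \langle \bH_k(\bx), (\bB^\intercal\bs)^{\otimes k}\rangle$ and applying the generating function with $\bt = \bB^\intercal\bs \in \R^n$,
\[
\sum_{k}\frac{1}{\sqrt{k!}}\langle \bB^{\otimes k}\bH_k(\bx), \bs^{\otimes k}\rangle = \exp\!\bigl((\bB^\intercal\bs)\cdot \bx - \tfrac12\|\bB^\intercal\bs\|^2\bigr).
\]
The linear terms match automatically since $(\bB^\intercal\bs)\cdot\bx = \bs \cdot (\bB\bx)$. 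The quadratic terms match precisely because $\|\bB^\intercal\bs\|^2 = \bs^\intercal \bB\bB^\intercal \bs = \bs^\intercal \bI_m \bs = \|\bs\|^2$, which is the single place the assumption $\bB\bB^\intercal = \bI_m$ is used.

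Finally, since both $\bH_k(\bB\bx)$ and $\bB^{\otimes k}\bH_k(\bx)$ are symmetric $k$-tensors on $\R^m$, and their contractions with $\bs^{\otimes k}$ (i.e., the degree-$k$ coefficients of the two equal analytic functions of $\bs$) coincide for every $\bs \in \R^m$, they must be equal as tensors. The argument has no real obstacle; the only thing that could trip one up is verifying that the $1/\sqrt{k!}$ normalization built into \Cref{def:Hermite-tensor} lines up with the standard exponential generating function for $\mathrm{He}_k$, which is the calculation sketched in the first paragraph.
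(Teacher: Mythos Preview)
Your proof is correct. The paper does not actually supply a proof of this fact; it simply records it with a citation to Claim~2.3 of \cite{DKRS23}. Your generating-function argument is a clean and standard route: the identity $\sum_k (k!)^{-1/2}\langle \bH_k(\bx),\bt^{\otimes k}\rangle = \exp(\bt\cdot\bx - \|\bt\|^2/2)$ reduces the tensor identity to the scalar equality $\|\bB^\intercal\bs\|^2 = \|\bs\|^2$, and polarization recovers the full symmetric tensors from their rank-one contractions. There is nothing to compare against, and no gap in what you wrote.
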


\begin{fact}[Gaussian Density Properties]
\label{fact:gaussianfacts}
    Let $\mathcal{N}$ be the standard one-dimensional normal distribution. Then, the following properties hold:
\begin{enumerate}[leftmargin=*]
    \item For any $t > 0$, it holds $e^{-t^2/2}/4 \leq \pr_{z \sim \mathcal{N}}[z > t] \leq e^{-t^2/2}/2$.
    \item For any $a, b \in \mathbb{R}$ with $a \leq b$, it holds $\pr_{z \sim \mathcal{N}}[a \leq z \leq b] \leq (b-a)/\sqrt{2\pi}$.
\end{enumerate}
\end{fact}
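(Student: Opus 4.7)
The plan is to prove each of the three inequalities via elementary one-dimensional calculus on the standard Gaussian density $\phi(z) = (2\pi)^{-1/2} e^{-z^2/2}$ and its tail functional $\pr[z > t] = \int_t^\infty \phi(s)\, ds$.

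The anticoncentration bound $\pr[a \le z \le b] \le (b-a)/\sqrt{2\pi}$ is immediate: since $\phi$ attains its maximum value $1/\sqrt{2\pi}$ at $z=0$, one simply integrates the pointwise inequality $\phi(z) \le 1/\sqrt{2\pi}$ over $[a,b]$. For the upper tail bound $\pr[z > t] \le e^{-t^2/2}/2$, I would introduce the auxiliary function $g(t) = e^{-t^2/2}/2 - \pr[z > t]$ and show $g \ge 0$ on $[0,\infty)$. One verifies $g(0) = 1/2 - 1/2 = 0$ and $\lim_{t\to\infty} g(t) = 0$, while
\[
g'(t) = -\tfrac{t}{2} e^{-t^2/2} + \phi(t) = e^{-t^2/2}\Bigl(\tfrac{1}{\sqrt{2\pi}} - \tfrac{t}{2}\Bigr)
\]
changes sign exactly once at $t^\star = \sqrt{2/\pi}$. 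Thus $g$ is unimodal with a single interior maximum and vanishes at both endpoints, forcing $g \ge 0$ throughout. The lower tail bound $\pr[z > t] \ge e^{-t^2/2}/4$ can be attacked analogously by setting $h(t) = \pr[z > t] - e^{-t^2/2}/4$ and noting $h(0) = 1/4$ while $h'(t) = e^{-t^2/2}(t/4 - 1/\sqrt{2\pi})$ has a unique interior zero, so the argument reduces to checking nonnegativity at that critical point.

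The main subtlety is the lower tail bound: the true tail $\pr[z > t]$ decays like $\phi(t)/t \sim e^{-t^2/2}/(t\sqrt{2\pi})$, which is asymptotically strictly smaller than $e^{-t^2/2}/4$ for large $t$. I would therefore expect the clean proof to either restrict to the regime of $t$ actually used downstream in the paper (where $t$ is effectively bounded by an absolute constant, as in the discretization arguments where $t = O(\sqrt{\log(k/\eps)})$ but the bound is applied only through the upper half), or to invoke Mill's ratio $\pr[z > t] \ge \phi(t)(1/t - 1/t^3)$ to handle the large-$t$ regime separately with a slightly adjusted constant. Apart from tracking constants carefully, there is no conceptual difficulty in any of the three parts.
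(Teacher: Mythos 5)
The paper states this Fact without proof (it is in the ``Omitted Technical Facts'' appendix, attributed implicitly to standard Gaussian estimates), so there is no paper proof to compare against; I evaluate the proposal on its own terms.

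Your proof of item~(2) and of the upper tail bound in item~(1) is correct. For the upper bound, the unimodality argument on $g(t)=e^{-t^2/2}/2-\pr[z>t]$ (with $g(0)=0$, $g(\infty)=0$, $g'$ changing sign once) is exactly right.

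The crucial observation you make --- and should assert definitively rather than hedge on --- is that the lower tail bound $\pr[z>t]\geq e^{-t^2/2}/4$ is \emph{false as stated}. Your own ``analogous'' argument, if completed, proves this: you correctly compute $h'(t)=e^{-t^2/2}\bigl(t/4-1/\sqrt{2\pi}\bigr)$, which is negative for $t<t^\star=4/\sqrt{2\pi}\approx 1.596$ and positive thereafter; since $h(0)=1/4>0$ and $h(\infty)=0$, the function $h$ decreases to a strictly negative minimum at $t^\star$ and then increases back to $0$, so $h(t^\star)<0$. Numerically one has, e.g., $\pr[z>1.5]\approx 0.0668 < e^{-1.125}/4\approx 0.0812$; the failure is not merely asymptotic but already occurs for $t\gtrsim 1.1$, squarely inside the range $t\leq \sqrt{2\log(k/\eps)}$ where the fact is invoked in the proofs of \Cref{lem:cEnonempty} and \Cref{cl:hConcetration}. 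So the correct conclusion is not that you need to ``adjust a constant'' or invoke Mill's ratio, but that the exponent in the fact is mis-stated. The likely intended statement is $\pr[z>t]\geq e^{-t^2}/4$ (with $t^2$, not $t^2/2$): then $\tilde h(t)=\pr[z>t]-e^{-t^2}/4$ has $\tilde h(0)=1/4$, $\tilde h(\infty)=0$, and $\tilde h'(t)=e^{-t^2/2}\bigl(\tfrac{t}{2}e^{-t^2/2}-\tfrac{1}{\sqrt{2\pi}}\bigr)<0$ for all $t>0$ since $\max_{t>0}\tfrac{t}{2}e^{-t^2/2}=\tfrac{1}{2\sqrt{e}}<\tfrac{1}{\sqrt{2\pi}}$, so $\tilde h$ is strictly decreasing to $0$ and hence positive everywhere. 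That corrected bound still yields the $(\eps/k)^{\Omega(k)}$ cube-mass estimates used downstream, since the hidden constant in the exponent is unaffected by the factor-of-two change in the Gaussian exponent.
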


\begin{fact}[Gaussian Annulus Theorem see e.g., \cite{Ver18}]\label{fact:Annulus}
    If $\x\sim \cN_d$, with probability at least $1-\tau$ we have that 
    $
        \left| \norm{\x}^2 -d \right| \lesssim \log \frac{1}{\tau}+ \sqrt{d\log \frac{1}{\tau}} \;.
    $
\end{fact}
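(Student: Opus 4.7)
The plan is to leverage the fact that $\norm{\x}^2 = \sum_{i=1}^d x_i^2$ where $x_1, \dots, x_d$ are i.i.d.\ standard normals, reducing the claim to a concentration inequality for a sum of independent subexponential random variables. The centered summands $Y_i \eqdef x_i^2 - 1$ have mean zero and $\Var(Y_i) = 2$, and their moment generating function $\E[e^{\lambda Y_i}] = e^{-\lambda}/\sqrt{1-2\lambda}$ is finite for $|\lambda| < 1/2$, so each $Y_i$ is subexponential with constant parameters.

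First, I would invoke Bernstein's inequality for sums of independent subexponential random variables applied to $S = \sum_i Y_i = \norm{\x}^2 - d$, which yields
\[
\Pr[|S| > t] \;\leq\; 2 \exp\bigl(-c \min(t^2/d,\; t)\bigr)
\]
for an absolute constant $c > 0$. Setting the right-hand side equal to $\tau$ and solving for $t$ produces two regimes: in the Gaussian-tail regime (small $t$, dominated by variance), $t \asymp \sqrt{d \log(1/\tau)}$; in the exponential-tail regime (large $t$), $t \asymp \log(1/\tau)$. Taking the sum over the two regimes (which upper-bounds the max up to a constant) yields $|S| \lesssim \log(1/\tau) + \sqrt{d \log(1/\tau)}$ with probability at least $1 - \tau$, which is the stated inequality.

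Alternatively, and perhaps more elementarily, one can derive the same bound directly from the explicit MGF of the $\chi^2_d$ distribution. For $\lambda \in [0, 1/4]$ one has $\log \E[e^{\lambda S}] = -\lambda d - (d/2)\log(1-2\lambda) \leq \lambda^2 d/(1 - 2\lambda) \leq 2 \lambda^2 d$, and an analogous bound holds on the negative side. A Chernoff argument optimizing $\lambda$ over the range $[0, 1/4]$ recovers exactly the two-regime tail bound above; this is the standard Laurent–Massart chi-squared concentration estimate.

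There is no substantive obstacle here; this is a classical concentration fact. The only minor bookkeeping concern is handling the transition between the two regimes (the Gaussian regime when $t \lesssim d$ versus the exponential regime when $t \gtrsim d$) and being careful that the constant $c$ in Bernstein is absolute so that the final $\lesssim$ bound has no hidden dependence on $d$ or $\tau$. Reporting the two contributions additively, as in the statement of the Fact, is the cleanest way to express the result.
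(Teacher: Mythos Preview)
Your proof is correct and is exactly the standard argument one finds in the cited reference \cite{Ver18}; the paper itself does not supply a proof but simply states this as a known fact with attribution. There is nothing to compare against.
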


\begin{fact}[Gaussian Hypercontractivity; see e.g., \cite{AoBF14}]
\label{fact:fourthmoment}
Let $p:\R^d\to \R$ be a polynomial of degree at most $m$ which has zero mean and variance one under the gaussian distribution.
 For every real number $q \ge 2$, we have
$
\|p\|_{L^q} 
=
(q - 1)^{\,\frac{d}{2}}\,\|p\|_{L^2}\;.$
\end{fact}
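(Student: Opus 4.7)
The plan is to prove this by invoking Nelson's hypercontractivity inequality for the Ornstein--Uhlenbeck semigroup. Recall that the OU operator $T_\rho$ acts on square-integrable functions via $(T_\rho f)(\mathbf{x}) = \E_{\mathbf{z}\sim \mathcal{N}_d}[f(\rho \mathbf{x} + \sqrt{1-\rho^2}\,\mathbf{z})]$, and diagonalizes in the Hermite tensor basis with $T_\rho \mathbf{H}_\alpha = \rho^{|\alpha|}\mathbf{H}_\alpha$. Nelson's theorem states that $\|T_\rho f\|_{L^q} \leq \|f\|_{L^2}$ whenever $q\geq 2$ and $\rho \leq 1/\sqrt{q-1}$. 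I would take this as a black box and reduce the target inequality to it. (Note that as stated the exponent $d/2$ in the excerpt must be a typo for $m/2$, where $m$ is the polynomial degree; otherwise the bound is dimensionally wrong and fails for $d > m$.)

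Given $p$ of degree at most $m$, I would set $\rho = 1/\sqrt{q-1}$ and define $f = \sum_\alpha \rho^{-|\alpha|} \hat{p}(\alpha)\,\mathbf{H}_\alpha$, chosen precisely so that $T_\rho f = p$. Because only coefficients with $|\alpha|\leq m$ appear, Parseval gives
\[
\|f\|_{L^2}^2 = \sum_{\alpha: |\alpha|\leq m} \rho^{-2|\alpha|}\,|\hat p(\alpha)|^2 \;\leq\; \rho^{-2m}\,\|p\|_{L^2}^2.
\]
Applying Nelson's inequality then yields $\|p\|_{L^q} = \|T_\rho f\|_{L^q} \leq \|f\|_{L^2} \leq \rho^{-m}\|p\|_{L^2} = (q-1)^{m/2}\|p\|_{L^2}$, which is the desired bound.

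The main obstacle is of course Nelson's inequality itself. The standard route is three stages: (i) establish the sharp two-point inequality on the Bernoulli hypercube $\{\pm 1\}$, which reduces after some algebra to a one-variable calculus fact comparing $\|a+\rho b\|_q$ with $\|a+b\|_2$; (ii) tensorize this via induction on dimension to obtain hypercontractivity for the discrete noise operator on $\{\pm 1\}^N$; and (iii) pass to the Gaussian setting by the central limit theorem, approximating each Gaussian coordinate by $N^{-1/2}\sum_i \varepsilon_i$ with $\varepsilon_i\in\{\pm 1\}$ and letting $N\to\infty$. All of the difficulty lives in step (i); the tensorization and CLT steps are essentially formal. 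An alternative that avoids the discrete detour is to differentiate $\|T_\rho f\|_{L^{q(\rho)}}$ in $\rho$ along a curve $q(\rho)$ chosen so the derivative is nonpositive, reducing hypercontractivity to the Gaussian log-Sobolev inequality, but this trades one nontrivial inequality for another of comparable depth.
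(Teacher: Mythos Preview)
The paper does not give its own proof of this fact: it is stated in the ``Omitted Technical Facts'' appendix with a citation to the literature and no argument. So there is nothing to compare your proposal against on the paper's side.

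Your approach is correct and is the standard derivation of the polynomial hypercontractivity bound from Nelson's theorem. You are also right that the statement as written contains typos: the exponent should be $m/2$ (the degree) rather than $d/2$ (the ambient dimension), and the ``$=$'' should be ``$\leq$''. This is confirmed by how the paper actually uses the fact in the proof of \Cref{lem:agnostic-partition}, where it invokes $\E[q^4(\mathbf{x})]\le 3^{2m}$ for a degree-$m$, unit-variance polynomial $q$, i.e., exactly $\|q\|_{L^4}\le (4-1)^{m/2}\|q\|_{L^2}$.
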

\begin{fact}[Carbery-Wright Inequality see e.g., \cite{CW:01}]
\label{fact:CW}
Let $p:\R^d\to \R$ be a polynomial of degree $m$. If $\Var_{\x\sim \cN_d}[p(\x)]=1$, then it holds that for any $t\in \R$ and $\eps >0$
$
    \pr_{\x\sim \cN_d}[\abs{p(\x)-t}\leq \eps ]\lesssim m\eps^{1/m}\;.
$
\end{fact}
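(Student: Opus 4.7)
My plan for this classical anti-concentration estimate is to follow the slicing strategy originating with Carbery and Wright: reduce the multivariate problem to a one-dimensional Remez-type inequality and then lift back using Gaussian hypercontractivity. The first step will be to exploit the rotational invariance of $\cN_d$. I fix a unit direction $\vec v\in \R^d$, decompose $\x = s\vec v + \x'$ with $s\sim \cN_1$ independent of $\x'$ Gaussian on $\vec v^{\perp}$, and expand $p(\x) - t = \sum_{k=0}^m a_k(\x')\, s^k$. This reduces the desired bound to controlling, for a typical $\x'$, the conditional probability that a one-variable degree-$m$ polynomial of $s$ lies in a symmetric $\eps$-neighbourhood of zero.

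For the resulting one-dimensional problem, the set $\{s\in \R: |q(s)|\leq \eps\}$ is a union of at most $m$ intervals, and by a standard Remez/Chebyshev argument its Lebesgue measure is bounded by $Cm(\eps/|a_m|)^{1/m}$, where $a_m$ is the leading coefficient of $q$. Since the density of $\cN_1$ is bounded by $(2\pi)^{-1/2}$, this yields $\pr_{s\sim \cN_1}[|q(s)|\leq \eps] \leq Cm(\eps/|a_m|)^{1/m}$, which has the claimed exponent $1/m$ whenever $|a_m(\x')|$ is of constant order.

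The main obstacle will be controlling the event that $|a_m(\x')|$ is small, since this is where the global normalization $\Var[p]=1$ must be brought in. I intend to invoke the Gaussian hypercontractive estimate (Fact \ref{fact:fourthmoment}): the $L^q$-norm of any degree-$m$ polynomial under $\cN_d$ is comparable to its $L^2$-norm up to a factor $(q-1)^{m/2}$. A Paley-Zygmund-type argument then converts this upper moment control into a lower bound on the probability that $|a_m(\x')| \geq c \|a_m\|_2$ for some $c=c(m)$, and integrating the 1D estimate against the distribution of $|a_m(\x')|$ produces the desired $m\eps^{1/m}$ bound. The delicate step---ensuring that $\|a_m\|_2$ itself is not too small relative to $\sqrt{\Var[p]}=1$, averaged over the choice of direction $\vec v$---is exactly where Carbery and Wright's original argument invokes the Prékopa-Leindler / Brunn-Minkowski machinery for log-concave measures, with an induction on dimension to avoid any dimensional loss in the exponent. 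Since we only need this statement as a black box, I would ultimately appeal to their theorem rather than reprove the inductive step from scratch.
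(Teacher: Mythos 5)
The paper does not prove this statement: it is stated as a \emph{Fact} with a citation to Carbery--Wright \cite{CW:01} and used as a black box. Your write-up is a reasonable high-level sketch of the classical slicing-plus-Remez strategy, but you yourself note at the end that you would ultimately cite the theorem rather than carry out the log-concave/Pr\'ekopa--Leindler induction that supplies the dimension-free normalization of the leading coefficient; so functionally your treatment coincides with the paper's (both appeal to \cite{CW:01}). One small remark on the sketch itself: the Carbery--Wright argument does not actually proceed via Gaussian hypercontractivity and Paley--Zygmund to lower-bound $|a_m(\x')|$ --- that mixes in a different line of reasoning more common in invariance-principle proofs --- rather, their result already builds in the comparison between the $L^{2/m}$ quasi-norm and the level-set measure uniformly over log-concave measures, and the Gaussian case is read off by taking $\mu=\cN_d$ and normalizing by $\|p\|_{L^2}$. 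Since the statement is used purely as a cited tool, this discrepancy is harmless here, but if you intended the sketch as a genuine proof outline it would need to be replaced by the actual CW mechanism.
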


\begin{fact}[see, e.g., Lemma 6 in \cite{KTZ19}]\label{fact:gradientNorm}
 Let  $f \in L^2(\mathbb{R}^d, \cN_d)$ with its $k$-degree Hermite expansion $f(\bx)= \sum_{\alpha\in \N^d,\|\alpha\|_1\leq k}\widehat{f}(\alpha)H_{\alpha}(\bx)$. It holds that $\mathbb{E}_{\bx \sim \cN_d} \left[ (\nabla f (\bx) \cdot \be_i)^2 \right] = \sum_{\alpha \in \mathbb{N}^d
 \|\alpha\|_1\leq k} \alpha_i (\widehat{f}(\alpha))^2$.

\end{fact}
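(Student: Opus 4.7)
The plan is to reduce the claim to a short computation using the fact that the multivariate normalized Hermite polynomials $\{H_\alpha\}_{\alpha \in \N^d}$ form an orthonormal basis of $L^2(\R^d, \cN_d)$, together with a clean formula for the partial derivative of $H_\alpha$. The key ingredient is the one-dimensional identity $h_n'(t) = \sqrt{n}\, h_{n-1}(t)$ for the normalized probabilist's Hermite polynomials $h_n$ from \Cref{def:Hermite-poly}, which follows from the classical unnormalized identity $\mathrm{\textit{He}}_n'(t) = n\,\mathrm{\textit{He}}_{n-1}(t)$ after dividing through by $\sqrt{n!}$.

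The first step is to lift this to the multivariate setting. Since $H_\alpha(\bx) = \prod_{j=1}^d h_{\alpha_j}(x_j)$, differentiating in the $i$-th coordinate and using the product rule yields
\begin{equation*}
\partial_i H_\alpha(\bx) \;=\; h_{\alpha_i}'(x_i)\prod_{j\neq i} h_{\alpha_j}(x_j) \;=\; \sqrt{\alpha_i}\, H_{\alpha - \be_i}(\bx),
\end{equation*}
with the convention that this expression is zero when $\alpha_i = 0$. Substituting the Hermite expansion of $f$ and exchanging sum with derivative (justified by the finiteness of the expansion, since we assume the degree is at most $k$) gives
\begin{equation*}
\partial_i f(\bx) \;=\; \sum_{\alpha \in \N^d,\, \|\alpha\|_1 \le k} \widehat{f}(\alpha)\, \sqrt{\alpha_i}\, H_{\alpha - \be_i}(\bx).
\end{equation*}

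The second step is to take the expected square and apply orthonormality. For two indices $\alpha \neq \alpha'$ with $\alpha_i, \alpha_i' \ge 1$, we have $\alpha - \be_i \neq \alpha' - \be_i$, so $\E_{\bx \sim \cN_d}[H_{\alpha - \be_i}(\bx)\, H_{\alpha' - \be_i}(\bx)] = 0$, while the diagonal term is $1$. Therefore
\begin{equation*}
\E_{\bx \sim \cN_d}\big[(\partial_i f(\bx))^2\big] \;=\; \sum_{\alpha \in \N^d,\, \|\alpha\|_1 \le k} \alpha_i\, \widehat{f}(\alpha)^2,
\end{equation*}
which is the stated identity (terms with $\alpha_i = 0$ vanish automatically).

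There is no real obstacle here; the only subtlety is correctly tracking the normalization of the Hermite polynomials so that the derivative formula produces the factor $\sqrt{\alpha_i}$ rather than $\alpha_i$, which is exactly what is needed to yield a single factor of $\alpha_i$ after squaring. If the result is required for general $f \in L^2(\R^d, \cN_d)$ with $\nabla f \in L^2$ (beyond the finite-degree case stated), one invokes a standard density/truncation argument, but the finite-degree version stated here follows directly from the two steps above.
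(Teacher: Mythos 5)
Your proof is correct and is the standard argument for this identity (the paper itself does not give a proof but cites it as Lemma 6 of \cite{KTZ19}). The derivative formula $\partial_i H_\alpha = \sqrt{\alpha_i}\, H_{\alpha-\be_i}$ together with orthonormality and injectivity of $\alpha \mapsto \alpha - \be_i$ on $\{\alpha : \alpha_i \ge 1\}$ is exactly the right reduction, and your bookkeeping of the normalization (getting $\sqrt{\alpha_i}$ rather than $\alpha_i$ from the derivative, so that squaring yields a single factor of $\alpha_i$) is the only place one could slip up, and you handled it correctly.
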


\subsection{Omitted Content from \Cref{sec:agnostic} to \Cref{sec:applications}}

\begin{fact}[see e.g. Claim B.1 \cite{diakonikolas2025robustlearningmultiindexmodels}]
    \label{cl:coorelationofeigenvalues}
 Let $\bM\in \mathbb{R}^{d\times d}$ a symmetric positive semi-definite (PSD) matrix and let \(\bv\in \mathbb{R}^d \) with $\norm{\bv}\le 1$ such that  \( \bv^\top  \bM \bv \ge \alpha \). 
Then there exists a unit eigenvector $\bu$ of $\bM$ with eigenvalue at least $\alpha/2$ such that
\(
\abs{\bu\cdot \bv} \gtrsim ({\alpha}/{\|\bM\|_F})^{3/2}\;.
\)
Moreover, the number of eigenvectors of $\bM$ with eigenvalue greater than $\alpha/2$ is at most
\(
{4\|\bM\|_F}/{\alpha^2}\;.
\)
\end{fact}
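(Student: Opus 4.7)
The plan is to work in the eigenbasis of $\bM$. Let $\bM = \sum_i \lambda_i \bu_i \bu_i^\top$ be the spectral decomposition with $\lambda_i \geq 0$ and $\{\bu_i\}$ orthonormal, and write $\bv = \sum_i c_i \bu_i$ where $c_i = \bu_i \cdot \bv$ satisfies $\sum_i c_i^2 = \|\bv\|^2 \leq 1$. In this basis the quadratic form becomes $\bv^\top \bM \bv = \sum_i \lambda_i c_i^2 \geq \alpha$.

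The first key step will be to peel off the contribution of low eigenvalues. Let $S = \{i : \lambda_i \geq \alpha/2\}$. Then $\sum_{i \notin S} \lambda_i c_i^2 \leq (\alpha/2)\sum_i c_i^2 \leq \alpha/2$, so the surviving ``high'' part satisfies
\begin{equation*}
    \sum_{i \in S} \lambda_i c_i^2 \geq \alpha/2.
\end{equation*}
This isolates the contribution coming from eigenvectors with eigenvalue at least $\alpha/2$, which are exactly the eigenvectors eligible to be the $\bu$ promised in the statement.

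The second key step will be a Cauchy--Schwarz argument to extract a single large coefficient from this sum. Writing $\sum_{i\in S}\lambda_i c_i^2 \leq (\max_{i \in S}|c_i|)\sum_{i \in S}\lambda_i |c_i| \leq (\max_{i \in S}|c_i|)\,\|\bM\|_F\,(\sum_{i\in S} c_i^2)^{1/2} \leq (\max_{i\in S}|c_i|)\,\|\bM\|_F$, and combining with the lower bound above, I obtain $\max_{i\in S}|c_i| \geq \alpha/(2\|\bM\|_F)$. This in fact gives the linear rate $\alpha/\|\bM\|_F$, which is stronger than the stated $(\alpha/\|\bM\|_F)^{3/2}$ (note that $\alpha \leq \|\bM\|_2 \leq \|\bM\|_F$, so the ratio lies in $[0,1]$ and the $3/2$-power bound follows a fortiori). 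Picking $\bu = \bu_{i^*}$ for the maximizing index then yields the desired unit eigenvector.

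For the cardinality bound, the observation is simply that $\sum_i \lambda_i^2 = \|\bM\|_F^2$, so each eigenvalue at least $\alpha/2$ contributes at least $\alpha^2/4$ to this sum, giving $|S| \leq 4\|\bM\|_F^2/\alpha^2$. I don't anticipate any real obstacle here: the whole argument is a two-line application of Cauchy--Schwarz after the spectral decomposition, and the main subtlety is making sure one restricts to the high-eigenvalue indices before invoking Cauchy--Schwarz, so that the extracted coordinate is guaranteed to correspond to an eigenvalue $\geq \alpha/2$ rather than some eigenvalue in $(0,\alpha/2)$ that could in principle have large coefficient without contributing meaningfully to the quadratic form.
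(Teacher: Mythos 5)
Your argument is correct, and it is the standard spectral-decomposition-plus-Cauchy--Schwarz approach that the cited Claim~B.1 also uses; the paper here does not reprove the fact, so there is no separate in-text proof to compare against. Two remarks on the constants. First, your bound $\max_{i\in S}|c_i|\ge \alpha/(2\|\bM\|_F)$ is indeed strictly stronger than the stated $(\alpha/\|\bM\|_F)^{3/2}$, since $\alpha=\bv^\top\bM\bv\le\|\bM\|_2\|\bv\|^2\le\|\bM\|_F$ forces $\alpha/\|\bM\|_F\le1$; your restriction to the index set $S$ before applying Cauchy--Schwarz is exactly the step needed to guarantee the extracted eigenvector has eigenvalue $\ge\alpha/2$, and you handle it properly.

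Second, you derive the count bound $|S|\le 4\|\bM\|_F^2/\alpha^2$, whereas the statement as printed reads $4\|\bM\|_F/\alpha^2$ (no square). Your version is the correct one and the printed one is a typo: taking $\bM=\alpha\bI_k$ padded by zeros gives $\|\bM\|_F=\alpha\sqrt{k}$, any unit vector in the range achieves $\bv^\top\bM\bv=\alpha$, and there are $k$ eigenvalues equal to $\alpha>\alpha/2$, while $4\|\bM\|_F/\alpha^2=4\sqrt{k}/\alpha$ is smaller than $k$ once $k>16/\alpha^2$. Your $4\|\bM\|_F^2/\alpha^2=4k$ holds. (The downstream uses in the paper only need a $\poly(\|\bM\|_F/\alpha)$ bound, so nothing breaks, but the statement should read $4\|\bM\|_F^2/\alpha^2$.)
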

\begin{fact}[see e.g. Claim 4.12 \cite{diakonikolas2025robustlearningmultiindexmodels}]
    \label{cl:smallQuadraticFormNotLargeEigProj}
Let $\vec M\in \R^{d\times d}$ be a symmetric, PSD matrix and let $\vec v\in \R^d$ be a unit vector such that $\vec v^{\top} \vec M \vec v\le \eps$. Let $U$ denote  the set of unit eigenvectors of $\vec M$ with eigenvalue {at least} $\lambda$. Then, for every $\vec u\in U$, it holds that $\abs{\vec u\cdot \vec v}\le \sqrt{\eps/\lambda}$.
\end{fact}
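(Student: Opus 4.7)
The plan is to bound the quadratic form $\vec v^\top \vec M \vec v$ from below in terms of the squared correlation $(\vec u \cdot \vec v)^2$, for any fixed unit eigenvector $\vec u \in U$ with eigenvalue $\mu \geq \lambda$. Since $\vec M$ is symmetric and PSD, it admits a spectral decomposition with an orthonormal eigenbasis, and the eigenvector $\vec u$ picks out a particular direction along which the quadratic form contributes exactly $\mu$ times the squared projection.

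Concretely, I would decompose $\vec v$ along $\vec u$ and its orthogonal complement as $\vec v = (\vec u \cdot \vec v)\, \vec u + \vec w$ with $\vec w \perp \vec u$. Plugging into the quadratic form gives
\[
\vec v^\top \vec M \vec v \;=\; (\vec u \cdot \vec v)^2\, \vec u^\top \vec M \vec u \;+\; 2(\vec u \cdot \vec v)\, \vec w^\top \vec M \vec u \;+\; \vec w^\top \vec M \vec w.
\]
The middle term vanishes because $\vec M \vec u = \mu \vec u$ and $\vec w \perp \vec u$, and the last term is nonnegative by PSD-ness of $\vec M$. The first term equals $\mu (\vec u \cdot \vec v)^2 \geq \lambda (\vec u \cdot \vec v)^2$ by the eigenvalue lower bound. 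Combining with the hypothesis $\vec v^\top \vec M \vec v \leq \eps$ yields $\lambda (\vec u \cdot \vec v)^2 \leq \eps$, and rearranging gives the claimed bound $|\vec u \cdot \vec v| \leq \sqrt{\eps/\lambda}$.

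There is no real obstacle here; the only thing to be slightly careful about is that $\vec u$ need not be an extremal eigenvector, so one should avoid any variational argument that would require it to maximize or minimize something. Decomposing along $\vec u$ directly and using PSD-ness of $\vec M$ on the orthogonal complement sidesteps this issue cleanly, and the argument requires only a single line of computation once the decomposition is written down.
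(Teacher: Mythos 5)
Your proof is correct. The decomposition $\vec v = (\vec u\cdot\vec v)\,\vec u + \vec w$ with $\vec w\perp\vec u$, the cancellation of the cross term via $\vec M\vec u=\mu\vec u$, and the use of PSD-ness to drop $\vec w^\top\vec M\vec w\ge 0$ give exactly $\eps \ge \vec v^\top\vec M\vec v \ge \mu(\vec u\cdot\vec v)^2 \ge \lambda(\vec u\cdot\vec v)^2$, which is the claim. The paper states this as a fact cited from prior work without reproducing a proof, so there is nothing to compare against directly; your argument is the standard one (essentially the spectral-decomposition bound $\vec v^\top\vec M\vec v=\sum_i \mu_i(\vec u_i\cdot\vec v)^2\ge\lambda(\vec u\cdot\vec v)^2$, localized to a single eigenvector), and your observation that one need not appeal to extremality of $\vec u$ is apt.
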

\begin{fact}[see, e.g., Lemma 3.3 in \cite{DKKTZ21}]\label{fact:regressionAlg}
     Let $\mathcal{D}$ be a distribution on $\mathbb{R}^d \times \{\pm 1\}$ whose x-marginal is $\cN_d$. Let $k \in \mathbb{Z}_+$ and $\epsilon, \delta > 0$. There is an algorithm that draws $N = (dk)^{O(k)} \log(1/\delta)/\epsilon^2$ samples from $\mathcal{D}$, runs in time $\text{poly}(N, d)$, and outputs a polynomial $P(\bx)$ of degree at most $k$ such that
$
\mathbb{E}_{(\bx,y) \sim \mathcal{D}} [(y - P(\bx))^2] \leq \min_{P' \in \mathcal{P}_k} \mathbb{E}_{(\bx,y) \sim \mathcal{D}} [(y - P'(\bx))^2] + \epsilon,
$ 
with probability $1 - \delta$.
\end{fact}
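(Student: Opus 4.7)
The plan is to reduce the problem to estimating Hermite coefficients of the optimal degree-$k$ polynomial. Since the normalized multivariate Hermite polynomials $\{H_\alpha : \alpha \in \mathbb{N}^d\}$ form an orthonormal basis of $L^2(\mathbb{R}^d, \cN_d)$, the space $\mathcal{P}_k$ of polynomials of degree at most $k$ has an orthonormal basis of size $D = \binom{d+k}{k} = (dk)^{O(k)}$, and the population-risk minimizer over $\mathcal{P}_k$ is the projection $P^*(\bx) = \sum_{|\alpha|\leq k} c^*_\alpha H_\alpha(\bx)$ with $c^*_\alpha = \E_{(\bx,y)\sim\mathcal{D}}[y\, H_\alpha(\bx)]$. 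The algorithm simply estimates each coefficient by the empirical average $\hat c_\alpha = (1/N)\sum_{i=1}^N y_i H_\alpha(\bx^{(i)})$ and returns $\hat P = \sum_{|\alpha|\leq k} \hat c_\alpha H_\alpha$, which amounts to ordinary least squares in the Hermite basis.

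To control the excess risk, I would use Parseval's identity to write $\E_{\bx\sim\cN_d}[(\hat P(\bx)-P^*(\bx))^2] = \sum_{|\alpha|\leq k} (\hat c_\alpha - c^*_\alpha)^2$, so it suffices to ensure this sum is at most $\epsilon$. Each term is controlled by concentrating $\hat c_\alpha$ about its mean. Since $y\in\{\pm 1\}$ and $H_\alpha$ has unit $L^2$ norm under $\cN_d$, the variance of $y\,H_\alpha(\bx)$ is at most $1$. Combining this with Gaussian hypercontractivity (\Cref{fact:fourthmoment}), which bounds the fourth moment of any degree-$k$ polynomial by $3^{k}$ times the square of its $L^2$ norm, a Bernstein-type bound (applied after a standard truncation of the sub-Gaussian-like tails of $H_\alpha$) yields $|\hat c_\alpha - c^*_\alpha|^2 \leq \epsilon/D$ with probability at least $1-\delta/D$, provided $N = (dk)^{O(k)}\log(1/\delta)/\epsilon^2$. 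A union bound over the $D$ multi-indices $\alpha$ and Parseval then give $\E[(\hat P - P^*)^2] \leq \epsilon$.

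To finish, I would expand
\[
\E[(y-\hat P(\bx))^2] - \E[(y-P^*(\bx))^2] = \E[(\hat P(\bx) - P^*(\bx))^2] - 2\,\E[(y-P^*(\bx))(\hat P(\bx) - P^*(\bx))],
\]
and observe that the cross term vanishes, because $y - P^*(\bx)$ is orthogonal (under $\cN_d$) to every polynomial in $\mathcal{P}_k$ by the first-order optimality conditions defining $P^*$, while $\hat P - P^* \in \mathcal{P}_k$. This gives the desired excess-risk bound of $\epsilon$.

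The main subtlety I would watch for is the concentration step for each $\hat c_\alpha$: since $H_\alpha$ is unbounded, Hoeffding does not apply directly. Hypercontractivity supplies the required fourth-moment control, and the tail bound $\Pr[|H_\alpha(\bx)|>t] \leq \exp(-\Omega(t^{2/k}))$ lets us truncate at $t = \operatorname{polylog}(D/\delta)$ and then invoke Bernstein. Everything else is routine: the algorithm computes $D$ empirical averages each over $N$ samples and returns the resulting polynomial, so the runtime is $\poly(N,d)$, and the overall sample complexity is $N = (dk)^{O(k)}\log(1/\delta)/\epsilon^2$, as claimed.
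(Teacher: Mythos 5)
Your argument is sound: the paper itself cites this as Lemma 3.3 of \cite{DKKTZ21} without reproducing a proof, and your reconstruction via the Hermite basis is the standard way to prove it. Projecting onto the $\binom{d+k}{k}=(dk)^{O(k)}$ low-degree Hermite coefficients, using Parseval to reduce the excess risk to $\sum_\alpha(\hat c_\alpha-c^*_\alpha)^2$, concentrating each coefficient estimate (truncation plus Bernstein, or equivalently median-of-means as in \Cref{fact:medianofmeans}) and observing that the cross term $\E[(y-P^*)(\hat P-P^*)]$ vanishes by first-order optimality of $P^*$ in $\mathcal P_k$, all check out. Two small cosmetic points: your estimator $\hat P=\sum_\alpha\hat c_\alpha H_\alpha$ is the coefficient-matching (method-of-moments) estimator, not literally ordinary least squares, since the empirical Hermite Gram matrix is not the identity -- but that is the cleaner estimator here and the claim holds for it; and the fourth-moment bound from \Cref{fact:fourthmoment} is $3^{2k}$, not $3^k$, which is irrelevant to the final $(dk)^{O(k)}$ count. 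The sample complexity your argument actually yields is $(dk)^{O(k)}\log(1/\delta)/\eps$, which is even a bit better than the stated $1/\eps^2$, so the Fact follows.
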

\begin{fact}[see, e.g., Lemma 3.3 in \cite{DKKTZ21}]
\label{fact:regressionSubspaceBound}
    Fix $\epsilon \in (0,1)$ and let $P(\bx)$ be a degree-$k$ polynomial, such that
$
\mathbb{E}_{(\bx,y) \sim \mathcal{D}}[(y - P(\bx))^2] \leq \min_{P' \in \mathcal{P}_k} \mathbb{E}_{(\bx,y) \sim \mathcal{D}}[(y - P'(\bx))^2] + O(\epsilon).
$
Let $\bM = \mathbb{E}_{\bx \sim \mathcal{D}_\bx} [\nabla P(\bx) \nabla P(\bx)^\top]$ and $V$ be the subspace spanned by the eigenvectors of $\bM$ with eigenvalues larger than $\eta$. Then the dimension of the subspace $V$ is $\dim(V) = O(k/\eta)$ and moreover $\tr(\bM)=O(k)$.
\end{fact}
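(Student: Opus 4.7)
The plan is to reduce both conclusions—the trace estimate and the dimension bound on $V$—to a single $L^2$ bound on $P$, via the Hermite expansion. Write $P(\bx)=\sum_{\alpha\in\mathbb{N}^d,\,\|\alpha\|_1\leq k}\widehat P(\alpha)\,H_\alpha(\bx)$, where the multivariate Hermite polynomials $H_\alpha$ form an orthonormal basis of $L^2(\cN_d)$. Orthogonality then gives $\E_{\bx\sim\cN_d}[P(\bx)^2]=\sum_\alpha\widehat P(\alpha)^2$, and \Cref{fact:gradientNorm} yields $\E_{\bx}[(\nabla P(\bx)\cdot\be_i)^2]=\sum_\alpha\alpha_i\widehat P(\alpha)^2$ for each coordinate direction $\be_i$.

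First I would bound $\E[P^2]$ by an absolute constant. In the cited setting the labels are Boolean ($y\in\{\pm 1\}$), and in the way the fact is used throughout the paper the regression target is an indicator $\Ind(y\in I)\in\{0,1\}$; in either case $\E[y^2]\leq 1$. Since the zero polynomial is always a competitor in $\mathcal{P}_k$, we have $\min_{P'\in\mathcal{P}_k}\E[(y-P'(\bx))^2]\leq \E[y^2]\leq 1$, and combining with the near-optimality hypothesis, $\E[(y-P(\bx))^2]\leq 1+O(\epsilon)$. The $L^2$ triangle inequality then gives $\|P\|_{L^2(\cN_d)}\leq \|y\|_{L^2}+\|y-P\|_{L^2}=O(1)$, so $\sum_\alpha\widehat P(\alpha)^2=O(1)$.

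Second, for the trace bound, I would sum the coordinate-wise identity over $i\in[d]$:
\[
\tr(\bM)=\sum_{i=1}^d\E_{\bx\sim\cN_d}\bigl[(\nabla P(\bx)\cdot\be_i)^2\bigr]=\sum_\alpha\|\alpha\|_1\,\widehat P(\alpha)^2\leq k\sum_\alpha\widehat P(\alpha)^2=O(k),
\]
where the inequality uses that $\widehat P(\alpha)$ is supported on $\|\alpha\|_1\leq k$. For the dimension of $V$, let $\lambda_1\geq\lambda_2\geq\cdots\geq 0$ be the eigenvalues of the PSD matrix $\bM$; by definition $V$ is spanned by those eigenvectors with eigenvalue strictly greater than $\eta$, hence
\[
\eta\cdot\dim(V)\leq \sum_{i:\lambda_i>\eta}\lambda_i\leq \tr(\bM)=O(k),
\]
which gives $\dim(V)=O(k/\eta)$.

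There is no real obstacle here; the only point that requires care is verifying that the target of the regression is bounded in $L^2$ by an absolute constant, which is immediate in every invocation of the fact in the paper because the regression in \Cref{alg:MetaAlg2} is performed against $\Ind(y\in I)$ (or, in the original cited source, against labels in $\{\pm 1\}$). Once that is in place, the argument is a direct application of Parseval's identity and \Cref{fact:gradientNorm}, and the two claims follow in parallel from $\sum_\alpha\widehat P(\alpha)^2=O(1)$ together with the degree constraint $\|\alpha\|_1\leq k$.
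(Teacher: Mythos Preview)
The paper does not prove this statement; it is stated as a \texttt{Fact} with a citation to \cite{DKKTZ21} and no proof is given in the text. Your argument is correct and is precisely the standard one: bound $\|P\|_{L^2(\cN_d)}=O(1)$ using that the regression target is bounded and the zero polynomial is a competitor, then use the Hermite identity $\tr(\bM)=\E[\|\nabla P\|^2]=\sum_\alpha\|\alpha\|_1\,\widehat P(\alpha)^2\le k\,\|P\|_{L^2}^2=O(k)$, and finally read off $\dim(V)\le \tr(\bM)/\eta=O(k/\eta)$ from positivity of the eigenvalues.
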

\begin{fact}[Approximation of a Bounded Variation Function using Cubes]\label{fact:piecewiseconstantfromboundedgradientnorm}
There exists a sufficiently small constant $c>0$ such that the following holds.
Let $f:\R^d\to \R$ continuous and continuous differentiable almost everywhere such that 
$\E_{\bx\sim \cN_d}[\norm{\nabla f(\bx)}^2]\leq L$.
Moreover, assume that there exists a $B>0$ and $f_B:\R^d\to [-B,B]$ such that $\E_{\bx\sim\mathcal N_d}[(f(\x)-f_B(\x))^2]\leq \rho$.
Denote by $h:\R^d\to \R$ the piecewise constant approximation $h(\bx)=\E_{\bx \sim \cN_d}[f(\bx) \mid \bx\in S]$, for all $\bx\in S$ and $S\in \mathcal{S}$, where $\mathcal{S}$ is a collection of consecutive cubes over $\R^d$ of width $\eta\leq c\eps/(Ld\log(B))$, i.e. $\mathcal{S}$ denotes all subsets of $\R^d$ of the form $\{\bx: \bj_i\eps+t \leq \bx_i\leq  (\bj_{i}+1)\eps+t\}, \bj_i\in \Z^d, t\in [0,\eps/2]$.

Then $\E_{\bx\sim \cN_d}[(f(\bx)-h(\bx))^2]\leq \eps+2\rho$.
\end{fact}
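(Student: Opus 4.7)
The plan is to split $\R^d$ into a ``good'' region $G=[-T,T]^d$ (with $T=C\sqrt{\log(Bd/\eps)}$ chosen so that $\mu(G^c)$ is tiny, where $\mu$ denotes standard Gaussian measure) and its complement, and to bound $\E[(f-h)^2]=\sum_{S\in\mathcal{S}}\mu(S)\,\Var_{\mu_S}(f)$ separately over the cubes $\mathcal{S}_0=\{S\subseteq G\}$ and $\mathcal{S}_1=\mathcal{S}\setminus\mathcal{S}_0$, where $\mu_S$ denotes $\mu$ conditioned on $S$. The identity $\E_{\mu_S}[(f-h(S))^2]=\Var_{\mu_S}(f)$ follows because $h$ equals the conditional mean on each cube.

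For the inner cubes $S\in\mathcal{S}_0$, I would prove a Poincaré-type inequality $\Var_{\mu_S}(f)\lesssim \eta^2\,e^{T\eta}\,\E_{\mu_S}[\|\nabla f\|^2]$ via the following coordinate-telescoping argument. Since $\mathcal{N}_d$ has independent coordinates, the conditional $\mu_S$ on the product cube $S=\prod_i[a_i,a_i+\eta]$ is also a product measure $\nu_1\otimes\cdots\otimes\nu_d$, where $\nu_i$ is the one-dimensional Gaussian restricted to $[a_i,a_i+\eta]$. For $\x,\y$ iid from $\mu_S$, write $f(\y)-f(\x)$ as a telescoping sum $\sum_i\int_{x_i}^{y_i}\partial_i f(\vec w_{(i)}(t))\,dt$ where $\vec w_{(i)}(t)$ coincides with $\y$ on the first $i-1$ coordinates, equals $t$ on coordinate $i$, and equals $\x$ on the last $d-i$. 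Applying Cauchy--Schwarz gives $(f(\y)-f(\x))^2\leq d\eta\sum_i\int_{a_i}^{a_i+\eta}(\partial_i f(\vec w_{(i)}(t)))^2\,dt$. Taking expectation and using the density-ratio bound $d\nu_i/dt\geq(1/\eta)e^{-T\eta}$ on cubes inside $G$, the Lebesgue integral inside is at most $\eta e^{T\eta}\E_{\nu_i}[\cdot]$; moreover, once $t\sim\nu_i$ the vector $\vec w_{(i)}$ is distributed exactly as $\mu_S$, by the product structure. This yields $\Var_{\mu_S}(f)\leq \tfrac12 d\eta^2e^{T\eta}\E_{\mu_S}[\|\nabla f\|^2]$. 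Summing over $\mathcal{S}_0$ and using $\E[\|\nabla f\|^2]\leq L$ gives $\sum_{\mathcal{S}_0}\mu(S)\Var_{\mu_S}(f)\leq \tfrac12 d\eta^2 e^{T\eta}L$, which is at most $\eps/2$ for small enough $c$ because $d\eta L=O(\eps/\log B)$ and $T\eta=O(1)$ by the choice of parameters.

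For the boundary cubes $S\in\mathcal{S}_1$, I use the crude bound $\Var_{\mu_S}(f)\leq \E_{\mu_S}[f^2]$, so that $\sum_{\mathcal{S}_1}\mu(S)\Var_{\mu_S}(f)\leq \E[f^2\Ind(\x\in \widetilde{G}^c)]$, where $\widetilde{G}^c$ is the $\eta\sqrt{d}$-enlargement of $G^c$. Splitting $f=f_B+(f-f_B)$ and applying $(a+b)^2\leq 2a^2+2b^2$ gives $\E[f^2\Ind(\widetilde{G}^c)]\leq 2B^2\mu(\widetilde{G}^c)+2\rho$. Since $\widetilde{G}^c\subseteq \{\|\x\|_\infty\geq T-\eta\}$, a union bound plus standard Gaussian tails controls $\mu(\widetilde{G}^c)\leq 2d e^{-(T-\eta)^2/2}/\sqrt{2\pi}\leq \eps/(4B^2)$ by the choice of $T$, so this contribution is at most $\eps/2+2\rho$.

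Combining the two regions yields $\E[(f-h)^2]\leq \eps+2\rho$, as desired. The main technical step to execute carefully is the Poincaré-type inequality on each cube: the payoff of the coordinate-telescoping argument (as opposed to a straight-line path) is that it uses the product structure of $\mu_S$ to avoid the $2^d$ blow-up that otherwise appears when comparing the Lebesgue density of a convex combination of two uniform samples against the Gaussian density. The other delicate point is checking that the choice $\eta\leq c\eps/(Ld\log B)$ simultaneously makes $d\eta^2 L\lesssim \eps$ and keeps $T\eta$ bounded even though $T$ grows logarithmically in $B/\eps$; this is where the $\log B$ factor in the hypothesis on $\eta$ is used.
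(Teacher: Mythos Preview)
Your proposal is correct and follows the same high-level strategy as the paper: split $\R^d$ into a bounded ``good'' region and its complement, establish a Poincar\'e-type inequality on each good cube to control $\sum_S \mu(S)\Var_{\mu_S}(f)$ by $\eta^2\,\E[\|\nabla f\|^2]$ (up to a density-ratio factor), and handle the tail via the assumption that $f$ is $L^2$-close to a $B$-bounded function. There are two execution differences worth noting. First, for the Poincar\'e step the paper uses a straight-line path $(1-t)\y+t\x$, works under uniform (Lebesgue) measure on the cube, and then transfers to the Gaussian conditional via the global density ratio $\kappa_S=\phi_{\max}^S/\phi_{\min}^S$; your coordinate-telescoping argument exploits the product structure of $\mu_S$ directly and only picks up a single one-dimensional ratio $e^{T\eta}$. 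Your route is actually cleaner here: the paper's passage from $\E_{\x,\y\sim\mathrm{Unif}(S)}\int_0^1\|\nabla f((1-t)\y+t\x)\|^2\,dt$ to $\frac{1}{|S|}\int_S\|\nabla f\|^2$ is not fully justified as written (the naive change of variables produces a $t^{-d}$ Jacobian), whereas your telescoping avoids this entirely---exactly the ``$2^d$ blow-up'' you flag. Second, the paper takes the good region to be an $\ell_2$-ball $\{\|\x\|_2\le R\}$ (invoking the Gaussian Annulus theorem) rather than your $\ell_\infty$-cube; this is cosmetic. Your tail bound via $\Var_{\mu_S}(f)\le\E_{\mu_S}[f^2]$ is also slightly more direct than the paper's, which introduces the piecewise-constant approximation of $f_B$ as an intermediary.
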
\begin{proof}
Denote by $ \phi(\bx)=(2\pi)^{-d/2}\exp(-\|\bx\|^2/2) $ and set 
$ \mu_S(\bx)=\phi(\bx)/\Pr_{\bx\sim\cN_d}[\bx\in S].$
For each cube $S\in\cS$ define 
$ f_S=\E_{\bz\sim\mu_S}[f(\bz)] $ and $
   m_S=1/|S|\int_S f(\bz)\,d\bz, $
where we denote by $\abs{S}$ the geometric volume of the set $S$.
Write 
\[ \phi_{\min}^S=\inf_{\x\in S}\phi(\x),\quad
   \phi_{\max}^S=\sup_{\x\in S}\phi(\x),\quad
   \kappa_S= \frac{\phi_{\max}^S}{\phi_{\min}^S} \]
Fix $R=\sqrt{d\log(B/\eps)/c }$ and let $T=\bigcup\{S:\min_{\x\in S}\|\x\|>R\}$.  From the Gaussian Annulus Theorem (\Cref{fact:Annulus}) we have that  
$\Pr[\x\in T]\le \eps/(8B^2)$.

Moreover, note that since $f$ is close to a bounded function, by Jensen's inequality so is $h$. 
Indeed
$\E[(h(\x)-\E_{\bx \sim \cN_d}[f_B(\bx) \mid \bx\in S])^2]\leq \rho$.
Hence the tail error approximation error of $h$ is bounded 
the tail error $\E[(f(\x)-h(\x))^2\Ind(\x\in T)]\leq 2\rho+\eps/2$.

Hence without loss of generality we can consider cubes with $\|\x\|\le R$ for all $\x\in S$.
Moreover, for those cubes $S\not \subseteq T$ it holds that 
\[ \kappa_S\le \kappa \eqdef \exp( R\eta\sqrt d)\;. \]

From the fundamental theorem of calculus, for every $\x,\y\in S$, it holds that 
\[ f(\x)-f(\y)=\int_0^1\nabla f\bigl(\y+t(\x-\y)\bigr)\cdot(\x-\y)\,dt\;. \]
Hence, using Jensen's  inequality  we have 
\[ (f(\x)-f(\y))^2\le\|\x-\y\|^2\int_0^1\|\nabla f(\y+t(\x-\y))\|^2\,dt
              \le d\,\eta^2\int_0^1\|\nabla f(\y+t(\x-\y))\|^2\,dt \]
Averaging in $\y\in S$ and then in $\x\in S$ yields
\[\frac{1}{\abs{S}}\int_S(f(\x)-m_S)^2\,d\x
   \le d\,\eta^2 \frac{1}{\abs{S}}\int_S\|\nabla f(\x)\|^2\,d\x\;. \]
We can transfer the above bound to the gaussian case for all cubes $S\not \subseteq T$. Specifically,
\begin{align*}
\E_{\mu_S}[(f-m_S)^2]
&\le  \frac{\phi_{\max}^S\abs{S}}{\pr[S]}\E_{\mathrm{Unif}(S)}[(f-m_S)^2]\\
&\le \frac{\phi_{\max}^S\abs{S}}{\pr[S]} d\,\eta^2\,\E_{\mathrm{Unif}(S)}[\|\nabla f\|^2]\\
&\le \kappa_S d\eta^2\E_{\mu_S}[\|\nabla f\|^2].
\end{align*}
Moreover, since $f_S$ minimizes $\E_{\mu_S}[(f(\x)-m)^2]$, we have 
\[ \E_{\mu_S}[(f(\x)-f_S)^2]\le\E_{\mu_S}[(f(\x)-m_S)^2]\;. \]
Averaging over cubes we have 
\[ \E[(f(\x)-h(\x))^2]\le \kappa d\eta^2\E[\|\nabla f(\x)\|^2]+ 2\rho+\eps/2
  \le Ld\kappa\eta^2 + 2\rho+\eps/2 \;. \]
Setting $\eta= c^2\eps/(Ld\log(B))$ for a sufficiently small constant $c>0$ concludes the proof of \Cref{fact:piecewiseconstantfromboundedgradientnorm}.
\end{proof}

\begin{fact}[Discretization over a Subspace]\label{fact:subspaceApprox}
Let $f:\R^d\to\R$ is continuous,  continuous differentiable almost everywhere and satisfies $\E_{\bx\sim\mathcal N_d}[\|\nabla f(\bx)\|^2]\le L$.
Moreover, assume that there exists a $B>0$ and $f_B:\R^d\to [-B,B]$ such that $\E_{\bx\sim\mathcal N_d}[(f(\x)-f_B(\x))^2]\leq \rho$.
Let $V$ be a $k$-dimensional subspace of $\R^d$ and let $\vec v^{(1)},\dots,\vec v^{(k)}$ be an orthonormal basis of $V$ and let $\mathcal S$ be the partition of $V$ into axis-aligned cubes of width $\eps$.  
Define  $h(\bx)=\E_{\bz_V\sim\mathcal N_k}\bigl[f(\bz_V+\bx_{V^\perp})\mid \bz_V\in S\bigr]$.  
Then  
$\E_{\bx\sim\mathcal N_d}[(f(\bx)-h(\bx))^2]\le L\,k\,\eps^2$.  
\end{fact}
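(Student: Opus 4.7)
The plan is to reduce the $d$-dimensional error to a sum of one-dimensional truncated-Gaussian Poincar\'e-type estimates, one per cube of $\cS$, and to invoke the hypothesis that $f$ is close to a $B$-bounded function to tame those cubes where the Gaussian density oscillates too much across a side of length $\eps$.

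First, using the independence of $\bx_V$ and $\bx_{V^\perp}$ under $\cN_d$, I would factor the target expectation into an expected conditional variance over the partition $\cS$:
\begin{align*}
\E_{\bx \sim \cN_d}[(f(\bx) - h(\bx))^2] \;=\; \E_{\bx_{V^\perp}}\!\Bigl[\sum_{S \in \cS} \Pr_{\bx_V \sim \cN_k}[\bx_V \in S]\; \Var_{\cN_k \mid S}[g_{\bx_{V^\perp}}]\Bigr],
\end{align*}
where $g_{\bx_{V^\perp}}(\bz) \eqdef f(\bz + \bx_{V^\perp})$ is viewed as a function of $\bz \in V \cong \R^k$. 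Since each cube $S = \prod_{i=1}^k [a_i, a_i + \eps]$ is a product set and $\cN_k$ is a product measure, $\cN_k \mid S$ is a product of one-dimensional truncated Gaussians $\mu_{S,i}$ on $[a_i, a_i+\eps]$, and by the Efron--Stein tensorization of variance,
\begin{align*}
\Var_{\cN_k \mid S}[g_{\bx_{V^\perp}}] \;\le\; \sum_{i=1}^k \E_{\cN_k \mid S}\bigl[\Var_{\mu_{S,i}}[g_{\bx_{V^\perp}} \mid \bz_{-i}]\bigr].
\end{align*}

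Next, for each coordinate I would apply a one-dimensional path-integral argument: writing $\Var_{\mu_{S,i}}[\psi] = \tfrac{1}{2}\E_{u,v \sim \mu_{S,i}}[(\psi(u)-\psi(v))^2]$, using Cauchy--Schwarz inside the fundamental-theorem-of-calculus bound to get $(\psi(u)-\psi(v))^2 \le \eps \int_{a_i}^{a_i+\eps}\psi'(t)^2\,dt$, and then converting the Lebesgue integral on the right into a $\mu_{S,i}$-expectation through the density-ratio bound $\kappa_S \eqdef \sup_S \phi/\inf_S \phi$, I get $\Var_{\mu_{S,i}}[\psi] \le \kappa_S\,\eps^2\,\E_{\mu_{S,i}}[\psi'(t)^2]$. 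Plugging this into the tensorization bound and summing over $i$ yields the per-cube estimate $\Var_{\cN_k\mid S}[g_{\bx_{V^\perp}}] \le \kappa_S\,\eps^2\,\E_{\cN_k\mid S}[\|\nabla_V f(\bz+\bx_{V^\perp})\|^2]$. Averaging over $\bx_{V^\perp}$ and summing over $S$ produces
\begin{align*}
\E[(f-h)^2] \;\le\; \eps^2\sum_{S \in \cS} \kappa_S\,\Pr[\bx_V\in S]\,\E_{\bx \sim \cN_d \mid \bx_V\in S}\!\left[\|\nabla_V f(\bx)\|^2\right].
\end{align*}

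The main obstacle is absorbing the oscillation factors $\kappa_S$. For cubes contained in a ball of radius $R = \Theta(\sqrt{\log(B/\eps)/k})$ one has $\kappa_S \le \exp(R\sqrt{k}\,\eps) = O(1)$, so the near contribution is bounded by $O(\eps^2)\,\E_{\bx \sim \cN_d}[\|\nabla f\|^2] \le O(L\,k\,\eps^2)$ via the global gradient hypothesis. For the far cubes, the Gaussian tail bound (\Cref{fact:Annulus}) gives total mass $O(\eps^2/B^2)$, and the assumption that $f$ is $\rho$-close to a $B$-bounded function $f_B$ lets me replace $f$ by $f_B$ in the tail at an extra $L^2$ cost of $O(\rho)$; the piecewise-constant approximation of $f_B$ then contributes at most $B^2\cdot O(\eps^2/B^2) = O(\eps^2)$ to $\E[(f-h)^2]$. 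Calibrating $R$ and the constants so that the tail piece is dominated by the interior piece yields the stated bound $L k \eps^2$. The trickiest bookkeeping is ensuring that the tail absorption does not spoil the $\eps^2$ factor; this is exactly where the $B$-boundedness hypothesis is used.
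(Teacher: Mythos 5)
Your proof is essentially correct and follows the same high-level strategy as the paper: decompose the error into per-cube conditional variances, bound each by a Poincar\'e-type estimate proportional to the in-cube gradient mass times $\eps^2$, convert between Lebesgue and Gaussian measure via the density-ratio $\kappa_S$, and split into near cubes (where $\kappa_S = O(1)$) and far cubes (absorbed via the $B$-bounded approximant $f_B$ plus Gaussian tails). The paper does this by first passing globally to the truncated $f_B$ (checking $\E[\|\nabla f_B\|^2]\le L$) and then invoking its separate full-space lemma (\Cref{fact:piecewiseconstantfromboundedgradientnorm}) restricted to the $V$-coordinates; you instead make it self-contained via Efron--Stein tensorization plus a one-dimensional FTC/Cauchy--Schwarz bound. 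This is a somewhat cleaner decomposition, and in fact your tensorized argument gives the tighter bound $O(L\eps^2)$ without the extra $k$ factor (the $k$ appearing later in your writeup is spurious; $\E[\|\nabla f\|^2]\le L$, not $Lk$). Two small issues worth flagging: (i) the radius should be $R=\Theta(\sqrt{k}+\sqrt{\log(B/\eps)})$ (or $\Theta(\sqrt{k\log(B/\eps)})$), not $\Theta(\sqrt{\log(B/\eps)/k})$ --- as written, $R$ decreases with $k$ and neither the tail-mass bound nor the $\kappa_S=O(1)$ estimate goes through; and (ii) both your argument and the paper's own proof genuinely produce an additive $O(\rho)$ term from the switch to $f_B$ on the far cubes, which cannot be made subordinate to $L k\eps^2$ by ``calibrating constants.'' This is an imprecision in the stated bound rather than in your proof --- the conclusion should read $\E[(f-h)^2]\lesssim Lk\eps^2+\rho$, matching how the fact is actually invoked downstream.
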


\begin{proof}
Let $f_B$ be the truncation of $f$ to the interval $[-B,B]$, that is $f_B(\x)=\sign(f(\x))\min(\abs{f(\x)},B)$.
Note that $\E[(f_B(\x)-f(\x))^2]\leq \rho$ since $f_B$ is closer to $f$ than any other truncated function.

The function $f_B$ is continuous since we truncate at the level set $B$ continuously.
Moreover, $f_B$ is non‐differentiable at the points $\{\,f(\x)=B\}\cap\{\nabla f(\x)\neq\vec 0\}$.
But by the implicit‐function theorem, whenever $f(\x_0)=B$ and $\nabla f(\x_0)\neq\vec 0$
there is a neighborhood in which $\{x\colon f(x)=B\}$ is a $C^1$ submanifold of codimension $1$
in $\R^d$, hence of Lebesgue (and thus Gaussian) measure $0$. 
Therefore these extra non‐differentiable
points lie in a countable union, since each neighborhood contains at least one point in $\mathbb{Q}^d$, of such submanifolds
and so form a Gaussian‐measure $0$ set.

Moreover, almost everywhere
\[
\nabla f_B(\x)
=
\begin{cases}
\nabla f(\x), 
& |f(\x)| < B, \\[6pt]
\mathbf{0}, 
& |f(\x)| > B \text{ or } \bigl(f(\x)=\pm B \text{ with } \nabla f(\x)=\mathbf{0}\bigr),
\end{cases}
\]
and on the remaining set—namely 
$
\{\,f(\x)=\pm B\}\cap\{\nabla f(\x)\neq\mathbf{0}\}
$
together with the set of original non-differentiable points of $f$—the function fails to be differentiable but that set has Gaussian measure $0$.  Hence
\[
\E_{\x\sim\cN_d}\bigl[\|\nabla f_B(\x)\|^2\bigr]
\;\le\;
\E\bigl[\|\nabla f(\x)\|^2\bigr]
\;\le\;L.
\]

Denote by $h_B(\bx)=\E_{\bz_V\sim\mathcal N_k}\bigl[f_B(\bz_V+\bx_{V^\perp})\mid \bz_V\in S\bigr]$.
First, by the law of total expectation and the independence of orthogonal components of the standard gaussian we have that
\[
\E[(f_B(\x)-h_B(\x))^2]
=\E_{\x_{V^{\perp}}}[\E_{\x_V}[(f_B(\x)-h_B(\x))^2]].
\]
For each fixed \(\x_{V^{\perp}}\), we can apply  \Cref{fact:piecewiseconstantfromboundedgradientnorm} in the \(k\)–dimensional subspace \(V\) to the function \(\phi_{\x_{V^{\perp}}}(\x_V)\eqdef f_B(\x^{V}+\x_{V^{\perp}})\), which is always bounded by $B$.  Noting that inequality \(\|\nabla_{\x_V}\phi_{\x_{V^{\perp}}}(\x_V)\|\le\|\nabla f_B(\x)\|\) gives us that that for any fixed $\x_{V^{\perp}}$
\[
\E_{\x_V}[(f_B(\x)-h_B(\x))^2]\leq 2\rho+\eps.
\]
Taking the outer expectation over \(\x_{V^{\perp}}\) yields
\[
\E_{\x}[(f_B(\x)-h_B(\x))^2]\leq  2\rho+\eps.
\]
Finally, we have that 
\begin{align*}
    &\E[(f(\x)-h(\x))^2]\\
    &\leq 2\E[(f(\x)-f_B(\x))^2]+2\E[(f_B(\x)-h_B(\x))^2]+2\E[(h_B(\x)-h(\x))^2]
    \lesssim \rho +\eps\;,
\end{align*}
 where we used that $\E[(h_B(\x)-h(\x))^2]\leq 2\rho+\eps$ because of Jensen's inequality.
\qedhere
\end{proof}

\begin{fact}[Median of Means Estimator see, e.g., \cite{BouLM13} ]
\label{fact:medianofmeans}
Let $x_1,\ldots,x_n$ be i.i.d. random variables with mean $\mu$ and variance $\sigma^2$. Suppose that $n = m k$, where $m$ and $k$ are positive integers. Define the median-of-means estimator $\widehat{\mu}_n$ as the median of $k = \lceil 8\log(1/\delta)\rceil$ independent sample means. Then, with probability at least $1-\delta$, we have
$
\left|\widehat{\mu}_n - \mu\right|
\le
\sigma \sqrt{\frac{32\,\log(1/\delta)}{n}}.
$
\end{fact}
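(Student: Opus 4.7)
The plan is to follow the standard two-stage analysis for the median-of-means estimator: first show that each of the $k$ individual block means concentrates only moderately (with constant failure probability) via a second-moment argument, then amplify the confidence via the median using a Chernoff/Hoeffding bound on the number of bad blocks. Concretely, partition the $n = mk$ samples into $k$ disjoint blocks of size $m$ and let $\bar x_i$ denote the sample mean of block $i$, so that $\E[\bar x_i] = \mu$ and $\Var(\bar x_i) = \sigma^2/m$.

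First, I would apply Chebyshev's inequality to each $\bar x_i$ with threshold $t = 2\sigma/\sqrt m$, yielding $\Pr[|\bar x_i - \mu| > t] \le 1/4$. Define $Z_i = \Ind(|\bar x_i - \mu| > t)$. The key combinatorial observation is that if the median $\widehat\mu_n$ deviates from $\mu$ by more than $t$—say $\widehat\mu_n > \mu + t$—then at least half of the $\bar x_i$ lie above $\widehat\mu_n$ and therefore above $\mu + t$, so $\sum_{i=1}^k Z_i \ge \lceil k/2 \rceil$ (the symmetric case is analogous). Since the $Z_i$ are independent Bernoulli random variables with $\E[Z_i] \le 1/4$, Hoeffding's inequality yields
\[
\Pr\left[\sum_{i=1}^k Z_i \ge k/2\right]
= \Pr\left[\sum_{i=1}^k (Z_i - \E Z_i) \ge k/4\right]
\le \exp(-k/8).
\]
Choosing $k = \lceil 8 \log(1/\delta) \rceil$ makes the right-hand side at most $\delta$. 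Finally, substituting $m = n/k$ into $t = 2\sigma/\sqrt m$ gives $t = 2\sigma \sqrt{k/n} = \sigma\sqrt{32\log(1/\delta)/n}$, which matches the stated bound.

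The only delicate point is matching the constants exactly: the Chebyshev threshold must be chosen so that the per-block failure probability is strictly below $1/2$ (here $1/4$), ensuring a positive deviation $k/4$ appears in the Hoeffding exponent. I do not anticipate any genuine technical obstacle—this is a textbook application of Chebyshev plus Hoeffding—only a small amount of bookkeeping to line up the numerical constants so that the stated factor $\sqrt{32\log(1/\delta)/n}$ emerges cleanly.
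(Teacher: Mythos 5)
Your argument is correct and is the standard textbook derivation of the median-of-means guarantee (Chebyshev on each block to obtain constant-confidence estimates, then Hoeffding on the binomial count of bad blocks, combined with the observation that a deviating median forces at least half the block means to deviate); the paper itself gives no proof and simply cites \cite{BouLM13}, whose argument is the same. The only wrinkle, which you already flagged, is that with $k=\lceil 8\log(1/\delta)\rceil$ the ceiling makes your threshold $t=2\sigma\sqrt{k/n}$ slightly exceed the stated $\sigma\sqrt{32\log(1/\delta)/n}$, but this is an immaterial constant-bookkeeping issue.
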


\begin{fact}[see e.g. Fact 3.3 \cite{chen2022FPT}]\label{fact:basicanti}
If $Z$ is a random variable for which $\abs{Z} \le M$ almost surely, and $\E[Z^2]\ge \sigma^2$, then $\Pr{\abs{Z} \ge t} \ge \frac{1}{M^2}(\sigma^2 - t^2)$.
\end{fact}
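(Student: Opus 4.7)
The plan is to prove this Paley--Zygmund-style lower tail bound via a one-line decomposition of the second moment. First, I would split $\E[Z^2]$ according to whether $|Z|$ lies below or above the threshold $t$, writing
\[
\sigma^2 \;\le\; \E[Z^2] \;=\; \E\bigl[Z^2\,\Ind(|Z|<t)\bigr] \;+\; \E\bigl[Z^2\,\Ind(|Z|\ge t)\bigr].
\]
Next I would bound each piece crudely. On the event $\{|Z|<t\}$ the integrand satisfies $Z^2 < t^2$, so the first term is at most $t^2\,\Pr[|Z|<t]\le t^2$. On the event $\{|Z|\ge t\}$ the almost-sure bound $|Z|\le M$ gives $Z^2\le M^2$, so the second term is at most $M^2\,\Pr[|Z|\ge t]$. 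Chaining these together produces $\sigma^2 \le t^2 + M^2\,\Pr[|Z|\ge t]$, and rearranging yields the claimed inequality $\Pr[|Z|\ge t]\ge (\sigma^2-t^2)/M^2$.

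There is essentially no obstacle here: this is the standard Paley--Zygmund trick with the a.s.\ upper bound $M$ playing the role usually played by a bound on a higher moment such as $\E[Z^4]$. The only thing worth flagging is that the conclusion is only informative when $t < \sigma$ (otherwise the right-hand side is nonpositive and the bound is vacuous but still valid), and that the argument nowhere uses any structural property of $Z$ beyond the two stated moment/boundedness hypotheses, which is why the statement can be cited as a black-box anti-concentration tool inside the proof of \Cref{lem:anti1}.
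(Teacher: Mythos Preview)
Your proof is correct and is exactly the standard argument; the paper does not supply its own proof of this fact (it is stated without proof in the ``Omitted Technical Facts'' section, with a citation to \cite{chen2022FPT}), so there is nothing to compare against.
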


\end{document}